\newtheorem{theorem}{Theorem}[section]
\newtheorem{assumption}{Assumption}
\newtheorem{proposition}[theorem]{Proposition}
\newtheorem{lemma}[theorem]{Lemma}
\theoremstyle{definition}
\newcommand{\jmlr}{0}
\definecolor{codegreen}{rgb}{0,0.6,0}
\definecolor{codegray}{rgb}{0.5,0.5,0.5}
\definecolor{codepurple}{rgb}{0.58,0,0.82}
\definecolor{backcolour}{rgb}{0.95,0.95,0.92}
\newenvironment{keywords}
{\bgroup\leftskip 20pt\rightskip 20pt \small\noindent{\bf Keywords:} }%
{\par\egroup\vskip 0.25ex}
\lstdefinestyle{mystyle}{
    backgroundcolor=\color{backcolour},   
    commentstyle=\color{codegreen},
    keywordstyle=\color{magenta},
    numberstyle=\tiny\color{codegray},
    stringstyle=\color{codepurple},
    basicstyle=\ttfamily\footnotesize,
    breakatwhitespace=false,         
    breaklines=true,                 
    captionpos=b,                    
    keepspaces=true,                 
    numbers=left,                    
    numbersep=5pt,                  
    showspaces=false,                
    showstringspaces=false,
    showtabs=false,                  
    tabsize=2
}
\title{Vecchia-Inducing-Points Full-Scale Approximations for Gaussian Processes}
\author{
  Tim Gyger\thanks{Institute of Financial Services Zug, Lucerne University of Applied Sciences and Arts}
  \thanks{Department of Mathematical Modeling and Machine Learning, University of Zurich}
  \thanks{Corresponding author: tim.gyger@hslu.ch}\\
  \and
Reinhard Furrer \footnotemark[2]
  \and
  Fabio Sigrist\thanks{Seminar for Statistics, ETH Zurich} \footnotemark[1]
}
\date{\today}
\begin{document}

\maketitle

\begin{abstract}

Gaussian processes are flexible, probabilistic, non-parametric models widely used in machine learning and statistics. However, their scalability to large data sets is limited by computational constraints. To overcome these challenges, we propose \textbf{V}ecchia-\textbf{i}nducing-points \textbf{f}ull-scale (VIF) approximations combining the strengths of global inducing points and local Vecchia approximations. Vecchia approximations excel in settings with low-dimensional inputs and moderately smooth covariance functions, while inducing point methods are better suited to high-dimensional inputs and smoother covariance functions. Our VIF approach bridges these two regimes by using an efficient correlation-based neighbor-finding strategy for the Vecchia approximation of the residual process, implemented via a modified cover tree algorithm. We further extend our framework to non-Gaussian likelihoods by introducing iterative methods that substantially reduce computational costs for training and prediction by several orders of magnitude compared to Cholesky-based computations when using a Laplace approximation. In particular, we propose and compare novel preconditioners and provide theoretical convergence results. Extensive numerical experiments on simulated and real-world data sets show that VIF approximations are both computationally efficient as well as more accurate and numerically stable than state-of-the-art alternatives. All methods are implemented in the open-source C++ library \texttt{GPBoost} with high-level Python and R interfaces.
%: \url{https://github.com/fabsig/GPBoost}.
%We empirically show that combining Vecchia approximations with inducing points leads to more accurate approximations for high-dimensional input dimensions and smooth Matérn covariance functions.
\end{abstract}

\begin{keywords}
  Probabilistic modeling, uncertainty quantification, large data, iterative methods, Laplace approximation
\end{keywords}

%\footnotetext[5]{https://github.com/fabsig/GPBoost}

% Start writing here
\section{Introduction}\label{intro}

Gaussian process (GP) models are probabilistic non-parametric models which are used across various disciplines including machine learning \citep{williams2006gaussian} and geostatistics \citep{cressie2015statistics}. However, their applicability to large data sets is limited by computational costs, with $\mathcal{O}(n^3)$ time and $\mathcal{O}(n^2)$ memory complexity, where $n$ is the number of observations. Various strategies have been proposed to mitigate this computational burden; see \citet{liu2020gaussian} and \citet{heaton2019case} for reviews. Low-rank inducing-points approximations such as the fully independent training conditional (FITC) approximation \citep{quinonero2005unifying}, predictive process models \citep{banerjee2008gaussian, finley2009improving}, and variational approximations \citep{titsias2009variational, hensman2013gaussian, hensman2015scalable} are global approximations that focus on capturing the large-scale structures of GPs. Such inducing point methods are widely used in machine learning and are considered state-of-the-art methods. An alternative approach to improving computational efficiency is to leverage sparse linear algebra methods by enforcing sparsity in a Cholesky factor of precision matrices \citep{vecchia1988estimation, datta2016hierarchical, katzfuss2017general, schaefer2021sparse, cao2023variational}. These local Vecchia approximations are considered as a ``leader among the sea of approximation" \citep{guinness2019gaussian} in spatial statistics as they often yield superior approximation accuracy \citep{rambelli2025accuracy}. 

In this work, we propose an approach that integrates inducing point methods with Vecchia approximations and denote this as Vecchia-inducing-points full-scale (VIF) approximations. This method combines the advantages of global inducing points and local Vecchia approximations. Vecchia approximations excel in settings with low-dimensional inputs and moderately smooth covariance functions. The latter can be explained by the screening effect \citep[Section 3.3.3]{schaefer2021sparse}. Vecchia approximations are expected to work less well for high-dimensional data as they depend on the selection of neighbors, and this is more difficult in higher dimensions. Low-rank inducing point methods, on the other hand, are expected to work well for smoother covariance functions \citep[Section 2.4]{schaefer2021compression} and are better suited to high-dimensional inputs (see our experimental results in Section \ref{subsect:sim_all}). VIF approximations bridge these two types of approximations. In addition, we propose a novel, efficient correlation-distance-based method for selecting conditioning sets for Vecchia approximations on the residual process using a modified cover tree algorithm. 

We further extend our framework to non-Gaussian likelihoods to include, e.g., classification and count data regression tasks, by using a Laplace approximation. Although, depending on the likelihood, Laplace approximations can be inaccurate for small data sets, they are computationally very efficient \citep{nickisch2008approximations} and converge asymptotically to the correct quantity and are thus accurate for large data sets. To support this argument, Figure \ref{fig:LapApp} shows the estimated marginal variance parameter obtained with a VIF-Laplace approximation for varying sample sizes $n$, based on simulated data with a Bernoulli likelihood as described in Section \ref{sect4}. Estimation is performed using the iterative methods introduced in this paper and repeated across 100 simulated data sets for each $n$. The results demonstrate that the downward bias in the variance parameter diminishes as $n$ increases. \begin{figure}[ht!]
    \centering
    \includegraphics[width=0.7\linewidth]{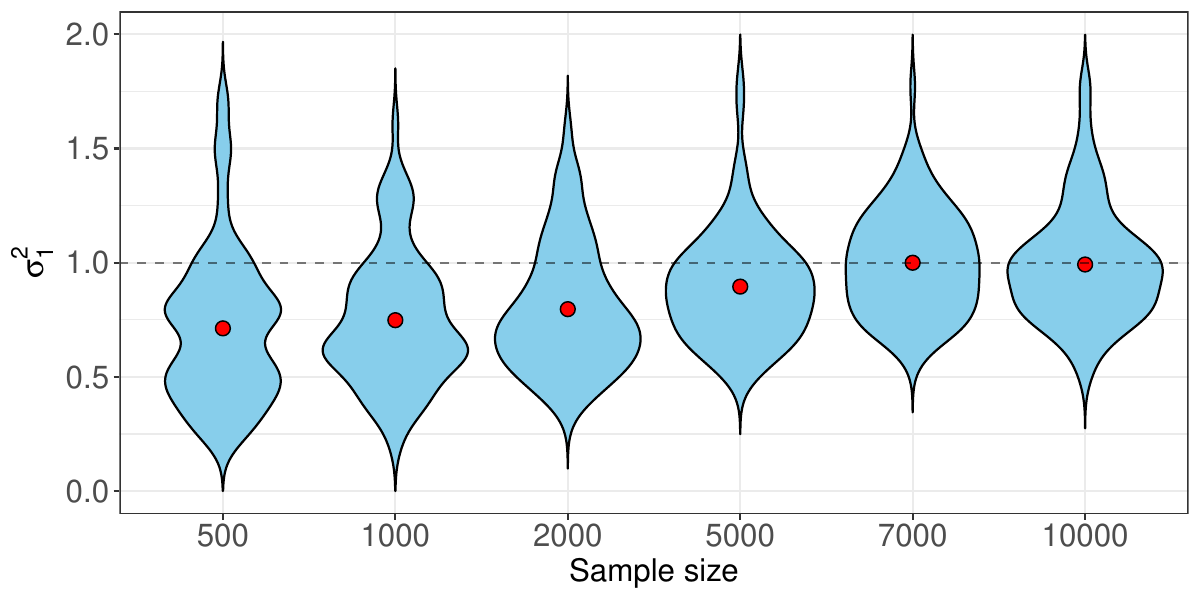}
    \caption{Violin plots of the estimated variance parameter $\sigma_1^2$ for different sample sizes $n$ for binary data. Red dots indicate the mean estimates, and the dashed line marks the true parameter value $\sigma_1^2= 1$.}
    \label{fig:LapApp}
\end{figure}
 However, standard Cholesky-based calculations do not scale well despite the use of VIF and Laplace approximations for non-Gaussian likelihoods since this requires computing the Cholesky factorization of a sparse $n\times n$ matrix (see Section \ref{sect2}) whose computational complexity is not linear in $n$ and depends on the graph structure as well as the effectiveness of the fill-reducing ordering \citep{davis2006direct}. For two-dimensional spatial data, this complexity is approximately \(\mathcal{O}(n^{3/2})\), while in higher dimensions, it is at least \(\mathcal{O}(n^{2})\) \citep{lipton1979generalized}. To remedy this, we introduce iterative methods \citep{saad2003iterative, trefethen2022numerical, aune2014parameter, gardner2018gpytorch} that substantially reduce computational costs for likelihood evaluations, gradient computations, and the calculation of predictive distributions when using a Laplace approximation. In contrast to the Cholesky decomposition, iterative methods require only matrix-vector multiplications which can be trivially parallelized. We propose and compare novel preconditioners, derive new convergence guarantees, introduce two simulation-based approaches to compute predictive variances, and provide both theoretical insights and empirical evaluations. Furthermore, we compare the prediction accuracy and runtime of our methods against existing state-of-the-art approaches on a large set of simulated and real-world data sets and find that VIF approximations are both computationally efficient and more accurate than state-of-the-art alternatives. %and extend these experiments by optimizing the smoothness parameter of the Matérn kernel. This contributes to the challenge of model selection, at least within the family of Matérn kernels, by addressing the task of selecting or learning an appropriate kernel for the Gaussian process model.
 
Full-scale approximations were first introduced in \citet{sang2011covariance} and \citet{sang2012full}, which combined inducing points with covariance tapering \citep{furrer2006covariance} approximations. \citet{zhang2019smoothed} extended this by partitioning the data into blocks and deriving a block-wise sparse precision matrix for the residual process. While this can also be interpreted as a special form of a Vecchia approximation, \citet{zhang2019smoothed} do not explicitly consider the case when using single points as conditioning sets, they do not address the question of efficiently selecting neighbors, as this might be less of an issue when using blocks, and they do not cover non-Gaussian likelihoods, which entail additional computational challenges.

The remainder of the paper is structured as follows. Sections \ref{sect1} and \ref{sect2} present the VIF framework for Gaussian and non-Gaussian likelihoods, respectively. In Section \ref{sect3}, we introduce iterative methods for inference within the VIF framework, and in Section \ref{secCG}, we present convergence results for the preconditioned conjugate gradient (CG) methods. In Section \ref{sec5a}, we propose efficient algorithms based on cover trees to identify the nearest Vecchia neighbors with respect to the correlation distance. In Section \ref{sect4}, we conduct simulation studies to analyze computational and statistical properties of VIF approximations. Finally, in Section \ref{sect5}, we apply our methodology to various real-world data sets, comparing runtime and prediction accuracy with several state-of-the-art methods. Additionally, we extend these experiments by estimating the smoothness parameter of automatic relevance determination (ARD) Matérn kernels, contributing to the challenge of model selection - at least within the family of Matérn kernels. %We further extend these experiments by optimizing the smoothness parameter of the Matérn kernel.

\subsection{Software implementation}\label{software}
The methods presented in this article are implemented in the \texttt{GPBoost} library written in C++ with Python and R interface packages, see \url{https://github.com/fabsig/GPBoost}.\footnote{The VIF approximation can be enabled via the parameter \texttt{gp\_approx = "vif"}, iterative methods can be enabled via the parameter \texttt{matrix\_inversion\_method = "iterative"} (=default), and the preconditioner is chosen via the parameter \texttt{cg\_preconditioner\_type}.} %For linear algebra calculations, we use the \texttt{Eigen} library version 3.4.99 and its sparse matrix algebra operations whenever possible. Multi-processor parallelization is done using \texttt{OpenMP}.
In addition, to facilitate the practical use of the proposed methodology, we provide a self-contained generic Python example in the GitHub repository \url{https://github.com/TimGyger/VIF} that demonstrates how to fit a VIF GP model and obtain predictions using the \texttt{GPBoost} Python interface. %This example illustrates the standard workflow on simulated data and is intended to help users readily apply the VIF approximation to their own datasets. 

%As an alternative to direct solver methods like Cholesky decomposition, iterative numerical techniques \citep{trefethen2022numerical} such as conjugate gradient (CG) and Lanczos tridiagonalization require only matrix-vector multiplications, which can be easily parallelized. Recent research \citep{aune2014parameter, gardner2018gpytorch, pleiss2018constant} has combined these techniques with stochastic approximation methods like Hutchinson's estimator \citep{hutchinson1989stochastic} and stochastic Lanczos quadrature (SLQ) \citep{ubaru2017fast}. Notably, \citet{kundig2024iterative} applied these methods to the Vecchia-Laplace approximation, resulting in a significant reduction in computational complexity for inference.

\section{VIF approximations for Gaussian process regression}\label{sect1}

We consider the following GP model:
\begin{align}\label{GP_model}
    y=F(\boldsymbol{x}) + b(\boldsymbol{s}) +\epsilon(\boldsymbol{s}),    
\end{align}
where $y\in \mathbb{R}$ is the response variable, $b(\boldsymbol{s})$ is a zero-mean GP with inputs $\boldsymbol{s}\in \mathbb{R}^d$, specified by a parametric covariance function $c_{\boldsymbol{\theta}}\left(\cdot, \cdot\right)$ with parameters $\boldsymbol{\theta} \in \mathbb{R}^{q}$, $F():\mathbb{R}^{p} \rightarrow \mathbb{R} $ is a fixed effects prior mean function with predictor variables $\boldsymbol{x} \in \mathbb{R}^{p}$, and $\epsilon(\boldsymbol{s})$ is a zero-mean independent Gaussian white noise process with variance $\sigma^2$. Note that the GP inputs $\boldsymbol{s}$ and the fixed effects predictor variables $\boldsymbol{x}$ may or may not be overlapping. For instance, in spatial statistics, $\boldsymbol{s}$ often consists of spatial and/or temporal coordinates whereas $\boldsymbol{x}$ contains additional predictor variables. But in machine learning applications, the GP and fixed effects inputs can also be equal; see, e.g., \citet{sigrist2022gaussian} for an example. For notational simplicity, we will assume $F(\boldsymbol{x})=0$ in the following, but $F(\cdot)$ can be easily extended to a linear mean function $F(\boldsymbol{x})=\boldsymbol{x}^\mathrm{T}\boldsymbol{\beta}$, $\boldsymbol{\beta} \in \mathbb{R}^p$, and to machine learning methods such as tree-boosting \citep{sigrist2022gaussian, sigrist2022latent} and neural networks \citep{simchoni2023integrating}. 
% , and we will sometimes write, with a slight abuse of notation, $\operatorname{Cov}\big(\epsilon(\boldsymbol{s}_i), \epsilon(\boldsymbol{s}_j)\big) = \sigma^2\mathbbm{1}_{\{\boldsymbol{s}_i=\boldsymbol{s}_j\}}$

If $n$ samples $\boldsymbol{y}=\left(y(\boldsymbol{s}_1), \ldots, y(\boldsymbol{s}_n)\right)^\mathrm{T} \in \mathbb{R}^n$ are observed at inputs $\mathcal{S} = \left\{\boldsymbol{s}_1, \ldots, \boldsymbol{s}_n\right\}$, $\boldsymbol{y}$ has a multivariate distribution with mean zero and covariance matrix
 \begin{equation}\label{cov_mat_gauss}
     \widetilde{\mathbf{\Sigma}} = {\mathbf{\Sigma}} + \sigma^2 \boldsymbol{I}_n, 
 \end{equation}
where $\boldsymbol{\Sigma} \in \mathbb{R}^{n \times n}$ is the covariance matrix of $\boldsymbol{b} = \left(b(\boldsymbol{s}_1), \ldots, b(\boldsymbol{s}_n)\right)^\mathrm{T} \in \mathbb{R}^n$  with entries  $    \Sigma_{ij} =\operatorname{Cov}\big(b(\boldsymbol{s}_i), b(\boldsymbol{s}_j)\big)=c_{\boldsymbol{\theta}}\left(\boldsymbol{s}_i, \boldsymbol{s}_j\right), \boldsymbol{s}_i, \boldsymbol{s}_j \in \mathcal{S}$. Concerning the notation in this article, we use the symbol $ $ ${}\widetilde {}$ $ $ to distinguish matrices including the error variance $\sigma^2 \boldsymbol{I}_n$ from those without this diagonal error variance matrix. E.g., $\widetilde{\mathbf{\Sigma}}$ contains the error variance $\sigma^2 \boldsymbol{I}_n$ whereas ${\mathbf{\Sigma}}$ does not. For maximum likelihood estimation, we minimize the negative log-likelihood function 
 \begin{align*}
 \begin{split}
\mathcal{L}(\boldsymbol{\theta};\boldsymbol{y})&= \frac{n}{2} \log (2 \pi)+\frac{1}{2}\log \det\big(\widetilde{\mathbf{\Sigma}}\big) +\frac{1}{2}\boldsymbol{y}^{\mathrm{T}}\widetilde{\mathbf{\Sigma}}^{-1}\boldsymbol{y}.
\end{split}
\end{align*}
Evaluation of the log-likelihood and its derivatives involves the calculation of linear solves with the $n \times n$ matrix $\widetilde{\mathbf{\Sigma}}$ and calculating its determinant, which has $\mathcal{O}\left(n^3\right)$ complexity when using a Cholesky decomposition.

% If a first- or second-order method for convex optimization such as the Broyden-Fletcher-Goldfarb-Shanno (BFGS) algorithm \citep{fletcher2000practical} is used, gradients of the negative log-likelihood function are required. For the covariance parameters $\boldsymbol{\theta}$, this gradient is given by
% \begin{align*}
% \frac{\partial}{\partial \boldsymbol{\theta}}\mathcal{L}(\boldsymbol{\theta};\boldsymbol{y}) &= \frac{1}{2}\Tr \Big(\widetilde{\mathbf{\Sigma}}^{-1}\frac{\partial \widetilde{\mathbf{\Sigma}}}{\partial \boldsymbol{\theta}}\Big)  -\frac{1}{2}\boldsymbol{y}^{\mathrm{T}}\widetilde{\mathbf{\Sigma}}^{-1}  \frac{\partial \widetilde{\mathbf{\Sigma}}}{\partial \boldsymbol{\theta}} \widetilde{\mathbf{\Sigma}}^{-1}\boldsymbol{y}.
% \end{align*}

For prediction at $n_p$ new locations $\mathcal{S}^p = \{\boldsymbol{s}^p_1,...,\boldsymbol{s}^p_{n_p}\}$, the predictive distribution $\boldsymbol{y}^p|\boldsymbol{y}$ is given by $\mathcal{N}(\boldsymbol{\mu}^p,\mathbf{\Sigma}^p)$, where $\boldsymbol{\mu}^p = {\mathbf{\Sigma}}^{\mathrm{T}}_{n{n_p}}\widetilde{\mathbf{\Sigma}}^{-1}\boldsymbol{y}$, $ \mathbf{\Sigma}^p  = {\mathbf{\Sigma}}_{{n_p}} + \sigma^2 \boldsymbol{I}_{n_p}-{\mathbf{\Sigma}}_{n{n_p}}^{\mathrm{T}}\widetilde{\mathbf{\Sigma}}^{-1} {\mathbf{\Sigma}}_{n{n_p}}\in\mathbb{R}^{{n_p}\times {n_p}}$, ${\mathbf{\Sigma}}_{n{n_p}}=\left[c_{\boldsymbol{\theta}}\left(\boldsymbol{s}_i, \boldsymbol{s}^p_j\right)\right]_{i=1:n, j=1:{n_p}}$ is a cross-covariance matrix, and ${\mathbf{\Sigma}}_{{n_p}}=\left[c_{\boldsymbol{\theta}}\left(\boldsymbol{s}^p_i, \boldsymbol{s}^p_j\right)\right]_{i,j=1:{n_p}}$. For large $n_p$, it is usually infeasible to store the complete predictive covariance matrix $\mathbf{\Sigma}^p$, and the primary focus is solely on the predictive variances, i.e., the diagonal elements of $\mathbf{\Sigma}^p$. Nevertheless, Cholesky-based computation of these variances still entails a substantial computational burden involving $\mathcal{O}(n^2\cdot{n_p})$ operations even when a Cholesky factor of $\widetilde{\mathbf{\Sigma}}$ has been precomputed.

\subsection{Vecchia-inducing-points full-scale (VIF) approximation}\label{sectFSA} 

The idea of the Vecchia-inducing-points full-scale (VIF) approximation is to decompose $b(\boldsymbol{s}) +\epsilon(\boldsymbol{s})$ into two parts $
b(\boldsymbol{s}) +\epsilon(\boldsymbol{s})={b}_\mathrm{l}(\boldsymbol{s})+{b}_{\mathrm{s}}(\boldsymbol{s}),
$
where ${b}_\mathrm{l}(\boldsymbol{s})$ is a low-rank predictive process modeling large-scale dependence and ${b}_{\mathrm{s}}(\boldsymbol{s}) = {b}(\boldsymbol{s}) + \epsilon(\boldsymbol{s}) - {b}_\mathrm{l}(\boldsymbol{s})$ is a residual process capturing the residual variation that is unexplained by ${b}_\mathrm{l}(\boldsymbol{s})$. Specifically, for a given set $\mathcal{S}^* = \{\boldsymbol{s}_1^*,...,\boldsymbol{s}_m^*\}$ of $m$ inducing points, or knots, the predictive process is given by
\begin{align}\label{def_b_low}
{b}_\mathrm{l}(\boldsymbol{s})=\boldsymbol{c}\left(\boldsymbol{s}, \mathcal{S}^*\right)^{\mathrm{T}}\mathbf{\Sigma}_m^{-1} \boldsymbol{b}^*,
\end{align}
where $\boldsymbol{b}^*=(b(\boldsymbol{s}_1^*),...,b(\boldsymbol{s}_m^*))^{\mathrm{T}}$. Its covariance is 
\begin{align*}\operatorname{Cov}\big(b_\mathrm{l}(\boldsymbol{s}_i), b_\mathrm{l}(\boldsymbol{s}_j)\big)=\boldsymbol{c}(\boldsymbol{s}_i, \mathcal{S}^*)^{\mathrm{T}}\mathbf{\Sigma}_m^{-1}\boldsymbol{c}\left(\boldsymbol{s}_j, \mathcal{S}^*\right),
\end{align*}
where $\boldsymbol{c}(\boldsymbol{s}_i, \mathcal{S}^*) = \big(c_{\boldsymbol{\theta}}(\boldsymbol{s}_i, \boldsymbol{s}^*_1),...,c_{\boldsymbol{\theta}}(\boldsymbol{s}_i, \boldsymbol{s}^*_m)\big)^{\mathrm{T}}$ and $\mathbf{\Sigma}_m = \big[c_{\boldsymbol{\theta}}(\boldsymbol{s}_i^*,\boldsymbol{s}_j^*)\big]_{i=1:m, j=1:m}\in\mathbb{R}^{m\times m}$ is a covariance matrix of the inducing points. The covariance matrix of $\boldsymbol{b}_\mathrm{l} = \left(b_\mathrm{l}(\boldsymbol{s}_1), \ldots, b_\mathrm{l}(\boldsymbol{s}_n)\right)^\mathrm{T}$ is thus given by
\begin{align*}
    \mathbf{\Sigma}^{\mathrm{l}} = \operatorname{Cov}(\boldsymbol{b}_\mathrm{l}) =  \mathbf{\Sigma}_{mn}^{\mathrm{T}}\mathbf{\Sigma}_{m}^{-1}\mathbf{\Sigma}_{mn},
\end{align*}
where $\mathbf{\Sigma}_{mn} = \big[c_{\boldsymbol{\theta}}(\boldsymbol{s}_i^*, \boldsymbol{s}_j)\big]_{i=1:m, j=1:n}\in\mathbb{R}^{m\times n}$ is a cross-covariance matrix between the inducing and data points. %$\operatorname{Cov}\big(b_\mathrm{s}(\boldsymbol{s}_i), b_\mathrm{l}(\boldsymbol{s}_j)\big) = c_{\boldsymbol{\theta}}(\boldsymbol{s}_i,\boldsymbol{s}_j) + \sigma^2\mathbbm{1}_{\{\boldsymbol{s}_i=\boldsymbol{s}_j\}} - \boldsymbol{c}(\boldsymbol{s}_i, \mathcal{S}^*)^{\mathrm{T}}\mathbf{\Sigma}^{-1}_m \boldsymbol{c}(\boldsymbol{s}_j, \mathcal{S}^*)$ 
Concerning the residual process ${b}_{\mathrm{s}}(\boldsymbol{s})$, $\operatorname{Cov}(\boldsymbol{b}_\mathrm{s})$ is given by
\begin{align*}
    \operatorname{Cov}(\boldsymbol{b}_\mathrm{s}) = \widetilde{\mathbf{\Sigma}} - \mathbf{\Sigma}_{mn}^{\mathrm{T}}\mathbf{\Sigma}_{m}^{-1}\mathbf{\Sigma}_{mn},
\end{align*}
where $\boldsymbol{b}_\mathrm{s} = \left(b_\mathrm{s}(\boldsymbol{s}_1), \ldots, b_\mathrm{s}(\boldsymbol{s}_n)\right)^\mathrm{T}$. In a VIF approximation, this residual covariance matrix is approximated using a Vecchia approximation, which represents the full joint density $p(\boldsymbol{b}_\mathrm{s}\mid\boldsymbol{\theta})$ through a product of low-dimensional conditional densities,
\[
p(\boldsymbol{b}_\mathrm{s}\mid\boldsymbol{\theta})
  \approx \prod_{i=1}^n p({b}_\mathrm{s}(\boldsymbol{s}_i) \mid \boldsymbol{b}_{\mathrm{s}_{N(i)}}, \boldsymbol{\theta}),
\]
where $N(i)\subseteq\{1,\ldots,i-1\}$ are sets of conditioning
points, often called neighbors in Vecchia approximations \citep{katzfuss2017general, datta2016hierarchical}. This leads to a sparse approximate Cholesky factorization of the precision matrix $\operatorname{Cov}(\boldsymbol{b}_\mathrm{s})^{-1}$:
\begin{align*}
 (\widetilde{\mathbf{\Sigma}}^{\mathrm{s}})^{-1} = \boldsymbol{B}^\mathrm{T}\boldsymbol{D}^{-1}\boldsymbol{B} \approx \operatorname{Cov}(\boldsymbol{b}_\mathrm{s})^{-1},
\end{align*}
where $\boldsymbol{D} = \text{diag}(D_i)$ and $\boldsymbol{B}$ is a sparse lower triangular matrix with $1$'s on the diagonal and non-zero off-diagonal entries $\boldsymbol{B}_{iN(i)}=-\boldsymbol{A}_i \in \mathbb{R}^{|N(i)|}$,
and ${D}_i$ and $\boldsymbol{A}_i$ are defined as
\begin{equation}\label{D_A_Vecchia}
    \begin{split}
        & {D}_i=\widetilde{\mathbf{\Sigma}}_{ii}-\mathbf{\Sigma}_{mi}^{\mathrm{T}}\mathbf{\Sigma}_{m}^{-1}\mathbf{\Sigma}_{mi}-\Big(\boldsymbol{A}_i\big(\mathbf{\Sigma}_{iN(i)}^{\mathrm{T}}-\mathbf{\Sigma}_{mN(i)}^{\mathrm{T}}\mathbf{\Sigma}_{m}^{-1}\mathbf{\Sigma}_{mi}\big)\Big),\\
& \boldsymbol{A}_i=\big(\mathbf{\Sigma}_{iN(i)}-\mathbf{\Sigma}_{mi}^{\mathrm{T}}\mathbf{\Sigma}_{m}^{-1}\mathbf{\Sigma}_{mN(i)}\big)\big(\widetilde{\mathbf{\Sigma}}_{N(i)}-\mathbf{\Sigma}_{mN(i)}^{\mathrm{T}}\mathbf{\Sigma}_{m}^{-1}\mathbf{\Sigma}_{mN(i)}\big)^{-1},
    \end{split}
\end{equation}
where $\mathbf{\Sigma}_{mN(i)} = \big[c_{\boldsymbol{\theta}}(\boldsymbol{s}_l^*,\boldsymbol{s}_k^{N_i})\big]_{l = 1:m,k =1:|N(i)|}\in\mathbb{R}^{m\times |N(i)|}$, $\boldsymbol{s}_k^{N_i}$ are the Vecchia neighbor inputs of $\boldsymbol{s}_i$, and $\mathbf{\Sigma}_{mi} = \big[c_{\boldsymbol{\theta}}(\boldsymbol{s}_l^*,\boldsymbol{s}_i)\big]_{l = 1:m}\in\mathbb{R}^{m}$. As is commonly done, we choose $N(i)$ as the indices of the $m_v$ nearest (in a sense to be defined below) neighbors of $\boldsymbol{s}_i$ among $\boldsymbol{s}_1, \ldots, \boldsymbol{s}_{i-1}$ if $i>m_v+1$, and $N(i) = \{1, \ldots, i-1\}$ if $i \leq m_v+1$. Calculating a Vecchia approximation for the residual process has $\mathcal{O}\big(n\cdot( m_v^3+m_v^2 \cdot m )\big)$ computational cost and requires $\mathcal{O}\big(n \cdot(m_v + m )\big)$ memory storage.

In summary, a VIF approximation for the covariance matrix $\widetilde{\mathbf{\Sigma}}$ in \eqref{cov_mat_gauss} is given by
\begin{align*}
 \widetilde{\mathbf{\Sigma}}_{\dagger}= \mathbf{\Sigma}^{\mathrm{l}}+\widetilde{\mathbf{\Sigma}}^{\mathrm{s}} \approx \widetilde{\mathbf{\Sigma}}
\end{align*} 
and the corresponding negative log-likelihood is
\begin{equation*}
    \mathcal{L}_\dagger(\boldsymbol{\theta};\boldsymbol{y}) = \frac{n}{2} \log (2 \pi) + \frac{1}{2}\log \det\big(\widetilde{\mathbf{\Sigma}}_{\dagger}\big) + \frac{1}{2}\boldsymbol{y}^{\mathrm{T}}\widetilde{\mathbf{\Sigma}}^{-1}_{\dagger}\boldsymbol{y}.
\end{equation*}
Note that if the number of Vecchia neighbors is zero, the VIF approximation reduces to the FITC approximation as a special case, and if there are no inducing points, the VIF approximation coincides with a classical Vecchia approximation. In addition, the VIF approximation can be interpreted as a general Vecchia approximation \citep{katzfuss2017general}, in which each conditional distribution conditions on all inducing points in addition to the local Vecchia neighbors. However, the inverse Cholesky factor of the VIF approximation is, in general, not sparse. Sparsity would only arise in the special case where the inducing points coincide with observation locations. 

\subsection{Computational complexity and calculation of gradients}
For computational efficiency, we can apply the Sherman-Woodbury-Morrison formula:
\begin{align*}
\begin{split}
    \widetilde{\mathbf{\Sigma}}_{\dagger}^{-1} &= (\widetilde{\mathbf{\Sigma}}^{\mathrm{s}}+\mathbf{\Sigma}_{mn}^{\mathrm{T}}\mathbf{\Sigma}_{m}^{-1}\mathbf{\Sigma}_{mn})^{-1}\\&=\boldsymbol{B}^\mathrm{T}\boldsymbol{D}^{-1}\boldsymbol{B}-\boldsymbol{B}^\mathrm{T}\boldsymbol{D}^{-1}\boldsymbol{B} \mathbf{\Sigma}_{mn}^{\mathrm{T}}(\mathbf{\Sigma}_{m}+\mathbf{\Sigma}_{mn} \boldsymbol{B}^\mathrm{T}\boldsymbol{D}^{-1}\boldsymbol{B} \mathbf{\Sigma}_{mn}^{\mathrm{T}})^{-1} \mathbf{\Sigma}_{mn} \boldsymbol{B}^\mathrm{T}\boldsymbol{D}^{-1}\boldsymbol{B},
\end{split}
\end{align*}
where we denote 
\begin{equation}\label{def_M}
    \boldsymbol{M} = \mathbf{\Sigma}_{m}+\mathbf{\Sigma}_{mn} \boldsymbol{B}^\mathrm{T}\boldsymbol{D}^{-1}\boldsymbol{B} \mathbf{\Sigma}_{mn}^{\mathrm{T}}.
\end{equation}
Moreover, by Sylvester's determinant theorem, the determinant of $\widetilde{\mathbf{\Sigma}}_{\dagger}$ can be calculated as
\begin{align*}
\begin{split}
    \det\big(\widetilde{\mathbf{\Sigma}}_{\dagger}\big) &= \det\big(\widetilde{\mathbf{\Sigma}}^{\mathrm{s}}+\mathbf{\Sigma}_{mn}^{\mathrm{T}}\mathbf{\Sigma}_{m}^{-1}\mathbf{\Sigma}_{mn}\big) \\ &= \det\big(\mathbf{\Sigma}_{m}+\mathbf{\Sigma}_{mn}\boldsymbol{B}^\mathrm{T}\boldsymbol{D}^{-1}\boldsymbol{B}\mathbf{\Sigma}_{mn}^{\mathrm{T}}\big)\cdot\det\big(\mathbf{\Sigma}_{m}\big)^{-1}\cdot\det\big(\boldsymbol{D}\big).
    \end{split}
\end{align*}

If a first- or second-order method for convex optimization, such as the Broyden-Fletcher-Goldfarb-Shanno (BFGS) algorithm \citep{fletcher2000practical} is used, gradients of the negative log-likelihood function are required. For the VIF approximation, the gradient with respect to $\boldsymbol{\theta}$ is given by
\begin{align*}
\frac{\partial}{\partial \boldsymbol{\theta}}\mathcal{L}_\dagger(\boldsymbol{\theta};\boldsymbol{y})=  \frac{1}{2}\Tr \Big(\widetilde{\mathbf{\Sigma}}_{\dagger}^{-1}\frac{\partial \widetilde{\mathbf{\Sigma}}_{\dagger}}{\partial \boldsymbol{\theta}}\Big) -\frac{1}{2}\boldsymbol{y}^{\mathrm{T}}\widetilde{\mathbf{\Sigma}}_\dagger^{-1}  \frac{\partial \widetilde{\mathbf{\Sigma}}_{\dagger}}{\partial \boldsymbol{\theta}} \widetilde{\mathbf{\Sigma}}_{\dagger}^{-1}\boldsymbol{y},
\end{align*}
where $\frac{\partial \widetilde{\mathbf{\Sigma}}_{\dagger}}{\partial \boldsymbol{\theta}} = \frac{\partial \widetilde{\mathbf{\Sigma}}^{\mathrm{s}}}{\partial \boldsymbol{\theta}} + \frac{\partial {\mathbf{\Sigma}}^{\mathrm{l}}}{\partial \boldsymbol{\theta}}$ with 
\begin{align*}
\frac{\partial \widetilde{\mathbf{\Sigma}}^{\mathrm{s}}}{\partial \boldsymbol{\theta}} &= \boldsymbol{B}^{-1}\frac{\partial \boldsymbol{D}}{\partial \boldsymbol{\theta}}\boldsymbol{B}^{-\mathrm{T}} - \boldsymbol{B}^{-1}\frac{\partial \boldsymbol{B}}{\partial \boldsymbol{\theta}}\boldsymbol{B}^{-1}\boldsymbol{D}\boldsymbol{B}^{-\mathrm{T}} - \boldsymbol{B}^{-1}\boldsymbol{D}\boldsymbol{B}^{-\mathrm{T}}\frac{\partial \boldsymbol{B}^{\mathrm{T}}}{\partial \boldsymbol{\theta}}\boldsymbol{B}^{-\mathrm{T}},\\
    \frac{\partial {\mathbf{\Sigma}}^{\mathrm{l}}}{\partial \boldsymbol{\theta}} &= \frac{\partial {\mathbf{\Sigma}}_{mn}^\mathrm{T}}{\partial \boldsymbol{\theta}}\mathbf{\Sigma}_{m}^{-1}\mathbf{\Sigma}_{mn} + \mathbf{\Sigma}_{mn}^\mathrm{T}\mathbf{\Sigma}_{m}^{-1}\frac{\partial {\mathbf{\Sigma}}_{mn}}{\partial \boldsymbol{\theta}} - \mathbf{\Sigma}_{mn}^\mathrm{T}\mathbf{\Sigma}_{m}^{-1}  \frac{\partial {\mathbf{\Sigma}}_{m}}{\partial \boldsymbol{\theta}}\mathbf{\Sigma}_{m}^{-1}\mathbf{\Sigma}_{mn}.
\end{align*}
In Appendix \ref{app_grad_vecchia}, we derive the gradients of $\boldsymbol{B}$ and $\boldsymbol{D}$. 

The computational complexity associated with the computation of the negative log-likelihood and its derivatives is of the order $\mathcal{O}\big(n\cdot( m_v^3+m_v^2 \cdot m + m^2)\big)$ and the required storage of order $\mathcal{O}\big(n\cdot(m+m_v)\big)$.

\subsection{Prediction with VIF approximations}\label{subsectPred}
We propose to do predictions for the response $\boldsymbol{y}^p$ at $n_p$ prediction inputs $\mathcal{S}^p = \{\boldsymbol{s}^p_{ 1}, \ldots, \boldsymbol{s}^p_{n_p} \}$ with VIF approximations using the following result.
\begin{proposition}\label{pred_dist_VIF}
A VIF-approximated posterior predictive distribution for $p(\boldsymbol{y}^p|\boldsymbol{y},\boldsymbol{\theta})$, $\boldsymbol{y}^p=(y(\boldsymbol{s}^p_{ 1}), \ldots, y(\boldsymbol{s}^p_{n_p}))^\mathrm{T} \in \mathbb{R}^{n_p}$, is given by $\mathcal{N}\big(\boldsymbol{\mu}_\dagger^p,\mathbf{\Sigma}^p_\dagger\big)$, where
\begin{align}
\boldsymbol{\mu}_\dagger^p&=-\boldsymbol{B}_p^{-1} \boldsymbol{B}_{p o} \boldsymbol{y} +    \mathbf{\Sigma}_{mn_p}^{\mathrm{T}}\mathbf{\Sigma}_m^{-1}\mathbf{\Sigma}_{mn}\boldsymbol{B}^\mathrm{T}\boldsymbol{D}^{-1}\boldsymbol{B}\boldsymbol{y} \nonumber\\
&\quad-\mathbf{\Sigma}_{mn_p}^{\mathrm{T}}\mathbf{\Sigma}_m^{-1}\mathbf{\Sigma}_{mn}\boldsymbol{B}^\mathrm{T}\boldsymbol{D}^{-1}\boldsymbol{B}\mathbf{\Sigma}_{mn}^{\mathrm{T}}\boldsymbol{M}^{-1}\mathbf{\Sigma}_{mn}\boldsymbol{B}^\mathrm{T}\boldsymbol{D}^{-1}\boldsymbol{B}\boldsymbol{y} \\ &\quad+\boldsymbol{B}_p^{-1} \boldsymbol{B}_{p o} \mathbf{\Sigma}_{mn}^{\mathrm{T}}\boldsymbol{M}^{-1}\mathbf{\Sigma}_{mn}\boldsymbol{B}^\mathrm{T}\boldsymbol{D}^{-1}\boldsymbol{B}\boldsymbol{y} \nonumber\\
\mathbf{\Sigma}_\dagger^p&=\boldsymbol{B}_p^{-1} \boldsymbol{D}_p \boldsymbol{B}_p^{-T}
+ \boldsymbol{B}_p^{-1} \boldsymbol{B}_{p o}(\boldsymbol{B}^\mathrm{T}\boldsymbol{D}^{-1}\boldsymbol{B})^{-1}\boldsymbol{B}_{p o}^{\mathrm{T}}\boldsymbol{B}_p^{-\mathrm{T}}  + \mathbf{\Sigma}_{mn_p}^{\mathrm{T}}\mathbf{\Sigma}_m^{-1}\mathbf{\Sigma}_{mn_p} \label{pred_VIF}\\
&\quad - \big(\mathbf{\Sigma}_{mn_p}^{\mathrm{T}}\mathbf{\Sigma}_m^{-1}\mathbf{\Sigma}_{mn}-\boldsymbol{B}_p^{-1}\boldsymbol{B}_{p o}(\boldsymbol{B}^\mathrm{T}\boldsymbol{D}^{-1}\boldsymbol{B})^{-1}\big)\widetilde{\mathbf{\Sigma}}^{-1}_\dagger\nonumber \\ &\quad \cdot \big(\mathbf{\Sigma}_{mn}^{\mathrm{T}}\mathbf{\Sigma}_m^{-1}\mathbf{\Sigma}_{mn_p}-(\boldsymbol{B}^\mathrm{T}\boldsymbol{D}^{-1}\boldsymbol{B})^{-1}\boldsymbol{B}_{p o}^{\mathrm{T}}\boldsymbol{B}_p^{-\mathrm{T}}\big),\nonumber
\end{align}
where $\mathbf{\Sigma}_{mn_p} = \big[c_{\boldsymbol{\theta}}(\boldsymbol{s}_i^*, \boldsymbol{s}^p_{j})\big]_{i=1:m, j=1:n_p}\in\mathbb{R}^{m\times n_p}$, and $\boldsymbol{B}_{p o} \in \mathbb{R}_p^{n_p \times n}$, $\boldsymbol{B}_p \in \mathbb{R}_p^{n_p \times n_p}$, and $\boldsymbol{D}_p^{-1} \in \mathbb{R}^{n_p \times n_p}$ are defined below in \eqref{vecchia_pred}. 
\end{proposition}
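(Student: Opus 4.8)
The plan is to derive the VIF-approximated predictive distribution by combining the joint Gaussian model for $(\boldsymbol{y}, \boldsymbol{y}^p)$ under the VIF covariance structure and then conditioning. The key conceptual step is that, under VIF, the vector $\boldsymbol{b}_\mathrm{l}$ (the low-rank part) and the residual process $\boldsymbol{b}_\mathrm{s}$ are modeled independently, with $\boldsymbol{b}_\mathrm{l}$ determined by the inducing-point values $\boldsymbol{b}^*$ via \eqref{def_b_low}, and the residual process having a Vecchia-approximated precision $\widetilde{\mathbf{\Sigma}}_\mathrm{s}^{-1} = \boldsymbol{B}^\mathrm{T}\boldsymbol{D}^{-1}\boldsymbol{B}$. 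First I would extend the Vecchia approximation jointly over observed and prediction locations: ordering the $n$ observed inputs first and the $n_p$ prediction inputs last, the joint Cholesky factor of the residual precision has the block lower-triangular form
\begin{align*}
\begin{pmatrix} \boldsymbol{B} & \boldsymbol{0} \\ \boldsymbol{B}_{po} & \boldsymbol{B}_p \end{pmatrix}, \qquad \text{with block diagonal } \begin{pmatrix} \boldsymbol{D} & \boldsymbol{0} \\ \boldsymbol{0} & \boldsymbol{D}_p \end{pmatrix},
\end{align*}
where $\boldsymbol{B}_{po}$, $\boldsymbol{B}_p$, $\boldsymbol{D}_p$ are exactly the quantities defined in \eqref{vecchia_pred}; this yields the joint residual covariance and, adding the low-rank cross-terms $\mathbf{\Sigma}_{\cdot m}\mathbf{\Sigma}_m^{-1}\mathbf{\Sigma}_{m\cdot}$, the joint VIF covariance of $(\boldsymbol{b}_\mathrm{s}+\boldsymbol{b}_\mathrm{l})$ over all $n+n_p$ locations.

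Next I would apply the standard Gaussian conditioning formulas $\boldsymbol{\mu}_\dagger^p = \mathbf{\Sigma}_{p,o}\widetilde{\mathbf{\Sigma}}_\dagger^{-1}\boldsymbol{y}$ and $\mathbf{\Sigma}_\dagger^p = \mathbf{\Sigma}_{p,p} - \mathbf{\Sigma}_{p,o}\widetilde{\mathbf{\Sigma}}_\dagger^{-1}\mathbf{\Sigma}_{o,p}$, where $\mathbf{\Sigma}_{p,o}$ is the VIF cross-covariance between prediction and observed locations and $\mathbf{\Sigma}_{p,p}$ the VIF marginal covariance at prediction locations. The residual part of the cross-covariance $\operatorname{Cov}(\boldsymbol{b}_{\mathrm{s}}^p, \boldsymbol{b}_\mathrm{s})$ equals $-\boldsymbol{B}_p^{-1}\boldsymbol{B}_{po}(\boldsymbol{B}^\mathrm{T}\boldsymbol{D}^{-1}\boldsymbol{B})^{-1}$ (read off from the joint Cholesky-of-precision structure), while the low-rank part is $\mathbf{\Sigma}_{mn_p}^\mathrm{T}\mathbf{\Sigma}_m^{-1}\mathbf{\Sigma}_{mn}$; adding these and right-multiplying by $\widetilde{\mathbf{\Sigma}}_\dagger^{-1}\boldsymbol{y}$ gives $\boldsymbol{\mu}_\dagger^p$, and the Sherman–Woodbury–Morrison expansion of $\widetilde{\mathbf{\Sigma}}_\dagger^{-1}$ already recorded in the excerpt (with $\boldsymbol{M}$ from \eqref{def_M}) produces the explicit three-line formula. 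Similarly, the marginal prediction covariance $\mathbf{\Sigma}_{p,p} = \boldsymbol{B}_p^{-1}\boldsymbol{D}_p\boldsymbol{B}_p^{-\mathrm{T}} + \boldsymbol{B}_p^{-1}\boldsymbol{B}_{po}(\boldsymbol{B}^\mathrm{T}\boldsymbol{D}^{-1}\boldsymbol{B})^{-1}\boldsymbol{B}_{po}^\mathrm{T}\boldsymbol{B}_p^{-\mathrm{T}} + \mathbf{\Sigma}_{mn_p}^\mathrm{T}\mathbf{\Sigma}_m^{-1}\mathbf{\Sigma}_{mn_p}$ combines with the Schur-complement term to give \eqref{pred_VIF}.

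I expect the main obstacle to be purely organizational rather than conceptual: carefully tracking the low-rank-plus-Vecchia block structure through the inverse and making the Sherman–Woodbury–Morrison bookkeeping match the stated grouping of terms. In particular, one must verify that the ``residual'' cross-covariance block is indeed $-\boldsymbol{B}_p^{-1}\boldsymbol{B}_{po}\,\widetilde{\mathbf{\Sigma}}_\mathrm{s}$ (equivalently $-\boldsymbol{B}_p^{-1}\boldsymbol{B}_{po}(\boldsymbol{B}^\mathrm{T}\boldsymbol{D}^{-1}\boldsymbol{B})^{-1}$), which requires inverting the joint block-triangular Cholesky factor and reading off the $(p,o)$ block of the resulting covariance — a short computation but one where index/transpose errors are easy. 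A secondary subtlety is confirming that conditioning prediction locations only on the $n$ observed points (and not on each other) is consistent with the chosen Vecchia ordering, so that $\boldsymbol{B}_p$ is itself lower-triangular with the appropriate neighbor sets among prediction points; this is a modeling choice encoded in \eqref{vecchia_pred} and simply needs to be stated. Once these blocks are identified, substituting the SWM form of $\widetilde{\mathbf{\Sigma}}_\dagger^{-1}$ and collecting terms yields the result directly.
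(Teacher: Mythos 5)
Your proposal is correct and follows essentially the same route as the paper's proof: decompose $(\boldsymbol{y},\boldsymbol{y}^p)$ into low-rank and residual parts, apply the joint Vecchia approximation \eqref{vecchia_pred} to the residual process over training and prediction inputs, invert the block-triangular precision factorization to read off the residual covariance blocks, add the low-rank blocks $\mathbf{\Sigma}_{\cdot m}\mathbf{\Sigma}_m^{-1}\mathbf{\Sigma}_{m\cdot}$, and condition via the standard Gaussian formulas. The cross-covariance block $-\boldsymbol{B}_p^{-1}\boldsymbol{B}_{po}(\boldsymbol{B}^\mathrm{T}\boldsymbol{D}^{-1}\boldsymbol{B})^{-1}$ and the marginal block you identify match the paper exactly, and your use of the Sherman--Woodbury--Morrison form of $\widetilde{\mathbf{\Sigma}}_\dagger^{-1}$ to obtain the explicit mean expression is the same bookkeeping the paper leaves implicit in its final ``standard arguments'' step.
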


\begin{proof}[Proof of Proposition \ref{pred_dist_VIF}]
First, note that $(\boldsymbol{y}^\mathrm{T}, {\boldsymbol{y}^{p}}^\mathrm{T})^{^\mathrm{T}} = (\boldsymbol{b}_\mathrm{l}^\mathrm{T}, {\boldsymbol{b}^p_{\mathrm{l}}}^\mathrm{T})^{^\mathrm{T}} + (\boldsymbol{b}_\mathrm{s}^\mathrm{T}, {\boldsymbol{b}^p_{\mathrm{s}}}^\mathrm{T})^{^\mathrm{T}}$, where $\boldsymbol{b}^p_{\mathrm{l}}$ and ${\boldsymbol{b}^p_{\mathrm{s}}}$ denote the low-rank and residual processes, respectively, at the prediction points $\mathcal{S}^p$. Following common practice for predictions with Vecchia approximations \citep{katzfuss2020vecchia}, we apply a Vecchia approximation to the joint distribution of the residual process $(\boldsymbol{b}_\mathrm{s}^\mathrm{T}, {\boldsymbol{b}^p_{\mathrm{s}}}^\mathrm{T})^{^\mathrm{T}}$ at the training and prediction inputs whose covariance matrix is 
\begin{align*}
    \operatorname{Cov}\left((\boldsymbol{b}_\mathrm{s}^\mathrm{T}, {\boldsymbol{b}^p_{\mathrm{s}}}^\mathrm{T})^{^\mathrm{T}}\right) = \left(\begin{array}{cc}
\widetilde{\mathbf{\Sigma}} & \mathbf{\Sigma}_{n n_p} \\
\mathbf{\Sigma}^{\mathrm{T}}_{n n_p} & \widetilde{\mathbf{\Sigma}}_p
\end{array}\right)
-
\left(\begin{array}{cc}
\mathbf{\Sigma}_{mn}^{\mathrm{T}}\mathbf{\Sigma}_m^{-1}\mathbf{\Sigma}_{mn} & \mathbf{\Sigma}_{mn}^{\mathrm{T}}\mathbf{\Sigma}_m^{-1}\mathbf{\Sigma}_{mn_p} \\
\mathbf{\Sigma}_{mn_p}^{\mathrm{T}}\mathbf{\Sigma}_m^{-1}\mathbf{\Sigma}_{mn} & \mathbf{\Sigma}_{mn_p}^{\mathrm{T}}\mathbf{\Sigma}_m^{-1}\mathbf{\Sigma}_{mn_p}
\end{array}\right).
\end{align*}
This gives
\begin{align}\label{vecchia_pred}
\operatorname{Cov}\left((\boldsymbol{b}_\mathrm{s}^\mathrm{T}, {\boldsymbol{b}^p_{\mathrm{s}}}^\mathrm{T})^{^\mathrm{T}}\right)^{-1} \approx \left(\begin{array}{cc}
\boldsymbol{B} & \boldsymbol{0} \\
\boldsymbol{B}_{p o} & \boldsymbol{B}_p
\end{array}\right)^\mathrm{T}\left(\begin{array}{cc}
\boldsymbol{D}^{-1} & \boldsymbol{0} \\
\boldsymbol{0} & \boldsymbol{D}_p^{-1}
\end{array}\right)\left(\begin{array}{cc}
\boldsymbol{B} & \boldsymbol{0} \\
\boldsymbol{B}_{p o} & \boldsymbol{B}_p
\end{array}\right).
\end{align}
It follows that
\begin{align*}
    &\operatorname{Cov}\left((\boldsymbol{b}_\mathrm{s}^\mathrm{T}, {\boldsymbol{b}^p_{\mathrm{s}}}^\mathrm{T})^{^\mathrm{T}}\right) \approx \\ &\left(\begin{array}{cc}
(\boldsymbol{B}^\mathrm{T}\boldsymbol{D}^{-1}\boldsymbol{B})^{-1} & -(\boldsymbol{B}^\mathrm{T}\boldsymbol{D}^{-1}\boldsymbol{B})^{-1}\boldsymbol{B}_{p o}^{\mathrm{T}}\boldsymbol{B}_p^{-\mathrm{T}} \\
-\boldsymbol{B}_p^{-1}\boldsymbol{B}_{p o}(\boldsymbol{B}^\mathrm{T}\boldsymbol{D}^{-1}\boldsymbol{B})^{-1} & (\boldsymbol{B}_p^\mathrm{T}\boldsymbol{D}_p^{-1}\boldsymbol{B}_p)^{-1} + \boldsymbol{B}_p^{-1}\boldsymbol{B}_{p o}(\boldsymbol{B}^\mathrm{T}\boldsymbol{D}^{-1}\boldsymbol{B})^{-1}\boldsymbol{B}_{p o}^{\mathrm{T}}\boldsymbol{B}_p^{-\mathrm{T}}
\end{array}\right)
\end{align*}
and thus
\begin{align*}
&\operatorname{Cov}\left((\boldsymbol{y}^\mathrm{T}, {\boldsymbol{y}^p}^\mathrm{T})^{^\mathrm{T}}\right) \approx \left(\begin{array}{cc}
\mathbf{\Sigma}_{mn}^{\mathrm{T}}\mathbf{\Sigma}_m^{-1}\mathbf{\Sigma}_{mn} & \mathbf{\Sigma}_{mn}^{\mathrm{T}}\mathbf{\Sigma}_m^{-1}\mathbf{\Sigma}_{mn_p} \\
\mathbf{\Sigma}_{mn_p}^{\mathrm{T}}\mathbf{\Sigma}_m^{-1}\mathbf{\Sigma}_{mn} & \mathbf{\Sigma}_{mn_p}^{\mathrm{T}}\mathbf{\Sigma}_m^{-1}\mathbf{\Sigma}_{mn_p}
\end{array}\right)\\
&\quad+\left(\begin{array}{cc}
(\boldsymbol{B}^\mathrm{T}\boldsymbol{D}^{-1}\boldsymbol{B})^{-1} & -(\boldsymbol{B}^\mathrm{T}\boldsymbol{D}^{-1}\boldsymbol{B})^{-1}\boldsymbol{B}_{p o}^{\mathrm{T}}\boldsymbol{B}_p^{-\mathrm{T}} \\
-\boldsymbol{B}_p^{-1}\boldsymbol{B}_{p o}(\boldsymbol{B}^\mathrm{T}\boldsymbol{D}^{-1}\boldsymbol{B})^{-1} & (\boldsymbol{B}_p^\mathrm{T}\boldsymbol{D}_p^{-1}\boldsymbol{B}_p)^{-1} + \boldsymbol{B}_p^{-1}\boldsymbol{B}_{p o}(\boldsymbol{B}^\mathrm{T}\boldsymbol{D}^{-1}\boldsymbol{B})^{-1}\boldsymbol{B}_{p o}^{\mathrm{T}}\boldsymbol{B}_p^{-\mathrm{T}}
\end{array}\right).
\end{align*}
By standard arguments for conditional probabilities of multivariate Gaussian distributions, we obtain $\boldsymbol{y}^p \mid \boldsymbol{y} \sim \mathcal{N}\left(\boldsymbol{\mu}_\dagger^p, \mathbf{\Sigma}_\dagger^p\right)$ in Proposition \ref{pred_dist_VIF}. 
\end{proof}

In Appendix \ref{AppVar}, we additionally derive an alternative and equivalent expression for $\mathbf{\Sigma}_\dagger^p$, which allows for a more efficient calculation and is used in our software implementation. The predictive distribution $\boldsymbol{b}^p|\boldsymbol{y}$ of the latent GP $\boldsymbol{b}^p$ is obtained analogously by subtracting  $\sigma^2 \boldsymbol{I}_{n_p}$ from $\mathbf{\Sigma}_\dagger^p$ in \eqref{pred_VIF}. The computational costs for the predictive means $\boldsymbol{\mu}^p_\dagger$ and variances $\text{diag}(\mathbf{\Sigma}^p_\dagger)$ are $\mathcal{O}\big({n_p}\cdot(m_v^3+m_v^2\cdot m) + n\cdot (m_v+m)\big)$ and $\mathcal{O}\big(n_p\cdot(m_v + m\cdot m_v) + n\cdot(m\cdot m_v + m^2)\big)$, respectively.

\section{VIF-Laplace approximations for latent Gaussian process models}\label{sect2}
In the following, we assume that the response variable $\boldsymbol{y}=\left(y(\boldsymbol{s}_1), \ldots, y(\boldsymbol{s}_n)\right)^\mathrm{T} \in \mathbb{R}^n$ follows a parametric distribution with a density of $p(\boldsymbol{y} \mid \boldsymbol{\mu}, \boldsymbol{\xi}) = \prod_{i=1}^n p(y_i|\mu_i,\xi)$ with respect to a sigma finite product measure conditional on $\boldsymbol{\mu} = F(\boldsymbol{X}) + \boldsymbol{b}$. This density has, potentially link function-transformed, parameters $\boldsymbol{\mu} \in \mathbb{R}^n$ and additional auxiliary parameters $\boldsymbol{\xi} \in \Xi \subset \mathbb{R}^r$. For instance, $\mu_i$ and $\xi$ can be the mean and variance of a Gaussian distribution, the log-mean and the shape parameter of a gamma likelihood, or $\mu_i$ can be the logit-transformed success probability of a Bernoulli distribution for which there is no additional parameter $\xi$. Similarly as in \eqref{GP_model}, the parameter $\boldsymbol{\mu}$ is modeled as the sum of fixed effects $F(\boldsymbol{X})$ and a finite-dimensional version of a GP $\boldsymbol{b}$, $\boldsymbol{\mu} = F(\boldsymbol{X}) + \boldsymbol{b}$, where $\boldsymbol{X}\in\mathbb{R}^{n\times p}$ contains predictor variables and $\boldsymbol{b} \sim \mathcal{N}(\boldsymbol{0}, \mathbf{\Sigma})$. We again assume for simplicity that $F(\boldsymbol{X})=\boldsymbol{0}$, but this assumption can be easily relaxed. Estimation of the covariance $\boldsymbol{\theta} \in \mathbb{R}^{q}$ and auxiliary parameters $\boldsymbol{\xi}$ is done by minimizing the negative log-marginal likelihood
 \begin{align}
 \begin{split}
-\log \big(p(\boldsymbol{y} \mid \boldsymbol{\theta}, \boldsymbol{\xi})\big)=-\log \Big(\int p(\boldsymbol{y} \mid \boldsymbol{b}, \boldsymbol{\xi}) p(\boldsymbol{b} \mid \boldsymbol{\theta}) d\boldsymbol{b}\Big).
\end{split}\label{NEGLL} 
\end{align}

% For applying the Laplace approximation, we assume that $p\left(y(\boldsymbol{s}_i) \mid b(\boldsymbol{s}_i), \boldsymbol{\xi}\right)$ is three times differentiable in $b(\boldsymbol{s}_i)$.
For non-Gaussian likelihoods, there is typically no analytic expression for $p(\boldsymbol{y} \mid \boldsymbol{b}, \boldsymbol{\xi})$ and an approximation has to be used. In this article, we use the Laplace approximation \citep{tierney1986accurate}, additionally assuming that $p(\boldsymbol{y} \mid \boldsymbol{\mu}, \boldsymbol{\xi})$ is log-concave in $\boldsymbol{\mu}$. The Laplace approximation for (\ref{NEGLL}) is given by
\begin{align}
L^\textit{LA}(\boldsymbol{y} , \boldsymbol{\theta} , \boldsymbol{\xi})=-\log p\left(\boldsymbol{y}  \mid \Tilde{\boldsymbol{b}}, \boldsymbol{\xi} \right)+\frac{1}{2} \Tilde{\boldsymbol{b}}^{ T} \mathbf{\Sigma}^{-1} \Tilde{\boldsymbol{b}}+\frac{1}{2} \log \operatorname{det}\left(\mathbf{\Sigma} \boldsymbol{W}+\boldsymbol{I}_n\right),\label{LA} 
\end{align}
where $\Tilde{\boldsymbol{b}}=\operatorname{argmax}\limits_{\boldsymbol{b}} \log p(\boldsymbol{y} \mid \boldsymbol{b}, \boldsymbol{\xi})-\frac{1}{2} \boldsymbol{b}^\mathrm{T} \mathbf{\Sigma}^{-1} \boldsymbol{b}$ is the mode of $p(\boldsymbol{y} \mid \boldsymbol{b}, \boldsymbol{\xi}) {p}(\boldsymbol{b} \mid \boldsymbol{\theta})$, and $\boldsymbol{W} \in \mathbb{R}^{n \times n}$ is diagonal with 
\begin{align}
    W_{i i}=-\left.\frac{\partial^2 \log p\left(y(\boldsymbol{s}_i) \mid b(\boldsymbol{s}_i), \boldsymbol{\xi}\right)}{\partial b(\boldsymbol{s}_i)^2}\right|_{\boldsymbol{b}=\Tilde{\boldsymbol{b}}}. \label{W_diag}
\end{align}

The mode $\Tilde{\boldsymbol{b}}$ is usually found with Newton's method \citep{williams2006gaussian}. Gradients of $L^\textit{LA}(\boldsymbol{y} ,\boldsymbol{\theta} , \boldsymbol{\xi})$ for parameter estimation can be found, e.g., in \citet{williams2006gaussian} and \citet{sigrist2022latent}. % , and one iteration is given by
We apply a VIF approximation to the latent process $\boldsymbol{b}$ by again decomposing it into a low-rank and a residual process, $\boldsymbol{b} = \boldsymbol{b}_\mathrm{l} + \boldsymbol{b}_\mathrm{s}$, where $\boldsymbol{b}_\mathrm{l}$ and $\boldsymbol{b}_\mathrm{s}$ are defined as in Section \ref{sectFSA} except that the residual process $\boldsymbol{b}_\mathrm{s}$ does not contain any error variance. Specifically, the precision matrix $\operatorname{Cov}(\boldsymbol{b}_\mathrm{s})^{-1} = (\mathbf{\Sigma} - \mathbf{\Sigma}_{mn}^{\mathrm{T}}\mathbf{\Sigma}_{m}^{-1}\mathbf{\Sigma}_{mn})^{-1}$ is approximated as
\begin{align*}
 ({\mathbf{\Sigma}}^{\mathrm{s}})^{-1} = \boldsymbol{B}^\mathrm{T}\boldsymbol{D}^{-1}\boldsymbol{B} \approx \operatorname{Cov}(\boldsymbol{b}_\mathrm{s})^{-1},
\end{align*}
where $\boldsymbol{B}$ and $\boldsymbol{D}$ are defined analogously as in Section \ref{sectFSA}, but $\widetilde{\mathbf{\Sigma}}_{ii}$ and $\widetilde{\mathbf{\Sigma}}_{N(i)}$ are replaced by ${\mathbf{\Sigma}}_{ii}$ and ${\mathbf{\Sigma}}_{N(i)}$ not including the error variance in \eqref{D_A_Vecchia}. A VIF-Laplace approximation (VIFLA) is then given by combining the VIF and Laplace approximations to obtain the following approximate negative log-marginal likelihood $-\log \big(p(\boldsymbol{y} \mid \boldsymbol{b}, \boldsymbol{\xi})\big)$:
\begin{align}\label{log_det_VIFLA}
L^\textit{VIFLA}(\boldsymbol{y} ,\boldsymbol{\theta} , \boldsymbol{\xi})=-\log p\left(\boldsymbol{y}  \mid \Tilde{\boldsymbol{b}}, \boldsymbol{\xi} \right)+\frac{1}{2} \Tilde{\boldsymbol{b}}^{ T} \mathbf{\Sigma}_\dagger^{-1} \Tilde{\boldsymbol{b}}+\frac{1}{2} \log \operatorname{det}\left(\mathbf{\Sigma}_\dagger \boldsymbol{W}+\boldsymbol{I}_n\right),
\end{align}
where $\Tilde{\boldsymbol{b}}=\operatorname{argmax}\limits_{\boldsymbol{b}} \log p(\boldsymbol{y} \mid \boldsymbol{b}, \boldsymbol{\xi})-\frac{1}{2} \boldsymbol{b}^\mathrm{T} \mathbf{\Sigma}_\dagger^{-1} \boldsymbol{b}$ is the mode of $p(\boldsymbol{y} \mid \boldsymbol{b}, \boldsymbol{\xi}) {p}_\dagger(\boldsymbol{b} \mid \boldsymbol{\theta})$ with ${p}_\dagger(\boldsymbol{b} \mid \boldsymbol{\theta})$ being the density of $\mathcal{N}(\boldsymbol{0},\mathbf{\Sigma}_\dagger)$  and
\begin{align*}
\begin{split}
    {\mathbf{\Sigma}}_{\dagger}^{-1} &= (\boldsymbol{B}^{-1}\boldsymbol{D}\boldsymbol{B}^{-T}+\mathbf{\Sigma}_{mn}^{\mathrm{T}}\mathbf{\Sigma}_{m}^{-1}\mathbf{\Sigma}_{mn})^{-1}=\boldsymbol{B}^\mathrm{T}\boldsymbol{D}^{-1}\boldsymbol{B}-\boldsymbol{B}^\mathrm{T}\boldsymbol{D}^{-1}\boldsymbol{B} \mathbf{\Sigma}_{mn}^{\mathrm{T}}\boldsymbol{M}^{-1} \mathbf{\Sigma}_{mn} \boldsymbol{B}^\mathrm{T}\boldsymbol{D}^{-1}\boldsymbol{B},
\end{split}
\end{align*}
where we have applied the Sherman-Woodbury-Morrison formula, and we recall that $\boldsymbol{M}$ is defined in \eqref{def_M}. By properties of the determinant of matrix products and Sylvester's determinant theorem, we have
\begin{equation*}
\begin{split}
    \log \operatorname{det}(\mathbf{\Sigma}_\dagger \boldsymbol{W}+\boldsymbol{I}_n) &=-\log\operatorname{det}(\mathbf{\Sigma}_{m})- \log\operatorname{det}(\boldsymbol{D}^{-1}) + \log \operatorname{det}(\boldsymbol{W}+\boldsymbol{B}^\mathrm{T}\boldsymbol{D}^{-1}\boldsymbol{B})\\
    &\quad+ \log \operatorname{det}\big(\boldsymbol{M}-\mathbf{\Sigma}_{mn} \boldsymbol{B}^\mathrm{T}\boldsymbol{D}^{-1}\boldsymbol{B} (\boldsymbol{W}+\boldsymbol{B}^\mathrm{T}\boldsymbol{D}^{-1}\boldsymbol{B})^{-1}\boldsymbol{B}^\mathrm{T}\boldsymbol{D}^{-1}\boldsymbol{B}\mathbf{\Sigma}_{mn}^{\mathrm{T}}\big),
    \end{split}
\end{equation*}
see Appendix \ref{AppVIFL} for the detailed derivation. Furthermore, one iteration of Newton's method is given by 
\begin{align}\label{one_it_newton_mode}
    \Tilde{\boldsymbol{b}}^{ t+1}&=\left(\boldsymbol{W}+\mathbf{\Sigma}_\dagger ^{-1}\right)^{-1}\left(\boldsymbol{W}\Tilde{\boldsymbol{b}}^{ t}+\frac{\partial \log p\left(\boldsymbol{y} \mid \Tilde{\boldsymbol{b}}^{ t}, \boldsymbol{\xi}\right)}{\partial \boldsymbol{b}}\right), t=0,1, \ldots
\end{align}
Applying the Sherman-Woodbury-Morrison formula, we can do linear solves with $\boldsymbol{W}+\mathbf{\Sigma}_\dagger^{-1}$ using the following relationship:
\begin{align}\label{bottleneck_laplace}
\begin{split}
    (\boldsymbol{W}+\mathbf{\Sigma}_\dagger^{-1})^{-1} &= \boldsymbol{W}^{-1}\big(\boldsymbol{B}^{T}\boldsymbol{D}^{-1}\boldsymbol{B}(\boldsymbol{W}+\boldsymbol{B}^{T}\boldsymbol{D}^{-1}\boldsymbol{B})^{-1}\boldsymbol{W}-\boldsymbol{B}^{T}\boldsymbol{D}^{-1}\boldsymbol{B}(\boldsymbol{W}+\boldsymbol{B}^{T}\boldsymbol{D}^{-1}\boldsymbol{B})^{-1}\\
    &\quad     \cdot\boldsymbol{W} \mathbf{\Sigma}_{mn}^{\mathrm{T}}(\mathbf{\Sigma}_{m} + \mathbf{\Sigma}_{mn}  \boldsymbol{B}^{T}\boldsymbol{D}^{-1}\boldsymbol{B}(\boldsymbol{W}+\boldsymbol{B}^{T}\boldsymbol{D}^{-1}\boldsymbol{B})^{-1}\boldsymbol{W}\mathbf{\Sigma}_{mn}^{\mathrm{T}})^{-1}  \\
    &\quad \cdot\mathbf{\Sigma}_{mn}\boldsymbol{B}^{T}\boldsymbol{D}^{-1}\boldsymbol{B}(\boldsymbol{W}+\boldsymbol{B}^{T}\boldsymbol{D}^{-1}\boldsymbol{B})^{-1}\boldsymbol{W}\big)\mathbf{\Sigma}_\dagger,
\end{split}
\end{align}
see Appendix \ref{AppVIFL} for a detailed derivation. The gradients of $L^\textit{VIFLA}(\boldsymbol{y} ,\boldsymbol{\theta} , \boldsymbol{\xi})$, with respect to $\boldsymbol{\theta}$ and $\boldsymbol{\xi}$ are also derived in Appendix \ref{AppVIFL}. As the above results show, the computational complexity of evaluating the negative log-marginal likelihood and its derivatives is primarily determined by the Cholesky factorization of the matrix $\boldsymbol{W}+\boldsymbol{B}^{T}\boldsymbol{D}^{-1}\boldsymbol{B}$. Despite that $\boldsymbol{B}^{T}\boldsymbol{D}^{-1}\boldsymbol{B}$ is sparse, this is computationally expensive for large $n$ as mentioned in the introduction. For two-dimensional spatial data, the Cholesky decomposition for $\boldsymbol{W}+\boldsymbol{B}^{T}\boldsymbol{D}^{-1}\boldsymbol{B}$ has complexity of approximately \(\mathcal{O}(n^{3/2})\), while in higher dimensions, it can be \(\mathcal{O}(n^{2})\) or larger \citep{lipton1979generalized}.

\subsection{Prediction with VIF-Laplace approximations}

The goal is to predict either the latent GP $\boldsymbol{b}^p \in \mathbb{R}^{n_p}$ or the response variable $\boldsymbol{y}^p \in \mathbb{R}^{n_p}$ at $n_p$ new inputs $\mathcal{S}^p = \{\boldsymbol{s}^p_{1}, \ldots, \boldsymbol{s}^p_{n_p} \}$ using the posterior predictive distributions $p\left(\boldsymbol{b}^p \mid \boldsymbol{y}, \boldsymbol{\theta}, \boldsymbol{\xi}\right)$ and $p\left(\boldsymbol{y}^p \mid \boldsymbol{y}, \boldsymbol{\theta}, \boldsymbol{\xi}\right)$, respectively. For the former, we have the following result.
\begin{proposition}\label{pred_dist_VIFLaplace}
A VIF-Laplace-approximated posterior predictive distribution for \newline $p\left(\boldsymbol{b}^p \mid \boldsymbol{y}, \boldsymbol{\theta}, \boldsymbol{\xi}\right)$, $\boldsymbol{b}^p=(b(\boldsymbol{s}^p_{ 1}), \ldots, b(\boldsymbol{s}^p_{n_p}))^\mathrm{T} \in \mathbb{R}^{n_p}$, is given by $\mathcal{N}\left(\boldsymbol{\omega}_p, \boldsymbol{\Omega}_p\right)$, where
\begin{align}
\boldsymbol{\omega}_p & =-\boldsymbol{B}_p^{-1} \boldsymbol{B}_{p o} \Tilde{\boldsymbol{b}} +    \mathbf{\Sigma}_{mn_p}^{\mathrm{T}}\mathbf{\Sigma}_m^{-1}\mathbf{\Sigma}_{mn}\boldsymbol{B}^\mathrm{T}\boldsymbol{D}^{-1}\boldsymbol{B}\Tilde{\boldsymbol{b}} \nonumber\\
&\quad-\mathbf{\Sigma}_{mn_p}^{\mathrm{T}}\mathbf{\Sigma}_m^{-1}\mathbf{\Sigma}_{mn}\boldsymbol{B}^\mathrm{T}\boldsymbol{D}^{-1}\boldsymbol{B}\mathbf{\Sigma}_{mn}^{\mathrm{T}}\boldsymbol{M}^{-1}\mathbf{\Sigma}_{mn}\boldsymbol{B}^\mathrm{T}\boldsymbol{D}^{-1}\boldsymbol{B}\Tilde{\boldsymbol{b}} \nonumber \\ &\quad+\boldsymbol{B}_p^{-1} \boldsymbol{B}_{p o} \mathbf{\Sigma}_{mn}^{\mathrm{T}}\boldsymbol{M}^{-1}\mathbf{\Sigma}_{mn}\boldsymbol{B}^\mathrm{T}\boldsymbol{D}^{-1}\boldsymbol{B}\Tilde{\boldsymbol{b}} \nonumber\\
\boldsymbol{\Omega}_p&=\boldsymbol{B}_p^{-1} \boldsymbol{D}_p \boldsymbol{B}_p^{-T}
+ \boldsymbol{B}_p^{-1} \boldsymbol{B}_{p o}(\boldsymbol{B}^\mathrm{T}\boldsymbol{D}^{-1}\boldsymbol{B})^{-1}\boldsymbol{B}_{p o}^{\mathrm{T}}\boldsymbol{B}_p^{-\mathrm{T}}  + \mathbf{\Sigma}_{mn_p}^{\mathrm{T}}\mathbf{\Sigma}_m^{-1}\mathbf{\Sigma}_{mn_p} \label{CovPredNG}\\
&\quad-\big(\mathbf{\Sigma}_{mn_p}^{\mathrm{T}}\mathbf{\Sigma}_m^{-1}\mathbf{\Sigma}_{mn}-\boldsymbol{B}_p^{-1} \boldsymbol{B}_{p o}(\boldsymbol{B}^\mathrm{T}\boldsymbol{D}^{-1}\boldsymbol{B})^{-1}\big)\big(\mathbf{\Sigma}_\dagger^{-1}- \mathbf{\Sigma}_\dagger^{-1}(\boldsymbol{W}+\mathbf{\Sigma}_\dagger^{-1})^{-1}\mathbf{\Sigma}_\dagger^{-1}\big) \nonumber\\
&\quad\cdot\big(\mathbf{\Sigma}_{mn}^{\mathrm{T}}\mathbf{\Sigma}_m^{-1}\mathbf{\Sigma}_{mn_p}-(\boldsymbol{B}^\mathrm{T}\boldsymbol{D}^{-1}\boldsymbol{B})^{-1} \boldsymbol{B}_{p o}^\mathrm{T} \boldsymbol{B}_p^{-\mathrm{T}}\big),\nonumber
\end{align}
and $\boldsymbol{B}_{p o} \in \mathbb{R}_p^{n_p \times n}$, $\boldsymbol{B}_p \in \mathbb{R}_p^{n_p \times n_p}$, and $\boldsymbol{D}_p^{-1} \in \mathbb{R}^{n_p \times n_p}$ are defined analogously as in \eqref{vecchia_pred}.
\end{proposition}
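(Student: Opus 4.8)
The plan is to mirror the proof of Proposition~\ref{pred_dist_VIF} almost verbatim, replacing the exact Gaussian likelihood by its Laplace approximation and the error-inclusive matrices $\widetilde{\mathbf{\Sigma}}$, $\widetilde{\mathbf{\Sigma}_p}$ by their error-free counterparts $\mathbf{\Sigma}$, $\mathbf{\Sigma}_{n_p}$ from Section~\ref{sect2}. First I would observe that, under the VIF approximation, the joint prior of $(\boldsymbol{b}^\mathrm{T},(\boldsymbol{b}^p)^\mathrm{T})^\mathrm{T}$ decomposes into a low-rank predictive-process part plus a residual part $(\boldsymbol{b}_\mathrm{s}^\mathrm{T},(\boldsymbol{b}^p_\mathrm{s})^\mathrm{T})^\mathrm{T}$ with covariance
\begin{equation*}
\operatorname{Cov}\big((\boldsymbol{b}_\mathrm{s}^\mathrm{T},(\boldsymbol{b}^p_\mathrm{s})^\mathrm{T})^\mathrm{T}\big)=\left(\begin{array}{cc}\mathbf{\Sigma} & \mathbf{\Sigma}_{n n_p}\\ \mathbf{\Sigma}_{n_p n} & \mathbf{\Sigma}_{n_p}\end{array}\right)-\left(\begin{array}{cc}\mathbf{\Sigma}_{mn}^\mathrm{T}\mathbf{\Sigma}_m^{-1}\mathbf{\Sigma}_{mn} & \mathbf{\Sigma}_{mn}^\mathrm{T}\mathbf{\Sigma}_m^{-1}\mathbf{\Sigma}_{mn_p}\\ \mathbf{\Sigma}_{mn_p}^\mathrm{T}\mathbf{\Sigma}_m^{-1}\mathbf{\Sigma}_{mn} & \mathbf{\Sigma}_{mn_p}^\mathrm{T}\mathbf{\Sigma}_m^{-1}\mathbf{\Sigma}_{mn_p}\end{array}\right),
\end{equation*}
and then apply the joint Vecchia factorization exactly as in \eqref{vecchia_pred}, now with $\boldsymbol{B},\boldsymbol{D},\boldsymbol{B}_{p o},\boldsymbol{B}_p,\boldsymbol{D}_p$ built from these error-free quantities. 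Inverting the resulting block-triangular factor and adding back the low-rank covariance gives, by the same computation as in the proof of Proposition~\ref{pred_dist_VIF},
\begin{equation*}
\operatorname{Cov}\big((\boldsymbol{b}^\mathrm{T},(\boldsymbol{b}^p)^\mathrm{T})^\mathrm{T}\big)\approx\left(\begin{array}{cc}\mathbf{\Sigma}_\dagger & \boldsymbol{C}_{np}\\ \boldsymbol{C}_{pn} & \boldsymbol{C}_{pp}\end{array}\right),
\end{equation*}
where $\boldsymbol{C}_{pn}=\mathbf{\Sigma}_{mn_p}^{\mathrm{T}}\mathbf{\Sigma}_m^{-1}\mathbf{\Sigma}_{mn}-\boldsymbol{B}_p^{-1}\boldsymbol{B}_{p o}(\boldsymbol{B}^\mathrm{T}\boldsymbol{D}^{-1}\boldsymbol{B})^{-1}$ and $\boldsymbol{C}_{pp}=\mathbf{\Sigma}_{mn_p}^{\mathrm{T}}\mathbf{\Sigma}_m^{-1}\mathbf{\Sigma}_{mn_p}+\boldsymbol{B}_p^{-1}\boldsymbol{D}_p\boldsymbol{B}_p^{-\mathrm{T}}+\boldsymbol{B}_p^{-1}\boldsymbol{B}_{p o}(\boldsymbol{B}^\mathrm{T}\boldsymbol{D}^{-1}\boldsymbol{B})^{-1}\boldsymbol{B}_{p o}^{\mathrm{T}}\boldsymbol{B}_p^{-\mathrm{T}}$; in particular, the VIF-approximated conditional $p(\boldsymbol{b}^p\mid\boldsymbol{b})$ is Gaussian with mean $\boldsymbol{C}_{pn}\mathbf{\Sigma}_\dagger^{-1}\boldsymbol{b}$ and covariance $\boldsymbol{C}_{pp}-\boldsymbol{C}_{pn}\mathbf{\Sigma}_\dagger^{-1}\boldsymbol{C}_{np}$.

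Next I would bring in the Laplace approximation of Section~\ref{sect2}: the VIF posterior $p_\dagger(\boldsymbol{b}\mid\boldsymbol{y},\boldsymbol{\theta},\boldsymbol{\xi})\propto p(\boldsymbol{y}\mid\boldsymbol{b},\boldsymbol{\xi})\,p_\dagger(\boldsymbol{b}\mid\boldsymbol{\theta})$ is approximated by $\mathcal{N}\big(\tilde{\boldsymbol{b}},(\boldsymbol{W}+\mathbf{\Sigma}_\dagger^{-1})^{-1}\big)$, since $\tilde{\boldsymbol{b}}$ is its mode and $\boldsymbol{W}+\mathbf{\Sigma}_\dagger^{-1}$ is the Hessian of the associated negative log-posterior at $\tilde{\boldsymbol{b}}$ (cf.\ the Newton step \eqref{one_it_newton_mode}). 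Since $\boldsymbol{y}$ depends on $(\boldsymbol{b},\boldsymbol{b}^p)$ only through $\boldsymbol{b}$, we have $p(\boldsymbol{b}^p\mid\boldsymbol{y})=\int p(\boldsymbol{b}^p\mid\boldsymbol{b})\,p_\dagger(\boldsymbol{b}\mid\boldsymbol{y})\,d\boldsymbol{b}$, and marginalizing the Gaussian conditional $p(\boldsymbol{b}^p\mid\boldsymbol{b})$ over the Gaussian posterior yields a $\mathcal{N}(\boldsymbol{\omega}_p,\boldsymbol{\Omega}_p)$ distribution with
\begin{align*}
\boldsymbol{\omega}_p&=\boldsymbol{C}_{pn}\mathbf{\Sigma}_\dagger^{-1}\tilde{\boldsymbol{b}},\\
\boldsymbol{\Omega}_p&=\boldsymbol{C}_{pp}-\boldsymbol{C}_{pn}\mathbf{\Sigma}_\dagger^{-1}\boldsymbol{C}_{np}+\boldsymbol{C}_{pn}\mathbf{\Sigma}_\dagger^{-1}(\boldsymbol{W}+\mathbf{\Sigma}_\dagger^{-1})^{-1}\mathbf{\Sigma}_\dagger^{-1}\boldsymbol{C}_{np}.
\end{align*}
Grouping the last two terms of $\boldsymbol{\Omega}_p$ as $-\boldsymbol{C}_{pn}\big(\mathbf{\Sigma}_\dagger^{-1}-\mathbf{\Sigma}_\dagger^{-1}(\boldsymbol{W}+\mathbf{\Sigma}_\dagger^{-1})^{-1}\mathbf{\Sigma}_\dagger^{-1}\big)\boldsymbol{C}_{np}$ and substituting the explicit $\boldsymbol{C}_{pn},\boldsymbol{C}_{pp}$ reproduces \eqref{CovPredNG}: $\boldsymbol{\Omega}_p$ is exactly $\mathbf{\Sigma}_\dagger^p$ of Proposition~\ref{pred_dist_VIF} with $\widetilde{\mathbf{\Sigma}}_\dagger^{-1}$ replaced by the Laplace data precision $\mathbf{\Sigma}_\dagger^{-1}-\mathbf{\Sigma}_\dagger^{-1}(\boldsymbol{W}+\mathbf{\Sigma}_\dagger^{-1})^{-1}\mathbf{\Sigma}_\dagger^{-1}$.

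For the mean, I would expand $\boldsymbol{\omega}_p=\boldsymbol{C}_{pn}\mathbf{\Sigma}_\dagger^{-1}\tilde{\boldsymbol{b}}$ using the Sherman--Woodbury--Morrison identity $\mathbf{\Sigma}_\dagger^{-1}=\boldsymbol{B}^\mathrm{T}\boldsymbol{D}^{-1}\boldsymbol{B}-\boldsymbol{B}^\mathrm{T}\boldsymbol{D}^{-1}\boldsymbol{B}\mathbf{\Sigma}_{mn}^{\mathrm{T}}\boldsymbol{M}^{-1}\mathbf{\Sigma}_{mn}\boldsymbol{B}^\mathrm{T}\boldsymbol{D}^{-1}\boldsymbol{B}$ together with $(\boldsymbol{B}^\mathrm{T}\boldsymbol{D}^{-1}\boldsymbol{B})^{-1}\mathbf{\Sigma}_\dagger^{-1}=\boldsymbol{I}-\mathbf{\Sigma}_{mn}^{\mathrm{T}}\boldsymbol{M}^{-1}\mathbf{\Sigma}_{mn}\boldsymbol{B}^\mathrm{T}\boldsymbol{D}^{-1}\boldsymbol{B}$ in the term originating from $-\boldsymbol{B}_p^{-1}\boldsymbol{B}_{p o}(\boldsymbol{B}^\mathrm{T}\boldsymbol{D}^{-1}\boldsymbol{B})^{-1}$; collecting terms gives precisely the four-term formula for $\boldsymbol{\omega}_p$ in the statement, i.e.\ $\boldsymbol{\mu}_\dagger^p$ of Proposition~\ref{pred_dist_VIF} with $\boldsymbol{y}$ replaced by $\tilde{\boldsymbol{b}}$. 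The only genuinely non-mechanical step --- and the one I would be most careful about --- is justifying the Gaussian posterior covariance $(\boldsymbol{W}+\mathbf{\Sigma}_\dagger^{-1})^{-1}$ and hence the replacement of $\widetilde{\mathbf{\Sigma}}_\dagger^{-1}$ by the Laplace data precision; everything else is the same block-matrix bookkeeping as in the Gaussian case, so it suffices to state it once and point back to \eqref{vecchia_pred} and the Laplace construction \eqref{LA}--\eqref{one_it_newton_mode}.
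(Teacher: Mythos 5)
Your proposal is correct and follows essentially the same route as the paper's proof: a Vecchia factorization of the joint residual process as in \eqref{vecchia_pred} yields the Gaussian conditional $p(\boldsymbol{b}^p\mid\boldsymbol{b})$, which is then integrated against the Laplace posterior $\mathcal{N}\big(\tilde{\boldsymbol{b}},(\boldsymbol{W}+\mathbf{\Sigma}_\dagger^{-1})^{-1}\big)$ via the law of total probability. You are in fact slightly more careful than the paper on one point: the conditional covariance must be the Schur complement $\boldsymbol{C}_{pp}-\boldsymbol{C}_{pn}\mathbf{\Sigma}_\dagger^{-1}\boldsymbol{C}_{np}$, as you write, whereas the paper's displayed $\boldsymbol{\Xi}_p$ lists only the three terms of $\boldsymbol{C}_{pp}$; your version is the one from which \eqref{CovPredNG} actually follows.
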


\begin{proof}[Proof of Proposition \ref{pred_dist_VIFLaplace}]

This can be derived as follows. As in Section \ref{subsectPred}, a Vecchia approximation is applied to the joint distribution of the residual GP at the training and prediction inputs, and we obtain $\boldsymbol{b}^p \mid \boldsymbol{b} \sim \mathcal{N}\left(\boldsymbol{v}_p, \boldsymbol{\Xi}_p\right)$ with 
\begin{align*}
\boldsymbol{v}_p&=-\boldsymbol{B}_p^{-1} \boldsymbol{B}_{p o} \boldsymbol{b} +    \mathbf{\Sigma}_{mn_p}^{\mathrm{T}}\mathbf{\Sigma}_m^{-1}\mathbf{\Sigma}_{mn}\boldsymbol{B}^\mathrm{T}\boldsymbol{D}^{-1}\boldsymbol{B}\boldsymbol{b}\\
&\quad-\mathbf{\Sigma}_{mn_p}^{\mathrm{T}}\mathbf{\Sigma}_m^{-1}\mathbf{\Sigma}_{mn}\boldsymbol{B}^\mathrm{T}\boldsymbol{D}^{-1}\boldsymbol{B}\mathbf{\Sigma}_{mn}^{\mathrm{T}}\boldsymbol{M}^{-1}\mathbf{\Sigma}_{mn}\boldsymbol{B}^\mathrm{T}\boldsymbol{D}^{-1}\boldsymbol{B}\boldsymbol{b} \\ &\quad+\boldsymbol{B}_p^{-1} \boldsymbol{B}_{p o} \mathbf{\Sigma}_{mn}^{\mathrm{T}}\boldsymbol{M}^{-1}\mathbf{\Sigma}_{mn}\boldsymbol{B}^\mathrm{T}\boldsymbol{D}^{-1}\boldsymbol{B}\boldsymbol{b}\\
\boldsymbol{\Xi}_p&=\boldsymbol{B}_p^{-1} \boldsymbol{D}_p \boldsymbol{B}_p^{-T}
+ \boldsymbol{B}_p^{-1} \boldsymbol{B}_{p o}(\boldsymbol{B}^\mathrm{T}\boldsymbol{D}^{-1}\boldsymbol{B})^{-1}\boldsymbol{B}_{p o}^{\mathrm{T}}\boldsymbol{B}_p^{-\mathrm{T}}  + \mathbf{\Sigma}_{mn_p}^{\mathrm{T}}\mathbf{\Sigma}_m^{-1}\mathbf{\Sigma}_{mn_p}.
\end{align*} 
By the law of total probability, we have
\begin{align*}
p\left(\boldsymbol{b}^p \mid \boldsymbol{y}, \boldsymbol{\theta}, \boldsymbol{\xi}\right)=\int p\left(\boldsymbol{b}^p \mid \boldsymbol{b}, \boldsymbol{\theta}\right) p(\boldsymbol{b} \mid \boldsymbol{y}, \boldsymbol{\theta}, \boldsymbol{\xi}) d \boldsymbol{b}.
\end{align*}
The Laplace approximation $p(\boldsymbol{b} \mid \boldsymbol{y}, \boldsymbol{\theta}, \boldsymbol{\xi}) \approx \mathcal{N}\big(\Tilde{\boldsymbol{b}},(\boldsymbol{W}+\mathbf{\Sigma}_\dagger^{-1})^{-1}\big)$ and $\boldsymbol{b}^p \mid \boldsymbol{b} \sim \mathcal{N}\left(\boldsymbol{v}_p, \boldsymbol{\Xi}_p\right)$ then give the result in \eqref{CovPredNG}.
\end{proof}
See Appendix \ref{AppVar} for an alternative and equivalent expression for $\boldsymbol{\Omega}_p$, which allows for a more efficient calculation and is used in our software implementation. For predicting the response variable, the integral $p\left(\boldsymbol{y}^p \mid \boldsymbol{y}, \boldsymbol{\theta}, \boldsymbol{\xi}\right)=\int p\left(\boldsymbol{y}^p \mid \boldsymbol{b}^p, \boldsymbol{\xi}\right) p\left(\boldsymbol{b}^p \mid \boldsymbol{y}, \boldsymbol{\theta}, \boldsymbol{\xi}\right) d \boldsymbol{b}^p$ is analytically intractable for most likelihoods and must be approximated using numerical integration or by simulating from $p\left(\boldsymbol{b}^p \mid \boldsymbol{y}, \boldsymbol{\theta}, \boldsymbol{\xi}\right) \approx \mathcal{N}\left(\boldsymbol{\omega}_p, \boldsymbol{\Omega}_p\right)$.

\section{Iterative methods for VIF-Laplace approximations}\label{sect3}
While VIF-Laplace approximations improve scalability, a sparse Cholesky factorization of the matrix $\boldsymbol{W}+\boldsymbol{B}^\mathrm{T}\boldsymbol{D}^{-1}\boldsymbol{B}$ remains computationally expensive as mentioned above. To address this, we next show how iterative methods and stochastic approximations can be used for fast computations. Parameter estimation and prediction with VIF-Laplace approximations involve several computationally expensive operations outlined in the following. First, calculating linear solves $(\boldsymbol{W} +{\mathbf{\Sigma}}_{\dagger}^{-1}) \boldsymbol{u} = \boldsymbol{v}$, $\boldsymbol{v} \in \mathbb{R}^n$, (i) in Newton's method for finding the mode, see \eqref{one_it_newton_mode}, (ii) for implicit derivatives of the log-marginal likelihood such as $(\frac{\partial L^\textit{VIFLA}(\boldsymbol{y}, \boldsymbol{\theta}, \boldsymbol{\xi})}{\partial \Tilde{\boldsymbol{b}}})^\mathrm{T} \frac{\partial \Tilde{\boldsymbol{b}}}{\partial \theta_k}$, see Appendix \ref{AppVIFL}, and (iii) for predictive distributions in \eqref{CovPredNG}. The latter is particularly challenging as the number of prediction points $n_p$ is typically large, and the linear systems thus contain many right-hand sides, see Section \ref{sec_pred_var} for more information. Second, calculating log-determinants $\log\det\big({\mathbf{\Sigma}}_{\dagger}\boldsymbol{W} +{\mathbf{I}}_n\big)$ in the log-marginal likelihood given in \eqref{log_det_VIFLA}. And third, calculating trace terms such as $\text{Tr} ((\boldsymbol{W} +{\mathbf{\Sigma}}_{\dagger}^{-1})^{-1}\frac{\partial (\boldsymbol{W} +{\mathbf{\Sigma}}_{\dagger}^{-1})}{\partial {\theta}_k}) = \frac{\partial \log\det\big(\boldsymbol{W} +{\mathbf{\Sigma}}_{\dagger}^{-1}\big)}{\partial {\theta}_k}$ for the derivatives of log-determinants, see Appendix \ref{AppVIFL}.

\subsection{Likelihood evaluation and gradient calculation}\label{SectionIterPE}
To compute linear solves $(\boldsymbol{W} +{\mathbf{\Sigma}}_{\dagger}^{-1}) \boldsymbol{u} = \boldsymbol{v}$, we apply the preconditioned conjugate gradient (CG) method \citep{saad2003iterative}. Preconditioning increases the convergence speed of the CG method, and it also reduces the variance of stochastic log-determinant and gradient approximations. Below in Section \ref{SectPrec}, we propose two preconditioners. Preconditioning means that we solve the equivalent system 
\begin{equation}\label{pcls1}
    \boldsymbol{P}^{-\frac{1}{2}}(\boldsymbol{W} +{\mathbf{\Sigma}}_{\dagger}^{-1})\boldsymbol{P}^{-\frac{T}{2}}\boldsymbol{P}^{\frac{T}{2}}\boldsymbol{u} = \boldsymbol{P}^{-\frac{1}{2}}\boldsymbol{v},
\end{equation}
where $\boldsymbol{P}$ is a symmetric positive definite preconditioner matrix. Alternatively, since $\boldsymbol{W}+\mathbf{\Sigma}_\dagger^{-1}= \mathbf{\Sigma}_\dagger^{-1} (\boldsymbol{W}^{-1}+\mathbf{\Sigma}_\dagger) \boldsymbol{W},
$ we can equivalently solve
\begin{equation}\label{pcls2}
\boldsymbol{P}^{-\frac{1}{2}}(\boldsymbol{W}^{-1}+\mathbf{\Sigma}_\dagger)\boldsymbol{P}^{-\frac{T}{2}}\boldsymbol{P}^{\frac{T}{2}} \boldsymbol{W} \boldsymbol{u} = \boldsymbol{P}^{-\frac{1}{2}}\mathbf{\Sigma}_\dagger \boldsymbol{v}.
\end{equation}
The version in \eqref{pcls1} is used for the VIFDU preconditioner defined in Section \ref{sec_VIFDU}, while \eqref{pcls2} is used for the FITC preconditioner defined in Section \ref{sec_fitc_pc}. In each iteration of the preconditioned CG method for \eqref{pcls1} and \eqref{pcls2}, we calculate matrix-vector products of the form $(\boldsymbol{W} +\boldsymbol{B}^\mathrm{T}\boldsymbol{D}^{-1}\boldsymbol{B}-\boldsymbol{B}^\mathrm{T}\boldsymbol{D}^{-1}\boldsymbol{B} \mathbf{\Sigma}_{mn}^{\mathrm{T}}\boldsymbol{M}^{-1} \mathbf{\Sigma}_{mn} \boldsymbol{B}^\mathrm{T}\boldsymbol{D}^{-1}\boldsymbol{B})\boldsymbol{w}$ and $(\boldsymbol{W}^{-1} +\boldsymbol{B}^{-1}\boldsymbol{D}\boldsymbol{B}^{-\mathrm{T}}+\mathbf{\Sigma}_{mn}^{\mathrm{T}}\mathbf{\Sigma}_{m}^{-1} \mathbf{\Sigma}_{mn})\boldsymbol{w}$, respectively, $\boldsymbol{w}\in\mathbb{R}^n$, which require $\mathcal{O}\big(n\cdot( m + m_v)\big)$ operations. The computational complexity for computing and inverting the matrix $\boldsymbol{M}$ is $\mathcal{O}\big(n\cdot(m^2 + m\cdot m_v)\big)$. 

To calculate $\log\det\big({\mathbf{\Sigma}}_{\dagger}\boldsymbol{W} +{\mathbf{I}}_n\big)$, we employ the stochastic Lanczos quadrature (SLQ) method \citep{ubaru2017fast} which combines a quadrature technique based on the Lanczos algorithm \citep{lanczos1950iteration} with Hutchinson's stochastic trace estimator \citep{hutchinson1989stochastic}. \citet{dong2017scalable} analyzed different approaches to estimate log-determinants and found that the SLQ method is superior to other methods. The preconditioned SLQ method gives the following approximation:
\begin{equation}\label{slq_v1}
\log\det\big({\mathbf{\Sigma}}_{\dagger}\boldsymbol{W} +{\mathbf{I}}_n\big)\approx \log\det\big({\mathbf{\Sigma}}_{\dagger}\big) +\frac{n}{\ell} \sum_{i=1}^{\ell} \boldsymbol{e}_1^\mathrm{T} \log\big(\widetilde{\boldsymbol{T}}_{i}\big) \boldsymbol{e}_1 + \log\det(\boldsymbol{P}),
\end{equation}
where $\widetilde{\boldsymbol{T}}_{i}\in\mathbb{R}^{k\times k}$ is the tridiagonal matrix of the Lanczos tridiagonalization $\widetilde{\boldsymbol{Q}}_{i}\widetilde{\boldsymbol{T}}_{i}\widetilde{\boldsymbol{Q}}_{i}^\mathrm{T} \approx \boldsymbol{P}^{-\frac{1}{2}}(\boldsymbol{W}+\widetilde{\mathbf{\Sigma}}_{\dagger}^{-1})\boldsymbol{P}^{-\frac{1}{2}}$ obtained by running the Lanczos algorithm for $k$ steps with the matrix $\boldsymbol{P}^{-\frac{1}{2}}(\boldsymbol{W}+\widetilde{\mathbf{\Sigma}}_{\dagger}^{-1})\boldsymbol{P}^{-\frac{1}{2}}$ and initial vector $\boldsymbol{P}^{-\frac{1}{2}}\boldsymbol{z}_i / \sqrt{n}$, where $\boldsymbol{z}_i \sim \mathcal{N}(\boldsymbol{0},\boldsymbol{P})$ and $\sqrt{n}$ approximates the normalization factor $\|\boldsymbol{P}^{-\frac{1}{2}}\boldsymbol{z}_i\|_2$. See Appendix \ref{AppRT} for a detailed derivation. Alternatively, we can obtain the following SLQ approximation:
\begin{equation}\label{slq_v2}
\log\det\big({\mathbf{\Sigma}}_{\dagger}\boldsymbol{W} +{\mathbf{I}}_n\big)\approx \log\det\big(\boldsymbol{W}\big) +\frac{n}{\ell} \sum_{i=1}^{\ell} \boldsymbol{e}_1^\mathrm{T} \log\big(\widetilde{\boldsymbol{T}}_{i}\big) \boldsymbol{e}_1 + \log\det(\boldsymbol{P}),
\end{equation}
where now $\widetilde{\boldsymbol{T}}_{i}\in\mathbb{R}^{k\times k}$ is the tridiagonal matrix of the partial Lanczos tridiagonalization $\widetilde{\boldsymbol{Q}}_{i}\widetilde{\boldsymbol{T}}_{i}\widetilde{\boldsymbol{Q}}_{i}^\mathrm{T}\approx \boldsymbol{P}^{-\frac{1}{2}}(\boldsymbol{W}^{-1}+{\mathbf{\Sigma}}_{\dagger})\boldsymbol{P}^{-\frac{1}{2}}$ obtained by running the Lanczos algorithm for $k$ steps with the matrix $\boldsymbol{P}^{-\frac{1}{2}}(\boldsymbol{W}^{-1}+{\mathbf{\Sigma}}_{\dagger})\boldsymbol{P}^{-\frac{1}{2}}$ and initial vector $\boldsymbol{P}^{-\frac{1}{2}}\boldsymbol{z}_i / \|\boldsymbol{P}^{-\frac{1}{2}}\boldsymbol{z}_i\|_2$, $\boldsymbol{z}_i \sim \mathcal{N}(\boldsymbol{0},\boldsymbol{P})$. The version in \eqref{slq_v1} is used for the VIFDU preconditioner defined in Section \ref{sec_VIFDU}, whereas \eqref{slq_v2} is used for the FITC preconditioner defined in Section \ref{sec_fitc_pc}.

As in \citet{gardner2018gpytorch}, we use a technique from \citet{saad2003iterative} to calculate the partial Lanczos tridiagonal matrices $\widetilde{\boldsymbol{T}}_1,\dots,\widetilde{\boldsymbol{T}}_l$ from the coefficients of the preconditioned CG algorithm when solving $(\boldsymbol{W} +{\mathbf{\Sigma}}_{\dagger}^{-1})^{-1}\boldsymbol{z}_i$, or $(\boldsymbol{W}^{-1} +{\mathbf{\Sigma}}_{\dagger})^{-1}\boldsymbol{z}_i$, $i=1,\dots,l$. In doing so, we avoid running the Lanczos algorithm, which brings multiple advantages: Numerical instabilities are not an issue and storing $\widetilde{\boldsymbol{Q}}_{i}$ is not necessary. In addition to the $\ell$ partial Lanczos tridiagonal matrices, the modified CG method computes the $\ell$ linear solves $(\boldsymbol{W} +{\mathbf{\Sigma}}_{\dagger}^{-1})^{-1}\boldsymbol{z}_i$, or $(\boldsymbol{W}^{-1} +{\mathbf{\Sigma}}_{\dagger})^{-1}\boldsymbol{z}_i$, respectively, for the probe vectors $\boldsymbol{z}_i$. This brings the advantage that once the log-determinant is calculated, its gradients can be calculated with minimal computational overhead. Specifically, the trace terms for calculating derivatives of log-determinants given in Appendix \ref{AppVIFL} can be computed using stochastic trace estimation (STE) as follows:
\begin{align*}
\Tr \Big((\boldsymbol{W} +{\mathbf{\Sigma}}_{\dagger}^{-1})^{-1}\frac{\partial (\boldsymbol{W} +{\mathbf{\Sigma}}_{\dagger}^{-1})}{\partial  b^*(\boldsymbol{s}_i)}\Big)&= \Tr \Big((\boldsymbol{W} +{\mathbf{\Sigma}}_{\dagger}^{-1})^{-1}\frac{\partial (\boldsymbol{W} +{\mathbf{\Sigma}}_{\dagger}^{-1})}{\partial  b^*(\boldsymbol{s}_i)}\mathbb{E}_{\boldsymbol{z}_i\sim\mathcal{N}(\boldsymbol{0},\boldsymbol{P})}\left[\boldsymbol{P}^{-1} \boldsymbol{z}_i\boldsymbol{z}_i^\mathrm{T}\right]\Big)\\
&\approx\frac{1}{\ell}\sum_{i=1}^{\ell} \big(\boldsymbol{z}_i^\mathrm{T}(\boldsymbol{W} +{\mathbf{\Sigma}}_{\dagger}^{-1})^{-1}\big)\big(\frac{\partial \boldsymbol{W}}{\partial  b^*(\boldsymbol{s}_i)} \boldsymbol{P}^{-1} \boldsymbol{z}_i\big)\quad i = 1,\ldots,p,\\
\Tr \Big((\boldsymbol{W} +{\mathbf{\Sigma}}_{\dagger}^{-1})^{-1}\frac{\partial (\boldsymbol{W} +{\mathbf{\Sigma}}_{\dagger}^{-1})}{\partial {\theta}_k}\Big)&\approx\frac{1}{\ell}\sum_{i=1}^{\ell} \big(\boldsymbol{z}_i^\mathrm{T}(\boldsymbol{W} +{\mathbf{\Sigma}}_{\dagger}^{-1})^{-1}\big)\big(\frac{\partial {\mathbf{\Sigma}}_{\dagger}^{-1}}{\partial {\theta}_k} \boldsymbol{P}^{-1}\boldsymbol{z}_i\big)\quad k = 1,\ldots,q,\\
\Tr \Big((\boldsymbol{W} +{\mathbf{\Sigma}}_{\dagger}^{-1})^{-1}\frac{\partial (\boldsymbol{W} +{\mathbf{\Sigma}}_{\dagger}^{-1})}{\partial {\xi}_l}\Big)&\approx\frac{1}{\ell}\sum_{i=1}^{\ell} \big(\boldsymbol{z}_i^\mathrm{T}(\boldsymbol{W} +{\mathbf{\Sigma}}_{\dagger}^{-1})^{-1}\big)\big(\frac{\partial \boldsymbol{W}}{\partial {\xi}_l} \boldsymbol{P}^{-1}\boldsymbol{z}_i\big)\quad l = 1,\ldots,r,
\end{align*}
see Appendix \ref{AppRT} for a detailed derivation. We additionally apply variance reduction by using the preconditioner $\boldsymbol{P}$ to construct a control variate \citep{lemieux2014control}, see Appendix \ref{AppRT} for more information. We choose the Lanczos rank $k$ adaptively by running the preconditioned CG algorithm until it has converged.
% and analogously when using the version in \eqref{to be completed}.

% \begin{align*}
% \Tr \Big((\boldsymbol{W}^{-1} +{\mathbf{\Sigma}}_{\dagger})^{-1}\frac{\partial (\boldsymbol{W}^{-1} +{\mathbf{\Sigma}}_{\dagger})}{\partial  b^*(\boldsymbol{s}_i)}\Big)&\approx\frac{1}{\ell}\sum_{i=1}^{\ell} \big(\boldsymbol{z}_i^\mathrm{T}(\boldsymbol{W}^{-1} +{\mathbf{\Sigma}}_{\dagger})^{-1}\big)\big(\frac{\partial \boldsymbol{W}^{-1}}{\partial  b^*(\boldsymbol{s}_i)} \boldsymbol{z}_i\big)\quad i = 1,\ldots,p,\\
% \Tr \Big((\boldsymbol{W}^{-1} +{\mathbf{\Sigma}}_{\dagger})^{-1}\frac{\partial (\boldsymbol{W}^{-1} +{\mathbf{\Sigma}}_{\dagger})}{\partial {\theta}_k}\Big)&\approx\frac{1}{\ell}\sum_{i=1}^{\ell} \big(\boldsymbol{z}_i^\mathrm{T}(\boldsymbol{W}^{-1} +{\mathbf{\Sigma}}_{\dagger})^{-1}\big)\big(\frac{\partial (\boldsymbol{W}^{-1} +{\mathbf{\Sigma}}_{\dagger})}{\partial {\theta}_k} \boldsymbol{z}_i\big)\quad k = 1,\ldots,q,\\
% \Tr \Big((\boldsymbol{W}^{-1} +{\mathbf{\Sigma}}_{\dagger})^{-1}\frac{\partial (\boldsymbol{W}^{-1} +{\mathbf{\Sigma}}_{\dagger})}{\partial {\xi}_l}\Big)&\approx\frac{1}{\ell}\sum_{i=1}^{\ell} \big(\boldsymbol{z}_i^\mathrm{T}(\boldsymbol{W}^{-1} +{\mathbf{\Sigma}}_{\dagger})^{-1}\big)\big(\frac{\partial \boldsymbol{W}^{-1}}{\partial {\xi}_l} \boldsymbol{z}_i\big)\quad l = 1,\ldots,r.
% \end{align*}
In each iteration of the preconditioned CG method, we calculate matrix-vector products, which involve $\mathcal{O}\big(n\cdot (m + m_v)\big)$ operations. In total, we have computational cost for calculating the negative log-marginal likelihood and its derivatives of $\mathcal{O}\big(n\cdot (m_v^3 + m^2 + m\cdot t + m_v\cdot t + m\cdot m_v^2)\big)$, where $t$ is the number of iterations of the CG algorithm.

\subsection{Predictive variances}\label{sec_pred_var}

Calculating predictive (co-)variances given in \eqref{CovPredNG} is computationally expensive when $n$ and $n_p$ are large, even with iterative methods due to $n_p$ right-hand sides. In the following, we propose two computationally efficient simulation-based approaches both relying on iterative methods  to compute $\text{diag}(\boldsymbol{\Omega}_p)$. For this, we split $\boldsymbol{\Omega}_p$ given in \eqref{CovPredNG} into two parts:
\begin{align}\label{Part1}
\begin{split}
    &\boldsymbol{B}_p^{-1} \boldsymbol{D}_p \boldsymbol{B}_p^{-T}
+ \boldsymbol{B}_p^{-1} \boldsymbol{B}_{p o}(\boldsymbol{B}^\mathrm{T}\boldsymbol{D}^{-1}\boldsymbol{B})^{-1}\boldsymbol{B}_{p o}^{\mathrm{T}}\boldsymbol{B}_p^{-\mathrm{T}}  + \mathbf{\Sigma}_{mn_p}^{\mathrm{T}}\mathbf{\Sigma}_m^{-1}\mathbf{\Sigma}_{mn_p}\\
&\quad-\big(\mathbf{\Sigma}_{mn_p}^{\mathrm{T}}\mathbf{\Sigma}_m^{-1}\mathbf{\Sigma}_{mn}-\boldsymbol{B}_p^{-1} \boldsymbol{B}_{p o}(\boldsymbol{B}^\mathrm{T}\boldsymbol{D}^{-1}\boldsymbol{B})^{-1}\big)\mathbf{\Sigma}_\dagger^{-1}\\ &\quad\cdot\big(\mathbf{\Sigma}_{mn}^{\mathrm{T}}\mathbf{\Sigma}_m^{-1}\mathbf{\Sigma}_{mn_p}-(\boldsymbol{B}^\mathrm{T}\boldsymbol{D}^{-1}\boldsymbol{B})^{-1} \boldsymbol{B}_{p o}^\mathrm{T} \boldsymbol{B}_p^{-\mathrm{T}}\big)
\end{split}
\end{align}
and
\begin{align}\label{Part2}
\begin{split}
&\big(\mathbf{\Sigma}_{mn_p}^{\mathrm{T}}\mathbf{\Sigma}_m^{-1}\mathbf{\Sigma}_{mn}-\boldsymbol{B}_p^{-1} \boldsymbol{B}_{p o}(\boldsymbol{B}^\mathrm{T}\boldsymbol{D}^{-1}\boldsymbol{B})^{-1}\big)\mathbf{\Sigma}_\dagger^{-1}(\boldsymbol{W}+\mathbf{\Sigma}_\dagger^{-1})^{-1}\mathbf{\Sigma}_\dagger^{-1}\\
&\quad\cdot\big(\mathbf{\Sigma}_{mn}^{\mathrm{T}}\mathbf{\Sigma}_m^{-1}\mathbf{\Sigma}_{mn_p}-(\boldsymbol{B}^\mathrm{T}\boldsymbol{D}^{-1}\boldsymbol{B})^{-1} \boldsymbol{B}_{p o}^\mathrm{T} \boldsymbol{B}_p^{-\mathrm{T}}\big),
\end{split}
\end{align}
where the diagonal of the first part in \eqref{Part1} can be computed efficiently in a deterministic manner without using iterative methods, because the term $\boldsymbol{B}_p^{-1} \boldsymbol{B}_{p o}(\boldsymbol{B}^\mathrm{T}\boldsymbol{D}^{-1}\boldsymbol{B})^{-1}\boldsymbol{B}_{p o}^{\mathrm{T}}\boldsymbol{B}_p^{-\mathrm{T}}$ cancels out. 

In Algorithm \ref{alg:pred_var}, the second term in \eqref{Part2} is approximated stochastically by sampling from a normal distribution with the matrix in \eqref{Part2} as its covariance matrix.
This algorithm provides an unbiased and consistent approximation for $\boldsymbol{\Omega}_p$; see Appendix \ref{AppVar} for the proof of Proposition \ref{PropPredVar}. It can also be adapted to compute only the predictive variances $\text{diag}(\boldsymbol{\Omega}_p)$ by summing $\boldsymbol{z}_i^{(8)} \circ \boldsymbol{z}_i^{(8)}$ in Line \ref{line9} and extracting the diagonal of each term in Line \ref{lst:line:blah2}. To compute the linear solve $(\mathbf{\Sigma}_{\dagger}^{-1}+\boldsymbol{W})^{-1}\boldsymbol{z}_i^{(6)}$, or $\boldsymbol{W}^{-1}(\mathbf{\Sigma}_{\dagger}+\boldsymbol{W}^{-1})^{-1}\mathbf{\Sigma}_{\dagger}\boldsymbol{z}_i^{(6)}$, in Line \ref{line7}, we use the preconditioned CG method. 

\begin{algorithm}[!ht]
\caption{Simulation-based Predictive (Co-)Variance Estimator (SBPV)}\label{alg:pred_var}
\begin{algorithmic}[1]
\Require Matrices ${\boldsymbol{B}}_{po}$, ${\boldsymbol{D}}_{p}$, ${\boldsymbol{D}}^{-1}$, ${\boldsymbol{B}}$, ${\boldsymbol{B}}_{p}^{-1}$, ${\mathbf{\Sigma}}_{mn}$, ${\mathbf{\Sigma}}_{m}$, ${\mathbf{\Sigma}}_{m{n_p}}$, $\boldsymbol{M}$, $\boldsymbol{W}$ 
\Ensure Approximated predictive covariances $\boldsymbol{\Omega}_p$
\State Initialize:
$\boldsymbol{Z} \gets \boldsymbol{0}\in\mathbb{R}^{n_p \times n_p}$
\For{$i = 1$ to $\ell$}
\State Let $\boldsymbol{z}_i^{(1)}$, $\boldsymbol{z}_i^{(2)} \sim \mathcal{N}(0,\mathbf{I}_n)$ and $\boldsymbol{z}_i^{(3)} \sim \mathcal{N}(0,\mathbf{I}_m)$ 
\State $\boldsymbol{z}_i^{(4)} \gets {\mathbf{\Sigma}}_{mn}^\mathrm{T}{\mathbf{\Sigma}}_{m}^{-\frac{1}{2}}\boldsymbol{z}_i^{(3)} + \boldsymbol{B}^{-1}\boldsymbol{D}^{\frac{1}{2}}\boldsymbol{z}_i^{(1)}$ \Comment{$\boldsymbol{z}_i^{(4)}\sim \mathcal{N}(0,\mathbf{\Sigma}_\dagger)$}
\State $\boldsymbol{z}_i^{(5)} \gets \big(\boldsymbol{B}^{\mathrm{T}}\boldsymbol{D}^{-1}\boldsymbol{B} - \boldsymbol{B}^{\mathrm{T}}\boldsymbol{D}^{-1}\boldsymbol{B}{\mathbf{\Sigma}}_{mn}^\mathrm{T}{\boldsymbol{M}}^{-1}{\mathbf{\Sigma}}_{mn}\boldsymbol{B}^{\mathrm{T}}\boldsymbol{D}^{-1}\boldsymbol{B}\big)\boldsymbol{z}_i^{(4)}$ \Comment{$\boldsymbol{z}_i^{(5)}\sim \mathcal{N}(0,\mathbf{\Sigma}_{\dagger}^{-1})$}
\State $\boldsymbol{z}_i^{(6)} \gets \boldsymbol{z}_i^{(5)} + \boldsymbol{W}^{\frac{1}{2}}\boldsymbol{z}_i^{(2)}$ \Comment{$\boldsymbol{z}_i^{(6)}\sim \mathcal{N}(0,\mathbf{\Sigma}_{\dagger}^{-1}+\boldsymbol{W})$}
%\State $\boldsymbol{z}_i^{(7)} \gets (\mathbf{\Sigma}_{\dagger}^{-1}+\boldsymbol{W})^{-1}\boldsymbol{z}_i^{(6)}$ \Comment{$\boldsymbol{z}_i^{(7)}\sim \mathcal{N}\big(0,(\mathbf{\Sigma}_{\dagger}^{-1}+\boldsymbol{W})^{-1}\big)$} \par or 
%$\boldsymbol{z}_i^{(7)} \gets \boldsymbol{W}^{-1}(\mathbf{\Sigma}_{\dagger}+\boldsymbol{W}^{-1})^{-1}\mathbf{\Sigma}_{\dagger}\boldsymbol{z}_i^{(6)}$ 
\State
\(
\boldsymbol{z}_i^{(7)} \gets
\left\{
\begin{array}{l}
(\mathbf{\Sigma}_{\dagger}^{-1}+\boldsymbol{W})^{-1}\boldsymbol{z}_i^{(6)} \\
\text{or } \boldsymbol{W}^{-1}(\mathbf{\Sigma}_{\dagger}+\boldsymbol{W}^{-1})^{-1}\mathbf{\Sigma}_{\dagger}\boldsymbol{z}_i^{(6)}
\end{array}
\right.
\)\Comment{$\boldsymbol{z}_i^{(7)}\sim \mathcal{N}\big(0,(\mathbf{\Sigma}_{\dagger}^{-1}+\boldsymbol{W})^{-1}\big)$}\label{line7}
\State $\boldsymbol{z}_i^{(8)} \gets\big(\mathbf{\Sigma}_{mn_p}^{\mathrm{T}}\mathbf{\Sigma}_m^{-1}\mathbf{\Sigma}_{mn}-\boldsymbol{B}_p^{-1} \boldsymbol{B}_{p o}(\boldsymbol{B}^\mathrm{T}\boldsymbol{D}^{-1}\boldsymbol{B})^{-1}\big)\mathbf{\Sigma}_\dagger^{-1}\boldsymbol{z}_i^{(7)}$ 
%\par $= \mathbf{\Sigma}_{mn_p}^{\mathrm{T}}\mathbf{\Sigma}_m^{-1}\mathbf{\Sigma}_{mn}\mathbf{\Sigma}_\dagger^{-1}\boldsymbol{z}_i^{(7)}-\boldsymbol{B}_p^{-1} \boldsymbol{B}_{p o}\boldsymbol{z}_i^{(7)} + \boldsymbol{B}_p^{-1} \boldsymbol{B}_{p o}\mathbf{\Sigma}_{mn}^\mathrm{T}\boldsymbol{M}^{-1}\mathbf{\Sigma}_{mn}\boldsymbol{B}^\mathrm{T}\boldsymbol{D}^{-1}\boldsymbol{B}\boldsymbol{z}_i^{(7)}$ 
\par \Comment{$\boldsymbol{z}_i^{(8)}\sim \mathcal{N}\Big(0,\big(\mathbf{\Sigma}_{mn_p}^{\mathrm{T}}\mathbf{\Sigma}_m^{-1}\mathbf{\Sigma}_{mn}-\boldsymbol{B}_p^{-1} \boldsymbol{B}_{p o}(\boldsymbol{B}^\mathrm{T}\boldsymbol{D}^{-1}\boldsymbol{B})^{-1}\big)\mathbf{\Sigma}_\dagger^{-1}\big(\mathbf{\Sigma}_{\dagger}^{-1}+\boldsymbol{W}\big)^{-1}\mathbf{\Sigma}_\dagger^{-1}$\par\qquad\qquad\qquad\qquad\qquad\qquad$\cdot\big(\mathbf{\Sigma}_{mn}^{\mathrm{T}}\mathbf{\Sigma}_m^{-1}\mathbf{\Sigma}_{mn_p}- (\boldsymbol{B}^\mathrm{T}\boldsymbol{D}^{-1}\boldsymbol{B})^{-1}\boldsymbol{B}_{p o}^\mathrm{T}\boldsymbol{B}_p^{-\mathrm{T}} \big)\Big)$}
\State $\boldsymbol{Z} \gets \boldsymbol{Z} + \boldsymbol{z}_i^{(8)} ({\boldsymbol{z}_i^{(8)}})^\mathrm{T}$\label{line9}
\EndFor
\State $\boldsymbol{\Omega}_p\gets \boldsymbol{B}_p^{-1} \boldsymbol{D}_p \boldsymbol{B}_p^{-T} + \mathbf{\Sigma}_{mn_p}^\mathrm{T}\mathbf{\Sigma}_{m}^{-1}\mathbf{\Sigma}_{mn_p}  - \mathbf{\Sigma}_{mn_p}^\mathrm{T}\mathbf{\Sigma}_{m}^{-1}\mathbf{\Sigma}_{mn}\boldsymbol{B}^{\mathrm{T}} \boldsymbol{D}^{-1} \boldsymbol{B}\mathbf{\Sigma}_{mn}^\mathrm{T}\mathbf{\Sigma}_{m}^{-1}\mathbf{\Sigma}_{mn_p} \newline + \boldsymbol{B}_p^{-1}\boldsymbol{B}_{po}\mathbf{\Sigma}_{mn}^\mathrm{T}\mathbf{\Sigma}_{m}^{-1}\mathbf{\Sigma}_{mn_p} - \mathbf{\Sigma}_{mn_p}^\mathrm{T}\mathbf{\Sigma}_{m}^{-1}\mathbf{\Sigma}_{mn}\boldsymbol{B}^{\mathrm{T}} \boldsymbol{D}^{-1} \boldsymbol{B}\mathbf{\Sigma}_{mn}^\mathrm{T}\boldsymbol{M}^{-1}\mathbf{\Sigma}_{mn}\boldsymbol{B}_{po}^\mathrm{T}\boldsymbol{B}_p^{-\mathrm{T}}\newline + (\boldsymbol{B}_p^{-1}\boldsymbol{B}_{po}\mathbf{\Sigma}_{mn}^\mathrm{T}\mathbf{\Sigma}_{m}^{-1}\mathbf{\Sigma}_{mn_p})^\mathrm{T} - (\mathbf{\Sigma}_{mn_p}^\mathrm{T}\mathbf{\Sigma}_{m}^{-1}\mathbf{\Sigma}_{mn}\boldsymbol{B}^{\mathrm{T}} \boldsymbol{D}^{-1} \boldsymbol{B}\mathbf{\Sigma}_{mn}^\mathrm{T}\boldsymbol{M}^{-1}\mathbf{\Sigma}_{mn}\boldsymbol{B}_{po}^\mathrm{T}\boldsymbol{B}_p^{-\mathrm{T}})^\mathrm{T}\newline + \boldsymbol{B}_p^{-1}\boldsymbol{B}_{po}\mathbf{\Sigma}_{mn}^\mathrm{T}\boldsymbol{M}^{-1}\mathbf{\Sigma}_{mn}\boldsymbol{B}_{po}^\mathrm{T}\boldsymbol{B}_p^{-\mathrm{T}}\newline + \mathbf{\Sigma}_{mn_p}^\mathrm{T}\mathbf{\Sigma}_{m}^{-1}\mathbf{\Sigma}_{mn}\boldsymbol{B}^{\mathrm{T}} \boldsymbol{D}^{-1} \boldsymbol{B}\mathbf{\Sigma}_{mn}^\mathrm{T}\boldsymbol{M}^{-1}\mathbf{\Sigma}_{mn}\boldsymbol{B}^{\mathrm{T}} \boldsymbol{D}^{-1} \boldsymbol{B}\mathbf{\Sigma}_{mn}^\mathrm{T}\mathbf{\Sigma}_{m}^{-1}\mathbf{\Sigma}_{mn_p} + \frac{1}{\ell}\boldsymbol{Z}$\label{lst:line:blah2}
\end{algorithmic}
\end{algorithm}

\begin{proposition}\label{PropPredVar}
    Algorithm \ref{alg:pred_var} produces an unbiased and consistent estimator for the predictive
covariance matrix $\boldsymbol{\Omega}_p$.
\end{proposition}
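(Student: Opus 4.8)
The plan is to split the estimator returned in Line~\ref{lst:line:blah2} of Algorithm~\ref{alg:pred_var} into its deterministic part (everything except $\tfrac{1}{\ell}\boldsymbol{Z}$) and its random part $\tfrac{1}{\ell}\boldsymbol{Z}$, and to argue separately that (i)~the deterministic part equals the expression in \eqref{Part1}, and (ii)~$\tfrac{1}{\ell}\boldsymbol{Z}$ is an entrywise unbiased and strongly consistent estimator of the matrix displayed in \eqref{Part2}. Since rearranging \eqref{CovPredNG} in Proposition~\ref{pred_dist_VIFLaplace} shows that $\boldsymbol{\Omega}_p$ is exactly the sum of \eqref{Part1} and \eqref{Part2}, combining (i) and (ii) gives the claim.

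For step~(i), I would insert the Sherman--Woodbury--Morrison expansion $\mathbf{\Sigma}_\dagger^{-1}=\boldsymbol{B}^\mathrm{T}\boldsymbol{D}^{-1}\boldsymbol{B}-\boldsymbol{B}^\mathrm{T}\boldsymbol{D}^{-1}\boldsymbol{B}\,\mathbf{\Sigma}_{mn}^{\mathrm{T}}\boldsymbol{M}^{-1}\mathbf{\Sigma}_{mn}\,\boldsymbol{B}^\mathrm{T}\boldsymbol{D}^{-1}\boldsymbol{B}$ from Section~\ref{sect2} into \eqref{Part1}, multiply out the product $\big(\mathbf{\Sigma}_{mn_p}^{\mathrm{T}}\mathbf{\Sigma}_m^{-1}\mathbf{\Sigma}_{mn}-\boldsymbol{B}_p^{-1}\boldsymbol{B}_{po}(\boldsymbol{B}^\mathrm{T}\boldsymbol{D}^{-1}\boldsymbol{B})^{-1}\big)\mathbf{\Sigma}_\dagger^{-1}\big(\mathbf{\Sigma}_{mn}^{\mathrm{T}}\mathbf{\Sigma}_m^{-1}\mathbf{\Sigma}_{mn_p}-(\boldsymbol{B}^\mathrm{T}\boldsymbol{D}^{-1}\boldsymbol{B})^{-1}\boldsymbol{B}_{po}^{\mathrm{T}}\boldsymbol{B}_p^{-\mathrm{T}}\big)$ term by term, and use the identity $(\boldsymbol{B}^\mathrm{T}\boldsymbol{D}^{-1}\boldsymbol{B})^{-1}\mathbf{\Sigma}_\dagger^{-1}(\boldsymbol{B}^\mathrm{T}\boldsymbol{D}^{-1}\boldsymbol{B})^{-1}=(\boldsymbol{B}^\mathrm{T}\boldsymbol{D}^{-1}\boldsymbol{B})^{-1}-\mathbf{\Sigma}_{mn}^{\mathrm{T}}\boldsymbol{M}^{-1}\mathbf{\Sigma}_{mn}$. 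The term quadratic in $\boldsymbol{B}_{po}$ carrying the factor $(\boldsymbol{B}^\mathrm{T}\boldsymbol{D}^{-1}\boldsymbol{B})^{-1}$ then cancels against the explicit $\boldsymbol{B}_p^{-1}\boldsymbol{B}_{po}(\boldsymbol{B}^\mathrm{T}\boldsymbol{D}^{-1}\boldsymbol{B})^{-1}\boldsymbol{B}_{po}^{\mathrm{T}}\boldsymbol{B}_p^{-\mathrm{T}}$ in \eqref{Part1}, and after collecting the surviving summands one recovers exactly the six deterministic terms written in Line~\ref{lst:line:blah2}.

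For step~(ii), I would track the law of the auxiliary vectors generated in the loop, using that $\boldsymbol{z}_i^{(1)},\boldsymbol{z}_i^{(2)},\boldsymbol{z}_i^{(3)}$ are independent standard Gaussians and that $\mathbf{\Sigma}_\dagger=\boldsymbol{B}^{-1}\boldsymbol{D}\boldsymbol{B}^{-\mathrm{T}}+\mathbf{\Sigma}_{mn}^{\mathrm{T}}\mathbf{\Sigma}_m^{-1}\mathbf{\Sigma}_{mn}$. Straightforward covariance propagation gives $\boldsymbol{z}_i^{(4)}\sim\mathcal{N}(\boldsymbol{0},\mathbf{\Sigma}_\dagger)$; the line defining $\boldsymbol{z}_i^{(5)}$ applies the Sherman--Woodbury--Morrison form of $\mathbf{\Sigma}_\dagger^{-1}$, so $\boldsymbol{z}_i^{(5)}=\mathbf{\Sigma}_\dagger^{-1}\boldsymbol{z}_i^{(4)}\sim\mathcal{N}(\boldsymbol{0},\mathbf{\Sigma}_\dagger^{-1})$; adding $\boldsymbol{W}^{1/2}\boldsymbol{z}_i^{(2)}$ yields $\boldsymbol{z}_i^{(6)}\sim\mathcal{N}(\boldsymbol{0},\mathbf{\Sigma}_\dagger^{-1}+\boldsymbol{W})$; since $\boldsymbol{W}+\mathbf{\Sigma}_\dagger^{-1}=\mathbf{\Sigma}_\dagger^{-1}(\boldsymbol{W}^{-1}+\mathbf{\Sigma}_\dagger)\boldsymbol{W}$ the two expressions in Line~\ref{line7} coincide and $\boldsymbol{z}_i^{(7)}=(\mathbf{\Sigma}_\dagger^{-1}+\boldsymbol{W})^{-1}\boldsymbol{z}_i^{(6)}\sim\mathcal{N}\big(\boldsymbol{0},(\mathbf{\Sigma}_\dagger^{-1}+\boldsymbol{W})^{-1}\big)$; finally, writing $\boldsymbol{C}=\big(\mathbf{\Sigma}_{mn_p}^{\mathrm{T}}\mathbf{\Sigma}_m^{-1}\mathbf{\Sigma}_{mn}-\boldsymbol{B}_p^{-1}\boldsymbol{B}_{po}(\boldsymbol{B}^\mathrm{T}\boldsymbol{D}^{-1}\boldsymbol{B})^{-1}\big)\mathbf{\Sigma}_\dagger^{-1}$ we obtain $\boldsymbol{z}_i^{(8)}=\boldsymbol{C}\boldsymbol{z}_i^{(7)}\sim\mathcal{N}\big(\boldsymbol{0},\boldsymbol{C}(\mathbf{\Sigma}_\dagger^{-1}+\boldsymbol{W})^{-1}\boldsymbol{C}^{\mathrm{T}}\big)$, and $\boldsymbol{C}(\mathbf{\Sigma}_\dagger^{-1}+\boldsymbol{W})^{-1}\boldsymbol{C}^{\mathrm{T}}$ is exactly the matrix displayed in \eqref{Part2}. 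Hence $\mathbb{E}\big[\boldsymbol{z}_i^{(8)}(\boldsymbol{z}_i^{(8)})^{\mathrm{T}}\big]$ equals that matrix, so $\tfrac{1}{\ell}\boldsymbol{Z}$ is unbiased; and because the summands $\boldsymbol{z}_i^{(8)}(\boldsymbol{z}_i^{(8)})^{\mathrm{T}}$ are i.i.d.\ with finite second moments (Gaussian fourth moments are finite), the strong law of large numbers shows that $\tfrac{1}{\ell}\boldsymbol{Z}$ converges almost surely to that matrix as $\ell\to\infty$, which is consistency.

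The main obstacle is the algebraic bookkeeping in step~(i): one must carry all cross-terms arising when the Sherman--Woodbury--Morrison expansion of $\mathbf{\Sigma}_\dagger^{-1}$ is substituted into \eqref{Part1}, check that the identity $(\boldsymbol{B}^\mathrm{T}\boldsymbol{D}^{-1}\boldsymbol{B})^{-1}\mathbf{\Sigma}_\dagger^{-1}(\boldsymbol{B}^\mathrm{T}\boldsymbol{D}^{-1}\boldsymbol{B})^{-1}=(\boldsymbol{B}^\mathrm{T}\boldsymbol{D}^{-1}\boldsymbol{B})^{-1}-\mathbf{\Sigma}_{mn}^{\mathrm{T}}\boldsymbol{M}^{-1}\mathbf{\Sigma}_{mn}$ produces the advertised cancellation of the $\boldsymbol{B}_{po}$-quadratic term, and confirm that everything collapses to the terms printed in Line~\ref{lst:line:blah2}. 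Step~(ii) is then routine covariance propagation once the Sherman--Woodbury--Morrison identities of Section~\ref{sect2} are available; if one wishes to account for the inexactness of the CG solve in Line~\ref{line7}, the statement should be read in exact arithmetic, or an additional term controlled by the CG tolerance appended.
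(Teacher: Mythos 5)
Your proposal is correct and follows essentially the same route as the paper's own (very terse) proof: the deterministic part of Line~\ref{lst:line:blah2} reproduces \eqref{Part1} exactly, and the only stochastic term $\tfrac{1}{\ell}\boldsymbol{Z}$ is an unbiased and consistent estimator of \eqref{Part2} because the $\boldsymbol{z}_i^{(8)}$ are i.i.d.\ mean-zero Gaussians with exactly that covariance. Your write-up simply makes explicit the covariance propagation through Lines 4--8 and the Sherman--Woodbury--Morrison bookkeeping that the paper delegates to the algorithm's inline comments and to Appendix~\ref{AppVar}.
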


In Algorithm \ref{alg:pred_var2}, we estimate the diagonal of matrix in \eqref{Part2} using the stochastic approach proposed by \citet{bekas2007estimator}, which approximates the diagonal of a matrix \( \mathbf{A} \in \mathbb{R}^{n \times n} \) as $ \text{diag}(\mathbf{A})\approx \sum_{i=1}^\ell \boldsymbol{z}_i \circ \mathbf{A} \boldsymbol{z}_i$, where \( \boldsymbol{z}_i \) are Rademacher random vectors with entries in \( \{\pm 1\} \). To compute the linear solves involving $(\mathbf{\Sigma}_{\dagger}^{-1}+\boldsymbol{W})^{-1}$, or $(\mathbf{\Sigma}_{\dagger}+\boldsymbol{W}^{-1})^{-1}$, respectively, in line \ref{line3}, we again apply the preconditioned CG method. This algorithm also results in an unbiased and consistent approximation for $\text{diag}(\boldsymbol{\Omega}_p)$, see Appendix \ref{AppVar} for a proof of Proposition \ref{PropPredVar2}.

\begin{algorithm}[!ht]
\caption{Stochastic Predictive Variance Estimator (SPV)}\label{alg:pred_var2}
\begin{algorithmic}[1]
\Require Matrices ${\boldsymbol{B}}_{po}$, ${\boldsymbol{D}}_{p}$, ${\boldsymbol{D}}^{-1}$, ${\boldsymbol{B}}$, ${\boldsymbol{B}}_{p}^{-1}$, ${\mathbf{\Sigma}}_{mn}$, ${\mathbf{\Sigma}}_{m}$, ${\mathbf{\Sigma}}_{m{n_p}}$, $\boldsymbol{M}$, $\boldsymbol{W}$ 
\Ensure Approximated predictive variances $\text{diag}(\boldsymbol{\Omega}_p)$
\State Initialize:
$\boldsymbol{Z} \gets \boldsymbol{0}\in\mathbb{R}^{n_p}$
\For{$i = 1$ to $\ell$}
\State
\(
\boldsymbol{z}_i^{(2)} \gets
\left\{
\begin{array}{l}
\big(\mathbf{\Sigma}_{mn_p}^{\mathrm{T}}\mathbf{\Sigma}_m^{-1}\mathbf{\Sigma}_{mn}-\boldsymbol{B}_p^{-1} \boldsymbol{B}_{p o}(\boldsymbol{B}^\mathrm{T}\boldsymbol{D}^{-1}\boldsymbol{B})^{-1}\big)\mathbf{\Sigma}_\dagger^{-1}(\boldsymbol{W}+\mathbf{\Sigma}_\dagger^{-1})^{-1}\\\cdot\mathbf{\Sigma}_\dagger^{-1}\big(\mathbf{\Sigma}_{mn}^{\mathrm{T}}\mathbf{\Sigma}_m^{-1}\mathbf{\Sigma}_{mn_p}-(\boldsymbol{B}^\mathrm{T}\boldsymbol{D}^{-1}\boldsymbol{B})^{-1} \boldsymbol{B}_{p o}^\mathrm{T} \boldsymbol{B}_p^{-\mathrm{T}}\big)\boldsymbol{z}_i^{(1)} \\
\text{or }  \big(\mathbf{\Sigma}_{mn_p}^{\mathrm{T}}\mathbf{\Sigma}_m^{-1}\mathbf{\Sigma}_{mn}-\boldsymbol{B}_p^{-1} \boldsymbol{B}_{p o}(\boldsymbol{B}^\mathrm{T}\boldsymbol{D}^{-1}\boldsymbol{B})^{-1}\big)\mathbf{\Sigma}_\dagger^{-1}\boldsymbol{W}^{-1}\\\cdot(\boldsymbol{W}^{-1}+\mathbf{\Sigma}_\dagger)^{-1}\big(\mathbf{\Sigma}_{mn}^{\mathrm{T}}\mathbf{\Sigma}_m^{-1}\mathbf{\Sigma}_{mn_p}-(\boldsymbol{B}^\mathrm{T}\boldsymbol{D}^{-1}\boldsymbol{B})^{-1} \boldsymbol{B}_{p o}^\mathrm{T} \boldsymbol{B}_p^{-\mathrm{T}}\big)\boldsymbol{z}_i^{(1)}
\end{array}
\right.
\)
\par where $\boldsymbol{z}_i^{(1)}\sim$ Rademacher\label{line3}

\State $\boldsymbol{Z} \gets \boldsymbol{Z} + \boldsymbol{z}_i^{(1)} \circ \boldsymbol{z}_i^{(2)}$
\EndFor
\State $\text{diag}(\boldsymbol{\Omega}_p)\gets \text{diag}(\boldsymbol{B}_p^{-1} \boldsymbol{D}_p \boldsymbol{B}_p^{-T}) + \text{diag}(\mathbf{\Sigma}_{mn_p}^\mathrm{T}\mathbf{\Sigma}_{m}^{-1}\mathbf{\Sigma}_{mn_p})\newline - \text{diag}(\mathbf{\Sigma}_{mn_p}^\mathrm{T}\mathbf{\Sigma}_{m}^{-1}\mathbf{\Sigma}_{mn}\boldsymbol{B}^{\mathrm{T}} \boldsymbol{D}^{-1} \boldsymbol{B}\mathbf{\Sigma}_{mn}^\mathrm{T}\mathbf{\Sigma}_{m}^{-1}\mathbf{\Sigma}_{mn_p}) + 2\cdot\text{diag}(\boldsymbol{B}_p^{-1}\boldsymbol{B}_{po}\mathbf{\Sigma}_{mn}^\mathrm{T}\mathbf{\Sigma}_{m}^{-1}\mathbf{\Sigma}_{mn_p})\newline - 2\cdot\text{diag}(\mathbf{\Sigma}_{mn_p}^\mathrm{T}\mathbf{\Sigma}_{m}^{-1}\mathbf{\Sigma}_{mn}\boldsymbol{B}^{\mathrm{T}} \boldsymbol{D}^{-1} \boldsymbol{B}\mathbf{\Sigma}_{mn}^\mathrm{T}\boldsymbol{M}^{-1}\mathbf{\Sigma}_{mn}\boldsymbol{B}_{po}^\mathrm{T}\boldsymbol{B}_p^{-\mathrm{T}})\newline + \text{diag}(\boldsymbol{B}_p^{-1}\boldsymbol{B}_{po}\mathbf{\Sigma}_{mn}^\mathrm{T}\boldsymbol{M}^{-1}\mathbf{\Sigma}_{mn}\boldsymbol{B}_{po}^\mathrm{T}\boldsymbol{B}_p^{-\mathrm{T}})\newline + \text{diag}(\mathbf{\Sigma}_{mn_p}^\mathrm{T}\mathbf{\Sigma}_{m}^{-1}\mathbf{\Sigma}_{mn}\boldsymbol{B}^{\mathrm{T}} \boldsymbol{D}^{-1} \boldsymbol{B}\mathbf{\Sigma}_{mn}^\mathrm{T}\boldsymbol{M}^{-1}\mathbf{\Sigma}_{mn}\boldsymbol{B}^{\mathrm{T}} \boldsymbol{D}^{-1} \boldsymbol{B}\mathbf{\Sigma}_{mn}^\mathrm{T}\mathbf{\Sigma}_{m}^{-1}\mathbf{\Sigma}_{mn_p}) + \frac{1}{\ell}\boldsymbol{Z}$
\end{algorithmic}
\end{algorithm}

\begin{proposition}\label{PropPredVar2}
    Algorithm \ref{alg:pred_var2} produces an unbiased and consistent estimator of the predictive
variance $\text{diag}(\boldsymbol{\Omega}_p)$.
\end{proposition}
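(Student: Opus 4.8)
The strategy is to separate the output of Algorithm~\ref{alg:pred_var2} into a deterministic block (all the explicit $\text{diag}(\cdots)$ summands) and the stochastic correction $\tfrac{1}{\ell}\boldsymbol{Z}$, to show that the deterministic block equals $\text{diag}$ of the matrix in \eqref{Part1} exactly, and to show that $\tfrac{1}{\ell}\boldsymbol{Z}$ is an unbiased and (as $\ell\to\infty$) consistent estimator of $\text{diag}$ of the matrix in \eqref{Part2}; additivity of expectation and of almost-sure limits then finishes the proof. Abbreviate $\boldsymbol{G} = \mathbf{\Sigma}_{mn}^{\mathrm{T}}\mathbf{\Sigma}_m^{-1}\mathbf{\Sigma}_{mn_p} - (\boldsymbol{B}^\mathrm{T}\boldsymbol{D}^{-1}\boldsymbol{B})^{-1}\boldsymbol{B}_{p o}^\mathrm{T}\boldsymbol{B}_p^{-\mathrm{T}}\in\mathbb{R}^{n\times n_p}$ and $\boldsymbol{A} = \boldsymbol{G}^\mathrm{T}\mathbf{\Sigma}_\dagger^{-1}(\boldsymbol{W}+\mathbf{\Sigma}_\dagger^{-1})^{-1}\mathbf{\Sigma}_\dagger^{-1}\boldsymbol{G}$; then $\boldsymbol{A}$ is exactly the matrix in \eqref{Part2}, and expanding the product term in \eqref{CovPredNG} gives $\boldsymbol{\Omega}_p = (\text{matrix in~\eqref{Part1}}) + \boldsymbol{A}$, so $\text{diag}(\boldsymbol{\Omega}_p)=\text{diag}(\text{matrix in~\eqref{Part1}})+\text{diag}(\boldsymbol{A})$.

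For the deterministic block I would substitute the Sherman--Woodbury--Morrison identity $\mathbf{\Sigma}_\dagger^{-1} = \boldsymbol{B}^\mathrm{T}\boldsymbol{D}^{-1}\boldsymbol{B} - \boldsymbol{B}^\mathrm{T}\boldsymbol{D}^{-1}\boldsymbol{B}\mathbf{\Sigma}_{mn}^{\mathrm{T}}\boldsymbol{M}^{-1}\mathbf{\Sigma}_{mn}\boldsymbol{B}^\mathrm{T}\boldsymbol{D}^{-1}\boldsymbol{B}$ into the term $\boldsymbol{G}^\mathrm{T}\mathbf{\Sigma}_\dagger^{-1}\boldsymbol{G}$ appearing in \eqref{Part1}, expand it into the eight products arising from the two-term forms of $\boldsymbol{G}^\mathrm{T}$, of $\mathbf{\Sigma}_\dagger^{-1}$, and of $\boldsymbol{G}$, and simplify using $(\boldsymbol{B}^\mathrm{T}\boldsymbol{D}^{-1}\boldsymbol{B})^{-1}(\boldsymbol{B}^\mathrm{T}\boldsymbol{D}^{-1}\boldsymbol{B})=\boldsymbol{I}_n$ together with $\boldsymbol{M}=\mathbf{\Sigma}_m+\mathbf{\Sigma}_{mn}\boldsymbol{B}^\mathrm{T}\boldsymbol{D}^{-1}\boldsymbol{B}\mathbf{\Sigma}_{mn}^{\mathrm{T}}$ from \eqref{def_M}. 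In particular the summand $\boldsymbol{B}_p^{-1}\boldsymbol{B}_{p o}(\boldsymbol{B}^\mathrm{T}\boldsymbol{D}^{-1}\boldsymbol{B})^{-1}\boldsymbol{B}_{p o}^\mathrm{T}\boldsymbol{B}_p^{-\mathrm{T}}$ in \eqref{Part1} cancels against the corresponding piece produced by $\boldsymbol{G}^\mathrm{T}\mathbf{\Sigma}_\dagger^{-1}\boldsymbol{G}$, as already noted in the text after \eqref{Part2}. Taking diagonals of the surviving terms then reproduces, summand by summand, the explicit $\text{diag}(\cdots)$ list in the last line of Algorithm~\ref{alg:pred_var2}. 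I expect this bookkeeping to be the main obstacle: it is elementary but requires careful tracking of roughly a dozen rank-$m$ and rank-$m_v$ correction matrices and their transposes. The same manipulations underlie the efficient rewriting of $\boldsymbol{\Omega}_p$ in Appendix~\ref{AppVar}, so the identity can be imported from there.

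For the stochastic correction, first note that the two expressions for $\boldsymbol{z}_i^{(2)}$ offered in line~\ref{line3} coincide: factoring $\boldsymbol{W}+\mathbf{\Sigma}_\dagger^{-1}=\mathbf{\Sigma}_\dagger^{-1}(\mathbf{\Sigma}_\dagger\boldsymbol{W}+\boldsymbol{I}_n)$ and $\boldsymbol{W}^{-1}+\mathbf{\Sigma}_\dagger=(\boldsymbol{I}_n+\mathbf{\Sigma}_\dagger\boldsymbol{W})\boldsymbol{W}^{-1}$ yields $(\boldsymbol{W}+\mathbf{\Sigma}_\dagger^{-1})^{-1}\mathbf{\Sigma}_\dagger^{-1}=(\mathbf{\Sigma}_\dagger\boldsymbol{W}+\boldsymbol{I}_n)^{-1}=\boldsymbol{W}^{-1}(\boldsymbol{W}^{-1}+\mathbf{\Sigma}_\dagger)^{-1}$, so in both cases $\boldsymbol{z}_i^{(2)}=\boldsymbol{A}\boldsymbol{z}_i^{(1)}$ with the same $\boldsymbol{A}$. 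Because the $\boldsymbol{z}_i^{(1)}$ are i.i.d.\ Rademacher vectors, $\mathbb{E}[\boldsymbol{z}_i^{(1)}(\boldsymbol{z}_i^{(1)})^{\mathrm{T}}]=\boldsymbol{I}_{n_p}$, hence $\mathbb{E}\big[(\boldsymbol{z}_i^{(1)}\circ\boldsymbol{A}\boldsymbol{z}_i^{(1)})_j\big]=\sum_k A_{jk}\,\mathbb{E}[(\boldsymbol{z}_i^{(1)})_j(\boldsymbol{z}_i^{(1)})_k]=A_{jj}$, so $\mathbb{E}[\tfrac{1}{\ell}\boldsymbol{Z}]=\text{diag}(\boldsymbol{A})$ for every $\ell\ge1$, and together with the exact deterministic block the full estimator has expectation $\text{diag}(\boldsymbol{\Omega}_p)$. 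For consistency, the summands $\boldsymbol{z}_i^{(1)}\circ\boldsymbol{A}\boldsymbol{z}_i^{(1)}$ are i.i.d.\ with $j$-th entry bounded in absolute value by $\sum_k|A_{jk}|$ and hence of finite variance, so the strong law of large numbers gives $\tfrac{1}{\ell}\boldsymbol{Z}\to\text{diag}(\boldsymbol{A})$ almost surely as $\ell\to\infty$, and adding the deterministic block gives almost-sure convergence of the estimator to $\text{diag}(\boldsymbol{\Omega}_p)$. This is the diagonal estimator of \citet{bekas2007estimator} applied to the fixed matrix $\boldsymbol{A}$, and the argument parallels the unbiasedness and consistency reasoning for the stochastic trace estimators in Section~\ref{SectionIterPE} and for Algorithm~\ref{alg:pred_var} in Proposition~\ref{PropPredVar}.
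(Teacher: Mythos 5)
Your proposal is correct and takes essentially the same route as the paper's proof: isolate the single stochastic term $\tfrac{1}{\ell}\boldsymbol{Z}$ as the Rademacher diagonal estimator of \citet{bekas2007estimator} applied to the fixed matrix in \eqref{Part2}, note that it is unbiased (via $\mathbb{E}[\boldsymbol{z}\boldsymbol{z}^\mathrm{T}]=\boldsymbol{I}$) and consistent (via the law of large numbers), and observe that the remaining terms are deterministic and exact. You simply spell out the ``standard results'' the paper invokes and additionally verify the equivalence of the two options in line~\ref{line3} and the match between the deterministic block and \eqref{Part1}, all of which is sound.
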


Both Algorithms \ref{alg:pred_var} and \ref{alg:pred_var2} have a computational complexity of $\mathcal{O}\big((n + n_p)\cdot(\ell\cdot m + \ell\cdot m_v + m\cdot m_v)\big)$ for the calculation of predictive variances.

Alternatively, \citet{pleiss2018constant} propose to use the Lanczos algorithm to approximate predictive variances. However, recent research \citep{kundig2024iterative, gyger2024iterative} has found that this Lanczos tridiagonalization-based approach is considerably more inaccurate compared to simulation-based methods. Intuitively, the Lanczos algorithm can work relatively well for approximating covariance matrices since their eigenvalue distribution often consists of a few large and many small eigenvalues, making them amenable to low-rank approximations. However, when applied to invert a covariance matrix, as required for computing predictive covariances, the eigenvalue distribution is reversed, resulting in many large eigenvalues and necessitating a very high rank for moderate accuracy.

\subsection{Preconditioners}\label{SectPrec}
%Preconditioners are used to increase the convergence speed of iterative methods and to reduce the variance of stochastic approximations. 
In the following, we present two preconditioners for iterative methods with VIF-Laplace approximations.

\subsubsection{The VIF with Diagonal Update Preconditioner}\label{sec_VIFDU}
The “\textbf{VIF} with \textbf{d}iagonal \textbf{u}pdate” (VIFDU) preconditioner is inspired by the “\textbf{V}ecchia \textbf{a}pproximation with \textbf{d}iagonal \textbf{u}pdate” (VADU) preconditioner introduced by \citet{kundig2024iterative} and exploits the structure of the Vecchia approximation to approximate $\boldsymbol{B}^\mathrm{T}\boldsymbol{D}^{-1}\boldsymbol{B} + \boldsymbol{W}\approx \boldsymbol{B}^\mathrm{T}(\boldsymbol{D}^{-1}+\boldsymbol{W})\boldsymbol{B}$. Specifically, the VIFDU preconditioner $\widehat{\boldsymbol{P}}$ is given by 
\begin{align*}
\begin{split}
 \widehat{\boldsymbol{P}} &= \boldsymbol{B}^\mathrm{T}\big( \boldsymbol{D}^{-1}-\boldsymbol{D}^{-1}\boldsymbol{B} \mathbf{\Sigma}_{mn}^{\mathrm{T}}\boldsymbol{M}^{-1} \mathbf{\Sigma}_{mn} \boldsymbol{B}^\mathrm{T}\boldsymbol{D}^{-1} + \boldsymbol{W}\big)\boldsymbol{B}\\
&\approx\boldsymbol{B}^\mathrm{T}\boldsymbol{D}^{-1}\boldsymbol{B}-\boldsymbol{B}^\mathrm{T}\boldsymbol{D}^{-1}\boldsymbol{B} \mathbf{\Sigma}_{mn}^{\mathrm{T}}\boldsymbol{M}^{-1} \mathbf{\Sigma}_{mn} \boldsymbol{B}^\mathrm{T}\boldsymbol{D}^{-1}\boldsymbol{B} + \boldsymbol{W}\\
&= ({\mathbf{\Sigma}}^{\mathrm{s}}+\mathbf{\Sigma}_{mn}^{\mathrm{T}}\mathbf{\Sigma}_{m}^{-1}\mathbf{\Sigma}_{mn})^{-1} + \boldsymbol{W}\\
&= {\mathbf{\Sigma}}_{\dagger}^{-1} + \boldsymbol{W}.
\end{split}
\end{align*}

%The “\textbf{VIF} with \textbf{d}iagonal \textbf{u}pdate” (VIFDU) is given by
%\begin{align*}
%    \widehat{\boldsymbol{P}} = \boldsymbol{B}^\mathrm{T}\boldsymbol{W}\boldsymbol{B} + \mathbf{\Sigma}_\dagger^{-1}=\boldsymbol{B}^\mathrm{T}\big(\boldsymbol{W} + \boldsymbol{D}^{-1}-\boldsymbol{D}^{-1}\boldsymbol{B} \mathbf{\Sigma}_{mn}^{\mathrm{T}}\boldsymbol{M}^{-1} \mathbf{\Sigma}_{mn} \boldsymbol{B}^\mathrm{T}\boldsymbol{D}^{-1}\big)\boldsymbol{B}.
%\end{align*}
This VIFDU preconditioner is applied when using the versions in \eqref{pcls1} and \eqref{slq_v1} for the CG and SLQ methods, respectively. Linear solves with $\widehat{\boldsymbol{P}}$, the log-determinant $\log\det\big(\widehat{\boldsymbol{P}}\big)$ and its derivatives are computed in $\mathcal{O}\big(n\cdot(m_v\cdot m + m^2)\big)$ time. Sampling from $\mathcal{N}(\boldsymbol{0},\widehat{\boldsymbol{P}})$ is not straightforward since we can not apply the reparameterization trick used by \citet[Appendix C.1]{gardner2018gpytorch}. If $\boldsymbol{\epsilon}_1 \in \mathbb{R}^m$ and $\boldsymbol{\epsilon}_2 \in \mathbb{R}^n$ are standard normal vectors, then $\big(\boldsymbol{B}^\mathrm{T}(\boldsymbol{W} + \boldsymbol{D}^{-1})^{\frac{1}{2}}\boldsymbol{\epsilon}_2 - \boldsymbol{B}^\mathrm{T}\boldsymbol{D}^{-1}\boldsymbol{B} \mathbf{\Sigma}_{mn}^{\mathrm{T}}\boldsymbol{M}^{-\frac{T}{2}}\boldsymbol{\epsilon}_1\big)$, is a sample from the distribution $\mathcal{N}\big(\boldsymbol{0},\boldsymbol{B}^\mathrm{T}(\boldsymbol{W} + \boldsymbol{D}^{-1}+\boldsymbol{D}^{-1}\boldsymbol{B} \mathbf{\Sigma}_{mn}^{\mathrm{T}}\boldsymbol{M}^{-1} \mathbf{\Sigma}_{mn} \boldsymbol{B}^\mathrm{T}\boldsymbol{D}^{-1})\boldsymbol{B}\big)$ and not from $\mathcal{N}\big(\boldsymbol{0},\boldsymbol{B}^\mathrm{T}(\boldsymbol{W} + \boldsymbol{D}^{-1}-\boldsymbol{D}^{-1}\boldsymbol{B} \mathbf{\Sigma}_{mn}^{\mathrm{T}}\boldsymbol{M}^{-1} \mathbf{\Sigma}_{mn} \boldsymbol{B}^\mathrm{T}\boldsymbol{D}^{-1})\boldsymbol{B}\big)$, where $\boldsymbol{M}^{\frac{1}{2}}$ is the Cholesky factor of $\boldsymbol{M}$ and $(\boldsymbol{W} + \boldsymbol{D}^{-1})^{\frac{1}{2}}$ is the elementwise square-root of $(\boldsymbol{W} + \boldsymbol{D}^{-1})$. However, sampling from $\mathcal{N}(\boldsymbol{0},\widehat{\boldsymbol{P}})$ can be done as follows. First, for sampling from $\mathcal{N}(\boldsymbol{0},\mathbf{\Sigma}_\dagger^{-1})$, we compute a sample from $\mathcal{N}(\boldsymbol{0},\mathbf{\Sigma}_\dagger)$ as $\boldsymbol{B}^{-1}\boldsymbol{D}^{\frac{1}{2}}\boldsymbol{\epsilon}_2 + \mathbf{\Sigma}_{mn}^{\mathrm{T}}\mathbf{\Sigma}_{m}^{-\frac{1}{2}} \boldsymbol{\epsilon}_1$, where $\mathbf{\Sigma}_{m}^{\frac{1}{2}}$ is the Cholesky factor of $\mathbf{\Sigma}_{m}$. Then we multiply by $\mathbf{\Sigma}_\dagger^{-1}$ to obtain a sample from $\mathcal{N}(\boldsymbol{0},\mathbf{\Sigma}_\dagger^{-1})$ and add $\boldsymbol{B}^{\mathrm{T}}\boldsymbol{W}^{\frac{1}{2}}\boldsymbol{\epsilon}_3$, where $\boldsymbol{\epsilon}_3\in\mathbb{R}^{n}$ is a standard normal vector, such that $\boldsymbol{B}^{\mathrm{T}}\boldsymbol{W}^{\frac{1}{2}}\boldsymbol{\epsilon}_3 + \mathbf{\Sigma}_\dagger^{-1}(\boldsymbol{B}^{-1}\boldsymbol{D}^{\frac{1}{2}}\boldsymbol{\epsilon}_2 + \mathbf{\Sigma}_{mn}^{\mathrm{T}}\mathbf{\Sigma}_{m}^{-\frac{1}{2}} \boldsymbol{\epsilon}_1)\sim\mathcal{N}\big(\boldsymbol{0},\boldsymbol{B}^\mathrm{T}(\boldsymbol{W} + \boldsymbol{D}^{-1}-\boldsymbol{D}^{-1}\boldsymbol{B} \mathbf{\Sigma}_{mn}^{\mathrm{T}}\boldsymbol{M}^{-1} \mathbf{\Sigma}_{mn} \boldsymbol{B}^\mathrm{T}\boldsymbol{D}^{-1})\boldsymbol{B}\big) = \mathcal{N}(\boldsymbol{0},\widehat{\boldsymbol{P}})$. The computational overhead for this procedure is $\mathcal{O}\big(n\cdot(m_v + m)\big)$.  For further information on the VIFDU preconditioner including how to do linear solves, calculate log-determinants and their derivatives, see Appendix \ref{AppVIFDU}.

\subsubsection{The FITC Preconditioner}\label{sec_fitc_pc}

The \textbf{FITC} preconditioner is given by
\begin{align*}
    \widehat{\boldsymbol{P}} &= \mathbf{\Sigma}_{kn}^{\mathrm{T}}\mathbf{\Sigma}_{k}^{-1} \mathbf{\Sigma}_{kn} + \text{diag}(\mathbf{\Sigma} - \mathbf{\Sigma}_{kn}^{\mathrm{T}}\mathbf{\Sigma}_{k}^{-1} \mathbf{\Sigma}_{kn}) + \boldsymbol{W}^{-1} 
    %\\     &= \boldsymbol{D}_V + \mathbf{\Sigma}_{kn}^{\mathrm{T}}\mathbf{\Sigma}_{k}^{-1} \mathbf{\Sigma}_{kn},
\end{align*}
% $\boldsymbol{D}_V = \text{diag}(\mathbf{\Sigma} - \mathbf{\Sigma}_{kn}^{\mathrm{T}}\mathbf{\Sigma}_{k}^{-1} \mathbf{\Sigma}_{kn}) + \boldsymbol{W}^{-1}\in\mathbb{R}^{n\times n}$
where $\mathbf{\Sigma}_k = \big[c_{\boldsymbol{\theta}}(\hat{\boldsymbol{s}}_i,\hat{\boldsymbol{s}}_j)\big]_{i=1:k, j=1:k}\in\mathbb{R}^{k\times k}$ and $\mathbf{\Sigma}_{kn} = \big[c_{\boldsymbol{\theta}}(\hat{\boldsymbol{s}}_i, \boldsymbol{s}_j)\big]_{i=1:k, j=1:n}\in\mathbb{R}^{k\times n}$ are (cross-) covariance matrices with respect to a set of inducing points $\widehat{\mathcal{S}} = \{\hat{\boldsymbol{s}}_1, \dots ,\hat{\boldsymbol{s}}_k\}$. This preconditioner is applied when using the versions in \eqref{pcls2} and \eqref{slq_v2} for the CG and SLQ methods, respectively. Note that the FITC preconditioner can use a different set of inducing points than those used in the VIF approximation. For example, using a larger number of inducing points can lead to a more effective preconditioner compared to relying solely on the inducing points used by the VIF approximation. The construction of the FITC preconditioner, linear solves involving $\widehat{\boldsymbol{P}}$, the computation of the log-determinant $\log\det\big(\widehat{\boldsymbol{P}}\big)$, and its derivatives all require $\mathcal{O}(n \cdot k^2)$ operations. Sampling from $\mathcal{N}(\boldsymbol{0},\widehat{\boldsymbol{P}})$ requires $\mathcal{O}(n \cdot k)$ operations. For additional details, see Appendix \ref{AppFITC}.

An alternative low-rank preconditioner for Gaussian processes is the pivoted Cholesky decomposition \citep{harbrecht2012low, gardner2018gpytorch}. However, the construction of the pivoted Cholesky preconditioner is computationally more expensive than the FITC preconditioner. This means that the FITC preconditioner can use a higher rank for the same computational budget compared to the pivoted Cholesky preconditioner. Previous research \citep{gyger2024iterative} has shown that the FITC preconditioner is more accurate than the pivoted Cholesky preconditioner, also for GP models that do not use inducing points.

\section{Convergence analysis}\label{secCG}
In the following, we analyze the convergence properties of the preconditioned CG method with the VIFDU and FITC preconditioners. We show that the convergence speed is influenced by the VIF approximation tuning parameters, the number of inducing points $m$ and the number of Vecchia neighbors $m_v$, and the eigenvalue structure of the covariance matrix ${\boldsymbol{\Sigma}}$. We denote the Frobenius and the 2-norm (spectral norm) by $||\cdot||_F$ and $||\cdot||_2$, respectively, and define the vector norm $||\boldsymbol{v}||_{\boldsymbol{A}} = \sqrt{\boldsymbol{v}^\mathrm{T}\boldsymbol{A}\boldsymbol{v}}$ for $\boldsymbol{v}\in\mathbb{R}^n$ and a positive semi-definite matrix $\boldsymbol{A}\in\mathbb{R}^{n\times n}$. In the subsequent theorems, we make the following assumptions:
\begin{assumption}\label{assumpt1}
  $n\geq 2$ and $m \in \{1,2,\dots,n\}$.
\end{assumption}
\begin{assumption}\label{assumpt2}
  The covariance matrix $\boldsymbol{\Sigma}$ is of the form $\Sigma_{ij}=\sigma_1^2 \cdot r\left(\boldsymbol{s}_i,\boldsymbol{s}_j\right)$, where $r(\cdot,\cdot)$ is positive and continuous, and $r(\boldsymbol{s}_i,\boldsymbol{s}_i)=1$. Additionally, the matrix $\boldsymbol{\Sigma}$ has eigenvalues $\lambda_1 \geq ... \geq \lambda_n > 0$.
\end{assumption}
\begin{assumption}\label{assumpt3}
   The response variable $\boldsymbol{y} \in \{0,1\}^n$ is binary, following a Bernoulli likelihood with a logit link function.
\end{assumption}
Assumption \ref{assumpt3} implies that the diagonal entries of $\boldsymbol{W}$, defined in equation \eqref{W_diag}, satisfy $0 \leq \boldsymbol{W}_{ii} \leq 1$ for $i\in \{1,2,...,n\}$.

First, we analyze the convergence speed of the preconditioned CG method for linear solves with ${\boldsymbol{\Sigma}}_{\dagger}^{-1} + \boldsymbol{W}$ using the VIFDU preconditioner.

\begin{theorem}\label{thm1}
    \textbf{Convergence of the CG method with the VIFDU preconditioner}
    
    Let ${\boldsymbol{\Sigma}}_\dagger\in \mathbb{R}^{n \times n}$ be the VIF approximation of a covariance matrix ${\boldsymbol{\Sigma}} \in \mathbb{R}^{n \times n}$ with $m$ inducing points and $m_v$ Vecchia neighbors. Consider the linear system $({\boldsymbol{\Sigma}}_{\dagger}^{-1} + \boldsymbol{W}) \mathbf{u}^*=\mathbf{v}$, where $\mathbf{v}\in\mathbb{R}^n$. Let $\mathbf{u}_k$ be the approximation in the $k$-th iteration of the preconditioned CG method with the VIFDU preconditioner $\widehat{\boldsymbol{P}}\in \mathbb{R}^{n \times n}$. Under Assumptions~\ref{assumpt1}--\ref{assumpt3}, the following holds for the relative error:
    \begin{align*}
        \frac{\left\|\mathbf{u}^*-\mathbf{u}_k\right\|_{{\boldsymbol{\Sigma}}_{\dagger}^{-1} + \boldsymbol{W}}}{\left\|\mathbf{u}^*-\mathbf{u}_0\right\|_{{\boldsymbol{\Sigma}}_{\dagger}^{-1} + \boldsymbol{W}}} \leq 2\cdot\left(\frac{1}{1 + \Big(\alpha^n\cdot\big(\lambda_1+(\lambda_{m+1}\cdot m_v)^{n}\big)\cdot\big(\sqrt{n}\cdot m_v+1\big)\Big)^{-1}}\right)^k,
    \end{align*}
    where $\alpha>1$ is a constant.
\end{theorem}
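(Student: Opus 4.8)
The plan is to bound the relative error of the preconditioned CG method via the standard Chebyshev estimate, for which it suffices to control the condition number $\kappa$ of the preconditioned operator $\widehat{\boldsymbol{P}}^{-1/2}({\boldsymbol{\Sigma}}_\dagger^{-1}+\boldsymbol{W})\widehat{\boldsymbol{P}}^{-1/2}$. Recalling the classical bound
\[
\frac{\left\|\mathbf{u}^*-\mathbf{u}_k\right\|_{{\boldsymbol{\Sigma}}_{\dagger}^{-1} + \boldsymbol{W}}}{\left\|\mathbf{u}^*-\mathbf{u}_0\right\|_{{\boldsymbol{\Sigma}}_{\dagger}^{-1} + \boldsymbol{W}}} \leq 2\left(\frac{\sqrt{\kappa}-1}{\sqrt{\kappa}+1}\right)^k,
\]
I would show that $\kappa = 1 + \mathcal{O}\!\big(\lambda_1\sqrt{n}\,m_v + (\lambda_{m+1} m_v)^n\big)$, since $\frac{\sqrt{\kappa}-1}{\sqrt{\kappa}+1}\le \frac{\kappa-1}{\kappa+1} = \frac{1}{1+2/(\kappa-1)}$ and $\kappa-1 = \mathcal{O}(\cdot)$ gives exactly the stated form. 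So the whole argument reduces to a spectral comparison of $\widehat{\boldsymbol{P}} = \boldsymbol{B}^\mathrm{T}\boldsymbol{W}\boldsymbol{B} + \mathbf{\Sigma}_\dagger^{-1}$ with the system matrix $\boldsymbol{W} + \mathbf{\Sigma}_\dagger^{-1}$.

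The key algebraic step is that $\widehat{\boldsymbol{P}} - (\boldsymbol{W}+\mathbf{\Sigma}_\dagger^{-1}) = \boldsymbol{B}^\mathrm{T}\boldsymbol{W}\boldsymbol{B} - \boldsymbol{W} = (\boldsymbol{B}-\boldsymbol{I})^\mathrm{T}\boldsymbol{W}\boldsymbol{B} + \boldsymbol{W}(\boldsymbol{B}-\boldsymbol{I})$, so the generalized eigenvalues of the pencil $(\widehat{\boldsymbol{P}}, \boldsymbol{W}+\mathbf{\Sigma}_\dagger^{-1})$ lie in an interval of the form $[1, 1 + c]$ (they are $\ge$ a number close to $1$ from below because $\boldsymbol{B}^\mathrm{T}\boldsymbol{W}\boldsymbol{B}\succeq 0$ and sandwiching arguments, and $\le 1+c$ from above), where $c$ is controlled by $\|\widehat{\boldsymbol{P}}-(\boldsymbol{W}+\mathbf{\Sigma}_\dagger^{-1})\|$ relative to the smallest eigenvalue of $\boldsymbol{W}+\mathbf{\Sigma}_\dagger^{-1}$, which in turn is bounded below using $\mathbf{\Sigma}_\dagger^{-1} \succeq \lambda_1^{-1}(\text{something})$ and $\boldsymbol{W}\succeq 0$. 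Then $\kappa \le 1 + c$. Bounding $c$ requires two ingredients: (i) a bound on $\|\boldsymbol{B}-\boldsymbol{I}\|$ and on $\|\boldsymbol{W}\boldsymbol{B}\|$, where $\boldsymbol{W}_{ii}\le 1$ by Assumption~\ref{assumpt3} and the entries of $\boldsymbol{B}$ (built from $\boldsymbol{A}_i$ in \eqref{D_A_Vecchia}) are controlled by the residual-process correlations, which are themselves bounded by $\lambda_1$ and the number of neighbors $m_v$; this produces the $\lambda_1\sqrt{n}\,m_v$ term (the $\sqrt{n}$ coming from converting a Frobenius-type bound over the $n$ rows, each with $\le m_v$ nonzeros, into a spectral-norm bound). (ii) A lower bound on $D_i$, which enters $\mathbf{\Sigma}_\dagger^{-1}$ through $\boldsymbol{D}^{-1}$; the worst-case behavior of $D_i$ after conditioning on $m_v$ neighbors with correlations bounded by $\lambda_{m+1}$ (the first truncated eigenvalue of the residual process) degrades like $(\lambda_{m+1} m_v)^{n}$ in the extreme, producing the second term.

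First I would set up the Chebyshev bound and reduce everything to $\kappa-1$. Second, I would write the explicit difference $\widehat{\boldsymbol{P}}-(\boldsymbol{W}+\mathbf{\Sigma}_\dagger^{-1}) = \boldsymbol{B}^\mathrm{T}\boldsymbol{W}\boldsymbol{B}-\boldsymbol{W}$ and establish the two-sided generalized-eigenvalue enclosure $[1,1+c]$ by a Weyl-type / Courant–Fischer argument using positive semidefiniteness of $\boldsymbol{B}^\mathrm{T}\boldsymbol{W}\boldsymbol{B}$ and of $\mathbf{\Sigma}_\dagger^{-1}$. Third, I would bound $c$ by $\|\boldsymbol{B}^\mathrm{T}\boldsymbol{W}\boldsymbol{B}-\boldsymbol{W}\|_2 / \lambda_{\min}(\boldsymbol{W}+\mathbf{\Sigma}_\dagger^{-1})$, bounding the numerator via the neighbor structure of $\boldsymbol{B}$ and $\boldsymbol{W}_{ii}\le 1$ (giving the $\lambda_1\sqrt n\, m_v$ contribution after passing from row-wise $\ell_1$ bounds to the spectral norm via Gershgorin or $\|\cdot\|_2\le\sqrt{\|\cdot\|_1\|\cdot\|_\infty}$), and bounding the denominator from below through a worst-case estimate on $\min_i D_i$ in terms of $\lambda_{m+1}$ and $m_v$ (giving the $(\lambda_{m+1} m_v)^n$ contribution). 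I expect the main obstacle to be step three's denominator bound: controlling $\min_i D_i$, i.e., showing that the conditional residual variances of the Vecchia factorization of the residual process cannot collapse faster than $(\lambda_{m+1} m_v)^{-n}$, since this is where the interaction between the low-rank truncation (governed by $\lambda_{m+1}$) and the Vecchia conditioning set size $m_v$ is most delicate and requires careful use of Assumption~\ref{assumpt2} (positivity and continuity of $r$) together with an inductive Schur-complement estimate along the ordering.
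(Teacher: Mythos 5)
Your overall skeleton coincides with the paper's: the standard Chebyshev bound for preconditioned CG reduces everything to the condition number of the preconditioned system, and the perturbation you identify, $\boldsymbol{E}=\widehat{\boldsymbol{P}}-(\boldsymbol{W}+\mathbf{\Sigma}_\dagger^{-1})=\boldsymbol{B}^\mathrm{T}\boldsymbol{W}\boldsymbol{B}-\boldsymbol{W}$, is exactly the one used in Lemma~\ref{lem1}. The $\lambda_1\cdot\sqrt{n}\cdot m_v$ term also arises essentially as you anticipate, from the product of $\|\boldsymbol{E}\|_2=\mathcal{O}(\sqrt{n}\cdot m_v)$ (sparsity of $\boldsymbol{B}$ together with $W_{ii}\le 1$) with $\|(\boldsymbol{W}+\mathbf{\Sigma}_\dagger^{-1})^{-1}\|_2\le\|\mathbf{\Sigma}_\dagger\|_2$, where $\|\mathbf{\Sigma}_{mn}^\mathrm{T}\mathbf{\Sigma}_m^{-1}\mathbf{\Sigma}_{mn}\|_2\le\|\mathbf{\Sigma}\|_2=\lambda_1$. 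Two small remarks: the paper bounds $\kappa\le\big(1+\|\widehat{\boldsymbol{P}}^{-1}\|_2\|\boldsymbol{E}\|_2\big)\big(1+\|(\boldsymbol{W}+\mathbf{\Sigma}_\dagger^{-1})^{-1}\|_2\|\boldsymbol{E}\|_2\big)$ symmetrically rather than via a one-sided pencil enclosure $[1,1+c]$; your claim that the generalized eigenvalues are bounded below by roughly $1$ because $\boldsymbol{B}^\mathrm{T}\boldsymbol{W}\boldsymbol{B}\succeq 0$ is not justified, since $\boldsymbol{B}^\mathrm{T}\boldsymbol{W}\boldsymbol{B}\not\succeq\boldsymbol{W}$ in general, but the symmetric two-factor bound repairs this without changing the asymptotics.

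The genuine gap is in your plan for the $(\lambda_{m+1}\cdot m_v)^{n}$ term. You propose to lower-bound $\min_i D_i$ (``the conditional residual variances cannot collapse faster than $(\lambda_{m+1}m_v)^{-n}$''), but this is the wrong quantity and the wrong direction. Small $D_i$ only makes $\boldsymbol{B}^\mathrm{T}\boldsymbol{D}^{-1}\boldsymbol{B}$ larger, hence $\lambda_{\min}(\boldsymbol{W}+\mathbf{\Sigma}_\dagger^{-1})$ larger, which helps your denominator rather than hurting it. What the argument actually requires is an upper bound on $\|\mathbf{\Sigma}_\dagger\|_2$, i.e.\ on $\|(\boldsymbol{B}^\mathrm{T}\boldsymbol{D}^{-1}\boldsymbol{B})^{-1}\|_2=1/\sigma_{\min}(\boldsymbol{B}^\mathrm{T}\boldsymbol{D}^{-1}\boldsymbol{B})$. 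The paper obtains this in Lemma~\ref{lemvecchia} from the determinant-based lower bound $\sigma_{\min}(\boldsymbol{A})\ge|\det\boldsymbol{A}|\cdot\big((n-1)/\|\boldsymbol{A}\|_F^2\big)^{(n-1)/2}$, combined with the $\mathcal{O}(n\cdot m_v^2)$ sparsity of $\boldsymbol{B}^\mathrm{T}\boldsymbol{D}^{-1}\boldsymbol{B}$, the identity $\det\big((\boldsymbol{B}^\mathrm{T}\boldsymbol{D}^{-1}\boldsymbol{B})^{-1}\big)=\det\boldsymbol{D}$ (as $\boldsymbol{B}$ is unit triangular), and the \emph{upper} bound $\det\boldsymbol{D}\le(\max_i D_i)^n\le\lambda_{\max}(\mathbf{\Sigma}-\mathbf{\Sigma}_{mn}^\mathrm{T}\mathbf{\Sigma}_m^{-1}\mathbf{\Sigma}_{mn})^n\le\mathcal{O}(\lambda_{m+1}^n)$ via Eckart--Young. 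So $\lambda_{m+1}$ enters through how much residual variance the low-rank part fails to explain (an upper bound on the $D_i$), not through a floor on the conditional variances, and the inductive Schur-complement estimate you flag as the delicate step would not produce the quantity the proof needs.
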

For the proof, see Appendix \ref{AppConv}. Theorem \ref{thm1} shows that selecting less Vecchia neighbors $m_v$ leads to faster convergence of the CG method. However, the relationship is more complicated regarding the number of inducing points $m$. While the term $\lambda_{m+1}$ decreases with larger $m$ leading to faster convergence, we bound $||\boldsymbol{\Sigma}_{mn}^\mathrm{T}\boldsymbol{\Sigma}^{-1}_m\boldsymbol{\Sigma}_{mn}||_2$ in the proof of Theorem \ref{thm1} by $||\boldsymbol{\Sigma}||_2 = \lambda_1$ (see Appendix \ref{AppConv}), and $||\boldsymbol{\Sigma}_{mn}^\mathrm{T}\boldsymbol{\Sigma}^{-1}_m\boldsymbol{\Sigma}_{mn}||_2$ grows with $m$. Additionally, we observe that the convergence is slower with increasing values of $\lambda_1$ and $\lambda_{m+1}$.
%\end{remark}

Next, we analyze the convergence speed of the preconditioned CG method for linear solves with ${\boldsymbol{\Sigma}}_{\dagger} + \boldsymbol{W}^{-1}$ using the FITC preconditioner.

\begin{theorem}\label{th2}
    \textbf{Convergence of the CG method with the FITC preconditioner}
    
   Let ${\boldsymbol{\Sigma}}_\dagger\in \mathbb{R}^{n \times n}$ be the VIF approximation of a covariance matrix ${\boldsymbol{\Sigma}} \in \mathbb{R}^{n \times n}$ with $m$ inducing points and $m_v$ Vecchia neighbors. Consider the linear system $({\boldsymbol{\Sigma}}_{\dagger} + \boldsymbol{W}^{-1}) \mathbf{u}^*=\mathbf{v}$, where $\mathbf{v}\in\mathbb{R}^n$. Let $\mathbf{u}_k$ be the approximation in the $k$-th iteration of the preconditioned CG method with the FITC preconditioner $\widehat{\boldsymbol{P}}\in \mathbb{R}^{n \times n}$. The preconditioner is constructed with the same set of inducing points as those used for ${\boldsymbol{\Sigma}}_\dagger$. Under Assumptions~\ref{assumpt1}--\ref{assumpt3}, the following holds for the relative error:
    \begin{align*}
        \frac{\left\|\mathbf{u}^*-\mathbf{u}_k\right\|_{{\boldsymbol{\Sigma}}_{\dagger} + \boldsymbol{W}^{-1}}}{\left\|\mathbf{u}^*-\mathbf{u}_0\right\|_{{\boldsymbol{\Sigma}}_{\dagger} + \boldsymbol{W}^{-1}}} \leq2\cdot\left(\frac{1}{1 + (\alpha\cdot \lambda_{m+1}\cdot m_v)^{-n}}\right)^k,
    \end{align*}
     where $\alpha>1$ is a constant.
\end{theorem}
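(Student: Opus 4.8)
The plan is to reduce the claim to a bound on the condition number of the preconditioned matrix and then to exploit that, because the FITC preconditioner uses the same inducing points, its low-rank component coincides exactly with the low-rank component of $\boldsymbol{\Sigma}_\dagger$, so that the two matrices differ only through the Vecchia approximation of the (small) Nyström residual. Write $\boldsymbol{A} = \boldsymbol{\Sigma}_\dagger + \boldsymbol{W}^{-1}$. The classical error estimate for the preconditioned CG method applied to $\boldsymbol{A}\mathbf{u}^* = \mathbf{v}$ with a symmetric positive definite preconditioner $\widehat{\boldsymbol{P}}$ reads
\begin{align*}
\frac{\|\mathbf{u}^* - \mathbf{u}_k\|_{\boldsymbol{A}}}{\|\mathbf{u}^* - \mathbf{u}_0\|_{\boldsymbol{A}}} \le 2\left(\frac{\sqrt{\kappa} - 1}{\sqrt{\kappa} + 1}\right)^k, \qquad \kappa = \frac{\lambda_{\max}\big(\widehat{\boldsymbol{P}}^{-1}\boldsymbol{A}\big)}{\lambda_{\min}\big(\widehat{\boldsymbol{P}}^{-1}\boldsymbol{A}\big)}.
\end{align*}
Since $\frac{\sqrt{\kappa}-1}{\sqrt{\kappa}+1} = \big(1 + \frac{2}{\sqrt{\kappa}-1}\big)^{-1}$, it suffices to show $\kappa - 1 = \mathcal{O}\big((\lambda_{m+1}\cdot m_v)^{n}\big)$, which then gives the stated rate.

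\textbf{Step 2: cancellation of the low-rank part.} With matching inducing points the FITC preconditioner equals $\widehat{\boldsymbol{P}} = \boldsymbol{L} + \text{diag}(\boldsymbol{R}) + \boldsymbol{W}^{-1}$, where $\boldsymbol{L} = \mathbf{\Sigma}_{mn}^{\mathrm{T}}\mathbf{\Sigma}_m^{-1}\mathbf{\Sigma}_{mn}$ and $\boldsymbol{R} = \boldsymbol{\Sigma} - \boldsymbol{L} \succeq \boldsymbol{0}$ is the exact residual covariance. Because $\boldsymbol{\Sigma}_\dagger = \boldsymbol{L} + \boldsymbol{B}^{-1}\boldsymbol{D}\boldsymbol{B}^{-\mathrm{T}}$ and $\boldsymbol{B}^{-1}\boldsymbol{D}\boldsymbol{B}^{-\mathrm{T}}$ is exactly the Vecchia approximation of $\boldsymbol{R}$, the low-rank part cancels: $\boldsymbol{E} := \boldsymbol{A} - \widehat{\boldsymbol{P}} = \boldsymbol{B}^{-1}\boldsymbol{D}\boldsymbol{B}^{-\mathrm{T}} - \text{diag}(\boldsymbol{R})$. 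Hence $\widehat{\boldsymbol{P}}^{-1}\boldsymbol{A} = \boldsymbol{I}_n + \widehat{\boldsymbol{P}}^{-1}\boldsymbol{E}$, and with $\eta := \|\widehat{\boldsymbol{P}}^{-1/2}\boldsymbol{E}\,\widehat{\boldsymbol{P}}^{-1/2}\|_2$ the eigenvalues of $\widehat{\boldsymbol{P}}^{-1}\boldsymbol{A}$ lie in $[1-\eta,\,1+\eta]$; provided $\eta < 1$ this gives $\kappa \le (1+\eta)/(1-\eta)$, i.e. $\kappa - 1 = \mathcal{O}(\eta)$. For $m_v = 0$ one has $\boldsymbol{B}^{-1}\boldsymbol{D}\boldsymbol{B}^{-\mathrm{T}} = \text{diag}(\boldsymbol{R})$, so $\boldsymbol{E} = \boldsymbol{0}$ and $\kappa = 1$, consistent with reading $(\lambda_{m+1}\cdot 0)^{-n}$ as $+\infty$.

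\textbf{Step 3: bounding $\eta$.} By Assumption~\ref{assumpt3}, $0 \le \boldsymbol{W}_{ii} \le 1$, so $\boldsymbol{W}^{-1} \succeq \boldsymbol{I}_n$ and, since $\boldsymbol{L}$ and $\text{diag}(\boldsymbol{R})$ are positive semidefinite, $\widehat{\boldsymbol{P}} \succeq \boldsymbol{I}_n$; therefore $\eta \le \|\boldsymbol{E}\|_2 \le \|\boldsymbol{B}^{-1}\boldsymbol{D}\boldsymbol{B}^{-\mathrm{T}}\|_2 + \max_i R_{ii}$. The term $\max_i R_{ii} \le \lambda_1(\boldsymbol{R})$ is tied to $\lambda_{m+1}$ through eigenvalue interlacing for the Schur-complement-type residual $\boldsymbol{R}$. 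For the Vecchia term, $\|\boldsymbol{B}^{-1}\boldsymbol{D}\boldsymbol{B}^{-\mathrm{T}}\|_2 \le \|\boldsymbol{B}^{-1}\|_2^2 \max_i D_i$ with $\max_i D_i \le \max_i R_{ii}$ (a conditional variance is at most the marginal one), and $\|\boldsymbol{B}^{-1}\|_2$ is controlled via the Neumann expansion $\boldsymbol{B}^{-1} = \sum_{j=0}^{n-1}(\boldsymbol{I}_n - \boldsymbol{B})^j$: the strictly lower-triangular matrix $\boldsymbol{I}_n - \boldsymbol{B}$ has at most $m_v$ nonzeros per row, namely the regression weights $\boldsymbol{A}_i = \boldsymbol{R}_{i,N(i)}\boldsymbol{R}_{N(i)}^{-1}$, so each power contributes a factor of order $m_v$ times a bound on the $\boldsymbol{A}_i$, and summing the $n$ terms produces the power $(\cdot\, m_v)^{n}$. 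Expressing the bounds on $\lambda_1(\boldsymbol{R})$ and on the $\boldsymbol{A}_i$ through $\lambda_{m+1}$ gives $\eta = \mathcal{O}\big((\lambda_{m+1}\cdot m_v)^{n}\big)$, which together with Steps~1--2 proves the theorem.

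\textbf{Main obstacle.} The delicate part is Step~3, i.e.\ cleanly controlling the Vecchia factors $\boldsymbol{B}$ and $\boldsymbol{D}$: the weights $\boldsymbol{A}_i$ contain inverses of principal submatrices $\boldsymbol{R}_{N(i)}$, so a crude estimate brings in $\lambda_{\min}(\boldsymbol{R})$, and some care — using that $\boldsymbol{R}$ is itself the residual of a rank-$m$ truncation and that the conditioning sets $N(i)$ are small — is needed to reduce the final rate to one depending only on $\lambda_{m+1}$ and $m_v$. The power-$n$ dependence is the worst-case price of propagating through the sparse triangular inverse $\boldsymbol{B}^{-1}$ and is expected to be pessimistic; the qualitative content of the bound — faster convergence when fewer Vecchia neighbors are used and when the $(m{+}1)$-st eigenvalue of $\boldsymbol{\Sigma}$ is small — is what it is meant to convey.
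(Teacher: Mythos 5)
Your Steps 1 and 2 match the paper's argument: the standard preconditioned-CG bound reduces everything to the condition number of $\widehat{\boldsymbol{P}}^{-1}(\boldsymbol{\Sigma}_\dagger+\boldsymbol{W}^{-1})$, and the key observation that the low-rank (Nystr\"om) parts cancel when the preconditioner uses the same inducing points, leaving $\boldsymbol{E}=\boldsymbol{B}^{-1}\boldsymbol{D}\boldsymbol{B}^{-\mathrm{T}}-\operatorname{diag}(\boldsymbol{R})$, is exactly the paper's Lemma on the condition number (the paper bounds $\kappa\le(1+\|\widehat{\boldsymbol{P}}^{-1}\|_2\|\boldsymbol{E}\|_2)(1+\|(\boldsymbol{\Sigma}_\dagger+\boldsymbol{W}^{-1})^{-1}\|_2\|\boldsymbol{E}\|_2)$ with both inverse norms controlled by $\max_i W_{ii}\le 1$, rather than your $(1+\eta)/(1-\eta)$ form, but this is cosmetic).

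The genuine gap is in Step 3. Your route bounds $\|\boldsymbol{B}^{-1}\boldsymbol{D}\boldsymbol{B}^{-\mathrm{T}}\|_2\le\|\boldsymbol{B}^{-1}\|_2^2\max_iD_i$ and controls $\|\boldsymbol{B}^{-1}\|_2$ by a Neumann expansion of $(\boldsymbol{I}_n-\boldsymbol{B})^j$. But the nonzero entries of $\boldsymbol{I}_n-\boldsymbol{B}$ are the regression weights $\boldsymbol{A}_i=\boldsymbol{R}_{i,N(i)}\boldsymbol{R}_{N(i)}^{-1}$, whose size is governed by $\lambda_{\min}$ of principal submatrices of the residual covariance, not by $\lambda_{m+1}$; you acknowledge this but do not resolve it, and without resolving it the bookkeeping does not close — $\max_iD_i$ contributes only a single factor of $\lambda_{m+1}$, while $\|\boldsymbol{B}^{-1}\|_2^2$ contributes $(C\,m_v)^{\mathcal{O}(n)}$ with $C$ depending on $\sup_i\|\boldsymbol{R}_{N(i)}^{-1}\|$, so the claimed $\eta=\mathcal{O}\big((\lambda_{m+1}\cdot m_v)^{n}\big)$ does not follow. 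The paper avoids $\boldsymbol{B}^{-1}$ and the weights entirely: its Lemma \ref{lemvecchia} writes $\|(\boldsymbol{B}^{\mathrm{T}}\boldsymbol{D}^{-1}\boldsymbol{B})^{-1}\|_2=1/\sigma_{\min}(\boldsymbol{B}^{\mathrm{T}}\boldsymbol{D}^{-1}\boldsymbol{B})$ and applies the determinantal lower bound $\sigma_{\min}(\mathbf{A})\ge|\det\mathbf{A}|\big((n-1)/\|\mathbf{A}\|_F^2\big)^{(n-1)/2}$. Since $\det\boldsymbol{B}=1$, the determinant collapses to $|\det\boldsymbol{D}|\le(\max_iD_i)^n\le\lambda_{m+1}^n$ (via Eckart--Young), and the Frobenius norm is controlled by the $\mathcal{O}(n\cdot m_v^2)$ sparsity of $\boldsymbol{B}^{\mathrm{T}}\boldsymbol{D}^{-1}\boldsymbol{B}$, yielding $(\lambda_{m+1}\cdot m_v)^{n}$ in one stroke. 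If you want to complete your proof, you should replace the Neumann-series step with an argument of this determinant/Frobenius type (or otherwise eliminate the dependence on $\|\boldsymbol{R}_{N(i)}^{-1}\|$).
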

For the proof, see Appendix \ref{AppConv}. Theorem \ref{th2} shows that selecting fewer Vecchia neighbors $m_v$ and a higher number of inducing points $m$ leads to faster convergence of the preconditioned CG method. In contrast to the results in Theorem \ref{thm1}, the convergence rate of the CG method with the FITC preconditioner does not depend on the largest eigenvalue $\lambda_1$. Consequently, using the FITC preconditioner exhibits less sensitivity to the specific covariance matrix, the sample size, and the parametric covariance function.  

\section{Selecting inducing points and Vecchia neighbors}\label{sec5a}

There are various ways of choosing inducing points. We use the kMeans++ algorithm \citep{arthur2007k} since \citet{gyger2024iterative} have found that it is an effective method for selecting inducing points for Gaussian processes with full-scale and FITC approximations. Its computational complexity is $\mathcal{O}(n\cdot m)$. If the covariance function is of the form 
\begin{equation}\label{cov_trans_iso}
    c_{\boldsymbol{\theta}}\left(\mathbf{s}_i, \mathbf{s}_j\right)=c_{\boldsymbol{\theta}}^o\left(\|q_{\boldsymbol{\lambda}}(\mathbf{s}_i)- q_{\boldsymbol{\lambda}}(\mathbf{s}_j)\|\right),
\end{equation}
where $q_{\boldsymbol{\lambda}}(\cdot)$ is a transformation and $c_{\boldsymbol{\theta}}^o(\cdot)$ is an isotropic covariance function such as the Matérn kernel, then the distance $\|q_{\boldsymbol{\lambda}}(\mathbf{s}_i)- q_{\boldsymbol{\lambda}}(\mathbf{s}_j)\|$ depends on scaled inputs $\mathbf{s}^{\boldsymbol{\lambda}} = q_{\boldsymbol{\lambda}}(\mathbf{s})$. Automatic relevance determination (ARD) kernels with input dimension-specific length scales $\boldsymbol{\lambda}=\left(\lambda_1, \ldots, \lambda_d\right)$, $q_{\boldsymbol{\lambda}}(\mathbf{s})=\left(s_1/\lambda_1, \ldots, s_d/\lambda_d\right)$, are examples of such covariance functions. Since distances are computed in a transformed space, we also determine the inducing points using the transformed input locations $q_{\boldsymbol{\lambda}}(\mathbf{s})$. Intuitively, if a dimension $s_l$ has a relatively large length scale $\lambda_l$ for an ARD kernel, it is less relevant for the covariance function and should thus also have less impact on choosing inducing points. This is what is accomplished when choosing the inducing points using the transformed locations $q_{\boldsymbol{\lambda}}(\mathbf{s})=\left(s_1/\lambda_1, \ldots, s_d/\lambda_d\right)$. However, since the inducing points depend on $\boldsymbol{\lambda}$, the inducing points must be updated dynamically during the parameter optimization. An advantage of the kMeans++ algorithm in terms of computational efficiency is that it can use the inducing points from a previous optimization iteration as initialization.

For selecting Vecchia neighbors, a common approach is to choose the $m_v$ nearest input points that appear earlier in a given ordering of the data points. Usually, the Euclidean distance is used as a metric. However, previous research \citep{kang2023correlation} has shown that a correlation-based selection of neighbors can lead to more accurate approximations. For ARD covariance functions, for example, the intuitive argument why this choice of neighbors is generally beneficial is analogous to the one made above for choosing the inducing points in the transformed space: less relevant input dimensions should have less impact on the choice of neighbors. Often, correlation-based selection of neighbors is equivalent to simply using the Euclidean distance in a transformed space \citep{kang2023correlation}. In our case, though, simply using the Euclidean metric in a transformed space is not applicable as we compute a Vecchia approximation for the residual process with covariance $\mathbf{\Sigma} - \mathbf{\Sigma}_{mn}^{\mathrm{T}}\mathbf{\Sigma}_{m}^{-1} \mathbf{\Sigma}_{mn}$, which corresponds to a non-stationary covariance function that cannot be written in the form of \eqref{cov_trans_iso}. In the following, we present a computationally efficient method for the correlation-based selection of Vecchia neighbors for VIF approximations.

We define the following correlation-based distance ${d}_c(\cdot, \cdot)$ to determine the nearest neighbors for the residual Vecchia approximation:
\begin{align*}
{d}_c(\mathbf{s}_i, \mathbf{s}_j) = \sqrt{1-\bigg|\frac{c_r(\mathbf{s}_i, \mathbf{s}_j)}{\sqrt{c_r(\mathbf{s}_i, \mathbf{s}_i)\cdot c_r(\mathbf{s}_j, \mathbf{s}_j)}}\bigg|}, ~~~~
c_r(\mathbf{s}_i, \mathbf{s}_j) = [\mathbf{\Sigma}]_{ij} - \mathbf{\Sigma}_{mi}^{\mathrm{T}}\mathbf{\Sigma}_{m}^{-1} \mathbf{\Sigma}_{mj},
\end{align*}
where $\mathbf{\Sigma}_{mj} = \big[c_{\boldsymbol{\theta}}(\boldsymbol{s}_i^*, \boldsymbol{s}_{j})\big]_{i=1:m}\in\mathbb{R}^{m\times 1}$. There are efficient algorithms for finding the $m_v$ nearest neighbors for the Euclidean distance, such as k-d trees and ball trees, enabling fast searches even in high-dimensional spaces. However, when employing an arbitrary metric, such as the proposed correlation-based metric, nearest neighbor search becomes significantly more challenging. Unlike the Euclidean distance, these specialized metrics often lack geometric properties that facilitate efficient search algorithms, leading to increased computational complexity. To address this, we utilize cover trees \citep{beygelzimer2006cover} which enable \( m_v \)-nearest neighbor search for a set of \( n \) points with a complexity of $\mathcal{O}\big(C_d\cdot n\cdot \log(m_v) \cdot(m_v + \log(n))\big)$, where \( C_d > 0 \) is a hidden dimensionality factor according to \citet{beygelzimer2006cover}.\footnote{\citet{elkin2023new} identified potential issues in the original computational complexity analysis of \citet{beygelzimer2006cover} and introduced new algorithms for constructing compressed cover trees and performing neighbor searches, which provably achieve the stated time complexity. Despite these refinements, the original and simpler algorithms proposed by \citet{beygelzimer2006cover} have been shown to outperform other nearest neighbor search methods \citep{beygelzimer2006cover}, and they are expected to achieve the claimed time complexity on well-behaved data sets in practice \citep{curtin2016improving}. We use the original version of the cover tree algorithm throughout this paper, including for our computational complexity analysis. We conjecture that our adaptations could also be applied to the compressed versions, and we leave this as future work.} For computational efficiency in our setting, we modify both the cover tree construction and the nearest neighbor search algorithm as follows. For the tree construction, instead of randomly selecting knots from the remaining data, we systematically insert the point with the smallest index into the cover tree. The full algorithm is given in Algorithm \ref{alg:covertree}. This adaptation simplifies the nearest neighbor search by restricting potential neighbors to those with smaller indices, see Algorithm \ref{alg:kNN}, resulting in improved runtime performance. 
%computational complexity of $\mathcal{O}\big(C_d\cdot m\cdot \log(m_v) \cdot(n\cdot m_v + \log(n!))\big)$, where the dimensionality factor $C_d>0$ depends on the distribution of the points and the correlation metric $d_c$. The additional factor $m$ arises from the $\mathcal{O}(m)$ operations required to compute the correlation distance $d_c$. 
Furthermore, we propose partitioning the data set into equally sized, sequentially ordered subsets, allowing for the parallel application of the cover tree algorithm, further reducing the effective runtime. 
\begin{algorithm}[!ht]
\caption{Construction of cover tree for correlation-based Vecchia neighbor search}\label{alg:covertree}
\begin{algorithmic}[1]
\Require Set of data points $\mathcal{S} = \{\mathbf{s}_1,..., \mathbf{s}_n\}$
\Ensure Cover tree $\mathcal{T}$

\State Initialize:
Maximal distance: $R_\text{max} = 1$ \par
Empty tree: $\mathcal{T}$ \par
Knot set of $\mathcal{T}$: $\mathcal{V} \gets \{\}$ \par
Number of levels in $\mathcal{T}$: $l \gets 0$ \par
Covered data points per level and knot: $\mathcal{C}_{0,0} \gets \mathcal{S}$
\State
Set root in $\mathcal{T}$: $\mathcal{T}_{1,1} \gets \mathbf{s}_1$, $\mathcal{V} \gets \{\mathbf{s}_1\}$, $l \gets 1$, $\mathcal{C}_{l,\mathbf{s}_1} \gets \mathcal{C}_{0,0} \setminus \{\mathbf{s}_1\}$ 
\While {size($\mathcal{V}$) $<$ $n$}
  \State $R_l \gets R_\text{max}/2^l$ \Comment{New maximal distance between new knots in tree $\mathcal{T}$ and descendants}
  \State $\mathcal{K} \gets \mathcal{T}_{l,:}$ \Comment{Set of knots of tree $\mathcal{T}$ in level $l$}
  \For{$k \in \mathcal{K}$}
  \State $\text{ind}\gets 0$ \Comment{Index of knot in tree $\mathcal{T}$ at level $l + 1$}
  \While {$\mathcal{C}_{l,k} \neq \{\}$}
        \State $\text{ind}\gets \text{ind} + 1$
        \State $\mathcal{T}_{l+1,\text{ind}} \gets \{\mathbf{s}_i \in \mathcal{C}_{l,k} \mid i=\min\{j\mid \mathbf{s}_j \in \mathcal{C}_{l,k}\}\}$ \Comment{Point with smallest index in $\mathcal{C}_{l,k}$}
        \State $\mathcal{V}\gets \mathcal{V} \cup \{\mathcal{T}_{l+1,\text{ind}}\}$
        \State $\mathcal{C}_{l,k}\gets \mathcal{C}_{l,k}\setminus\{\mathcal{T}_{l+1,\text{ind}}\}$
         \State $\mathcal{C}_{l+1,\mathcal{T}_{l+1,\text{ind}}}\gets \{\mathbf{s}_i \in \mathcal{C}_{l,k} \mid d_c(\mathbf{s}_i,\mathcal{T}_{l+1,\text{ind}}) \leq R_l\}$
         \State $\mathcal{C}_{l,k}\gets \mathcal{C}_{l,k}\setminus\mathcal{C}_{l+1,\mathcal{T}_{l+1,\text{ind}}}$
  \EndWhile      
  \EndFor
  \State $l \gets l + 1$
\EndWhile
\end{algorithmic}
\end{algorithm}

\begin{algorithm}[!ht]
\caption{Find $m_v$ nearest Vecchia neighbors with respect to the metric $d_c$}\label{alg:kNN}
\begin{algorithmic}[1]
\Require Query point $\mathbf{s}_i\in\mathcal{S}$, cover tree $\mathcal{T}$, number of neighbors $m_v$
\Ensure $\mathcal{N}_{m_v}$ set of $m_v$ nearest Vecchia neighbors of $\mathbf{s}_i$

\State Initialize: Maximal distance: $R_\text{max} = 1$ \par
Number of levels in tree $\mathcal{T}$: $l \gets \text{depth}(\mathcal{T})$ \par
Set of potential nearest neighbors: $\mathcal{Q} \gets \{\mathcal{T}_{1,1}\}$ \par
Distance to $m_v$ closest point in $\mathcal{Q}$: $D_{m_v} \gets 1$ \Comment{Set to 1 if $|\mathcal{Q}| < m_v$} 
\For {$j = 1 \text{ to } l$}
  \State $\mathcal{C} \gets \{\mathbf{s}_k \in \text{Children}(\mathbf{s})\mid\mathbf{s}\in\mathcal{Q} \cap k < i\} \cup \mathcal{Q}$ \Comment{Children of $\mathcal{Q}$ with index $<i$}
  \State $D_{m_v} \gets m_v\text{-th} \text{ smallest distance }d_c(\mathbf{s},\mathbf{s}_i) \text{ for } \mathbf{s} \in \mathcal{C}$ \Comment{Set to 1 if $|\mathcal{C}| < m_v$}
  \State $\mathcal{Q} \gets \{\mathbf{s} \in \mathcal{C}\mid d_c(\mathbf{s}, \mathbf{s}_i) \leq D_{m_v} + 1/2^{j-1}\}$ 
\EndFor
\State $\mathcal{N}_{m_v} \gets \text{Find $m_v$ nearest points to $\mathbf{s}_i$ in $\mathcal{Q}$ by brute-force search}$
\end{algorithmic}
\end{algorithm}

Similarly as we re-determine the inducing points during parameter estimation, we also update the correlation-based Vecchia neighbors. In our subsequent experiments, we re-determine both the inducing points and the Vecchia neighbors in every optimization iteration that is a power of two. Additionally, we update the inducing points and Vecchia neighbors when the optimization has converged and restart the optimization if the log-marginal likelihood changes after this update, and repeat this process until convergence is achieved.

\section{Simulated experiments}\label{sect4}
We first analyze our methods through experiments on simulated data. We generate samples from zero-mean Gaussian processes with $d$-dimensional inputs and automatic relevance determination (ARD) covariance functions, a marginal variance of $1$, and length scale parameters \(\boldsymbol{\lambda}\in \mathbb{R}^d\). See the following subsections for the specific choices of $\boldsymbol{\lambda}$ and the sample size $n$. Sample locations are drawn uniformly from the unit hypercube \([0,1]^d\). We simulate data for both Gaussian and non-Gaussian likelihoods. For the Gaussian likelihood, we use a variance of $0.001$ for the error term, and in the non-Gaussian case, the response variable $\boldsymbol{y} \in \{0,1\}^n$ follows a Bernoulli likelihood with a logit link function. Unless stated otherwise, we use $m=200$ inducing points and $m_v = 30$ Vecchia neighbors for VIF, Vecchia, and FITC approximations, but other choices are also considered below. Except as otherwise indicated, Vecchia neighbors for VIF and Vecchia approximations are selected using the correlation-based distance described in Section \ref{sec5a}, and inducing points for VIF and FITC approximations are selected in the transformed space as outlined in Section \ref{sec5a}. Furthermore, unless specified otherwise, we use the FITC preconditioner with $k = 200$ inducing points for the iterative methods for VIF approximations, Algorithm \ref{alg:pred_var} (SBPV) with $\ell = 100$ sample vectors for predictive variances, $50$ sample vectors for the SLQ method, and a convergence tolerance of \(\delta = 0.01\) for the CG method. Estimation is done by minimizing the negative log-marginal likelihood using the limited-memory BFGS algorithm. All calculations are performed on an AMD EPYC 7742 processor with 64 CPU cores and 512 GB of RAM, using the \texttt{GPBoost} library version 1.5.8 compiled with GCC 11.2.0. To ensure reproducibility, the code for the experiments is available at \url{https://github.com/TimGyger/VIF}.

% In this simulation study, our first step involves a comparison of various inducing point methods, including random selection, kMeans++ (A. Arthur et al. \cite{Art07}) and the CoverTree algorithm \citep{terenin2022numerically}. We then evaluate the negative log-likelihood for different combinations of approximation parameters. Additionally, we analyze the efficacy of two preconditioners, the FITC preconditioner, and the pivoted Cholesky preconditioner, within the context of the FSA structure. Furthermore, our analysis examines absolute relative errors and variances associated with the stochastic estimates of the negative log-likelihood, its derivatives, and predictive variances. Lastly, we quantify the performance gain achieved compared to the Cholesky-based approach across various calculations, such as total parameter estimation and predictive distribution computations.

\subsection{Accuracy for varying input dimensions and smoothness of covariance functions}\label{subsect:sim_all}
We begin by comparing the accuracy of VIF approximations with FITC and Vecchia standalone approximations for varying input dimensions and covariance functions with different smoothness parameters. Specifically, we consider input dimensions $d\in \{2, 5, 10, 20, 50, 100\}$ and the following ARD covariance functions: 1/2-Matérn, 3/2-Matérn, 5/2-Matérn, and Gaussian ($\infty$-Matérn). Since these covariance functions have different effective ranges and the average distance among two randomly chosen points changes with the dimension $d$, we adapt the data-generating length scale parameters for different input dimensions $d$ and covariance functions for better comparability. See Table \ref{Table:len_scales} in Appendix \ref{App:Data} for the specific length scale parameters. Standalone Vecchia approximations are applied to the observable response variable. We have also run the experiments applying a Vecchia approximation to the latent GP and obtained almost equal prediction accuracy measures (results not shown). For each setting, we generate $10$ independent data sets consisting of $20,\!000$ samples using a Gaussian likelihood. To assess prediction accuracy, we randomly select $10,\!000$ samples from each data set to serve as test data. Prediction accuracy is measured using the RMSE, the log-score (LS), $-\frac{1}{n_p}\sum_{i=1}^{n_p}\log\big(\phi(\frac{{y}^*_i-{ \mu}^p_{\dagger,i}}{{{\sigma}_{\dagger,i}^p}})\big)$, 
and the continuous ranked probability score (CRPS) 
$
- \frac{1}{n_p} \sum_{i=1}^{n_p} {{\sigma}_{\dagger,i}^p}(\frac{1}{\sqrt{\pi}}-2\cdot \phi(\frac{{y}^*_i-{ \mu}^p_{\dagger,i}}{{{\sigma}_{\dagger,i}^p}})-\frac{{y}^*_i-{ \mu}^p_{\dagger,i}}{{{\sigma}_{\dagger,i}^p}}(2\cdot \Phi(\frac{{y}^*_i-{ \mu}^p_{\dagger,i}}{{{\sigma}_{\dagger,i}^p}})-1)),
$
where $\phi(x)=\frac{1}{\sqrt{2 \pi}}\exp\big(-\frac{x^2}{2}\big)$ and $\Phi(x)$ are the density and cumulative distribution functions, respectively, of a standard normal distribution, ${{y}}^*$ is the test response, and ${\mu}_{\dagger,i}^p$ and ${{\sigma}}_{\dagger,i}^p$ are the predictive means and variances, respectively. %For the non-Gaussian case, we evaluate the same measures for the latent GP and, furthermore, the Brier score \citep{brier1950verification} and the area under the curve (AUC) \citep{bradley1997use}.

Figure \ref{fig:Dimensions} compares the VIF, FITC, and Vecchia approximations and an exact GP model without an approximation for varying input dimensions $d$ for a 3/2-Matérn kernel. We observe that the VIF approximation consistently outperforms both the Vecchia and FITC approximations across all input dimensions. As expected, for low-dimensional inputs, the Vecchia approximation is very accurate, and the FITC approximation is substantially less accurate. However, the accuracy of the Vecchia approximation declines relatively quickly with increasing dimension $d$, and the FITC approximation is considerably more accurate for large dimensions. Moreover, for higher dimensions $d\geq10$, none of the approximations achieves the accuracy of an exact GP model, highlighting that modeling high-dimensional functions is challenging. We also repeat these experiments for a different choice of parameters. Specifically, in Figure \ref{fig:Dimensions_avg} in Appendix \ref{App:Data}, we present the results for a 3/2-Matérn kernel using a larger error variance of $\sigma^2 = 0.01$, and the length scale parameters are chosen such that the covariance remains approximately equal at the average distance for all $d$ (to the one of a Gaussian kernel with length scales $\boldsymbol{\lambda}=(0.35,0.4,0.45,0.5,0.55)^\mathrm{T}$). The results are similar to those shown in Figure \ref{fig:Dimensions}. 
\begin{figure}[ht!]    
\centering
\includegraphics[width=\linewidth]{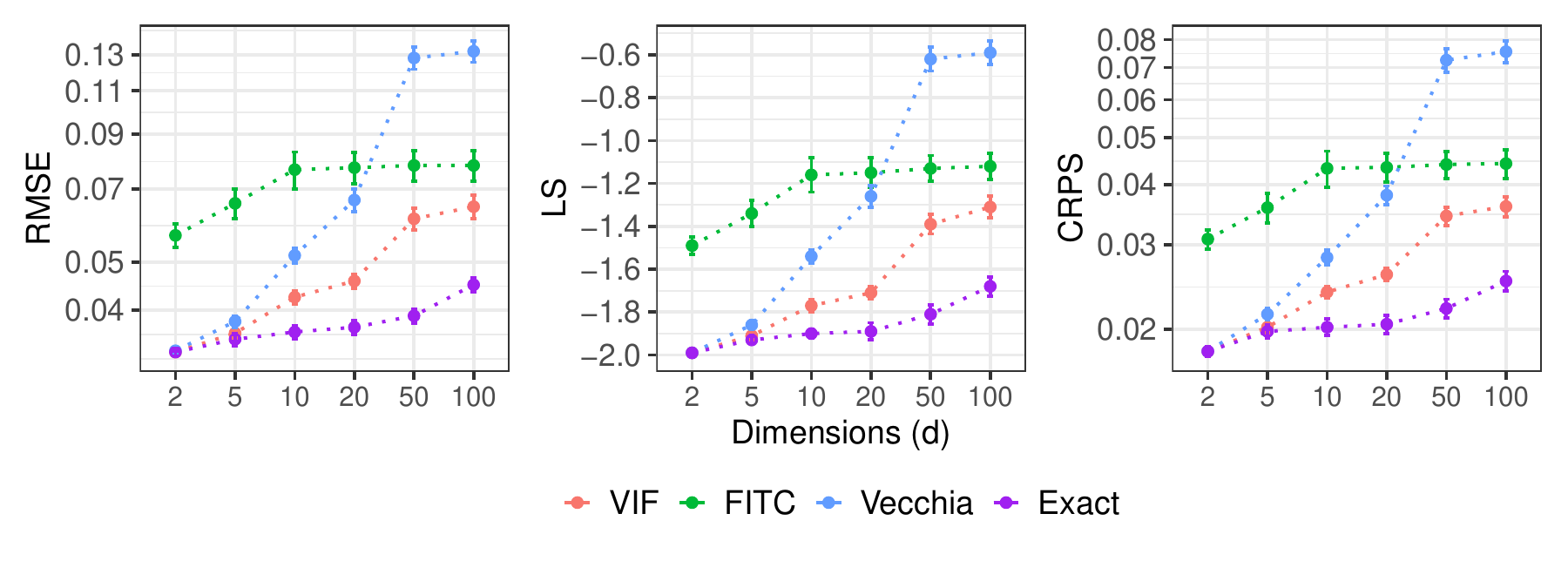}
    \caption{Comparison of the VIF, FITC, and Vecchia approximations and an exact GP model (without an approximation) for varying dimensions $d$ using an ARD 3/2-Matérn kernel. The plots show the RMSE (log-scale), log-score (LS), and CRPS (log-scale) (mean $\pm$ $2$ standard errors).}
    \label{fig:Dimensions}
\end{figure}

In Table \ref{Table:Dim} in Appendix \ref {App:Data}, we additionally report the corresponding average runtimes for training and prediction for each method. As expected, the training runtimes for ARD kernels increase with the input dimension $d$ across all methods as the corresponding optimization problems become more high-dimensional. Overall, VIF and FITC approximations have similar runtimes for estimation, and Vecchia approximations are faster. All three methods have similar prediction runtimes. As mentioned above, we use $m=200$ inducing points and $m_v = 30$ Vecchia neighbors for VIF, Vecchia, and FITC approximations. In Figure~\ref{fig:Dimensions_time} in Appendix~\ref{App:Data}, we additionally show results obtained by increasing the number of Vecchia neighbors to $m_v = 60$ for the Vecchia approximation to ensure that the computational time for estimation is approximately equal for all three methods. Although the Vecchia approximation slightly benefits from the increased approximation complexity, the results are qualitatively very similar. Moreover, Figures~\ref{fig:ACC_RT_5} and~\ref{fig:ACC_RT_100} in Appendix~\ref{App:Data} compare the prediction accuracy versus the total training and prediction runtime for the three approximations across varying numbers of inducing points and Vecchia neighbors, for input dimensions $d = 10$ and $d=100$. For $d=100$, the VIF approximation achieves the best accuracy-runtime trade-off, while for $d=10$, Vecchia and VIF approximations exhibit similar accuracy-runtime trade-offs. Furthermore, we also compare the VIF approximation for two inducing-point-to-neighbor ratios $m/m_v$ in Figures~\ref{fig:ACC_RT_5} and~\ref{fig:ACC_RT_100} in Appendix~\ref{App:Data}. In line with the above findings, these results indicate that in high dimensions, it is beneficial for the VIF approximations to rather increase the number of inducing points than the number of Vecchia neighbors.

The results when comparing VIF, FITC, and Vecchia approximations and an exact GP model without an approximation for the different covariance functions with varying smoothness and $d=10$ are shown in Figure \ref{fig:Kernels10}. The VIF approximation is again consistently more accurate compared to both FITC and Vecchia approximations. All approximations are more accurate for smoother covariance functions. However, the relative difference in accuracy between the Vecchia and both the VIF and FITC approximations increases with increasing smoothness levels. This is in line with the prior expectation that Vecchia approximations are more accurate for moderately smooth covariance functions, whereas FITC approximations become more accurate for smoother covariance functions. In Figure \ref{fig:Kernels2} in Appendix \ref{App:Data}, we additionally report the results for varying smoothness when $d=2$, i.e., for spatial data. As expected, the VIF approximation outperforms the Vecchia approximation only marginally in this low-dimensional setting. For the Gaussian kernel, all approximations are almost equally accurate.

\begin{figure}[ht!]
    \centering    \includegraphics[width=\linewidth]{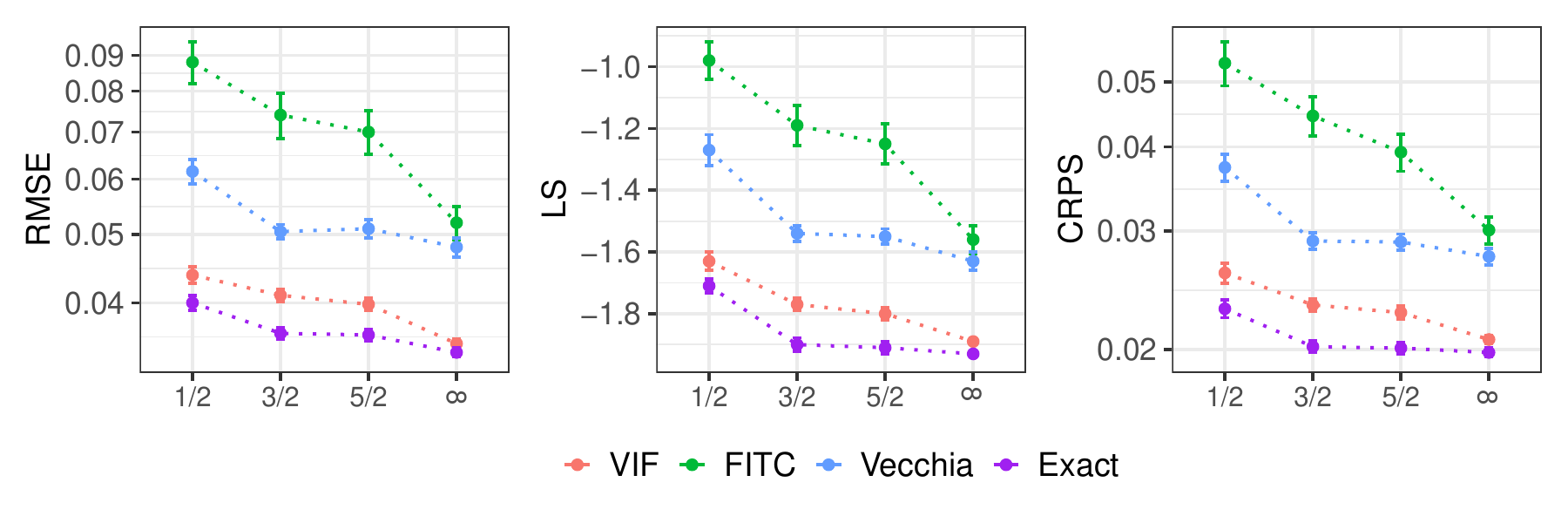}
    \caption{Comparison of the VIF, FITC, and Vecchia approximations and an exact GP model (without an approximation) for varying smoothness parameters of an ARD Matérn kernel for $d = 10$ (1/2-Matérn, 3/2-Matérn, 5/2-Matérn, and $\infty$-Matérn = Gaussian kernel). The plots show the RMSE (log-scale), log-score (LS), and CRPS (log-scale) (mean $\pm$ $2$ standard errors).}
    \label{fig:Kernels10}
\end{figure}

\subsection{Comparison of preconditioners}\label{comparison_PC}
For all subsequent experiments and unless stated otherwise, we generate 100,000 samples from a zero-mean Gaussian process with five-dimensional inputs and an ARD Gaussian covariance function with length scale parameters \(\boldsymbol{\lambda} = (0.15, 0.30, 0.45, 0.60, 0.75)\). 

In the following, we analyze the accuracy and runtimes of the iterative methods for non-Gaussian likelihoods introduced in Section \ref{sect3}. First, we compare the VIFDU and FITC preconditioners introduced in Section \ref{SectPrec} with respect to the runtime and accuracy of log-likelihood approximations for a binary likelihood. In Figure \ref{fig:IT}, we report the wall-clock times and the RMSE between log‑marginal likelihoods computed using iterative methods and those computed using a Cholesky decomposition for three different VIF approximations with different numbers of inducing points $m$ and Vecchia neighbors $m_v$. The figure also shows the runtime for the calculations based on a Cholesky decomposition. The marginal likelihood is evaluated at the data-generating parameters and repeated $100$ times. Overall, both preconditioners lead to very accurate approximations. For instance, a log-marginal likelihood difference in $10$ corresponds to a relative error of approximately $10^{-4}$. However, we find that the FITC preconditioner consistently outperforms the VIFDU preconditioner, having both faster runtimes and more accurate log-marginal likelihood approximations. Moreover, the iterative methods are substantially faster than traditional Cholesky-based computations with a speed-up of approximately three orders of magnitude for both preconditioners.  Note that both preconditioners yield unbiased stochastic log-likelihood approximations (results not shown). 

The accuracy and runtimes of the iterative methods depend on the CG convergence tolerance $\delta$ and the number of sample vectors $\ell$ for the SLQ method. Table~\ref{k_and_delta} in Appendix~\ref{App:Data} reports the accuracy and runtimes for different tolerances $\delta \in \{0.0001, 0.001, 0.01, 0.1, 1\}$ and numbers of sample vectors $\ell \in \{10, 50, 100\}$. The results show that decreasing $\delta$ below $0.01$ yields negligible improvements in accuracy while increasing computational cost, indicating that $\delta = 0.01$ provides a good balance between accuracy and efficiency. In contrast, the number of sample vectors $\ell$ has a more pronounced impact on the accuracy of SLQ-approximated log-determinants than the CG tolerance.

\begin{figure}[ht!]
    \centering
    \includegraphics[width=\linewidth]{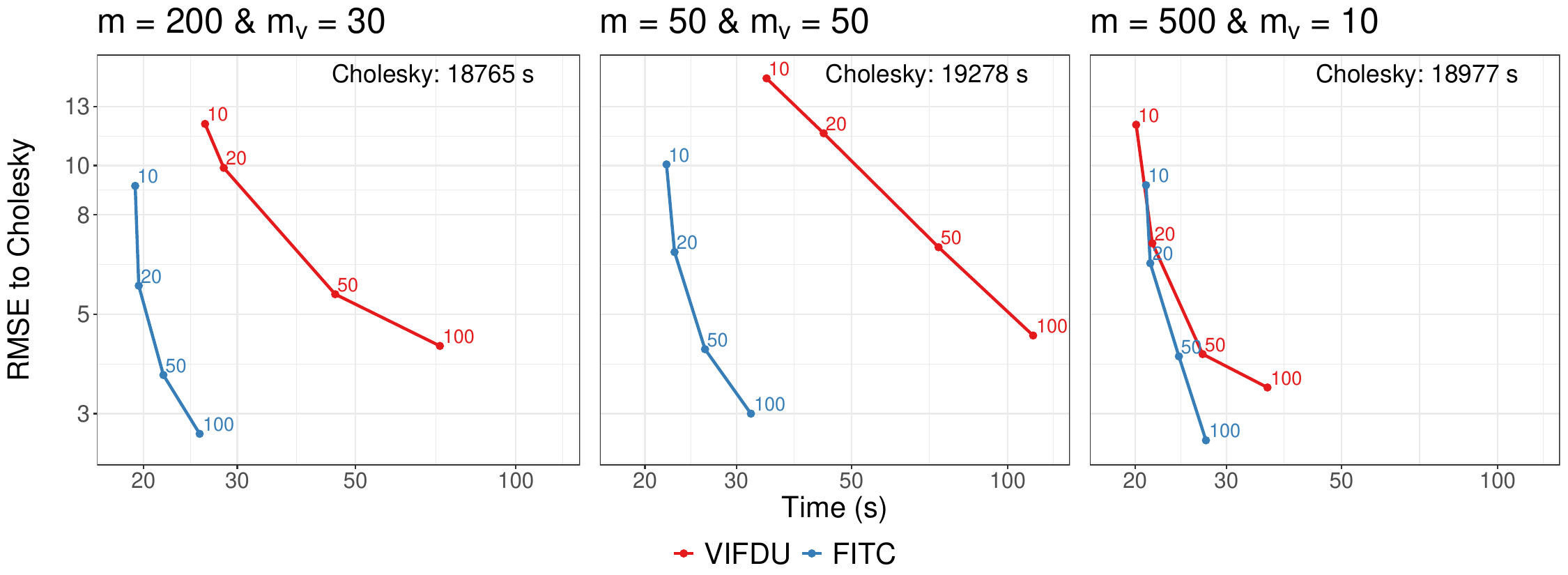}
    \caption{Accuracy-runtime comparison of preconditioners: RMSE between log‑marginal likelihoods computed using iterative methods and those computed using a Cholesky decomposition versus runtime for the VIFDU and FITC preconditioners and varying numbers of sample vectors $\ell$ (annotated in the plot). A binary likelihood, $n = 100,\!000$ samples, and three different VIF approximations are used.}
    \label{fig:IT}
\end{figure}

The FITC preconditioner has one tuning parameter, the number of inducing points, which governs a trade-off between computational efficiency and the accuracy of the SLQ approximation. To analyze this parameter, we compute the log-marginal likelihood using varying numbers of inducing points, $k \in \{10, 50, 100, 200, 300, 400, 500\}$. In Figure \ref{fig:FITC_P_k} in Appendix \ref{App:Data}, we report the runtime and the log-marginal likelihood compared to exact Cholesky-based calculations for three different VIF approximations. Our analysis shows that the FITC preconditioner overall yields the fastest runtime for $k = 200$. Note that the experiments in this and the next section are conducted on only one simulated data set, since we do not want to mix sampling variability and randomness of the methods that are analyzed. However, the results do not change when using other samples (results not shown).

\subsection{Predictive variances}
Next, we compare the two simulation- and iterative-methods-based approaches introduced in Section \ref{sec_pred_var} for calculating predictive variances, namely Algorithm \ref{alg:pred_var} (SBPV) and Algorithm \ref{alg:pred_var2} (SPV). Both algorithms are run twice with the VIFDU and the FITC preconditioners. We use simulated data with $n = n_p = 100,\!000$ training and test points, a binary likelihood for the response variable, and predictive distributions are calculated using the true data-generating covariance parameters. To measure the accuracy of the methods, we compute the RMSE of the simulation-based predictive variances relative to the ``exact" Cholesky-based results. In addition, we measure the total runtime, which includes computing the mode, the latent predictive means, and the latent predictive variances. Figure \ref{fig:PV} shows the RMSE as a function of the wall-clock time for varying numbers of random vectors $\ell$. We find that all algorithms yield very accurate predictive variances already for small runtimes. Furthermore, the results show that the SBPV algorithm is more accurate than the SPV algorithm, and that the FITC preconditioner consistently leads to lower runtimes compared to the VIFDU preconditioner.
\begin{figure}[ht!]
    \centering
    \includegraphics[width=0.6\linewidth]{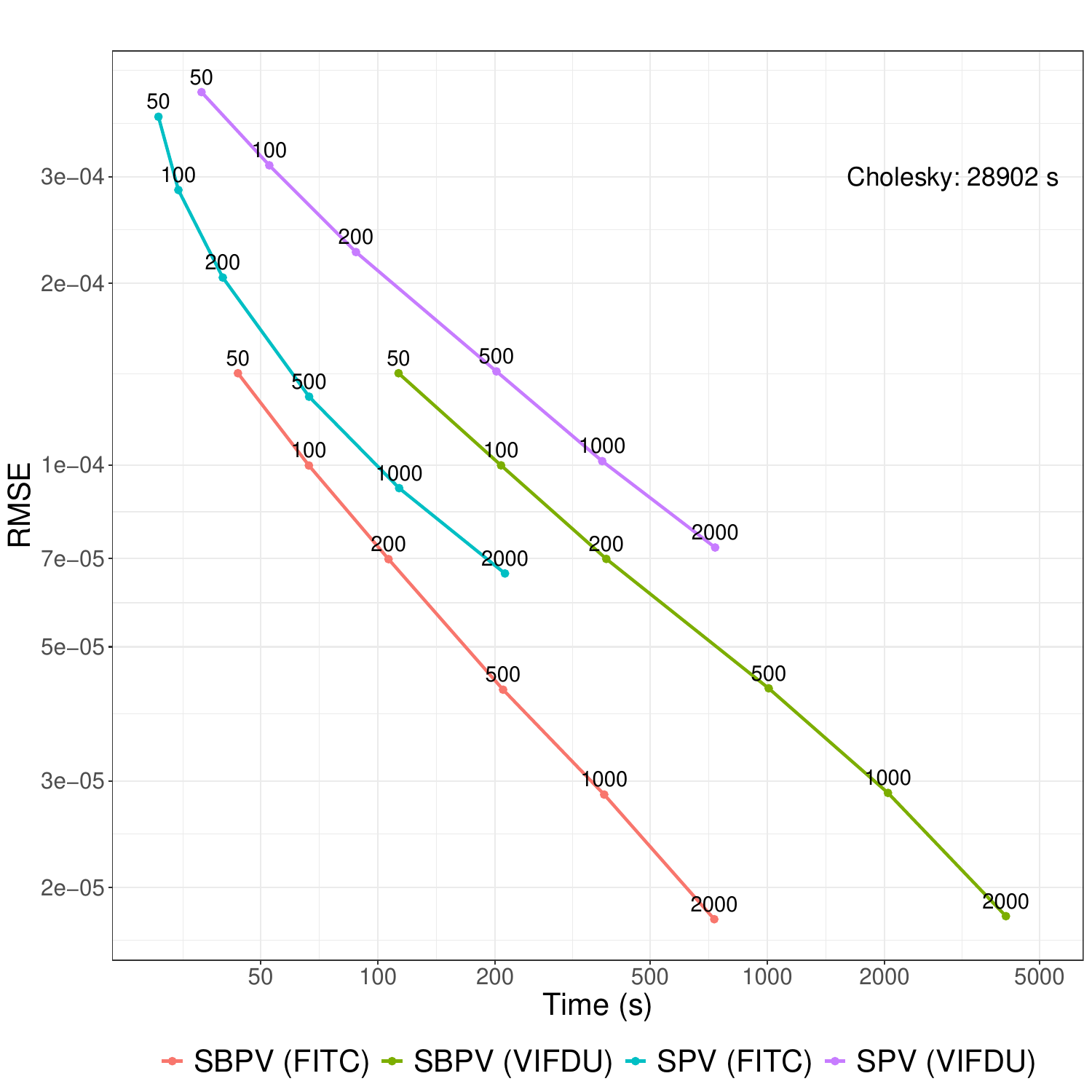}
    \caption{Accuracy-runtime comparison of simulation- and iterative-methods-based predictive variances (SBPV and SPV algorithms with the FITC and VIFDU preconditioners): RMSE between predictive variances computed using simulation-based methods and those computed using a Cholesky decomposition versus runtime for a binary likelihood. The number of random vectors $\ell$ is annotated in the plot.}
    \label{fig:PV}
\end{figure}

\subsection{Runtime analysis of VIF approximations}
In the following, we analyze how the runtime of a VIF approximation scales with the sample size, and how it depends on the two tuning parameters of a VIF approximation: the number of inducing points $m$ and the number of Vecchia neighbors $m_v$. Specifically, we vary each of the three quantities $m$, $m_v$, and $n$ in turn while keeping the others fixed at $m = 200$, $m_v = 30$, and $n = 100,\!000$. We consider both a Gaussian and a Bernoulli likelihood, and we additionally compare VIF approximations to the FITC and Vecchia approximations. For the Bernoulli likelihood, we apply the VIF approximations using our proposed iterative methods with the FITC and VIFDU preconditioners. Similarly, the standalone Vecchia approximation uses iterative methods with the Vecchia approximation with diagonal update (VADU) preconditioner \citep{kundig2024iterative}. Figure \ref{fig:Runtime_gaussian} illustrates the runtime for evaluating the log-marginal likelihood for Gaussian and Bernoulli likelihoods for different approximation parameter values $m\in\{10,20,50,100,200,500\}$, $m_v\in\{5,10,15,20,30,50\}$, and sample sizes $n \in \{10,\!000,20,\!000,30,\!000,50,\!000,80,\!000,100,\!000\}$. As expected, the runtime increases with the sample size $n$, the number of inducing points $m$, and the number of Vecchia neighbors $m_v$ for both likelihoods. For the non-Gaussian likelihood, we find that the FITC preconditioner consistently outperforms the VIFDU preconditioner across all parameter settings. Notably, the VIF approximation with the FITC preconditioner has similar runtimes as the Vecchia approximation. 
\begin{figure}[ht!]
    \centering
    \includegraphics[width=\linewidth]{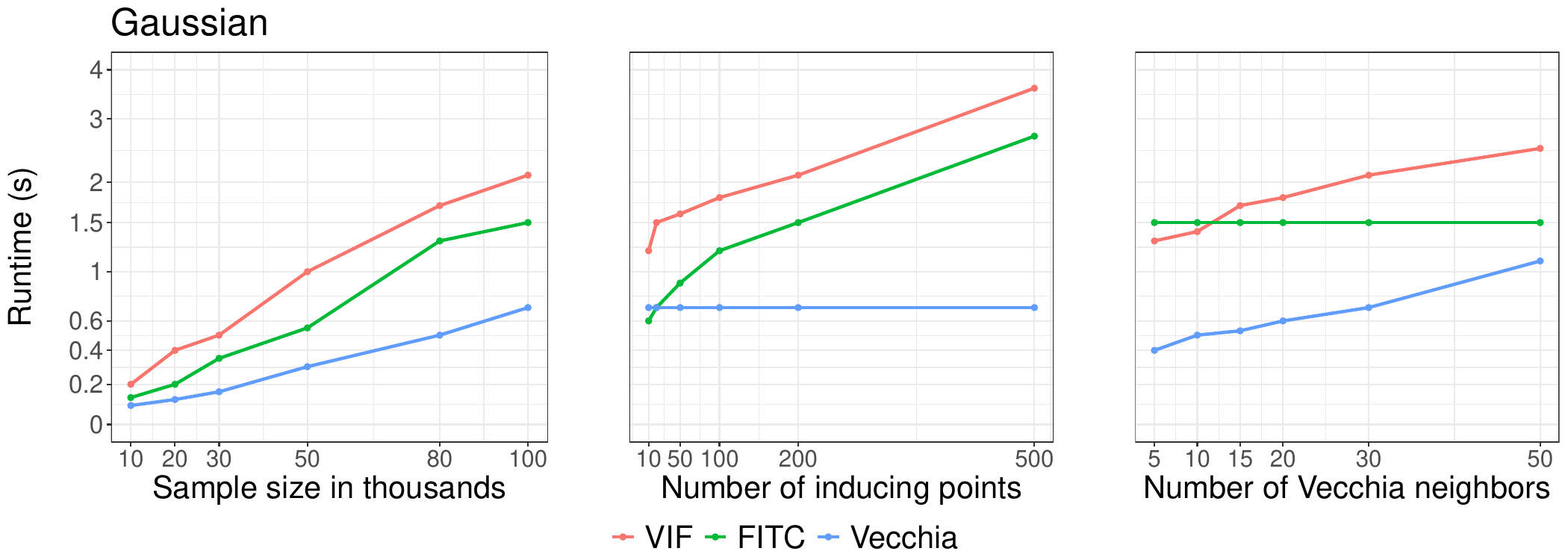}
     \includegraphics[width=\linewidth]{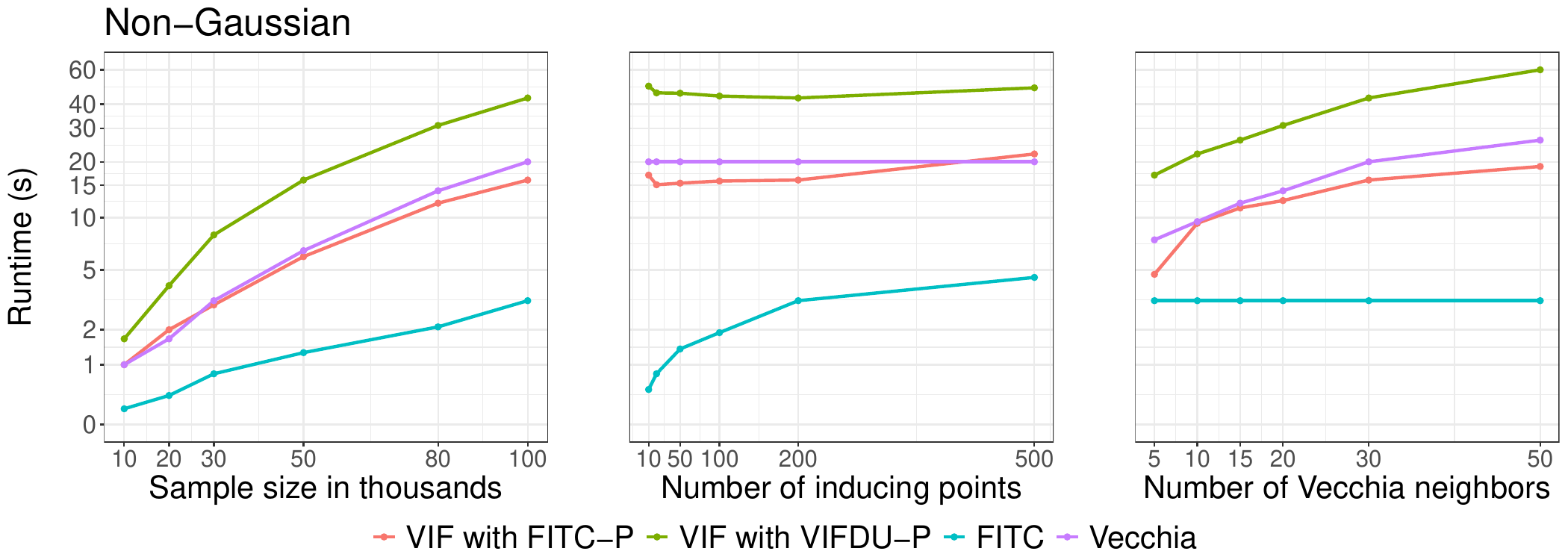}
    \caption{Time (s) for computing the marginal likelihood with VIF, FITC, and Vecchia approximations for varying sample sizes $n$, numbers of inducing points $m$, and numbers of Vecchia neighbors $m_v$ for both a Gaussian (top row) and a non-Gaussian likelihood (bottom row).}
    \label{fig:Runtime_gaussian}
\end{figure}

For the experiments presented in Figure \ref{fig:Runtime_gaussian}, we exclude the time required to determine the $m_v$ Vecchia neighbors, as these are not necessarily recomputed for each log-likelihood evaluation in the optimization process. Figure \ref{fig:CT_RUntime} illustrates the runtime for constructing the cover tree and identifying the $m_v$ nearest neighbors with respect to the correlation distance for varying sample sizes, feature dimensions $d\in\{2,5,10,20,50,100\}$, numbers of inducing points, and number of Vecchia neighbors. We again vary each of these quantities in turn while keeping the others fixed at $m = 200$, $m_v = 30$, $n = 100,\!000$, and $d=5$. These results show that the runtime primarily depends on the sample size $n$ and the input dimension $d$. The runtime scales approximately linearly with the number of inducing points, which is expected, as computing the correlation distance requires $\mathcal{O}(m)$ operations. The number of Vecchia neighbors $m_v$ has only a minor impact on the overall runtime.  
\begin{figure}[ht!]
    \centering
    \includegraphics[width=\linewidth]{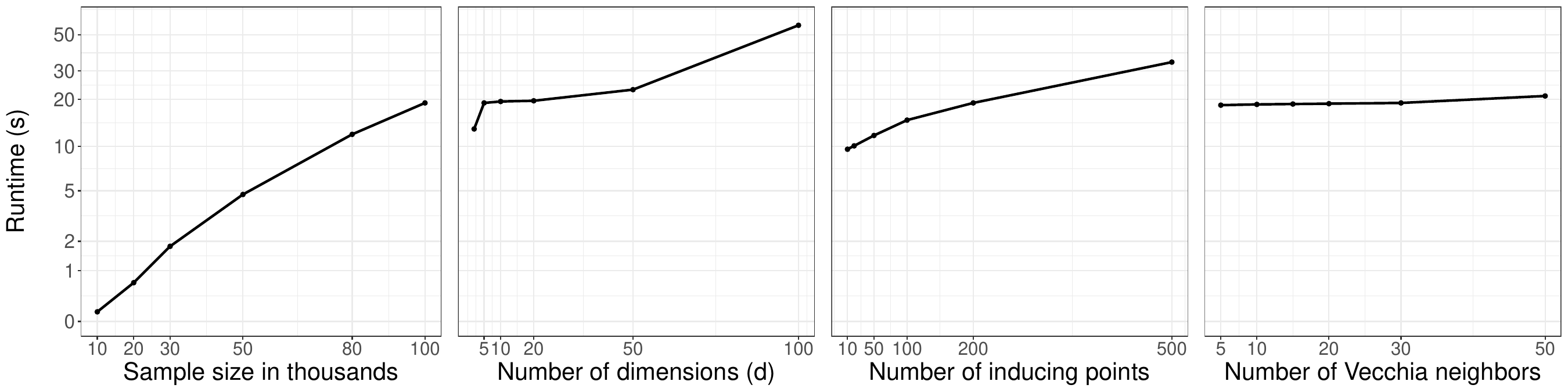}
    \caption{Time (s) for constructing the cover tree and finding the $m_v$ nearest neighbors with respect to the correlation distance for varying sample sizes $n$, input dimensions $d$, numbers of inducing points $m$,  and numbers of Vecchia neighbors $m_v$.}
    \label{fig:CT_RUntime}
\end{figure}

In Figure \ref{fig:Runtime_all_pred} in Appendix \ref{App:Data}, we additionally analyze the runtime required to compute the predictive means and variances for Gaussian and Bernoulli likelihoods, respectively, for different approximation parameters, sample sizes, number of prediction points, and comparing VIF, FITC, and Vecchia approximations. For the VIF approximation and the non-Gaussian likelihood, iterative methods are employed with the FITC and VIFDU preconditioners. As the results show, the FITC preconditioner consistently outperforms the VIFDU preconditioner across all parameter settings and even surpasses the Vecchia approximation with the VADU preconditioner in terms of runtime.

% \subsection{Correlation-based selection of Vecchia neighbors}
% Next, we compare our proposed correlation-based Vecchia neighbors selection strategy with a simpler approach using Euclidean distances. Figure \ref{fig:APP} report the negative log-likelihood for a Gaussian likelihood evaluated at the data-generating parameters for different numbers of inducing points $m\in \{0,100,200,500\}$ and Vecchia neighbors $m_v\in \{10,15,20,30,50\}$. Note that $m = 0$ corresponds to a classical Vecchia approximation. We observe that for VIF approximations, the correlation-based Vecchia neighbor search results in smaller negative log-likelihood values compared to the Euclidean-based search. This suggests that utilizing correlation-based distances is beneficial.
% \begin{figure}[ht!]
%     \centering
%     \includegraphics[width=10cm]{figures/Model_Parameter_Gaussian.png}
%     \caption{The negative log-likelihood for various combinations of inducing points $m$ and Vecchia neighbors $m_v$, shown for both Euclidean-based and correlation-based neighbor searches, with the latter represented by dashed lines.}
%     \label{fig:APP}
% \end{figure}

\section{Real-world data experiments}\label{sect5}
In the following, we apply our proposed methods to multiple real-world data sets comparing the VIF approximation to the following state-of-the-art GP approximations: sparse Gaussian process regression (SGPR) \citep{titsias2009variational}, stochastic variational Gaussian processes (SVGP) \citep{hensman2013gaussian, hensman2015scalable}, scalable kernel interpolation for product kernels (SKIP) \citep{gardner2018product}, and double-Kullback-Leibler-optimal Gaussian process approximation (DKLGP) \citep{cao2023variational}. SGPR and SVGP are both variational low-rank inducing-point approximations which differ in the way the evidence lower bound (ELBO) is maximized. SKIP, an extension of structured kernel interpolation (SKI) \citep{wilson2015kernel}, is an inducing points-based approximation that relies on local kernel interpolation, the fast Fourier transform (FFT), and iterative methods. DKLGP uses a sparse inverse Cholesky factor approximation whose parameters are trained in a variational framework. In other words, DKLGP is a variational form of a Vecchia approximation. 

We use \texttt{GPyTorch} \citep{gardner2018gpytorch} version 1.13, for SKIP, SGPR, and SVGP. For DKLGP,  we use the implementation of \citet{cao2023variational} available in the repository \href{https://github.com/katzfuss-group/DKL-GP/tree/main}{https://github.com/katzfuss-group/DKL-GP}. Furthermore, for SKIP, SGPR, SVGP, and DKLGP, training is done using the ADAM optimizer, following the recommendations of \citet{gardner2018gpytorch} and \citet{cao2023variational}. For SVGP, the inducing points are optimized jointly with the variational parameters, and for SGPR, the inducing points are chosen by random subsampling from the training data. Unless otherwise specified, a 3/2-Matérn ARD kernel is used. For the non-Gaussian data sets, we employ the VIF approximation with iterative methods and the same settings as in the simulated experiments; see the beginning of Section \ref{sect4}.  We use $5$-fold cross-validation to assess the prediction accuracy. All data sets and code to reproduce our experiments are available on \url{https://github.com/TimGyger/VIF}.

\subsection{Data sets}
We consider a variety of data sets from the UCI and OpenML repositories, which are widely used for benchmarking GP approximations in the machine learning literature, as well as additional spatial data sets. Appropriate likelihoods are selected based on the characteristics of each data set; see below for the specific choices. We follow \citet{cao2023variational} and apply the following pre-processing steps. First, the input features are transformed to the interval \([0, 1]\) based on the training data. Second, input features with a standard deviation below \(0.01\) after standardization are excluded from the analysis to prevent numerical issues for some of the methods. For the same reason, input features with a standard deviation below $0.1$ are additionally excluded for the binary classification data sets. Third, the data sets are filtered to ensure a minimum Euclidean distance $\|\mathbf{s}_i- \mathbf{s}_j\|$ of $0.001$ between any two data points, retaining only the first point in cases where multiple points are too close, to prevent numerical singularities for some methods. With the exception of the 3dRoad data set, this procedure has a negligible impact on the sample size. For the data sets modeled using a Gaussian likelihood, the response variable is normalized based on the training data sets to have zero mean and unit variance for better comparability across different data sets.
%This setup follows the approach of \citet{cao2023variational}.
% All data sets are randomly divided into five equally sized, disjoint subsets. Each subset was used once as the test set, while the remaining four subsets served as the training set, resulting in five (80\%, 20\%) splits. 

\subsection{Results}\label{sect:main_results} 
We first present the results for the Gaussian likelihood in Table \ref{Res:Gaussian}. Entries marked as \textsc{NA} indicate numerical problems (SKIP did not converge and yields predictions that are very inaccurate). Additionally, we illustrate the log-score (LS) results in Figure \ref{fig:ResultGauss} (left plot). We find that the VIF approximation consistently outperforms all other methods in terms of all accuracy measures across all data sets. Furthermore, for data sets with low input dimensions  (`3dRoad', `Protein', `Kin40K'), we observe that the inducing point methods SKIP, SGPR, and SVGP are very inaccurate, and the variational Vecchia approximation DKLGP is more accurate compared to these inducing point-based methods. The situation is reversed for the higher-dimensional data sets (`KEGGU', `KEGG', `Elevators', `Ailerons'), where the best inducing-points approximation, SVGP, is more accurate than the variational Vecchia approximation DKLGP. These findings are in line with the results from the simulated experiments in Section \ref{subsect:sim_all}, where we found that Vecchia approximations are accurate for low-dimensional inputs but less accurate for higher dimensions. The VIF approximation achieves the highest prediction accuracy for all data sets with both low and higher-dimensional inputs as it combines low-rank inducing points and residual process Vecchia approximations. Concerning computational time, we find that the VIF approximation is, overall, faster than all other approximations.
\begin{table}[ht!]
\centering
\setlength{\tabcolsep}{1pt}
\begin{tabular}{ |p{1.7cm}|p{1.4cm}|p{2.7cm}|p{2.cm}|p{2.1cm}|p{2.3cm}|p{2.1cm}|  }
 \hline
 \thead{Data}& \thead{Accuracy \\ measure}&  \thead{\textbf{VIF} \\
 $m_v = 30$ \\ $m = 200$} &
 \thead{\textbf{SKIP} \\
 $m = 1000$} &
 \thead{\textbf{SGPR} \\
 $m = 1000$} & \thead{\textbf{SVGP} \\
 $m = 1000$} & \thead{\textbf{DKLGP} \\
 $\rho = 1.5$}\\
 \hhline{|=|=|=|=|=|=|=|}
 \thead{\textbf{3dRoad} \\ $n = 434,\!874$ \\ $d = 3$} & \thead{RMSE \\ CRPS \\ LS  \\ Time} & \thead{\textbf{0.145 ± 0.002}\\   \textbf{0.068 ± 0.002}\\  \textbf{-0.625 ± 0.010}\\ 376 s} & \thead{0.603 ± 0.010\\ 0.294 ± 0.014\\ 2.371 ± 0.022\\ 1861 s} & \thead{0.734 ± 0.014\\   0.402 ± 0.008\\  1.114 ± 0.020\\ 759 s} & \thead{0.550 ± 0.008\\    0.297 ± 0.004\\    0.825 ± 0.006\\ 966 s} & \thead{0.255 ± 0.004\\ 0.121 ± 0.002\\ 0.009 ± 0.004\\890 s}  \\
 \Xhline{3\arrayrulewidth}
 \thead{\textbf{KEGGU} \\ $n = 63,\!608$ \\ $d = 26$}& \thead{RMSE \\ CRPS \\ LS  \\ Time} & \thead{\textbf{0.094 ± 0.008}\\   \textbf{0.030 ± 0.002}\\  \textbf{-1.081 ± 0.072}\\ 738 s} &  \thead{ NA \\ convergence \\ issues } &  \thead{0.117 ± 0.024\\ 0.068 ± 0.008\\ -0.441 ± 0.081\\ 502 s} & \thead{\textbf{0.094 ± 0.004}\\    0.034 ± 0.002\\   -0.955 ± 0.042\\ 617 s} & \thead{0.146 ± 0.014\\    0.054 ± 0.002\\   -0.845 ± 0.038\\ 1362 s}\\
 \Xhline{3\arrayrulewidth}
 \thead{\textbf{KEGG} \\ $n = 48,\!827$ \\ $d = 18$}& \thead{RMSE \\ CRPS \\ LS  \\ Time} & \thead{\textbf{0.100 ± 0.010} \\   \textbf{0.041 ± 0.004}\\ \textbf{-1.082 ± 0.070}\\ 383 s} & \thead{0.672 ± 0.166\\    0.407 ± 0.086\\    1.158 ± 0.134\\ 1202 s} & \thead{0.699 ± 0.170\\    0.450  ± 0.088\\  1.194 ± 0.144\\ 863 s} & \thead{\textbf{0.103 ± 0.004} \\  0.048 ± 0.002\\   -0.893 ± 0.024\\ 951 s} & \thead{0.142 ± 0.008\\    0.067 ± 0.002 \\ -0.593 ± 0.040\\ 1858 s} \\
 \Xhline{3\arrayrulewidth}
 \thead{\textbf{Elevators} \\ $n = 16,\!599$ \\ $d = 17$}& \thead{RMSE \\ CRPS \\ LS  \\ Time}  & \thead{\textbf{0.355 ± 0.004}\\  \textbf{0.196 ± 0.004}\\   \textbf{0.388 ± 0.012}\\ 711 s} & \thead{NA  \\ convergence \\ issues} & \thead{ 0.692 ± 0.224\\   0.458 ± 0.202\\   2.045 ± 1.534\\ 520 s} & \thead{0.373 ± 0.008\\    0.205 ± 0.004\\    0.435 ± 0.020\\ 1195 s} & \thead{0.418 ± 0.006\\   0.230 ± 0.004\\    0.907 ± 0.054\\ 2583 s}\\
 \Xhline{3\arrayrulewidth}
 \thead{\textbf{Protein} \\ $n = 45,\!730$ \\ $d = 8$}& \thead{RMSE \\ CRPS \\ LS  \\ Time}  & \thead{\textbf{0.516 ± 0.006} \\  \textbf{0.259 ± 0.002}\\   \textbf{0.667 ± 0.016}\\ 547 s} & \thead{0.831 ± 0.008\\    0.610 ± 0.008\\   6.310 ± 0.216\\ 1807 s} & \thead{0.822 ± 0.048\\    0.467 ± 0.028\\  1.234 ± 0.058\\ 676 s} & \thead{0.729 ± 0.006\\    0.411 ± 0.004\\  1.105 ± 0.008\\ 1309 s} & \thead{0.621 ± 0.006\\ 0.327 ± 0.006\\ 0.900 ± 0.052\\ 1531 s}\\
 \Xhline{3\arrayrulewidth}
 \thead{\textbf{Kin40K} \\ $n = 40,\!000$ \\ $d = 8$}& \thead{RMSE \\ CRPS \\ LS  \\ Time}  & \thead{\textbf{0.114 ± 0.002}\\   \textbf{0.059 ± 0.002}\\  \textbf{-0.808 ± 0.012}\\ 333 s} & \thead{1.401 ± 0.024\\    0.862 ± 0.020\\    3.192 ± 0.034\\ 1328 s} & \thead{ 0.321 ± 0.008\\  0.271 ± 0.010\\   0.131 ± 0.008\\ 947 s} & \thead{0.175 ± 0.004\\    0.099 ± 0.002\\   -0.230 ± 0.012\\ 1179 s} & \thead{0.284 ± 0.002\\    0.151 ± 0.002\\    0.082 ± 0.008\\ 2732 s}\\
 \Xhline{3\arrayrulewidth}
 \thead{\textbf{Ailerons} \\ $n = 13,\!750$ \\ $d = 33$}& \thead{RMSE \\ CRPS \\ LS  \\ Time}  & \thead{\textbf{0.399 ± 0.019}\\   \textbf{0.208 ± 0.008}\\  \textbf{0.436 ± 0.030}\\ 528 s} & \thead{ NA \\ convergence \\ issues } & \thead{ 0.428 ± 0.062\\  0.238 ± 0.043\\   0.676 ± 0.251\\ 209 s} & \thead{\textbf{0.386 ± 0.015}\\    \textbf{0.208 ± 0.003}\\   \textbf{0.459 ± 0.017}\\ 510 s} & \thead{0.424 ± 0.028\\    0.230 ± 0.009\\    0.642 ± 0.068\\ 1918 s}\\
 \hline
\end{tabular}
\caption{\label{Res:Gaussian} RMSE, CRPS, log-score (LS) (mean $\pm$ $2$ standard errors), and runtime in seconds (s) for the regression data sets modeled using a Gaussian likelihood. The score of the best-performing method is shown in bold. If the mean score of another method lies within two standard errors of the best mean, it is also in bold. }
\end{table}
\begin{figure}[ht!]
    \centering
    \includegraphics[width=0.48\linewidth]{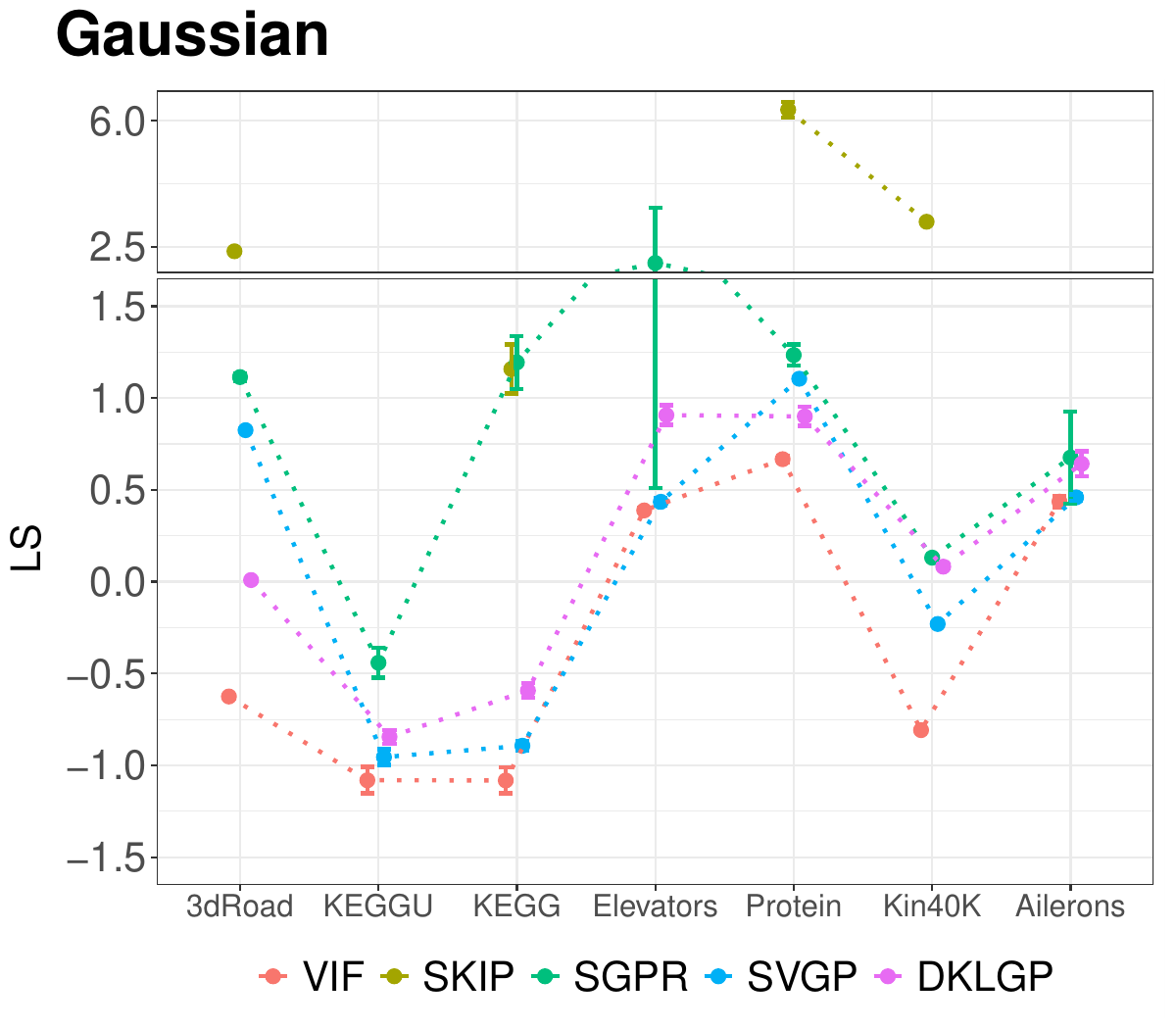}
    \includegraphics[width=0.48\linewidth]{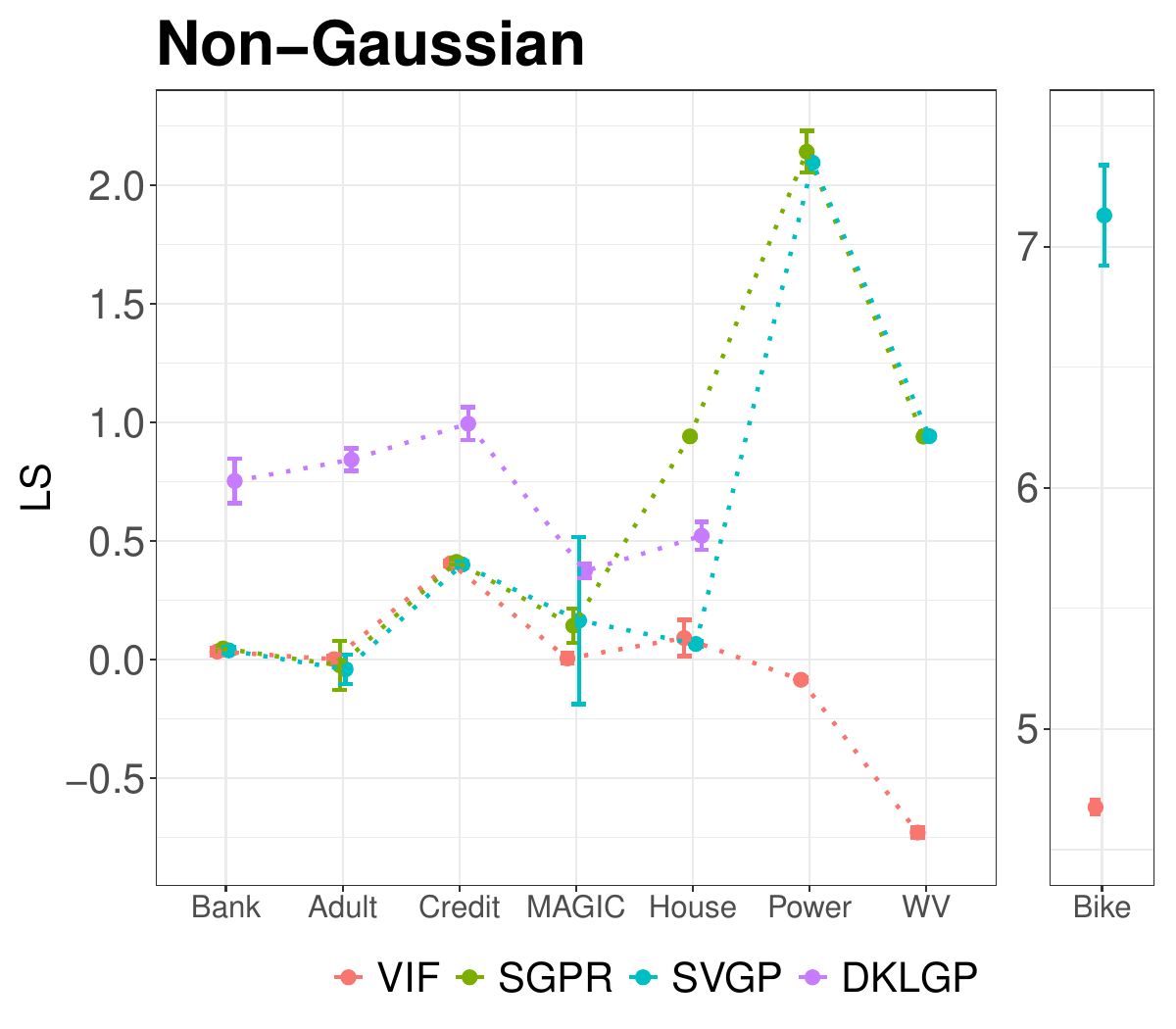}
    \caption{Log-score (LS) with error bars (mean $\pm$ $2$ standard errors) for the data sets modeled with a Gaussian (left plot) and non-Gaussian likelihoods (right plot).}
    \label{fig:ResultGauss}
\end{figure}

Next, we present the results for the binary data sets in Table \ref{Res:Binary} reporting the area under the receiver operating characteristic curve (AUC), the root mean squared error (RMSE), which in this case corresponds to the square root of the Brier score, the accuracy (ACC), and the log-score (LS). Figure \ref{fig:ResultGauss} (right plot) additionally visualizes the log-score (LS) for all data sets modeled using non-Gaussian likelihoods. The differences in prediction accuracy among all approximations are very small for the binary classification data sets. The likely explanation for this is that binary response variable data provide less information compared to continuous and count data response variables. Furthermore, the VIF approximation has a faster runtime than the most accurate inducing points method (SVGP) and the variational Vecchia approximation DKLGP.\begin{table}[ht!]
\centering
\setlength{\tabcolsep}{1pt}
\begin{tabular}{ |p{1.6cm}|p{1.4cm}|p{2.3cm}|p{2.4cm}|p{2.4cm}|p{2.cm}|  }
 \hline
 \thead{Data}& \thead{Accuracy \\ measure}&  \thead{\textbf{VIF} \\
 $m_v = 30$ \\ $m = 200$} & \thead{\textbf{SGPR} \\
 $m = 1000$} & \thead{\textbf{SVGP} \\
 $m = 1000$} & \thead{\textbf{DKLGP} \\
 $\rho = 1.5$}\\
 \hhline{|=|=|=|=|=|=|}
 \thead{\textbf{Bank} \\ $n = 45,\!211$ \\ $d = 16$} & \thead{AUC \\ RMSE \\ ACC \\ LS  \\ Time} & \thead{\textbf{0.806 ± 0.008}\\ \textbf{0.284 ± 0.004}\\ \textbf{0.895 ± 0.002}\\ \textbf{0.035 ± 0.016}\\ 1643 s} & \thead{\textbf{0.800 ± 0.006} \\ \textbf{0.286 ± 0.004}\\ \textbf{0.896 ± 0.002}\\ \textbf{0.048 ± 0.012} \\ 1270 s} & \thead{\textbf{0.804 ± 0.008}\\ \textbf{0.284 ± 0.002}\\  \textbf{0.896 ± 0.002}\\ \textbf{0.040 ± 0.014}\\ 2603 s} & \thead{0.789 ± 0.006\\ 0.292 ± 0.004\\ 0.892 ± 0.004\\ 0.754 ± 0.094\\ 5012 s}  \\
 \Xhline{3\arrayrulewidth}
 \thead{\textbf{Adult} \\ $n = 48,\!842$ \\ $d = 14$}& \thead{AUC \\ RMSE \\ ACC \\ LS  \\ Time} & \thead{\textbf{0.880 ± 0.008} \\ \textbf{0.336 ± 0.006}\\ \textbf{0.838 ± 0.006}\\ \textbf{0.003 ± 0.012}\\ 2854 s} &  \thead{\textbf{0.887 ± 0.007}\\    \textbf{0.332 ± 0.004} \\  \textbf{0.840 ± 0.007} \\  \textbf{-0.023 ± 0.104}\\ 1561 s} &  \thead{\textbf{0.883 ± 0.006}\\  \textbf{0.335 ± 0.004}\\ \textbf{0.835 ± 0.006}\\\textbf{-0.040 ± 0.062}\\ 5661 s} & \thead{0.869 ± 0.004 \\   0.342 ± 0.004 \\  0.833 ± 0.004 \\   0.844 ± 0.048\\2815 s}\\
 \Xhline{3\arrayrulewidth}
 \thead{\textbf{Credit} \\ $n = 30,\!000$ \\ $d = 22$}& \thead{AUC \\ RMSE \\ ACC \\ LS  \\ Time} & \thead{0.768 ± 0.006\\    \textbf{0.378 ± 0.004} \\  \textbf{0.808 ± 0.006} \\  \textbf{0.407 ± 0.012}\\ 1161 s} & \thead{0.767 ± 0.006\\    \textbf{0.379 ± 0.004}\\  \textbf{0.808 ± 0.006}\\  \textbf{0.413 ± 0.012}\\ 1235 s} & \thead{\textbf{0.773 ± 0.004}\\    \textbf{0.377 ± 0.004} \\  \textbf{0.809 ± 0.006} \\  \textbf{0.402 ± 0.012}\\ 2786 s} & \thead{0.752 ± 0.004\\   0.385 ± 0.004 \\  0.799 ± 0.006 \\  0.996 ± 0.070\\ 2816 s} \\
 \Xhline{3\arrayrulewidth}
 \thead{\textbf{MAGIC} \\ $n = 19,\!020$ \\ $d = 9$ }& \thead{AUC \\ RMSE \\ ACC \\ LS  \\ Time}  & \thead{\textbf{0.920 
 ± 0.004}\\ \textbf{0.316 ± 0.004}\\ \textbf{0.866 ± 0.006}\\ \textbf{0.006  ± 0.022}\\236 s} & \thead{0.902 ± 0.006\\   0.334 ± 0.004\\  0.850 ± 0.006\\  0.144 ± 0.072\\ 437 s} & \thead{0.915 ± 0.004  \\0.321 ± 0.004  \\ \textbf{0.860 ± 0.006}  \\ 0.166 ± 0.352\\ 663 s} & \thead{0.900 ± 0.004\\   0.340 ± 0.004 \\ 0.842 ± 0.004\\   0.374 ± 0.030\\ 459 s} \\
 \hline
\end{tabular}
\caption{\label{Res:Binary} AUC, RMSE (Brier score), ACC, LS (mean $\pm$ $2$ standard errors), and runtime in seconds (s) for the binary classification data sets. Bold indicates the best mean; other means are in bold if within two standard errors.}
\end{table}

In Table \ref{Res:nonGaussian}, we present the results for data sets modeled using Poisson, Student-t, and Gamma likelihoods. Specifically, the House data set is modeled with a Student-t likelihood, the Bike data set with a Poisson likelihood, and the Power and WaterVapor (WV) data sets with a Gamma likelihood. The current implementation of DKLGP by \citet{cao2023variational} available on \href{https://github.com/katzfuss-group/DKL-GP/tree/main}{https://github.com/katzfuss-group/DKL-GP/} does not support Poisson or Gamma likelihoods. The \textsc{NA} entry indicates that SGPR crashed. We find that the VIF approximation consistently outperforms all other methods in prediction accuracy across all data sets. The strong performance of the VIF-Laplace approximation compared to the variational benchmark methods is likely due to its more accurate covariance approximation. Concerning computational time, the VIF approximation has a similar runtime as SVGP for two data sets (`Power', `WaterVapor'), but it is slower on the two other data sets (`Bike', `House'). 
\begin{table}[ht!]
\centering
\setlength{\tabcolsep}{1pt}
\begin{tabular}{ |p{2.1cm}|p{1.4cm}|p{2.5cm}|p{2.cm}|p{2.3cm}|p{2.cm}|  }
 \hline
 \thead{Data}& \thead{Accuracy \\ measure}&  \thead{\textbf{VIF} \\
 $m_v = 30$ \\ $m = 200$} & \thead{\textbf{SGPR} \\
 $m = 1000$} & \thead{\textbf{SVGP} \\
 $m = 1000$} & \thead{\textbf{DKLGP} \\
 $\rho = 1.5$}\\
 \hhline{|=|=|=|=|=|=|}
 \thead{\textbf{Bike} \\ $n = 17,\!379$ \\ $d = 12$ \\ (Poisson)} & \thead{RMSE \\ CRPS \\ LS  \\ Time} & \thead{\textbf{38.904 ± 1.332} \\   \textbf{17.162 ± 0.454} \\  \textbf{4.676 ± 0.030}\\ 1534 s} & \thead{NA \\ crashed} & \thead{42.256 ± 0.850\\  20.746 ± 0.194\\   7.130 ± 0.208\\ 817 s} & \thead{not \\ implemented}  \\
 \Xhline{3\arrayrulewidth}
 \thead{\textbf{House} \\ $n = 20,\!640$ \\ $d = 8$ \\ (Student-t)}& \thead{RMSE \\ CRPS \\ LS  \\ Time} & \thead{\textbf{0.214 ± 0.008}\\ \textbf{0.103 ± 0.002}\\ 0.092 ± 0.076 \\  1932 s}  & \thead{0.271 ± 0.004\\   0.259 ± 0.002\\   0.942 ± 0.002\\ 349 s}   & \thead{0.258 ± 0.002\\   0.137 ± 0.002\\   \textbf{0.067 ± 0.012}\\ 568 s}  & \thead{0.267 ± 0.004\\   0.144 ± 0.002\\   0.523 ± 0.058\\ 525 s}\\
 \Xhline{3\arrayrulewidth}
 \thead{\textbf{Power} \\ $n = 52,\!417$ \\ $d = 5$ \\ (Gamma)}& \thead{RMSE \\ CRPS \\ LS  \\ Time} & \thead{\textbf{0.218 ± 0.002}\\ \textbf{0.121 ± 0.001}\\ \textbf{-0.084 ± 0.010}\\ 5035 s} & \thead{0.826 ± 0.456 \\   0.837 ± 0.090\\   2.142 ± 0.088\\ 1597 s} & \thead{0.567 ± 0.030\\    0.787  ± 0.006\\  2.096 ± 0.004\\ 4063 s} & \thead{not \\ implemented} \\
 \Xhline{3\arrayrulewidth}
 \thead{\textbf{WaterVapor} \\ $n = 100,\!000$ \\ $d = 2$ \\ (Gamma)}& \thead{RMSE \\ CRPS \\ LS  \\ Time}  & \thead{\textbf{0.102 ± 0.001} \\  \textbf{0.056  ± 0.001}\\\textbf{-0.728 ± 0.022}\\  3975 s} & \thead{0.423 ± 0.002\\ 0.293 ± 0.001\\0.941 ± 0.001\\ 2027 s} & \thead{ 0.424 ± 0.004\\  0.294 ± 0.002\\   0.943 ± 0.004\\ 4747 s} & \thead{not \\ implemented}\\
 \hline
\end{tabular}
\caption{\label{Res:nonGaussian} RMSE, CRPS, log-score (LS) (mean $\pm$ $2$ standard errors), and runtime in seconds (s) for the regression data sets modeled using non-Gaussian likelihoods. Bold indicates the best mean; other means are in bold if within two standard errors.}
\end{table}

%\begin{figure}[ht!]
%    \centering
%    \includegraphics[width=0.7\linewidth]{figures/LS_Non_Gaussian.pdf}
  %  \caption{Log-score (LS) with error bars (mean $\pm$ $2$ standard errors) for VIF, SGPR, SVGP, and DKLGP for the data sets modeled with non-Gaussian likelihoods.}
%    \label{fig:ResultNonGauss}
%\end{figure}

In Tables \ref{Res:Gaussian2} and \ref{Res:Binary_VVF} in Appendix \ref{App:Data}, we additionally report results comparing the VIF approximation to classical Vecchia and FITC approximations. Overall, the VIF approximation is more accurate than both Vecchia and FITC approximations. The Vecchia approximation also yields competitive prediction accuracy, often approaching that of  the VIF approximation. However, the VIF approximation achieves higher accuracy than the Vecchia approximations on multiple data sets such as the `Kin40K', `Elevators', and `Ailerons' data sets.

\subsection{Matérn kernel smoothness selection and non-zero mean functions}\label{Sec:Modelselection}
Assuming a fixed value for the smoothness parameter \( \nu \) of the Matérn covariance function and a zero prior mean function \( F(\cdot) = 0 \) might result in misspecified models. In the following, we analyze these two issues. We first extend our comparison of VIF approximations for the regression tasks with Gaussian and non-Gaussian likelihoods from the previous section by additionally estimating the shape (or smoothness) parameter \( \nu \) of the Matérn ARD kernel, rather than assuming a fixed \( \nu = 3/2 \) Matérn ARD kernel. This approach enables model selection within the Matérn kernel family. Note that estimating the smoothness parameter is possible for VIF approximations implemented in the \texttt{GPBoost} library but not for the other approximations implemented in \texttt{GPyTorch}. The results are shown in Figure \ref{fig:ResultLSKernel} (left plot) for the log-score and in more detail in Table \ref{Res:ModelSelection} in Appendix \ref{App:Data}. We find that estimating $\nu$ improves the prediction accuracy across nearly all data sets, but the differences are relatively small for most data sets except for the `3dRoad', `Kin40K', `House', `Power', and `Ailerons' data sets, for which estimated $\nu$'s yield substantially higher accuracy. In addition, estimating the smoothness parameter increases the runtime due to the additional evaluations of Bessel functions. \begin{figure}[ht!]
    \centering
    \includegraphics[width=0.48\linewidth]{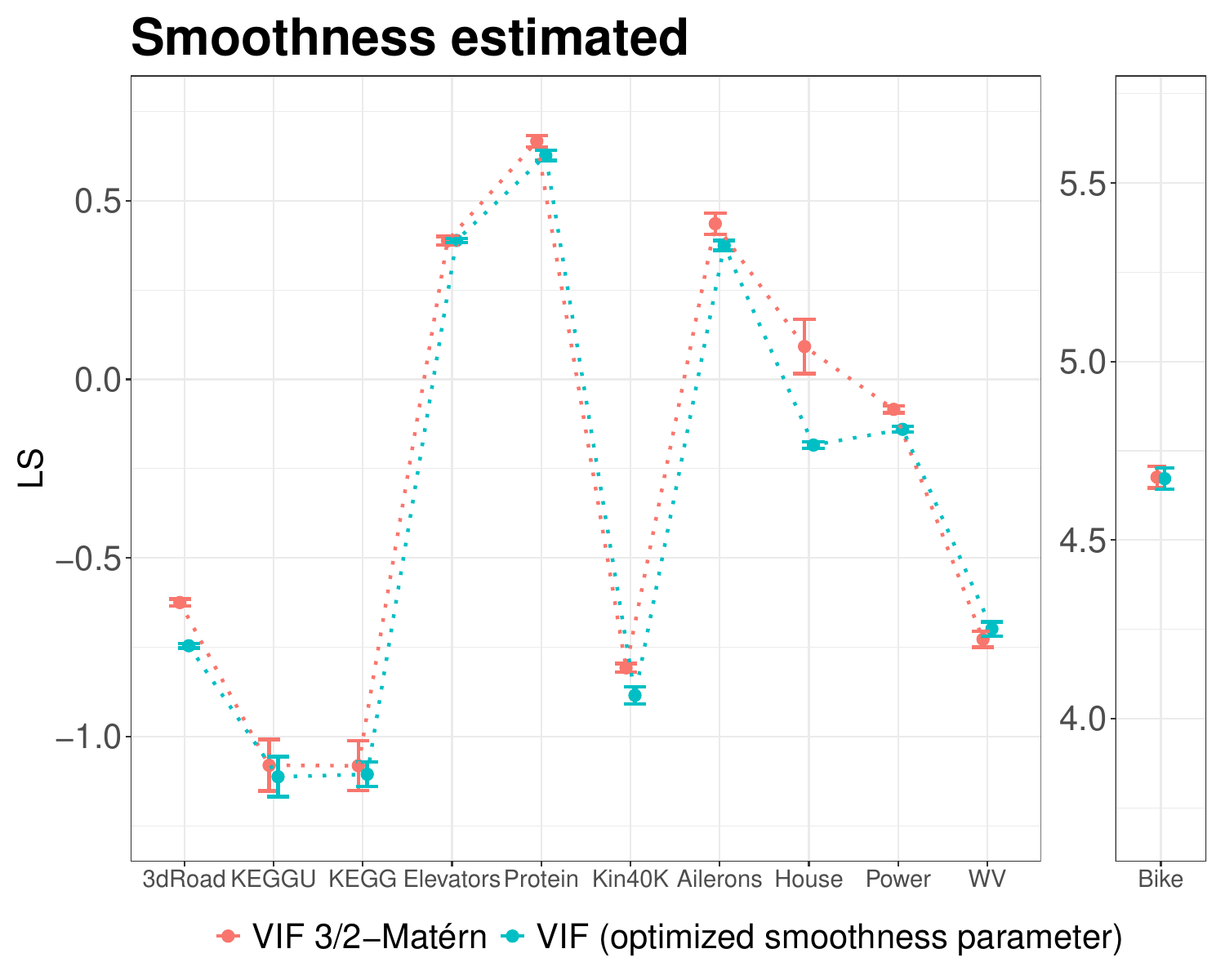}
    \includegraphics[width=0.48\linewidth]{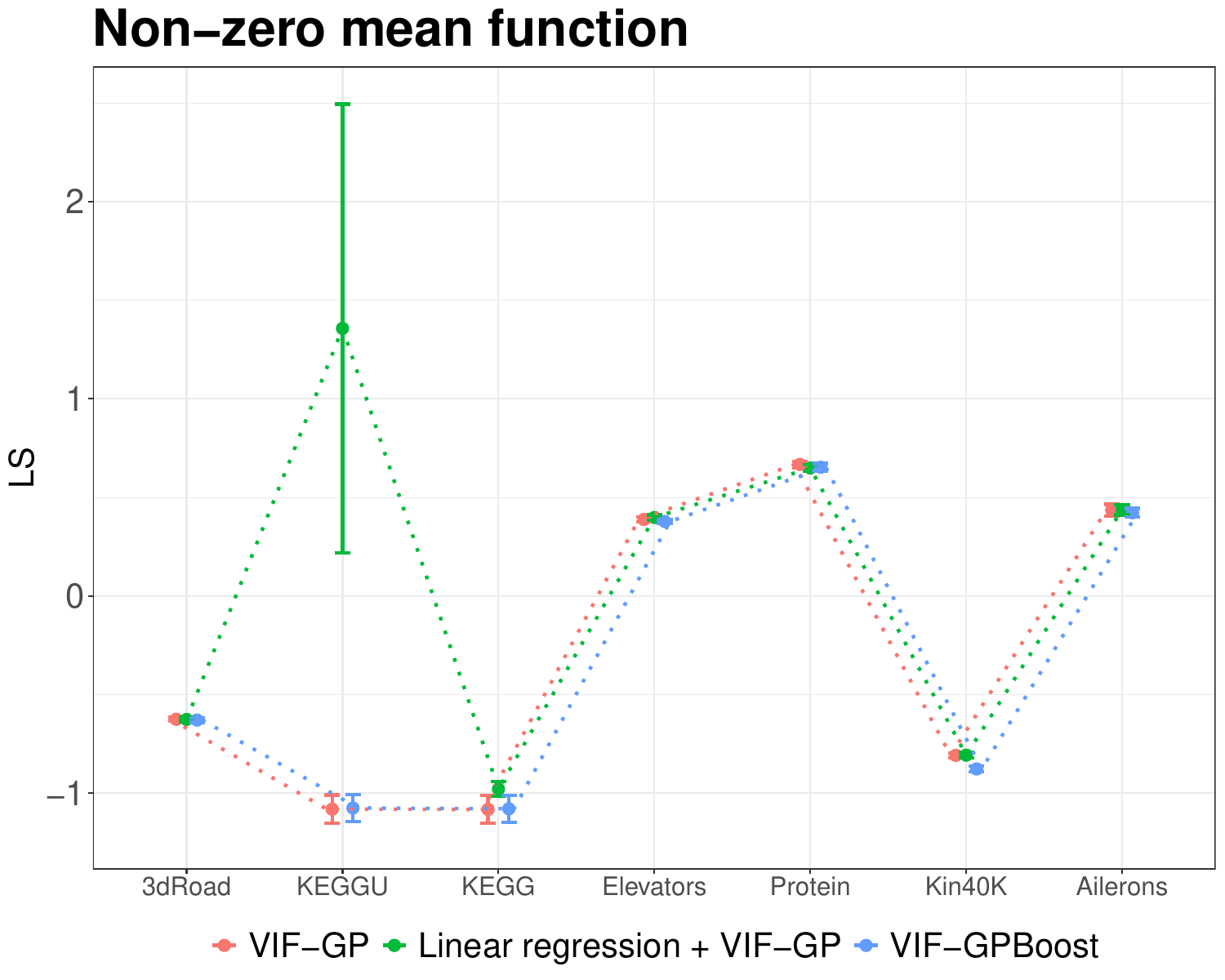}
    \caption{Log-score (LS) with error bars (mean $\pm$ $2$ standard errors) when estimating the smoothness parameter for the regression data sets (left plot) and when using non-zero prior mean functions (right plot).}
    \label{fig:ResultLSKernel}
\end{figure}

Next, we extend the GP model~\eqref{GP_model} by allowing for non-zero prior mean (fixed effects) functions \( F(\cdot) \). Specifically, we consider a linear regression function \( F(\boldsymbol{x}) = \boldsymbol{x}^\mathrm{T} \boldsymbol{\beta} \) as well as a function that is modeled using tree-boosting \citep{sigrist2022gaussian, sigrist2022latent}. The latter is denoted as `GPBoost model' in the following. In both cases, we use the GP input variables $\boldsymbol{s}$ as covariates $\boldsymbol{x}$ for the mean function, i.e., $\boldsymbol{s}=\boldsymbol{x}$. In addition, we use a VIF approximation and a 3/2-Matérn kernel. The results are shown in Figure~\ref{fig:ResultLSKernel} (right plot) for the log-score and in more detail in Table~\ref{Res:FixedEffects} in Appendix~\ref{App:Data}. We find that while the GPBoost model yields slightly more accurate predictions in terms of the log-score for some data sets (`Elevators', `Protein', `Kin40K'), all three models have overall similar prediction accuracy measures except for the linear model on the `KEGGU' data set.
% \begin{figure}[ht!]
%     \centering
%     \includegraphics[width=0.48\linewidth]{figures/Comparison_FE.pdf}
%     \caption{Log-score (LS) with error bars (mean $\pm$ $2$ standard errors) for a zero mean GP, a GP with a linear regression term, and the GPBoost model for the regression data sets with a Gaussian likelihood.}
%     \label{fig:ResultGauss_FE}
% \end{figure}

\section{Conclusion}\label{chap6}
In this work, we introduce Vecchia-inducing-points full-scale (VIF) approximations, a novel approach that leverages the complementary strengths of global inducing points and local Vecchia approximations to enhance the scalability and accuracy of Gaussian processes. We develop an efficient correlation-based neighbor selection strategy for the residual process using a modified cover tree algorithm, and we introduce iterative methods and preconditioners for VIF-Laplace approximation for non-Gaussian likelihoods, significantly reducing computational costs by several orders of magnitude compared to Cholesky-based computations. Our theoretical analysis provides insights into the convergence properties of the preconditioned conjugate gradient method. We systematically evaluate our methods concerning both runtime and accuracy through extensive numerical experiments on simulated and real-world data sets. We find that VIF approximations consistently outperform state-of-the-art alternatives in both accuracy and computational efficiency across various real-world data sets. Future research can analyze alternative methods for choosing both inducing points and Vecchia neighbors in VIF approximations, e.g., using a variational framework.
% Additionally, we extend these experiments by optimizing the smoothness parameter of automatic relevance determination (ARD) Matérn kernels and by allowing for non-zero mean function, contributing to the challenge of model selection - at least within the family of Matérn kernels.

\section*{Acknowledgments}
This research was partially supported by the Swiss Innovation Agency - Innosuisse (grant number `57667.1 IP-ICT').

\ifthenelse{\equal{\jmlr}{1}}{
\clearpage
}{%
  \begingroup
    \clearpage
  \endgroup
}

{
\ifthenelse{\equal{\jmlr}{1}}{%
  \setlength{\abovedisplayskip}{3pt}
  \setlength{\belowdisplayskip}{1pt}
  \setlength{\abovedisplayshortskip}{3pt}
  \setlength{\belowdisplayshortskip}{1pt}
  \raggedbottom
}{}

\appendix
\section*{Appendix}
\section{Gradients of the residual process Vecchia approximation}\label{app_grad_vecchia}
The gradients of $\boldsymbol{B}$ and $\boldsymbol{D}$ defined in Section \ref{sectFSA} with respect to the covariance parameters $\boldsymbol{\theta}$ are given by
\begin{align*}
\left(\frac{\partial \boldsymbol{B}}{\partial \boldsymbol{\theta}}\right)_{iN(i)} & =-\Big(\frac{\partial \mathbf{\Sigma}_{iN(i)}}{\partial \boldsymbol{\theta}}-\mathbf{\Sigma}_{mi}^{\mathrm{T}}\mathbf{\Sigma}_m^{-1}\frac{\partial \mathbf{\Sigma}_{mN(i)}}{\partial \boldsymbol{\theta}} -\frac{\partial \mathbf{\Sigma}_{mi}^{\mathrm{T}}}{\partial \boldsymbol{\theta}}\mathbf{\Sigma}_m^{-1}\mathbf{\Sigma}_{mN(i)} \\ &\quad+\mathbf{\Sigma}_{mi}^{\mathrm{T}}\mathbf{\Sigma}_m^{-1}\frac{\partial \mathbf{\Sigma}_{m}}{\partial \boldsymbol{\theta}}\mathbf{\Sigma}_m^{-1}\mathbf{\Sigma}_{mN(i)}\Big) \big(\mathbf{\Sigma}_{N(i)}-\mathbf{\Sigma}_{mN(i)}^{\mathrm{T}}\mathbf{\Sigma}_{m}^{-1}\mathbf{\Sigma}_{mN(i)}\big)^{-1}\\
&\quad+\big(\mathbf{\Sigma}_{iN(i)}-\mathbf{\Sigma}_{mi}^{\mathrm{T}}\mathbf{\Sigma}_{m}^{-1}\mathbf{\Sigma}_{mN(i)}\big) \big(\mathbf{\Sigma}_{N(i)}-\mathbf{\Sigma}_{mN(i)}^{\mathrm{T}}\mathbf{\Sigma}_{m}^{-1}\mathbf{\Sigma}_{mN(i)}\big)^{-1}\\
&\quad\cdot\Big(\frac{\partial \mathbf{\Sigma}_{N(i)}}{\partial \boldsymbol{\theta}}-\mathbf{\Sigma}_{mN(i)}^{\mathrm{T}}\mathbf{\Sigma}_m^{-1}\frac{\partial \mathbf{\Sigma}_{mN(i)}}{\partial \boldsymbol{\theta}} -\frac{\partial \mathbf{\Sigma}_{mN(i)}^{\mathrm{T}}}{\partial \boldsymbol{\theta}}\mathbf{\Sigma}_m^{-1}\mathbf{\Sigma}_{mN(i)} \\
&\quad+\mathbf{\Sigma}_{mN(i)}^{\mathrm{T}}\mathbf{\Sigma}_m^{-1}\frac{\partial \mathbf{\Sigma}_{m}}{\partial \boldsymbol{\theta}}\mathbf{\Sigma}_m^{-1}\mathbf{\Sigma}_{mN(i)}\Big) \big(\mathbf{\Sigma}_{N(i)}-\mathbf{\Sigma}_{mN(i)}^{\mathrm{T}}\mathbf{\Sigma}_{m}^{-1}\mathbf{\Sigma}_{mN(i)}\big)^{-1} \\
\frac{\partial \boldsymbol{D}}{\partial \boldsymbol{\theta}} & =\frac{\partial \mathbf{\Sigma}_{i}}{\partial \boldsymbol{\theta}}-\mathbf{\Sigma}_{mi}^{\mathrm{T}}\mathbf{\Sigma}_m^{-1}\frac{\partial \mathbf{\Sigma}_{mi}}{\partial \boldsymbol{\theta}} -\frac{\partial \mathbf{\Sigma}_{mi}^{\mathrm{T}}}{\partial \boldsymbol{\theta}}\mathbf{\Sigma}_m^{-1}\mathbf{\Sigma}_{mi} +\mathbf{\Sigma}_{mi}^{\mathrm{T}}\mathbf{\Sigma}_m^{-1}\frac{\partial \mathbf{\Sigma}_{m}}{\partial \boldsymbol{\theta}}\mathbf{\Sigma}_m^{-1}\mathbf{\Sigma}_{mi}\\
&\quad-\frac{\partial \boldsymbol{A}_i}{\partial \boldsymbol{\theta}} \big(\mathbf{\Sigma}_{i}-\mathbf{\Sigma}_{mi}^{\mathrm{T}}\mathbf{\Sigma}_{m}^{-1}\mathbf{\Sigma}_{mi}\big)\\
&\quad-\boldsymbol{A}_i\Big(\frac{\partial \mathbf{\Sigma}_{i}}{\partial \boldsymbol{\theta}}-\mathbf{\Sigma}_{mi}^{\mathrm{T}}\mathbf{\Sigma}_m^{-1}\frac{\partial \mathbf{\Sigma}_{mi}}{\partial \boldsymbol{\theta}} -\frac{\partial \mathbf{\Sigma}_{mi}^{\mathrm{T}}}{\partial \boldsymbol{\theta}}\mathbf{\Sigma}_m^{-1}\mathbf{\Sigma}_{mi} +\mathbf{\Sigma}_{mi}^{\mathrm{T}}\mathbf{\Sigma}_m^{-1}\frac{\partial \mathbf{\Sigma}_{m}}{\partial \boldsymbol{\theta}}\mathbf{\Sigma}_m^{-1}\mathbf{\Sigma}_{mi}\Big)
\end{align*}
and cost also $\mathcal{O}\big(n\cdot( m_v^3+m_v^2 \cdot m)\big)$.

\section{VIF and Laplace Approximation}\label{AppVIFL}

\textbf{Log-determinant:}
\begin{align*}
\begin{split}
    \log \operatorname{det}(\mathbf{\Sigma}_\dagger \boldsymbol{W}+\boldsymbol{I}_n) &= \log \operatorname{det}(\mathbf{\Sigma}_\dagger) + \log \operatorname{det}(\boldsymbol{W}+\mathbf{\Sigma}_\dagger^{-1})\\
    & = \log \operatorname{det}(\boldsymbol{M}) -\log\operatorname{det}(\mathbf{\Sigma}_{m})- \log\operatorname{det}(\boldsymbol{B}^\mathrm{T}\boldsymbol{D}^{-1}\boldsymbol{B})+ \log \operatorname{det}(\boldsymbol{W}+\mathbf{\Sigma}_\dagger^{-1})\\
    % & =\log \operatorname{det}(\boldsymbol{M}) -\log\operatorname{det}(\mathbf{\Sigma}_{m})- \log\operatorname{det}(\boldsymbol{D}^{-1})+ \log \operatorname{det}(\boldsymbol{W}+\mathbf{\Sigma}_\dagger^{-1})\\
    & =\log \operatorname{det}(\boldsymbol{M}) -\log\operatorname{det}(\mathbf{\Sigma}_{m})- \log\operatorname{det}(\boldsymbol{D}^{-1})\\
    &\quad+ \log \operatorname{det}(\boldsymbol{W}+\boldsymbol{B}^\mathrm{T}\boldsymbol{D}^{-1}\boldsymbol{B}-\boldsymbol{B}^\mathrm{T}\boldsymbol{D}^{-1}\boldsymbol{B} \mathbf{\Sigma}_{mn}^{\mathrm{T}}\boldsymbol{M}^{-1} \mathbf{\Sigma}_{mn} \boldsymbol{B}^\mathrm{T}\boldsymbol{D}^{-1}\boldsymbol{B})\\
    & =\log \operatorname{det}(\boldsymbol{M}) -\log\operatorname{det}(\mathbf{\Sigma}_{m})- \log\operatorname{det}(\boldsymbol{D}^{-1})\\
    &\quad+ \log \operatorname{det}(\boldsymbol{W}+\boldsymbol{B}^\mathrm{T}\boldsymbol{D}^{-1}\boldsymbol{B})-\log \operatorname{det}(\boldsymbol{M})+ \log \operatorname{det}\big(\boldsymbol{M}_1\big)\\
    & = -\log\operatorname{det}(\mathbf{\Sigma}_{m})- \log\operatorname{det}(\boldsymbol{D}^{-1}) + \log \operatorname{det}(\boldsymbol{W}+\boldsymbol{B}^\mathrm{T}\boldsymbol{D}^{-1}\boldsymbol{B})\\ &\quad+ \log \operatorname{det}\big(\boldsymbol{M}_1\big),
    \end{split}
\end{align*}
where $\boldsymbol{M}_1 = \boldsymbol{M}-\mathbf{\Sigma}_{mn} \boldsymbol{B}^\mathrm{T}\boldsymbol{D}^{-1}\boldsymbol{B} (\boldsymbol{W}+\boldsymbol{B}^\mathrm{T}\boldsymbol{D}^{-1}\boldsymbol{B})^{-1}\boldsymbol{B}^\mathrm{T}\boldsymbol{D}^{-1}\boldsymbol{B}\mathbf{\Sigma}_{mn}^{\mathrm{T}}$.\\
\\
\textbf{Linear solves:}
\begin{align*}
\begin{split}
    (\boldsymbol{W}+\mathbf{\Sigma}_\dagger^{-1})^{-1} &= \boldsymbol{W}^{-1}(\boldsymbol{W}^{-1}+\mathbf{\Sigma}_\dagger)^{-1}\mathbf{\Sigma}_\dagger =\boldsymbol{W}^{-1}(\boldsymbol{W}^{-1}+\boldsymbol{B}^{-1}\boldsymbol{D}\boldsymbol{B}^{-T}+\mathbf{\Sigma}_{mn}^{\mathrm{T}}\mathbf{\Sigma}_{m}^{-1}\mathbf{\Sigma}_{mn})^{-1}\mathbf{\Sigma}_\dagger\\
    & = \boldsymbol{W}^{-1}\big((\boldsymbol{W}^{-1}+\boldsymbol{B}^{-1}\boldsymbol{D}\boldsymbol{B}^{-T})^{-1}\\
    &\quad-(\boldsymbol{W}^{-1}+\boldsymbol{B}^{-1}\boldsymbol{D}\boldsymbol{B}^{-T})^{-1} \mathbf{\Sigma}_{mn}^{\mathrm{T}}     \boldsymbol{M}_2^{-1}   \mathbf{\Sigma}_{mn}(\boldsymbol{W}^{-1}+\boldsymbol{B}^{-1}\boldsymbol{D}\boldsymbol{B}^{-T})^{-1}\big)\mathbf{\Sigma}_\dagger\\
    &=\boldsymbol{W}^{-1}\big(\boldsymbol{B}^{T}\boldsymbol{D}^{-1}\boldsymbol{B}(\boldsymbol{W}+\boldsymbol{B}^{T}\boldsymbol{D}^{-1}\boldsymbol{B})^{-1}\boldsymbol{W}\\
    &\quad -\boldsymbol{B}^{T}\boldsymbol{D}^{-1}\boldsymbol{B}(\boldsymbol{W}+\boldsymbol{B}^{T}\boldsymbol{D}^{-1}\boldsymbol{B})^{-1}\boldsymbol{W} \mathbf{\Sigma}_{mn}^{\mathrm{T}}     \boldsymbol{M}_3^{-1}   \mathbf{\Sigma}_{mn}\boldsymbol{B}^{T}\boldsymbol{D}^{-1}\boldsymbol{B}\\&\quad\cdot(\boldsymbol{W}+\boldsymbol{B}^{T}\boldsymbol{D}^{-1}\boldsymbol{B})^{-1}\boldsymbol{W}\big)\mathbf{\Sigma}_\dagger,
\end{split}
\end{align*}
where $\boldsymbol{M}_2 = \mathbf{\Sigma}_{m} + \mathbf{\Sigma}_{mn}  (\boldsymbol{W}^{-1}+\boldsymbol{B}^{-1}\boldsymbol{D}\boldsymbol{B}^{-T})^{-1}\mathbf{\Sigma}_{mn}^{\mathrm{T}}$ and $\boldsymbol{M}_3 =\mathbf{\Sigma}_{m} + \mathbf{\Sigma}_{mn}  \boldsymbol{B}^{T}\boldsymbol{D}^{-1}\boldsymbol{B}(\boldsymbol{W}+\boldsymbol{B}^{T}\boldsymbol{D}^{-1}\boldsymbol{B})^{-1}\boldsymbol{W}\mathbf{\Sigma}_{mn}^{\mathrm{T}}$. \\
\\
% In the first equality, we used: 
% \begin{align*}
%     (\boldsymbol{W}+\mathbf{\Sigma}_\dagger^{-1})^{-1} &= (\mathbf{\Sigma}_\dagger^{-1}\mathbf{\Sigma}_\dagger\boldsymbol{W}+\mathbf{\Sigma}_\dagger^{-1})^{-1}\\
%     &=\mathbf{\Sigma}_\dagger-\mathbf{\Sigma}_\dagger\mathbf{\Sigma}_\dagger^{-1}(\mathbf{\Sigma}_\dagger^{-1} + \boldsymbol{W}\mathbf{\Sigma}_\dagger\mathbf{\Sigma}_\dagger^{-1})^{-1}\boldsymbol{W}\mathbf{\Sigma}_\dagger\\
%     &=\mathbf{\Sigma}_\dagger-(\mathbf{\Sigma}_\dagger^{-1} + \boldsymbol{W})^{-1}\boldsymbol{W}\mathbf{\Sigma}_\dagger\\
%     &= \mathbf{\Sigma}_\dagger-(\boldsymbol{I}_n\mathbf{\Sigma}_\dagger^{-1}\boldsymbol{I}_n + \boldsymbol{W})^{-1}\boldsymbol{W}\mathbf{\Sigma}_\dagger\\
%     &=\mathbf{\Sigma}_\dagger-\big(\boldsymbol{W}^{-1}-\boldsymbol{W}^{-1}\boldsymbol{I}_n(\mathbf{\Sigma}_\dagger + \boldsymbol{I}_n\boldsymbol{W}^{-1}\boldsymbol{I}_n)^{-1}\boldsymbol{I}_n\boldsymbol{W}^{-1}\big)\boldsymbol{W}\mathbf{\Sigma}_\dagger\\
%     &=\mathbf{\Sigma}_\dagger-\big(\boldsymbol{W}^{-1}-\boldsymbol{W}^{-1}(\mathbf{\Sigma}_\dagger + \boldsymbol{W}^{-1})^{-1}\boldsymbol{W}^{-1}\big)\boldsymbol{W}\mathbf{\Sigma}_\dagger
%     \\
%     &=\boldsymbol{W}^{-1}(\mathbf{\Sigma}_\dagger + \boldsymbol{W}^{-1})^{-1}\mathbf{\Sigma}_\dagger.
% \end{align*}
\textbf{Gradients:}

 The gradients with respect to $\boldsymbol{\theta}$ and $\boldsymbol{\xi}$ are given by
\begin{align*}
\frac{\partial L^\textit{VIFLA}(\boldsymbol{y}, \boldsymbol{\theta}, \boldsymbol{\xi})}{\partial \theta_k}= & \frac{1}{2} \Tilde{\boldsymbol{b}}^{ T} \frac{\partial \mathbf{\Sigma}_\dagger^{-1}}{\partial \theta_k} \Tilde{\boldsymbol{b}}+\frac{1}{2} \frac{\partial \log \operatorname{det}\left(\mathbf{\Sigma}_\dagger \boldsymbol{W}+\boldsymbol{I}_n\right)}{\partial \theta_k}  \\ &\quad+\left(\frac{\partial L^\textit{VIFLA}(\boldsymbol{y}, \boldsymbol{\theta}, \boldsymbol{\xi})}{\partial \Tilde{\boldsymbol{b}}}\right)^\mathrm{T} \frac{\partial \Tilde{\boldsymbol{b}}}{\partial \theta_k}, \quad k=1, \ldots, q, \\
\frac{\partial L^\textit{VIFLA}(\boldsymbol{y}, \boldsymbol{\theta}, \boldsymbol{\xi})}{\partial \xi_l}= & -\frac{\partial \log p\left(\boldsymbol{y} \mid \boldsymbol{b^*}, \boldsymbol{\xi}\right)}{\partial \xi_l}+\frac{1}{2} \frac{\partial \log \operatorname{det}\left(\mathbf{\Sigma}_\dagger \boldsymbol{W}+\boldsymbol{I}_n\right)}{\partial \xi_l}  \\ &\quad+\left(\frac{\partial L^{L A}(\boldsymbol{y}, \boldsymbol{\theta}, \boldsymbol{\xi})}{\partial \Tilde{\boldsymbol{b}}}\right)^\mathrm{T} \frac{\partial \Tilde{\boldsymbol{b}}}{\partial \xi_l}, \quad l=1, \ldots, r,
\end{align*}
where
\begin{align*}
\frac{\partial L^\textit{VIFLA}(\boldsymbol{y}, \boldsymbol{\theta}, \boldsymbol{\xi})}{\partial b^*(\boldsymbol{s}_i)}&=\frac{1}{2} \frac{\partial \log \operatorname{det}\left(\mathbf{\Sigma}_\dagger \boldsymbol{W}+\boldsymbol{I}_n\right)}{\partial b^*(\boldsymbol{s}_i)} = \frac{1}{2}\Tr\Big((\boldsymbol{W}+\boldsymbol{B}^\mathrm{T}\boldsymbol{D}^{-1}\boldsymbol{B})^{-1}\frac{\partial\boldsymbol{W}}{\partial b^*(\boldsymbol{s}_i)}\Big)\\
&\quad+ \frac{1}{2}\Tr\Big(\boldsymbol{M}_1^{-1}\mathbf{\Sigma}_{mn} \boldsymbol{B}^\mathrm{T}\boldsymbol{D}^{-1}\boldsymbol{B} (\boldsymbol{W}+\boldsymbol{B}^\mathrm{T}\boldsymbol{D}^{-1}\boldsymbol{B})^{-1}\\
&\quad\cdot\frac{\partial\boldsymbol{W}}{\partial b^*(\boldsymbol{s}_i)}(\boldsymbol{W}+\boldsymbol{B}^\mathrm{T}\boldsymbol{D}^{-1}\boldsymbol{B})^{-1}\boldsymbol{B}^\mathrm{T}\boldsymbol{D}^{-1}\boldsymbol{B}\mathbf{\Sigma}_{mn}^{\mathrm{T}}\Big), 
\end{align*}
\begin{align*}
\frac{\partial \Tilde{\boldsymbol{b}}}{\partial F_i}&=-\left(\boldsymbol{W}+\mathbf{\Sigma}_\dagger^{-1}\right)^{-1} \boldsymbol{W}_{\cdot i} =-\boldsymbol{W}^{-1}\big(\boldsymbol{B}^{T}\boldsymbol{D}^{-1}\boldsymbol{B}(\boldsymbol{W}+\boldsymbol{B}^{T}\boldsymbol{D}^{-1}\boldsymbol{B})^{-1}\boldsymbol{W}\\
    &\quad -\boldsymbol{B}^{T}\boldsymbol{D}^{-1}\boldsymbol{B}(\boldsymbol{W}+\boldsymbol{B}^{T}\boldsymbol{D}^{-1}\boldsymbol{B})^{-1}\boldsymbol{W} \mathbf{\Sigma}_{mn}^{\mathrm{T}}     \\
&\quad\cdot\boldsymbol{M}_3^{-1}   \mathbf{\Sigma}_{mn}\boldsymbol{B}^{T}\boldsymbol{D}^{-1}\boldsymbol{B}(\boldsymbol{W}+\boldsymbol{B}^{T}\boldsymbol{D}^{-1}\boldsymbol{B})^{-1}\boldsymbol{W}\big)\mathbf{\Sigma}_\dagger\boldsymbol{W}_{\cdot i}, 
\end{align*}
\begin{align*}
\frac{\partial \Tilde{\boldsymbol{b}}}{\partial \theta_k}&=-\left(\boldsymbol{W}+\mathbf{\Sigma}_\dagger^{-1}\right)^{-1} \frac{\partial \mathbf{\Sigma}_\dagger^{-1}}{\partial \theta_k} \Tilde{\boldsymbol{b}} = -\boldsymbol{W}^{-1}\big(\boldsymbol{B}^{T}\boldsymbol{D}^{-1}\boldsymbol{B}(\boldsymbol{W}+\boldsymbol{B}^{T}\boldsymbol{D}^{-1}\boldsymbol{B})^{-1}\boldsymbol{W}\\
    &\quad -\boldsymbol{B}^{T}\boldsymbol{D}^{-1}\boldsymbol{B}(\boldsymbol{W}+\boldsymbol{B}^{T}\boldsymbol{D}^{-1}\boldsymbol{B})^{-1}\boldsymbol{W} \mathbf{\Sigma}_{mn}^{\mathrm{T}}     \\
&\quad\cdot\boldsymbol{M}_3^{-1}   \mathbf{\Sigma}_{mn}\boldsymbol{B}^{T}\boldsymbol{D}^{-1}\boldsymbol{B}(\boldsymbol{W}+\boldsymbol{B}^{T}\boldsymbol{D}^{-1}\boldsymbol{B})^{-1}\boldsymbol{W}\big)\mathbf{\Sigma}_\dagger\frac{\partial \mathbf{\Sigma}_\dagger^{-1}}{\partial \theta_k} \Tilde{\boldsymbol{b}}, 
\end{align*}
\begin{align*}
\frac{\partial \Tilde{\boldsymbol{b}}}{\partial \xi_l}&=\left(\boldsymbol{W}+\mathbf{\Sigma}_\dagger^{-1}\right)^{-1} \frac{\partial^2 \log p\left(\boldsymbol{y} \mid \boldsymbol{b^*}, \boldsymbol{\xi}\right)}{\partial \xi_l \partial \Tilde{\boldsymbol{b}}}\\
&=\boldsymbol{W}^{-1}\big(\boldsymbol{B}^{T}\boldsymbol{D}^{-1}\boldsymbol{B}(\boldsymbol{W}+\boldsymbol{B}^{T}\boldsymbol{D}^{-1}\boldsymbol{B})^{-1}\boldsymbol{W}  -\boldsymbol{B}^{T}\boldsymbol{D}^{-1}\boldsymbol{B}(\boldsymbol{W}+\boldsymbol{B}^{T}\boldsymbol{D}^{-1}\boldsymbol{B})^{-1}\boldsymbol{W} \mathbf{\Sigma}_{mn}^{\mathrm{T}}     \\
&\quad\cdot\boldsymbol{M}_3^{-1}   \mathbf{\Sigma}_{mn}\boldsymbol{B}^{T}\boldsymbol{D}^{-1}\boldsymbol{B}(\boldsymbol{W}+\boldsymbol{B}^{T}\boldsymbol{D}^{-1}\boldsymbol{B})^{-1}\boldsymbol{W}\big)\mathbf{\Sigma}_\dagger\frac{\partial^2 \log p\left(\boldsymbol{y} \mid \boldsymbol{b^*}, \boldsymbol{\xi}\right)}{\partial \xi_l \partial \Tilde{\boldsymbol{b}}},
\end{align*}
% where $\boldsymbol{W}_{\cdot i}$ denotes column $i$ of $\boldsymbol{W}$ and $F_i = F(\boldsymbol{X})_i$. Note that gradients with respect to $F$ are used, for instance, for generalized linear mixed models (GLMMs) since $\frac{\partial L^\textit{LA}(\boldsymbol{y},F,\boldsymbol{\theta},\boldsymbol{\xi})}{\partial \boldsymbol{\beta}}=\boldsymbol{X}^\mathrm{T}\frac{\partial L^\textit{LA}(\boldsymbol{y},F,\boldsymbol{\theta},\boldsymbol{\xi})}{\partial F}$ when $F(\boldsymbol{X})=\boldsymbol{X}\boldsymbol{\beta}$. However, these gradients are not used in our setting since we assume $F=0$.

% The derivatives of $\log \operatorname{det}(\mathbf{\Sigma}_\dagger \boldsymbol{W}+\boldsymbol{I}_n)$ are given by
\begin{align*}
    \begin{split}
        \frac{\partial \log \operatorname{det}\left(\mathbf{\Sigma}_\dagger \boldsymbol{W}+\boldsymbol{I}_n\right)}{\partial b^*(\boldsymbol{s}_i)} &= \frac{\partial \big(\log \operatorname{det}(\mathbf{\Sigma}_\dagger)+\log \operatorname{det}( \boldsymbol{W}+\mathbf{\Sigma}_\dagger^{-1})\big)}{\partial b^*(\boldsymbol{s}_i)} = \frac{\partial \big(\log \operatorname{det}( \boldsymbol{W}+\mathbf{\Sigma}_\dagger^{-1})\big)}{\partial b^*(\boldsymbol{s}_i)}\\
        % & = \frac{\partial \big(\log \operatorname{det}( \boldsymbol{W}+\boldsymbol{B}^\mathrm{T}\boldsymbol{D}^{-1}\boldsymbol{B})\big)}{\partial b^*(\boldsymbol{s}_i)} + \frac{\partial \big(\log \operatorname{det}( \boldsymbol{M}_1)\big)}{\partial b^*(\boldsymbol{s}_i)}
        % \\
        &= \Tr\Big((\boldsymbol{W}+\boldsymbol{B}^\mathrm{T}\boldsymbol{D}^{-1}\boldsymbol{B})^{-1}\frac{\partial\boldsymbol{W}}{\partial b^*(\boldsymbol{s}_i)}\Big)\\
        &\quad+ \Tr\Big(\boldsymbol{M}_1^{-1}\mathbf{\Sigma}_{mn} \boldsymbol{B}^\mathrm{T}\boldsymbol{D}^{-1}\boldsymbol{B} (\boldsymbol{W}+\boldsymbol{B}^\mathrm{T}\boldsymbol{D}^{-1}\boldsymbol{B})^{-1}\\
        &\quad\cdot\frac{\partial\boldsymbol{W}}{\partial b^*(\boldsymbol{s}_i)}(\boldsymbol{W}+\boldsymbol{B}^\mathrm{T}\boldsymbol{D}^{-1}\boldsymbol{B})^{-1}\boldsymbol{B}^\mathrm{T}\boldsymbol{D}^{-1}\boldsymbol{B}\mathbf{\Sigma}_{mn}^{\mathrm{T}}\Big)\quad i=1, \ldots, p,
                \end{split}
    \end{align*}
    \begin{align*}
    \begin{split}
        \frac{\partial \log \operatorname{det}\left(\mathbf{\Sigma}_\dagger \boldsymbol{W}+\boldsymbol{I}_n\right)}{\partial \theta_k} &= -\Tr\Big(\mathbf{\Sigma}_m^{-1}\frac{\partial\mathbf{\Sigma}_m}{\partial\theta_k}\Big) + \Tr\Big(\boldsymbol{M}_1^{-1}\frac{\partial\boldsymbol{M}_1}{\partial\theta_k}\Big) +\Tr\Big(\boldsymbol{D}^{-1}\frac{\partial\boldsymbol{D}}{\partial\theta_k}\Big)\\ &\quad+ \Tr\Big(( \boldsymbol{W}+\boldsymbol{B}^{\mathrm{T}}\boldsymbol{D}^{-1}\boldsymbol{B})^{-1}\frac{\partial(\boldsymbol{B}^{\mathrm{T}}\boldsymbol{D}^{-1}\boldsymbol{B})}{\partial\theta_k}\Big)\quad k=1, \ldots, q,
        \end{split}
    \end{align*}
    \begin{align*}
    \begin{split}
        \frac{\partial \log \operatorname{det}\left(\mathbf{\Sigma}_\dagger \boldsymbol{W}+\boldsymbol{I}_n\right)}{\partial \xi_l} 
        % &= \frac{\partial \big(\log \operatorname{det}(\mathbf{\Sigma}_\dagger)+\log \operatorname{det}( \boldsymbol{W}+\mathbf{\Sigma}_\dagger^{-1})\big)}{\partial \xi_l}\\
        % &= \Tr\Big(( \boldsymbol{W}+\mathbf{\Sigma}_\dagger^{-1})^{-1}\frac{\partial(\boldsymbol{W}+\mathbf{\Sigma}_\dagger^{-1})}{\partial\xi_l}\Big) \\
        &= \Tr\Big(( \boldsymbol{W}+\mathbf{\Sigma}_\dagger^{-1})^{-1}\frac{\partial\boldsymbol{W}}{\partial\xi_l}\Big)\quad l=1, \ldots, r.
    \end{split}
\end{align*}

\section{Predictive (co-)variances}\label{AppVar}
\subsection{Gaussian}
\begin{align*}
\mathbf{\Sigma}_\dagger^p&=\boldsymbol{B}_p^{-1} \boldsymbol{D}_p \boldsymbol{B}_p^{-T}
+ \boldsymbol{B}_p^{-1} \boldsymbol{B}_{p o}(\boldsymbol{B}^\mathrm{T}\boldsymbol{D}^{-1}\boldsymbol{B})^{-1}\boldsymbol{B}_{p o}^{\mathrm{T}}\boldsymbol{B}_p^{-\mathrm{T}}  + \mathbf{\Sigma}_{mn_p}^{\mathrm{T}}\mathbf{\Sigma}_m^{-1}\mathbf{\Sigma}_{mn_p}\\
&\quad - \big(\mathbf{\Sigma}_{mn_p}^{\mathrm{T}}\mathbf{\Sigma}_m^{-1}\mathbf{\Sigma}_{mn}-\boldsymbol{B}_p^{-1}\boldsymbol{B}_{p o}(\boldsymbol{B}^\mathrm{T}\boldsymbol{D}^{-1}\boldsymbol{B})^{-1}\big)\widetilde{\mathbf{\Sigma}}^{-1}_\dagger\big(\mathbf{\Sigma}_{mn}^{\mathrm{T}}\mathbf{\Sigma}_m^{-1}\mathbf{\Sigma}_{mn_p}\\
&\quad -(\boldsymbol{B}^\mathrm{T}\boldsymbol{D}^{-1}\boldsymbol{B})^{-1}\boldsymbol{B}_{p o}^{\mathrm{T}}\boldsymbol{B}_p^{-\mathrm{T}}\big) \\
&=\boldsymbol{B}_p^{-1} \boldsymbol{D}_p \boldsymbol{B}_p^{-T}  + \mathbf{\Sigma}_{mn_p}^{\mathrm{T}}\mathbf{\Sigma}_m^{-1}\mathbf{\Sigma}_{mn_p} - \mathbf{\Sigma}_{mn_p}^{\mathrm{T}}\mathbf{\Sigma}_m^{-1}\mathbf{\Sigma}_{mn}\boldsymbol{B}^\mathrm{T}\boldsymbol{D}^{-1}\boldsymbol{B}\mathbf{\Sigma}_{mn}^{\mathrm{T}}\mathbf{\Sigma}_m^{-1}\mathbf{\Sigma}_{mn_p}\\
&\quad + \mathbf{\Sigma}_{mn_p}^{\mathrm{T}}\mathbf{\Sigma}_m^{-1}\mathbf{\Sigma}_{mn}\boldsymbol{B}_{p o}^{\mathrm{T}}\boldsymbol{B}_p^{-\mathrm{T}} + \boldsymbol{B}_p^{-1}\boldsymbol{B}_{p o}\mathbf{\Sigma}_{mn}^{\mathrm{T}}\mathbf{\Sigma}_m^{-1}\mathbf{\Sigma}_{mn_p} +\boldsymbol{B}_p^{-1} \boldsymbol{B}_{p o} \mathbf{\Sigma}_{mn}^{\mathrm{T}}\boldsymbol{M}^{-1}\mathbf{\Sigma}_{mn}\boldsymbol{B}_{p o}^{\mathrm{T}}\boldsymbol{B}_p^{-\mathrm{T}}\\
&\quad - \boldsymbol{B}_p^{-1} \boldsymbol{B}_{p o} \mathbf{\Sigma}_{mn}^{\mathrm{T}}\boldsymbol{M}^{-1}\mathbf{\Sigma}_{mn}\boldsymbol{B}^\mathrm{T}\boldsymbol{D}^{-1}\boldsymbol{B}\mathbf{\Sigma}_{mn}^{\mathrm{T}}\mathbf{\Sigma}_m^{-1}\mathbf{\Sigma}_{mn_p}\\
&\quad   - \mathbf{\Sigma}_{mn_p}^{\mathrm{T}}\mathbf{\Sigma}_m^{-1}\mathbf{\Sigma}_{mn} \boldsymbol{B}^\mathrm{T}\boldsymbol{D}^{-1}\boldsymbol{B}\mathbf{\Sigma}_{mn}^{\mathrm{T}}\boldsymbol{M}^{-1}\mathbf{\Sigma}_{mn}\boldsymbol{B}_{p o}^{\mathrm{T}}\boldsymbol{B}_p^{-\mathrm{T}}\\
&\quad + \mathbf{\Sigma}_{mn_p}^{\mathrm{T}}\mathbf{\Sigma}_m^{-1}\mathbf{\Sigma}_{mn} \boldsymbol{B}^\mathrm{T}\boldsymbol{D}^{-1}\boldsymbol{B}\mathbf{\Sigma}_{mn}^{\mathrm{T}}\boldsymbol{M}^{-1}\mathbf{\Sigma}_{mn}\boldsymbol{B}^\mathrm{T}\boldsymbol{D}^{-1}\boldsymbol{B}\mathbf{\Sigma}_{mn}^{\mathrm{T}}\mathbf{\Sigma}_m^{-1}\mathbf{\Sigma}_{mn_p}
\end{align*}

\subsection{Non-Gaussian}
\begin{align}
\boldsymbol{\Omega}_p&=\boldsymbol{B}_p^{-1} \boldsymbol{D}_p \boldsymbol{B}_p^{-T}
+ \boldsymbol{B}_p^{-1} \boldsymbol{B}_{p o}(\boldsymbol{B}^\mathrm{T}\boldsymbol{D}^{-1}\boldsymbol{B})^{-1}\boldsymbol{B}_{p o}^{\mathrm{T}}\boldsymbol{B}_p^{-\mathrm{T}}  + \mathbf{\Sigma}_{mn_p}^{\mathrm{T}}\mathbf{\Sigma}_m^{-1}\mathbf{\Sigma}_{mn_p} \nonumber\\
&\quad-\big(\mathbf{\Sigma}_{mn_p}^{\mathrm{T}}\mathbf{\Sigma}_m^{-1}\mathbf{\Sigma}_{mn}-\boldsymbol{B}_p^{-1} \boldsymbol{B}_{p o}(\boldsymbol{B}^\mathrm{T}\boldsymbol{D}^{-1}\boldsymbol{B})^{-1}\big)\big(\mathbf{\Sigma}_\dagger^{-1}- \mathbf{\Sigma}_\dagger^{-1}(\boldsymbol{W}+\mathbf{\Sigma}_\dagger^{-1})^{-1}\mathbf{\Sigma}_\dagger^{-1}\big)\nonumber\\
&\quad\cdot\big(\mathbf{\Sigma}_{mn}^{\mathrm{T}}\mathbf{\Sigma}_m^{-1}\mathbf{\Sigma}_{mn_p}-(\boldsymbol{B}^\mathrm{T}\boldsymbol{D}^{-1}\boldsymbol{B})^{-1} \boldsymbol{B}_{p o}^\mathrm{T} \boldsymbol{B}_p^{-\mathrm{T}}\big)\nonumber \\
& \overset{(1)}{=} \boldsymbol{B}_p^{-1} \boldsymbol{D}_p \boldsymbol{B}_p^{-T}
+ \boldsymbol{B}_p^{-1} \boldsymbol{B}_{p o}(\boldsymbol{B}^\mathrm{T}\boldsymbol{D}^{-1}\boldsymbol{B})^{-1}\boldsymbol{B}_{p o}^{\mathrm{T}}\boldsymbol{B}_p^{-\mathrm{T}}  + \mathbf{\Sigma}_{mn_p}^{\mathrm{T}}\mathbf{\Sigma}_m^{-1}\mathbf{\Sigma}_{mn_p}\nonumber\\
&\quad-\big(\mathbf{\Sigma}_{mn_p}^{\mathrm{T}}\mathbf{\Sigma}_m^{-1}\mathbf{\Sigma}_{mn}-\boldsymbol{B}_p^{-1} \boldsymbol{B}_{p o}(\boldsymbol{B}^\mathrm{T}\boldsymbol{D}^{-1}\boldsymbol{B})^{-1}\big)(\boldsymbol{W}^{-1}+\mathbf{\Sigma}_\dagger)^{-1}\nonumber\big(\mathbf{\Sigma}_{mn}^{\mathrm{T}}\mathbf{\Sigma}_m^{-1}\mathbf{\Sigma}_{mn_p}\nonumber\\
&\quad-(\boldsymbol{B}^\mathrm{T}\boldsymbol{D}^{-1}\boldsymbol{B})^{-1} \boldsymbol{B}_{p o}^\mathrm{T} \boldsymbol{B}_p^{-\mathrm{T}}\big)\nonumber \\
&= \boldsymbol{B}_p^{-1} \boldsymbol{D}_p \boldsymbol{B}_p^{-T}
+ \boldsymbol{B}_p^{-1} \boldsymbol{B}_{p o}(\boldsymbol{B}^\mathrm{T}\boldsymbol{D}^{-1}\boldsymbol{B})^{-1}\boldsymbol{B}_{p o}^{\mathrm{T}}\boldsymbol{B}_p^{-\mathrm{T}}  + \mathbf{\Sigma}_{mn_p}^{\mathrm{T}}\mathbf{\Sigma}_m^{-1}\mathbf{\Sigma}_{mn_p}\nonumber\\
&\quad-\big(\mathbf{\Sigma}_{mn_p}^{\mathrm{T}}\mathbf{\Sigma}_m^{-1}\mathbf{\Sigma}_{mn}-\boldsymbol{B}_p^{-1} \boldsymbol{B}_{p o}(\boldsymbol{B}^\mathrm{T}\boldsymbol{D}^{-1}\boldsymbol{B})^{-1}\big)\nonumber\\
&\quad\cdot\big((\boldsymbol{W}^{-1}+\boldsymbol{B}^{-1}\boldsymbol{D}\boldsymbol{B}^{-T})^{-1}\nonumber\\
&\quad-(\boldsymbol{W}^{-1}+\boldsymbol{B}^{-1}\boldsymbol{D}\boldsymbol{B}^{-T})^{-1} \mathbf{\Sigma}_{mn}^{\mathrm{T}}     \boldsymbol{M}_2^{-1}   \mathbf{\Sigma}_{mn}(\boldsymbol{W}^{-1}+\boldsymbol{B}^{-1}\boldsymbol{D}\boldsymbol{B}^{-T})^{-1}\big)\nonumber\\
&\quad\cdot\big(\mathbf{\Sigma}_{mn}^{\mathrm{T}}\mathbf{\Sigma}_m^{-1}\mathbf{\Sigma}_{mn_p}-(\boldsymbol{B}^\mathrm{T}\boldsymbol{D}^{-1}\boldsymbol{B})^{-1} \boldsymbol{B}_{p o}^\mathrm{T} \boldsymbol{B}_p^{-\mathrm{T}}\big)\nonumber\\
&= \boldsymbol{B}_p^{-1} \boldsymbol{D}_p \boldsymbol{B}_p^{-T}
+ \boldsymbol{B}_p^{-1} \boldsymbol{B}_{p o}(\boldsymbol{B}^\mathrm{T}\boldsymbol{D}^{-1}\boldsymbol{B})^{-1}\boldsymbol{B}_{p o}^{\mathrm{T}}\boldsymbol{B}_p^{-\mathrm{T}}  + \mathbf{\Sigma}_{mn_p}^{\mathrm{T}}\mathbf{\Sigma}_m^{-1}\mathbf{\Sigma}_{mn_p}\nonumber\\[8pt] 
\begin{split}&\quad-{\big(\mathbf{\Sigma}_{mn_p}^{\mathrm{T}}\mathbf{\Sigma}_m^{-1}\mathbf{\Sigma}_{mn}-\boldsymbol{B}_p^{-1} \boldsymbol{B}_{p o}(\boldsymbol{B}^\mathrm{T}\boldsymbol{D}^{-1}\boldsymbol{B})^{-1}\big)(\boldsymbol{W}^{-1}+\boldsymbol{B}^{-1}\boldsymbol{D}\boldsymbol{B}^{-T})^{-1}}\\
&\quad{\cdot\big(\mathbf{\Sigma}_{mn}^{\mathrm{T}}\mathbf{\Sigma}_m^{-1}\mathbf{\Sigma}_{mn_p}-(\boldsymbol{B}^\mathrm{T}\boldsymbol{D}^{-1}\boldsymbol{B})^{-1} \boldsymbol{B}_{p o}^\mathrm{T} \boldsymbol{B}_p^{-\mathrm{T}}\big)}\end{split}\label{red}\\[8pt] 
\begin{split}&\quad+{\big(\mathbf{\Sigma}_{mn_p}^{\mathrm{T}}\mathbf{\Sigma}_m^{-1}\mathbf{\Sigma}_{mn}-\boldsymbol{B}_p^{-1} \boldsymbol{B}_{p o}(\boldsymbol{B}^\mathrm{T}\boldsymbol{D}^{-1}\boldsymbol{B})^{-1}\big)}\\
&\quad{\cdot(\boldsymbol{W}^{-1}+\boldsymbol{B}^{-1}\boldsymbol{D}\boldsymbol{B}^{-T})^{-1} \mathbf{\Sigma}_{mn}^{\mathrm{T}}     \boldsymbol{M}_2^{-1}   \mathbf{\Sigma}_{mn}(\boldsymbol{W}^{-1}+\boldsymbol{B}^{-1}\boldsymbol{D}\boldsymbol{B}^{-T})^{-1}}\\
&\quad{\cdot\big(\mathbf{\Sigma}_{mn}^{\mathrm{T}}\mathbf{\Sigma}_m^{-1}\mathbf{\Sigma}_{mn_p}-(\boldsymbol{B}^\mathrm{T}\boldsymbol{D}^{-1}\boldsymbol{B})^{-1} \boldsymbol{B}_{p o}^\mathrm{T} \boldsymbol{B}_p^{-\mathrm{T}}\big)},\end{split}\label{blue}
\end{align}
where in $(1)$ we used the Sherman-Woodbury-Morrison formula.
We compute the terms in \eqref{red} and \eqref{blue} separately:
\begin{align*}
&{\big(\mathbf{\Sigma}_{mn_p}^{\mathrm{T}}\mathbf{\Sigma}_m^{-1}\mathbf{\Sigma}_{mn}-\boldsymbol{B}_p^{-1} \boldsymbol{B}_{p o}(\boldsymbol{B}^\mathrm{T}\boldsymbol{D}^{-1}\boldsymbol{B})^{-1}\big)(\boldsymbol{W}^{-1}+\boldsymbol{B}^{-1}\boldsymbol{D}\boldsymbol{B}^{-T})^{-1}}\\
&\quad{\cdot\big(\mathbf{\Sigma}_{mn}^{\mathrm{T}}\mathbf{\Sigma}_m^{-1}\mathbf{\Sigma}_{mn_p}-(\boldsymbol{B}^\mathrm{T}\boldsymbol{D}^{-1}\boldsymbol{B})^{-1} \boldsymbol{B}_{p o}^\mathrm{T} \boldsymbol{B}_p^{-\mathrm{T}}\big)}\\
&{\overset{(1)}{=} \big(\mathbf{\Sigma}_{mn_p}^{\mathrm{T}}\mathbf{\Sigma}_m^{-1}\mathbf{\Sigma}_{mn}-\boldsymbol{B}_p^{-1} \boldsymbol{B}_{p o}(\boldsymbol{B}^\mathrm{T}\boldsymbol{D}^{-1}\boldsymbol{B})^{-1}\big)}\\
&\quad{\cdot\big(\boldsymbol{B}^\mathrm{T}\boldsymbol{D}^{-1}\boldsymbol{B}-\boldsymbol{B}^\mathrm{T}\boldsymbol{D}^{-1}\boldsymbol{B}(\boldsymbol{W}+\boldsymbol{B}^\mathrm{T}\boldsymbol{D}^{-1}\boldsymbol{B})^{-1}\boldsymbol{B}^\mathrm{T}\boldsymbol{D}^{-1}\boldsymbol{B}\big)}\\
&\quad{\cdot\big(\mathbf{\Sigma}_{mn}^{\mathrm{T}}\mathbf{\Sigma}_m^{-1}\mathbf{\Sigma}_{mn_p}-(\boldsymbol{B}^\mathrm{T}\boldsymbol{D}^{-1}\boldsymbol{B})^{-1} \boldsymbol{B}_{p o}^\mathrm{T} \boldsymbol{B}_p^{-\mathrm{T}}\big)}\\
& {=
\boldsymbol{B}_p^{-1} \boldsymbol{B}_{p o}(\boldsymbol{B}^\mathrm{T}\boldsymbol{D}^{-1}\boldsymbol{B})^{-1}\boldsymbol{B}_{p o}^{\mathrm{T}}\boldsymbol{B}_p^{-\mathrm{T}}+\mathbf{\Sigma}_{mn_p}^{\mathrm{T}}\mathbf{\Sigma}_m^{-1}\mathbf{\Sigma}_{mn}\boldsymbol{B}^\mathrm{T}\boldsymbol{D}^{-1}\boldsymbol{B}\mathbf{\Sigma}_{mn}^{\mathrm{T}}\mathbf{\Sigma}_m^{-1}\mathbf{\Sigma}_{mn_p}}\\
&\quad{-\boldsymbol{B}_p^{-1} \boldsymbol{B}_{p o}\mathbf{\Sigma}_{mn}^{\mathrm{T}}\mathbf{\Sigma}_m^{-1}\mathbf{\Sigma}_{mn_p} - \mathbf{\Sigma}_{mn_p}^{\mathrm{T}}\mathbf{\Sigma}_m^{-1}\mathbf{\Sigma}_{mn}\boldsymbol{B}_{p o}^{\mathrm{T}}\boldsymbol{B}_p^{-\mathrm{T}}}\\
&\quad{ - \boldsymbol{B}_{p o}^{\mathrm{T}}\boldsymbol{B}_p^{-\mathrm{T}}(\boldsymbol{W}+\boldsymbol{B}^\mathrm{T}\boldsymbol{D}^{-1}\boldsymbol{B})^{-1}\boldsymbol{B}_{p o}^{\mathrm{T}}\boldsymbol{B}_p^{-\mathrm{T}}}\\
&\quad{ - \mathbf{\Sigma}_{mn_p}^{\mathrm{T}}\mathbf{\Sigma}_m^{-1}\mathbf{\Sigma}_{mn}\boldsymbol{B}^\mathrm{T}\boldsymbol{D}^{-1}\boldsymbol{B}(\boldsymbol{W}+\boldsymbol{B}^\mathrm{T}\boldsymbol{D}^{-1}\boldsymbol{B})^{-1}\boldsymbol{B}^\mathrm{T}\boldsymbol{D}^{-1}\boldsymbol{B}\mathbf{\Sigma}_{mn}^{\mathrm{T}}\mathbf{\Sigma}_m^{-1}\mathbf{\Sigma}_{mn_p}}\\
&\quad{ + \mathbf{\Sigma}_{mn_p}^{\mathrm{T}}\mathbf{\Sigma}_m^{-1}\mathbf{\Sigma}_{mn}\boldsymbol{B}^\mathrm{T}\boldsymbol{D}^{-1}\boldsymbol{B}(\boldsymbol{W}+\boldsymbol{B}^\mathrm{T}\boldsymbol{D}^{-1}\boldsymbol{B})^{-1}\boldsymbol{B}_{p o}^{\mathrm{T}}\boldsymbol{B}_p^{-\mathrm{T}}}\\
&\quad{+ \boldsymbol{B}_{p o}^{\mathrm{T}}\boldsymbol{B}_p^{-\mathrm{T}}(\boldsymbol{W}+\boldsymbol{B}^\mathrm{T}\boldsymbol{D}^{-1}\boldsymbol{B})^{-1}\boldsymbol{B}^\mathrm{T}\boldsymbol{D}^{-1}\boldsymbol{B}\mathbf{\Sigma}_{mn}^{\mathrm{T}}\mathbf{\Sigma}_m^{-1}\mathbf{\Sigma}_{mn_p}},
\end{align*}
where in $(1)$ we used again the Sherman-Woodbury-Morrison formula.
\begin{align*}
&{\big(\mathbf{\Sigma}_{mn_p}^{\mathrm{T}}\mathbf{\Sigma}_m^{-1}\mathbf{\Sigma}_{mn}-\boldsymbol{B}_p^{-1} \boldsymbol{B}_{p o}(\boldsymbol{B}^\mathrm{T}\boldsymbol{D}^{-1}\boldsymbol{B})^{-1}\big)}\\
&\quad{\cdot(\boldsymbol{W}^{-1}+\boldsymbol{B}^{-1}\boldsymbol{D}\boldsymbol{B}^{-T})^{-1} \mathbf{\Sigma}_{mn}^{\mathrm{T}}     \boldsymbol{M}_2^{-1}   \mathbf{\Sigma}_{mn}(\boldsymbol{W}^{-1}+\boldsymbol{B}^{-1}\boldsymbol{D}\boldsymbol{B}^{-T})^{-1}}\\
&\quad{\cdot\big(\mathbf{\Sigma}_{mn}^{\mathrm{T}}\mathbf{\Sigma}_m^{-1}\mathbf{\Sigma}_{mn_p}-(\boldsymbol{B}^\mathrm{T}\boldsymbol{D}^{-1}\boldsymbol{B})^{-1} \boldsymbol{B}_{p o}^\mathrm{T} \boldsymbol{B}_p^{-\mathrm{T}}\big)}\\
&{\overset{(2)}{=}\big(\mathbf{\Sigma}_{mn_p}^{\mathrm{T}}\mathbf{\Sigma}_m^{-1}\mathbf{\Sigma}_{mn}-\boldsymbol{B}_p^{-1} \boldsymbol{B}_{p o}(\boldsymbol{B}^\mathrm{T}\boldsymbol{D}^{-1}\boldsymbol{B})^{-1}\big)}\\
&\quad{\cdot\boldsymbol{B}^\mathrm{T}\boldsymbol{D}^{-1}\boldsymbol{B}(\boldsymbol{W}+\boldsymbol{B}^\mathrm{T}\boldsymbol{D}^{-1}\boldsymbol{B})^{-1}\boldsymbol{W}^{-1} \mathbf{\Sigma}_{mn}^{\mathrm{T}}     \boldsymbol{M}_3^{-1}   \mathbf{\Sigma}_{mn}\boldsymbol{W}^{-1}(\boldsymbol{W}+\boldsymbol{B}^\mathrm{T}\boldsymbol{D}^{-1}\boldsymbol{B})^{-1}\boldsymbol{B}^\mathrm{T}\boldsymbol{D}^{-1}\boldsymbol{B}}\\
&\quad{\cdot\big(\mathbf{\Sigma}_{mn}^{\mathrm{T}}\mathbf{\Sigma}_m^{-1}\mathbf{\Sigma}_{mn_p}-(\boldsymbol{B}^\mathrm{T}\boldsymbol{D}^{-1}\boldsymbol{B})^{-1} \boldsymbol{B}_{p o}^\mathrm{T} \boldsymbol{B}_p^{-\mathrm{T}}\big)}\\
&{= \boldsymbol{B}_p^{-1} \boldsymbol{B}_{p o}  (\boldsymbol{W}+\boldsymbol{B}^\mathrm{T}\boldsymbol{D}^{-1}\boldsymbol{B})^{-1}\boldsymbol{W}^{-1} \mathbf{\Sigma}_{mn}^{\mathrm{T}}     \boldsymbol{M}_3^{-1}   \mathbf{\Sigma}_{mn}\boldsymbol{W}^{-1}(\boldsymbol{W}+\boldsymbol{B}^\mathrm{T}\boldsymbol{D}^{-1}\boldsymbol{B})^{-1}   \boldsymbol{B}_{p o}^\mathrm{T} \boldsymbol{B}_p^{-\mathrm{T}}}\\
&\quad{+\mathbf{\Sigma}_{mn_p}^{\mathrm{T}}\mathbf{\Sigma}_m^{-1}\mathbf{\Sigma}_{mn}} \\
&\quad{\cdot\boldsymbol{B}^\mathrm{T}\boldsymbol{D}^{-1}\boldsymbol{B}(\boldsymbol{W}+\boldsymbol{B}^\mathrm{T}\boldsymbol{D}^{-1}\boldsymbol{B})^{-1}\boldsymbol{W}^{-1} \mathbf{\Sigma}_{mn}^{\mathrm{T}}     \boldsymbol{M}_3^{-1}   \mathbf{\Sigma}_{mn}\boldsymbol{W}^{-1}(\boldsymbol{W}+\boldsymbol{B}^\mathrm{T}\boldsymbol{D}^{-1}\boldsymbol{B})^{-1}\boldsymbol{B}^\mathrm{T}\boldsymbol{D}^{-1}\boldsymbol{B}}\\
&\quad{\cdot   \mathbf{\Sigma}_{mn}^{\mathrm{T}}\mathbf{\Sigma}_m^{-1}\mathbf{\Sigma}_{mn_p}}\\
&\quad{ - \boldsymbol{B}_p^{-1} \boldsymbol{B}_{p o}  (\boldsymbol{W}+\boldsymbol{B}^\mathrm{T}\boldsymbol{D}^{-1}\boldsymbol{B})^{-1}\boldsymbol{W}^{-1} \mathbf{\Sigma}_{mn}^{\mathrm{T}}     \boldsymbol{M}_3^{-1}   \mathbf{\Sigma}_{mn}\boldsymbol{W}^{-1}}\\
&\quad{\cdot(\boldsymbol{W}+\boldsymbol{B}^\mathrm{T}\boldsymbol{D}^{-1}\boldsymbol{B})^{-1}\boldsymbol{B}^\mathrm{T}\boldsymbol{D}^{-1}\boldsymbol{B}\mathbf{\Sigma}_{mn}^{\mathrm{T}}\mathbf{\Sigma}_m^{-1}\mathbf{\Sigma}_{mn_p}}\\
&\quad{ - \mathbf{\Sigma}_{mn_p}^{\mathrm{T}}\mathbf{\Sigma}_m^{-1}\mathbf{\Sigma}_{mn}\boldsymbol{B}^\mathrm{T}\boldsymbol{D}^{-1}\boldsymbol{B}(\boldsymbol{W}+\boldsymbol{B}^\mathrm{T}\boldsymbol{D}^{-1}\boldsymbol{B})^{-1}\boldsymbol{W}^{-1} \mathbf{\Sigma}_{mn}^{\mathrm{T}}     \boldsymbol{M}_3^{-1}   \mathbf{\Sigma}_{mn}\boldsymbol{W}^{-1}}\\
&\quad{\cdot(\boldsymbol{W}+\boldsymbol{B}^\mathrm{T}\boldsymbol{D}^{-1}\boldsymbol{B})^{-1}\boldsymbol{B}_{p o}^\mathrm{T} \boldsymbol{B}_p^{-\mathrm{T}}},
\end{align*}
where in $(2)$ we used that $(\boldsymbol{W}^{-1} +\boldsymbol{B}^{-1}\boldsymbol{D}\boldsymbol{B}^{-T})^{-1} = \boldsymbol{W}^{-1}(\boldsymbol{W}+\boldsymbol{B}^\mathrm{T}\boldsymbol{D}^{-1}\boldsymbol{B})^{-1}\boldsymbol{B}^\mathrm{T}\boldsymbol{D}^{-1}\boldsymbol{B} = \boldsymbol{B}^\mathrm{T}\boldsymbol{D}^{-1}\boldsymbol{B}(\boldsymbol{W}+\boldsymbol{B}^\mathrm{T}\boldsymbol{D}^{-1}\boldsymbol{B})^{-1}\boldsymbol{W}^{-1}$. 
Finally, we obtain:
\begin{align*}
\boldsymbol{\Omega}_p&=\boldsymbol{B}_p^{-1} \boldsymbol{D}_p \boldsymbol{B}_p^{-T}  + \mathbf{\Sigma}_{mn_p}^{\mathrm{T}}\mathbf{\Sigma}_m^{-1}\mathbf{\Sigma}_{mn_p}- \mathbf{\Sigma}_{mn_p}^{\mathrm{T}}\mathbf{\Sigma}_m^{-1}\mathbf{\Sigma}_{mn}\boldsymbol{B}^\mathrm{T}\boldsymbol{D}^{-1}\boldsymbol{B}\mathbf{\Sigma}_{mn}^{\mathrm{T}}\mathbf{\Sigma}_m^{-1}\mathbf{\Sigma}_{mn_p}\\
&\quad+\boldsymbol{B}_p^{-1} \boldsymbol{B}_{p o}\mathbf{\Sigma}_{mn}^{\mathrm{T}}\mathbf{\Sigma}_m^{-1}\mathbf{\Sigma}_{mn_p} + \mathbf{\Sigma}_{mn_p}^{\mathrm{T}}\mathbf{\Sigma}_m^{-1}\mathbf{\Sigma}_{mn}\boldsymbol{B}_{p o}^{\mathrm{T}}\boldsymbol{B}_p^{-\mathrm{T}}\\
&\quad+ \boldsymbol{B}_{p o}^{\mathrm{T}}\boldsymbol{B}_p^{-\mathrm{T}}(\boldsymbol{W}+\boldsymbol{B}^\mathrm{T}\boldsymbol{D}^{-1}\boldsymbol{B})^{-1}\boldsymbol{B}_{p o}^{\mathrm{T}}\boldsymbol{B}_p^{-\mathrm{T}}\\
&\quad+ \mathbf{\Sigma}_{mn_p}^{\mathrm{T}}\mathbf{\Sigma}_m^{-1}\mathbf{\Sigma}_{mn}\boldsymbol{B}^\mathrm{T}\boldsymbol{D}^{-1}\boldsymbol{B}(\boldsymbol{W}+\boldsymbol{B}^\mathrm{T}\boldsymbol{D}^{-1}\boldsymbol{B})^{-1}\boldsymbol{B}^\mathrm{T}\boldsymbol{D}^{-1}\boldsymbol{B}\mathbf{\Sigma}_{mn}^{\mathrm{T}}\mathbf{\Sigma}_m^{-1}\mathbf{\Sigma}_{mn_p}\\
&\quad- \mathbf{\Sigma}_{mn_p}^{\mathrm{T}}\mathbf{\Sigma}_m^{-1}\mathbf{\Sigma}_{mn}\boldsymbol{B}^\mathrm{T}\boldsymbol{D}^{-1}\boldsymbol{B}(\boldsymbol{W}+\boldsymbol{B}^\mathrm{T}\boldsymbol{D}^{-1}\boldsymbol{B})^{-1}\boldsymbol{B}_{p o}^{\mathrm{T}}\boldsymbol{B}_p^{-\mathrm{T}}\\
&\quad- \boldsymbol{B}_{p o}^{\mathrm{T}}\boldsymbol{B}_p^{-\mathrm{T}}(\boldsymbol{W}+\boldsymbol{B}^\mathrm{T}\boldsymbol{D}^{-1}\boldsymbol{B})^{-1}\boldsymbol{B}^\mathrm{T}\boldsymbol{D}^{-1}\boldsymbol{B}\mathbf{\Sigma}_{mn}^{\mathrm{T}}\mathbf{\Sigma}_m^{-1}\mathbf{\Sigma}_{mn_p}\\
    &\quad+\boldsymbol{B}_p^{-1} \boldsymbol{B}_{p o}  (\boldsymbol{W}+\boldsymbol{B}^\mathrm{T}\boldsymbol{D}^{-1}\boldsymbol{B})^{-1}\boldsymbol{W}^{-1} \mathbf{\Sigma}_{mn}^{\mathrm{T}}     \boldsymbol{M}_3^{-1}   \mathbf{\Sigma}_{mn}\boldsymbol{W}^{-1}(\boldsymbol{W}+\boldsymbol{B}^\mathrm{T}\boldsymbol{D}^{-1}\boldsymbol{B})^{-1}   \boldsymbol{B}_{p o}^\mathrm{T} \boldsymbol{B}_p^{-\mathrm{T}}\\
&\quad +\mathbf{\Sigma}_{mn_p}^{\mathrm{T}}\mathbf{\Sigma}_m^{-1}\mathbf{\Sigma}_{mn} \\
&\quad\cdot\boldsymbol{B}^\mathrm{T}\boldsymbol{D}^{-1}\boldsymbol{B}(\boldsymbol{W}+\boldsymbol{B}^\mathrm{T}\boldsymbol{D}^{-1}\boldsymbol{B})^{-1}\boldsymbol{W}^{-1} \mathbf{\Sigma}_{mn}^{\mathrm{T}}     \boldsymbol{M}_3^{-1}   \mathbf{\Sigma}_{mn}\boldsymbol{W}^{-1}\\
&\quad\cdot(\boldsymbol{W}+\boldsymbol{B}^\mathrm{T}\boldsymbol{D}^{-1}\boldsymbol{B})^{-1}\boldsymbol{B}^\mathrm{T}\boldsymbol{D}^{-1}\boldsymbol{B}\\
&\quad  \cdot\mathbf{\Sigma}_{mn}^{\mathrm{T}}\mathbf{\Sigma}_m^{-1}\mathbf{\Sigma}_{mn_p}\\
&\quad- \boldsymbol{B}_p^{-1} \boldsymbol{B}_{p o}  (\boldsymbol{W}+\boldsymbol{B}^\mathrm{T}\boldsymbol{D}^{-1}\boldsymbol{B})^{-1}\boldsymbol{W}^{-1} \mathbf{\Sigma}_{mn}^{\mathrm{T}}     \boldsymbol{M}_3^{-1}   \mathbf{\Sigma}_{mn}\boldsymbol{W}^{-1}\\
&\quad\cdot(\boldsymbol{W}+\boldsymbol{B}^\mathrm{T}\boldsymbol{D}^{-1}\boldsymbol{B})^{-1}\boldsymbol{B}^\mathrm{T}\boldsymbol{D}^{-1}\boldsymbol{B}\mathbf{\Sigma}_{mn}^{\mathrm{T}}\mathbf{\Sigma}_m^{-1}\mathbf{\Sigma}_{mn_p}\\
&\quad- \mathbf{\Sigma}_{mn_p}^{\mathrm{T}}\mathbf{\Sigma}_m^{-1}\mathbf{\Sigma}_{mn}\boldsymbol{B}^\mathrm{T}\boldsymbol{D}^{-1}\boldsymbol{B}(\boldsymbol{W}+\boldsymbol{B}^\mathrm{T}\boldsymbol{D}^{-1}\boldsymbol{B})^{-1}\boldsymbol{W}^{-1} \mathbf{\Sigma}_{mn}^{\mathrm{T}}     \boldsymbol{M}_3^{-1}   \mathbf{\Sigma}_{mn}\boldsymbol{W}^{-1}\\
&\quad\cdot(\boldsymbol{W}+\boldsymbol{B}^\mathrm{T}\boldsymbol{D}^{-1}\boldsymbol{B})^{-1}\boldsymbol{B}_{p o}^\mathrm{T} \boldsymbol{B}_p^{-\mathrm{T}}.
\end{align*}

$ $

While Algorithms \ref{alg:pred_var} and \ref{alg:pred_var2} produce estimators based on sampled realizations, the following proofs are formulated in terms of the underlying random vectors $\boldsymbol{z}_i$.
\\
\begin{proof}[Proof of Proposition \ref{PropPredVar}]\label{proofalgo}
By standard results, in Algorithm \ref{alg:pred_var}, the only non-deterministic term $1/\ell\sum_{i=1}^\ell \boldsymbol{z}_i^{(8)} (\boldsymbol{z}_i^{(8)})^\mathrm{T}$ is an unbiased and consistent estimator
for $\big(\mathbf{\Sigma}_{mn_p}^{\mathrm{T}}\mathbf{\Sigma}_m^{-1}\mathbf{\Sigma}_{mn}-\boldsymbol{B}_p^{-1} \boldsymbol{B}_{p o}(\boldsymbol{B}^\mathrm{T}\boldsymbol{D}^{-1}\boldsymbol{B})^{-1}\big)\mathbf{\Sigma}_\dagger^{-1}\big(\mathbf{\Sigma}_{\dagger}^{-1}+\boldsymbol{W}\big)^{-1}\mathbf{\Sigma}_\dagger^{-1}\big(\mathbf{\Sigma}_{mn}^{\mathrm{T}}\mathbf{\Sigma}_m^{-1}\mathbf{\Sigma}_{mn_p}-(\boldsymbol{B}^\mathrm{T}\boldsymbol{D}^{-1}\boldsymbol{B})^{-1}\boldsymbol{B}_{p o}^\mathrm{T}\boldsymbol{B}_p^{-\mathrm{T}} \big)$, and the claim in Proposition \ref{PropPredVar} thus follows.
\end{proof}

\begin{proof}[Proof of Proposition \ref{PropPredVar2}]\label{proofalgo2}
By standard results, in Algorithm \ref{alg:pred_var2}, the only non-deterministic term $1/\ell\sum_{i=1}^\ell \boldsymbol{z}_i^{(1)} \circ \big(\mathbf{\Sigma}_{mn_p}^{\mathrm{T}}\mathbf{\Sigma}_m^{-1}\mathbf{\Sigma}_{mn}-\boldsymbol{B}_p^{-1} \boldsymbol{B}_{p o}(\boldsymbol{B}^\mathrm{T}\boldsymbol{D}^{-1}\boldsymbol{B})^{-1}\big)\mathbf{\Sigma}_\dagger^{-1}(\boldsymbol{W}+\mathbf{\Sigma}_\dagger^{-1})^{-1}\mathbf{\Sigma}_\dagger^{-1}$ \newline $\big(\mathbf{\Sigma}_{mn}^{\mathrm{T}}\mathbf{\Sigma}_m^{-1}\mathbf{\Sigma}_{mn_p}-(\boldsymbol{B}^\mathrm{T}\boldsymbol{D}^{-1}\boldsymbol{B})^{-1} \boldsymbol{B}_{p o}^\mathrm{T} \boldsymbol{B}_p^{-\mathrm{T}}\big)\boldsymbol{z}_i^{(1)}$ is an unbiased and consistent estimate of  $\text{diag}\big(\mathbf{\Sigma}_{mn_p}^{\mathrm{T}}\mathbf{\Sigma}_m^{-1}\mathbf{\Sigma}_{mn}-\boldsymbol{B}_p^{-1} \boldsymbol{B}_{p o}(\boldsymbol{B}^\mathrm{T}\boldsymbol{D}^{-1}\boldsymbol{B})^{-1}\big)\mathbf{\Sigma}_\dagger^{-1}(\boldsymbol{W}+\mathbf{\Sigma}_\dagger^{-1})^{-1}\mathbf{\Sigma}_\dagger^{-1}\quad\big(\mathbf{\Sigma}_{mn}^{\mathrm{T}}\mathbf{\Sigma}_m^{-1}\mathbf{\Sigma}_{mn_p}-(\boldsymbol{B}^\mathrm{T}\boldsymbol{D}^{-1}\boldsymbol{B})^{-1} \boldsymbol{B}_{p o}^\mathrm{T} \boldsymbol{B}_p^{-\mathrm{T}}\big)$, and the claim in Proposition \ref{PropPredVar2} thus follows.
\end{proof}

\section{Derivations of SLQ-approximated log-determinants and stochastic trace estimation for gradients}\label{AppRT}
The stochastic Lanczos quadrature (SLQ) method can be derived as follows: 
\begin{align*}
    \log\det\big({\mathbf{\Sigma}}_{\dagger}\boldsymbol{W} +{\mathbf{I}}_n\big)&= \log\det\big({\mathbf{\Sigma}}_{\dagger}\big) +\log\det\big(\boldsymbol{W}+\widetilde{\mathbf{\Sigma}}_{\dagger}^{-1}\big) \\
    &= \log\det\big({\mathbf{\Sigma}}_{\dagger}\big) +\log\det\big(\boldsymbol{P}^{-\frac{1}{2}}(\boldsymbol{W}+\widetilde{\mathbf{\Sigma}}_{\dagger}^{-1})\boldsymbol{P}^{-\frac{1}{2}}\big) + \log\det(\boldsymbol{P})\\
    &\approx \log\det\big({\mathbf{\Sigma}}_{\dagger}\big) +\frac{1}{\ell} \sum_{i=1}^{\ell} \boldsymbol{z}_i^\mathrm{T} \boldsymbol{P}^{-\frac{1}{2}}\widetilde{\boldsymbol{Q}}_{i}\log\big(\widetilde{\boldsymbol{T}}_{i} \big)\widetilde{\boldsymbol{Q}}_{i}^\mathrm{T}\boldsymbol{P}^{-\frac{1}{2}}\boldsymbol{z}_i + \log\det(\boldsymbol{P})\\
    &\approx \log\det\big({\mathbf{\Sigma}}_{\dagger}\big) +\frac{n}{\ell} \sum_{i=1}^{\ell} \boldsymbol{e}_1^\mathrm{T} \log\big(\widetilde{\boldsymbol{T}}_{i}\big) \boldsymbol{e}_1 + \log\det(\boldsymbol{P}),
\end{align*}
where $\widetilde{\boldsymbol{Q}}_{i}\in\mathbb{R}^{n\times k}$ has orthonormal columns, $\widetilde{\boldsymbol{T}}_{i}\in\mathbb{R}^{k\times k}$ is the tridiagonal matrix of the partial Lanczos tridiagonalization $\widetilde{\boldsymbol{Q}}_{i}\widetilde{\boldsymbol{T}}_{i}\widetilde{\boldsymbol{Q}}_{i}^\mathrm{T} \approx \boldsymbol{P}^{-\frac{1}{2}}(\boldsymbol{W}+\widetilde{\mathbf{\Sigma}}_{\dagger}^{-1})\boldsymbol{P}^{-\frac{1}{2}}$ obtained by running the Lanczos algorithm for $k$ steps with $\boldsymbol{P}^{-\frac{1}{2}}(\boldsymbol{W}+\widetilde{\mathbf{\Sigma}}_{\dagger}^{-1})\boldsymbol{P}^{-\frac{1}{2}}$ and initial vector $\boldsymbol{P}^{-\frac{1}{2}}\boldsymbol{z}_i / \sqrt{n}\approx \boldsymbol{P}^{-\frac{1}{2}}\boldsymbol{z}_i / \|\boldsymbol{P}^{-\frac{1}{2}}\boldsymbol{z}_i\|_2$, and $\boldsymbol{z}_i \sim \mathcal{N}(\boldsymbol{0},\boldsymbol{P})$. Alternatively, we have
\begin{align*}
    \log\det\big({\mathbf{\Sigma}}_{\dagger}\boldsymbol{W} +{\mathbf{I}}_n\big)&= \log\det\big(\boldsymbol{W}\big) +\log\det\big(\boldsymbol{W}^{-1}+\widetilde{\mathbf{\Sigma}}_{\dagger}\big) = \log\det\big(\boldsymbol{W}\big) \\
    &\quad+\log\det\big(\boldsymbol{W}^{-1}+\widetilde{\mathbf{\Sigma}}_{\dagger}\big)\\
    &\approx \log\det\big(\boldsymbol{W}\big) +\frac{n}{\ell} \sum_{i=1}^{\ell} \boldsymbol{e}_1^\mathrm{T} \log\big(\widetilde{\boldsymbol{T}}_{i}\big) \boldsymbol{e}_1 + \log\det(\boldsymbol{P}),
\end{align*}
where $\widetilde{\boldsymbol{T}}_{i}\in\mathbb{R}^{k\times k}$ is the tridiagonal matrix of the partial Lanczos tridiagonalization $\widetilde{\boldsymbol{Q}}_{i}\widetilde{\boldsymbol{T}}_{i}\widetilde{\boldsymbol{Q}}_{i}^\mathrm{T} \approx \boldsymbol{P}^{-\frac{1}{2}}(\boldsymbol{W}^{-1}+\widetilde{\mathbf{\Sigma}}_{\dagger})\boldsymbol{P}^{-\frac{1}{2}}$ obtained by running the Lanczos algorithm for $k$ steps with $\boldsymbol{P}^{-\frac{1}{2}}(\boldsymbol{W}^{-1}+\widetilde{\mathbf{\Sigma}}_{\dagger})\boldsymbol{P}^{-\frac{1}{2}}$ and initial vector $\boldsymbol{P}^{-\frac{1}{2}}\boldsymbol{z}_i / \|\boldsymbol{P}^{-\frac{1}{2}}\boldsymbol{z}_i\|_2$, and $\boldsymbol{z}_i \sim \mathcal{N}(\boldsymbol{0},\boldsymbol{P})$.

Furthermore, the stochastic trace estimation for gradients can be computed as follows:
\begin{align*}
\Tr \Big((\boldsymbol{W} +{\mathbf{\Sigma}}_{\dagger}^{-1})^{-1}\frac{\partial (\boldsymbol{W} +{\mathbf{\Sigma}}_{\dagger}^{-1})}{\partial  b^*(\boldsymbol{s}_i)}\Big)&= \Tr \Big((\boldsymbol{W} +{\mathbf{\Sigma}}_{\dagger}^{-1})^{-1}\frac{\partial (\boldsymbol{W} +{\mathbf{\Sigma}}_{\dagger}^{-1})}{\partial  b^*(\boldsymbol{s}_i)}\mathbb{E}_{\boldsymbol{z}_i\sim\mathcal{N}(\boldsymbol{0},\boldsymbol{P})}\left[\boldsymbol{P}^{-1} \boldsymbol{z}_i\boldsymbol{z}_i^\mathrm{T}\right]\Big)\\
&\approx\frac{1}{\ell}\sum_{i=1}^{\ell} \big(\boldsymbol{z}_i^\mathrm{T}(\boldsymbol{W} +{\mathbf{\Sigma}}_{\dagger}^{-1})^{-1}\big)\big(\frac{\partial \boldsymbol{W}}{\partial  b^*(\boldsymbol{s}_i)} \boldsymbol{P}^{-1} \boldsymbol{z}_i\big)\\
&= \widetilde{T}_\ell^{ b^*(\boldsymbol{s}_i)}\quad i = 1,\ldots,p,
\end{align*}
\begin{align*}
\Tr \Big((\boldsymbol{W} +{\mathbf{\Sigma}}_{\dagger}^{-1})^{-1}\frac{\partial (\boldsymbol{W} +{\mathbf{\Sigma}}_{\dagger}^{-1})}{\partial {\theta}_k}\Big)&\approx\frac{1}{\ell}\sum_{i=1}^{\ell} \big(\boldsymbol{z}_i^\mathrm{T}(\boldsymbol{W} +{\mathbf{\Sigma}}_{\dagger}^{-1})^{-1}\big)\big(\frac{\partial (\boldsymbol{W} +{\mathbf{\Sigma}}_{\dagger}^{-1})}{\partial {\theta}_k} \boldsymbol{P}^{-1} \boldsymbol{z}_i\big)\\
&= \widetilde{T}_\ell^{{\theta}_k}\quad k = 1,\ldots,q,
\end{align*}
\begin{align*}
\Tr \Big((\boldsymbol{W} +{\mathbf{\Sigma}}_{\dagger}^{-1})^{-1}\frac{\partial (\boldsymbol{W} +{\mathbf{\Sigma}}_{\dagger}^{-1})}{\partial {\xi}_l}\Big)&\approx\frac{1}{\ell}\sum_{i=1}^{\ell} \big(\boldsymbol{z}_i^\mathrm{T}(\boldsymbol{W} +{\mathbf{\Sigma}}_{\dagger}^{-1})^{-1}\big)\big(\frac{\partial \boldsymbol{W}}{\partial {\xi}_l} \boldsymbol{P}^{-1} \boldsymbol{z}_i\big)\\
&= \widetilde{T}_\ell^{{\xi}_l}\quad l = 1,\ldots,r,
\end{align*}
or e.g.
\begin{align*}
\Tr \Big((\boldsymbol{W}^{-1} +{\mathbf{\Sigma}}_{\dagger})^{-1}\frac{\partial (\boldsymbol{W}^{-1} +{\mathbf{\Sigma}}_{\dagger})}{\partial {\theta}_k}\Big)&\approx\frac{1}{\ell}\sum_{i=1}^{\ell} \big(\boldsymbol{z}_i^\mathrm{T}(\boldsymbol{W}^{-1} +{\mathbf{\Sigma}}_{\dagger}^{-1})\big)\big(\frac{\partial (\boldsymbol{W}^{-1} +{\mathbf{\Sigma}}_{\dagger})}{\partial {\theta}_k} \boldsymbol{P}^{-1} \boldsymbol{z}_i\big)\\
&= \widetilde{T}_\ell^{{\theta}_k}\quad k = 1,\ldots,q.
\end{align*}
Further, we can reduce the variance of this stochastic estimate by using the preconditioner $\boldsymbol{P}$ to build a control variate, e.g.,
\begin{align*}
\begin{split}
    \Tr \Big((\boldsymbol{W} +{\mathbf{\Sigma}}_{\dagger}^{-1})^{-1}\frac{\partial (\boldsymbol{W} +{\mathbf{\Sigma}}_{\dagger}^{-1})}{\partial {\theta}_k}\Big) &\approx \frac{1}{\ell} \sum_{i=1}^{\ell} \Big(\big(\boldsymbol{z}_i^\mathrm{T}(\boldsymbol{W} +{\mathbf{\Sigma}}_{\dagger}^{-1})^{-1}\big)\big(\frac{\partial(\boldsymbol{W} +{\mathbf{\Sigma}}_{\dagger}^{-1})}{\partial {\theta}_k}\boldsymbol{P}^{-1} \boldsymbol{z}_i\big)\\
&\quad-\hat{c}_{opt}\cdot\big(\boldsymbol{z}_i^\mathrm{T}\boldsymbol{P}^{-1}\big)\big(\frac{\partial\boldsymbol{P}}{\partial {\theta}_k}\boldsymbol{P}^{-1} \boldsymbol{z}_i\big)\Big)+ \hat{c}_{opt}\cdot\Tr \Big(\boldsymbol{P}^{-1}\frac{\partial \boldsymbol{P}}{\partial {\theta}_k}\Big),
\end{split}
\end{align*}
{\small
\begin{align*}
\hat{c}_{\mathrm{opt}}
=
\frac{
\sum\limits_{i=1}^\ell
\Big(
(\boldsymbol{z}_i^\mathrm{T}(\boldsymbol{W}+\boldsymbol{\Sigma}_{\dagger}^{-1})^{-1})
\Big(
\frac{\partial(\boldsymbol{W}+\boldsymbol{\Sigma}_{\dagger}^{-1})}{\partial \theta_k}
\boldsymbol{P}^{-1}\boldsymbol{z}_i
\Big)
-
\widetilde{T}_\ell
\Big)
\Big((\boldsymbol{z}_i^\mathrm{T}\boldsymbol{P}^{-1})
\Big(
\frac{\partial\boldsymbol{P}}{\partial \theta_k}
\boldsymbol{P}^{-1}\boldsymbol{z}_i
\Big)-\Tr \Big(\boldsymbol{P}^{-1}\frac{\partial \boldsymbol{P}}{\partial {\theta}_k}\Big)\Big)
}{
\sum\limits_{i=1}^\ell
\Big[
(\boldsymbol{z}_i^\mathrm{T}\boldsymbol{P}^{-1})
\Big(
\frac{\partial\boldsymbol{P}}{\partial \theta_k}
\boldsymbol{P}^{-1}\boldsymbol{z}_i
\Big)
-\Tr \Big(\boldsymbol{P}^{-1}\frac{\partial \boldsymbol{P}}{\partial {\theta}_k}\Big)\Big]^2
}.
\end{align*}}

Or,
\begin{align*}
\begin{split}
    \Tr \Big((\boldsymbol{W}^{-1} +{\mathbf{\Sigma}}_{\dagger})^{-1}\frac{\partial (\boldsymbol{W}^{-1} +{\mathbf{\Sigma}}_{\dagger})}{\partial {\theta}_k}\Big) &\approx \frac{1}{\ell} \sum_{i=1}^{\ell} \Big(\big(\boldsymbol{z}_i^\mathrm{T}(\boldsymbol{W}^{-1} +{\mathbf{\Sigma}}_{\dagger})^{-1}\big)\big(\frac{\partial(\boldsymbol{W}^{-1} +{\mathbf{\Sigma}}_{\dagger})}{\partial {\theta}_k}\boldsymbol{P}^{-1} \boldsymbol{z}_i\big)\\
&\quad-\hat{c}_{opt}\cdot\big(\boldsymbol{z}_i^\mathrm{T}\boldsymbol{P}^{-1}\big)\big(\frac{\partial\boldsymbol{P}}{\partial {\theta}_k}\boldsymbol{P}^{-1} \boldsymbol{z}_i\big)\Big)+ \hat{c}_{opt}\cdot\Tr \Big(\boldsymbol{P}^{-1}\frac{\partial \boldsymbol{P}}{\partial {\theta}_k}\Big),
\end{split}
\end{align*}
{\small \begin{align*}
\hat{c}_{\mathrm{opt}}
=
\frac{
\sum\limits_{i=1}^\ell
\Big(
(\boldsymbol{z}_i^\mathrm{T}(\boldsymbol{W}^{-1}+\boldsymbol{\Sigma}_{\dagger})^{-1})
\Big(
\frac{\partial(\boldsymbol{W}^{-1}+\boldsymbol{\Sigma}_{\dagger})}{\partial \theta_k}
\boldsymbol{P}^{-1}\boldsymbol{z}_i
\Big)
-
\widetilde{T}_\ell
\Big)
\Big((\boldsymbol{z}_i^\mathrm{T}\boldsymbol{P}^{-1})
\Big(
\frac{\partial\boldsymbol{P}}{\partial \theta_k}
\boldsymbol{P}^{-1}\boldsymbol{z}_i
\Big)-\Tr \Big(\boldsymbol{P}^{-1}\frac{\partial \boldsymbol{P}}{\partial {\theta}_k}\Big)\Big)
}{
\sum\limits_{i=1}^\ell
\Big[
(\boldsymbol{z}_i^\mathrm{T}\boldsymbol{P}^{-1})
\Big(
\frac{\partial\boldsymbol{P}}{\partial \theta_k}
\boldsymbol{P}^{-1}\boldsymbol{z}_i
\Big)
-\Tr \Big(\boldsymbol{P}^{-1}\frac{\partial \boldsymbol{P}}{\partial {\theta}_k}\Big)\Big]^2
}.
\end{align*}}

\section{Additional information on the preconditioners}
\subsection{VIF with diagonal update (VIFDU) preconditioner}\label{AppVIFDU}

The “\textbf{VIF} with \textbf{d}iagonal \textbf{u}pdate” (VIFDU) preconditioner is given by
\begin{align*}
    \widehat{\boldsymbol{P}} = \boldsymbol{B}^\mathrm{T}\boldsymbol{W}\boldsymbol{B} + \mathbf{\Sigma}_\dagger^{-1}=\boldsymbol{B}^\mathrm{T}\big(\boldsymbol{W} + \boldsymbol{D}^{-1}-\boldsymbol{D}^{-1}\boldsymbol{B} \mathbf{\Sigma}_{mn}^{\mathrm{T}}\boldsymbol{M}^{-1} \mathbf{\Sigma}_{mn} \boldsymbol{B}^\mathrm{T}\boldsymbol{D}^{-1}\big)\boldsymbol{B}.
\end{align*}
Linear solves with $\widehat{\boldsymbol{P}}$
\begin{align*}
    &\widehat{\boldsymbol{P}}^{-1}\boldsymbol{w} = \\ &\boldsymbol{B}^{-1}\big((\boldsymbol{W} + \boldsymbol{D}^{-1})^{-1} + (\boldsymbol{W} + \boldsymbol{D}^{-1})^{-1}\boldsymbol{D}^{-1} \boldsymbol{B}\mathbf{\Sigma}_{mn}^{\mathrm{T}}\boldsymbol{M}_3^{-1}\mathbf{\Sigma}_{mn}\boldsymbol{B}^{\mathrm{T}}\boldsymbol{D}^{-1}(\boldsymbol{W} + \boldsymbol{D}^{-1})^{-1}\big)\boldsymbol{B}^{-\mathrm{T}}\boldsymbol{w},
\end{align*}
where $\boldsymbol{w}\in\mathbb{R}^n$ and $\boldsymbol{M}_3 = \boldsymbol{M} - \mathbf{\Sigma}_{mn}\boldsymbol{B}^{\mathrm{T}}\boldsymbol{D}^{-1}  (\boldsymbol{W} + \boldsymbol{D}^{-1})^{-1}\boldsymbol{D}^{-1}\boldsymbol{B}\mathbf{\Sigma}_{mn}^{\mathrm{T}}$, are computed in $\mathcal{O}\big(n\cdot(m_v + m)\big)$ when $\boldsymbol{M}_3$ is precomputed in $\mathcal{O}\big(n\cdot(m_v\cdot m + m^2)\big)$. We can compute the log-determinant and its derivatives using Sylvester's determinant theorem
\begin{align*}
    \log\det\big(\widehat{\boldsymbol{P}}\big) &= \log\det\Big(\boldsymbol{B}^\mathrm{T}\big(\boldsymbol{W} + \boldsymbol{D}^{-1}-\boldsymbol{D}^{-1}\boldsymbol{B} \mathbf{\Sigma}_{mn}^{\mathrm{T}}\boldsymbol{M}^{-1} \mathbf{\Sigma}_{mn} \boldsymbol{B}^\mathrm{T}\boldsymbol{D}^{-1}\big)\boldsymbol{B}\Big)\\
    &= \log\det\big(\boldsymbol{W} + \boldsymbol{D}^{-1}-\boldsymbol{D}^{-1}\boldsymbol{B} \mathbf{\Sigma}_{mn}^{\mathrm{T}}\boldsymbol{M}^{-1} \mathbf{\Sigma}_{mn} \boldsymbol{B}^\mathrm{T}\boldsymbol{D}^{-1}\big)\\
    & = \log\det\big(\boldsymbol{W} + \boldsymbol{D}^{-1}\big) - \log\det\big(\boldsymbol{M}\big) + \log \operatorname{det}\big(\boldsymbol{M}_3\big)
 \end{align*}
\begin{align*}
    \Tr \Big(\widehat{\boldsymbol{P}}^{-1}\frac{\partial \widehat{\boldsymbol{P}}}{\partial  b^*(\boldsymbol{s}_i)}\Big) &= \Tr \Big((\boldsymbol{W} + \boldsymbol{D}^{-1})^{-1}\frac{\partial \boldsymbol{W}}{\partial  b^*(\boldsymbol{s}_i)}\Big)\\
    &\quad+ \Tr \Big(\boldsymbol{M}_3^{-1}\mathbf{\Sigma}_{mn} \boldsymbol{B}^\mathrm{T}\boldsymbol{D}^{-1} (\boldsymbol{W}+\boldsymbol{D}^{-1})^{-1}\frac{\partial \boldsymbol{W}}{\partial  b^*(\boldsymbol{s}_i)}(\boldsymbol{W}+\boldsymbol{D}^{-1})^{-1}\boldsymbol{D}^{-1}\boldsymbol{B}\mathbf{\Sigma}_{mn}^{\mathrm{T}}\Big)
 \end{align*}
\begin{align*}
    \Tr \Big(\widehat{\boldsymbol{P}}^{-1}\frac{\partial \widehat{\boldsymbol{P}}}{\partial {\theta}_k}\Big) &= \Tr \Big((\boldsymbol{W} + \boldsymbol{D}^{-1})^{-1}\frac{\partial (\boldsymbol{W} + \boldsymbol{D}^{-1})}{\partial {\theta}_k}\Big) - \Tr\Big(\boldsymbol{M}^{-1}\frac{\partial\boldsymbol{M}}{\partial {\theta}_k}\Big)+ \Tr \Big(\boldsymbol{M}_3^{-1}\frac{\partial \boldsymbol{M}_3}{\partial {\theta}_k}\Big)
 \end{align*}
\begin{align*}
    \Tr \Big(\widehat{\boldsymbol{P}}^{-1}\frac{\partial \widehat{\boldsymbol{P}}}{\partial {\xi}_l}\Big) &= \Tr \Big((\boldsymbol{W} + \boldsymbol{D}^{-1})^{-1}\frac{\partial \boldsymbol{W}}{\partial {\xi}_l}\Big)\\
    &\quad+ \Tr \Big(\boldsymbol{M}_3^{-1}\mathbf{\Sigma}_{mn} \boldsymbol{B}^\mathrm{T}\boldsymbol{D}^{-1} (\boldsymbol{W}+\boldsymbol{D}^{-1})^{-1}\frac{\partial \boldsymbol{W}}{\partial {\xi}_l}(\boldsymbol{W}+\boldsymbol{D}^{-1})^{-1}\boldsymbol{D}^{-1}\boldsymbol{B}\mathbf{\Sigma}_{mn}^{\mathrm{T}}\Big)
\end{align*}
in $\mathcal{O}\big(n\cdot(m_v\cdot m + m^2)\big)$.

\subsection{FITC preconditioner}\label{AppFITC}

The \textbf{FITC} preconditioner is given by
\begin{align*}
    \widehat{\boldsymbol{P}} &= \text{diag}(\mathbf{\Sigma} - \mathbf{\Sigma}_{kn}^{\mathrm{T}}\mathbf{\Sigma}_{k}^{-1} \mathbf{\Sigma}_{kn}) + \boldsymbol{W}^{-1} + \mathbf{\Sigma}_{kn}^{\mathrm{T}}\mathbf{\Sigma}_{k}^{-1} \mathbf{\Sigma}_{kn}= \boldsymbol{D}_V + \mathbf{\Sigma}_{kn}^{\mathrm{T}}\mathbf{\Sigma}_{k}^{-1} \mathbf{\Sigma}_{kn},
\end{align*}
where $\boldsymbol{D}_V = \text{diag}(\mathbf{\Sigma} - \mathbf{\Sigma}_{kn}^{\mathrm{T}}\mathbf{\Sigma}_{k}^{-1} \mathbf{\Sigma}_{kn}) + \boldsymbol{W}^{-1}\in\mathbb{R}^{n\times n}$, $\mathbf{\Sigma}_k = \big[c_{\boldsymbol{\theta}}(\hat{\boldsymbol{s}}_i,\hat{\boldsymbol{s}}_j)\big]_{i=1:k, j=1:k}\in\mathbb{R}^{k\times k}$ is a covariance matrix and $\mathbf{\Sigma}_{kn} = \big[c_{\boldsymbol{\theta}}(\hat{\boldsymbol{s}}_i, \boldsymbol{s}_j)\big]_{i=1:k, j=1:n}\in\mathbb{R}^{k\times n}$ a cross-covariance matrix with respect to the set of inducing points $\widehat{\mathcal{S}} = \{\hat{\boldsymbol{s}}_1, \dots ,\hat{\boldsymbol{s}}_k\}$. This formulation allows for flexibility in selecting inducing points, enabling the use of a different or larger set of inducing points than those employed in the VIF. Such flexibility can lead to a more effective preconditioner compared to relying solely on the inducing points defined by the VIF structure. The construction of the FITC preconditioner requires $\mathcal{O}(n\cdot k^2 )$ operations and linear solves $
    \widehat{\boldsymbol{P}}^{-1}\boldsymbol{w} =  \boldsymbol{D}_V^{-1}\boldsymbol{w} - \boldsymbol{D}_V^{-1}\mathbf{\Sigma}_{kn}^{\mathrm{T}}\boldsymbol{M}_{V}^{-1} \mathbf{\Sigma}_{kn}\boldsymbol{D}_V^{-1}\boldsymbol{w},$
where $\boldsymbol{w}\in\mathbb{R}^n$ and $\boldsymbol{M}_{V} = \mathbf{\Sigma}_{k}+ \mathbf{\Sigma}_{kn}\boldsymbol{D}_V^{-1}\mathbf{\Sigma}_{kn}^{\mathrm{T}}$, are computed in $\mathcal{O}(n\cdot k)$ when $\boldsymbol{M}_V$ is precomputed in $\mathcal{O}(n\cdot k^2)$. We can compute the log-determinant and its derivatives using Sylvester's determinant theorem
\begin{align*}
    \log\det\big(\widehat{\boldsymbol{P}}\big) &= \log\det\big(\boldsymbol{D}_V\big) - \log\det\big(\mathbf{\Sigma}_{k}\big) + \log \operatorname{det}\big(\boldsymbol{M}_V\big)\\
    \Tr \Big(\widehat{\boldsymbol{P}}^{-1}\frac{\partial \widehat{\boldsymbol{P}}}{\partial  b^*(\boldsymbol{s}_i)}\Big) &= \Tr \Big(\boldsymbol{D}_V^{-1}\frac{\partial \boldsymbol{W}^{-1}}{\partial  b^*(\boldsymbol{s}_i)}\Big)+ \Tr \Big(\boldsymbol{M}_V^{-1}\mathbf{\Sigma}_{kn} \boldsymbol{D}_V^{-1}\frac{\partial \boldsymbol{W}^{-1}}{\partial  b^*(\boldsymbol{s}_i)}\boldsymbol{D}_V^{-1}\mathbf{\Sigma}_{kn}^{\mathrm{T}}\Big)\\
    \Tr \Big(\widehat{\boldsymbol{P}}^{-1}\frac{\partial \widehat{\boldsymbol{P}}}{\partial {\theta}_k}\Big) &= \Tr \Big(\boldsymbol{D}_V^{-1}\frac{\partial \boldsymbol{D}_V}{\partial {\theta}_k}\Big) - \Tr\Big(\mathbf{\Sigma}_{k}^{-1}\frac{\partial\mathbf{\Sigma}_{k}}{\partial {\theta}_k}\Big)+ \Tr \Big(\boldsymbol{M}_V^{-1}\frac{\partial \boldsymbol{M}_V}{\partial {\theta}_k}\Big)\\
    \Tr \Big(\widehat{\boldsymbol{P}}^{-1}\frac{\partial \widehat{\boldsymbol{P}}}{\partial {\xi}_l}\Big) &= \Tr \Big(\boldsymbol{D}_V^{-1}\frac{\partial \boldsymbol{W}^{-1}}{\partial {\xi}_l}\Big)+ \Tr \Big(\boldsymbol{M}_V^{-1}\mathbf{\Sigma}_{kn} \boldsymbol{D}_V^{-1}\frac{\partial \boldsymbol{W}^{-1}}{\partial {\xi}_l}\boldsymbol{D}_V^{-1}\mathbf{\Sigma}_{kn}^{\mathrm{T}}\Big)
\end{align*}
in $\mathcal{O}(n\cdot k^2)$. Sampling from a normal distribution with mean $\boldsymbol{0}$ and covariance $\widehat{\boldsymbol{P}}$ can be done by using the reparameterization trick described in \citet[Appendix C.1]{gardner2018gpytorch}. If $\boldsymbol{\epsilon}_1$ is a standard normal vector in $\mathbb{R}^k$ and $\boldsymbol{\epsilon}_2$ is a standard normal vector in $\mathbb{R}^n$, then the expression $\big(\boldsymbol{D}_V^{\frac{1}{2}}\boldsymbol{\epsilon}_2 + \mathbf{\Sigma}_{kn}^{\mathrm{T}}\mathbf{\Sigma}_{k}^{-\frac{1}{2}}\boldsymbol{\epsilon}_1\big)$ is a sample from $\mathcal{N}(\boldsymbol{0},\boldsymbol{D}_V + \mathbf{\Sigma}_{kn}^{\mathrm{T}}\mathbf{\Sigma}_{k}^{-1} \mathbf{\Sigma}_{kn})$. The computational complexity of this procedure is $\mathcal{O}(n\cdot k)$.

\section{Convergence analysis of the preconditioned CG method}\label{AppConv}

\begin{lemma}\label{lemvecchia}
  \textbf{Error bound for Vecchia approximation of ${\boldsymbol{\Sigma}} - {\boldsymbol{\Sigma}}_{mn}^\mathrm{T}{\boldsymbol{\Sigma}}^{-1}_m{\boldsymbol{\Sigma}}_{mn}$}
  
  Under Assumptions~\ref{assumpt1}--\ref{assumpt3}, let $(\boldsymbol{B}^\mathrm{T}\boldsymbol{D}^{-1}\boldsymbol{B})^{-1}$ be a Vecchia approximation with $m_v$ neighbors for $\boldsymbol{\Sigma}-\boldsymbol{\Sigma}_{mn}^\mathrm{T}\boldsymbol{\Sigma}^{-1}_m\boldsymbol{\Sigma}_{mn}$, where $\boldsymbol{\Sigma}_{mn}^\mathrm{T}\boldsymbol{\Sigma}^{-1}_m\boldsymbol{\Sigma}_{mn}$ is the rank-$m$ approximation of a covariance matrix $\boldsymbol{\Sigma}$ with eigenvalues $\lambda_1 \geq ... \geq \lambda_n> 0$ as described in Section \ref{sectFSA}. Then, the following inequality holds
\begin{align*}
    ||(\boldsymbol{B}^\mathrm{T}\boldsymbol{D}^{-1}\boldsymbol{B})^{-1}||_2 \leq (\alpha\cdot\lambda_{m+1}\cdot m_v)^{n},
\end{align*}
where $\alpha>1$ is a constant.
\end{lemma}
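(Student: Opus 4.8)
The plan is to bound $\|(\boldsymbol{B}^\mathrm{T}\boldsymbol{D}^{-1}\boldsymbol{B})^{-1}\|_2$ by controlling the Vecchia factorization entrywise, starting from the exact Cholesky-type factorization of the residual covariance $\boldsymbol{R} := \boldsymbol{\Sigma} - \boldsymbol{\Sigma}_{mn}^\mathrm{T}\boldsymbol{\Sigma}_m^{-1}\boldsymbol{\Sigma}_{mn}$. First I would recall that $\boldsymbol{R}$ is symmetric positive definite (since $\boldsymbol{\Sigma}$ is PD and the subtracted term is the low-rank predictive-process covariance, which is dominated by $\boldsymbol{\Sigma}$), and note that $\|\boldsymbol{R}\|_2 \le \|\boldsymbol{\Sigma}\|_2 = \lambda_1$ while its smallest eigenvalue is bounded below in terms of $\lambda_{m+1}$ — more precisely, a Schur-complement/interlacing argument gives $\lambda_{\min}(\boldsymbol{R}) \ge c\,\lambda_{m+1}$ for a constant $c$ depending on the conditioning of $\boldsymbol{\Sigma}_m$. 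The Vecchia approximation replaces the exact sparse-inverse-Cholesky factor of $\boldsymbol{R}$ by one computed from $m_v$-neighbor conditioning sets; the diagonal entries $D_i$ in \eqref{D_A_Vecchia} are conditional variances $\mathrm{Var}(b_{\mathrm s}(\boldsymbol{s}_i)\mid \text{neighbors})$, hence satisfy $0 < D_i \le R_{ii} \le \lambda_1$, but more importantly we need a lower bound $D_i \ge$ (something like $\lambda_{m+1}$ up to a factor growing at most polynomially in $m_v$). This follows because conditioning on at most $m_v$ points cannot reduce a variance below $\lambda_{\min}(\boldsymbol{R}_{N(i)\cup\{i\}})$, and the latter is again $\gtrsim \lambda_{m+1}$ by interlacing.

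The key algebraic step is the identity $(\boldsymbol{B}^\mathrm{T}\boldsymbol{D}^{-1}\boldsymbol{B})^{-1} = \boldsymbol{B}^{-1}\boldsymbol{D}\boldsymbol{B}^{-\mathrm T}$, so that
\[
\|(\boldsymbol{B}^\mathrm{T}\boldsymbol{D}^{-1}\boldsymbol{B})^{-1}\|_2 \le \|\boldsymbol{B}^{-1}\|_2^2\,\|\boldsymbol{D}\|_2 \le \|\boldsymbol{B}^{-1}\|_2^2\,\lambda_1 .
\]
Thus the task reduces to bounding $\|\boldsymbol{B}^{-1}\|_2$. Since $\boldsymbol{B}$ is unit lower triangular with off-diagonal block rows $\boldsymbol{B}_{i,N(i)} = -\boldsymbol{A}_i$, and each $\boldsymbol{A}_i = \boldsymbol{R}_{i,N(i)}\boldsymbol{R}_{N(i)}^{-1}$ is a regression coefficient vector, I would bound $\|\boldsymbol{A}_i\|$ via $\|\boldsymbol{R}_{i,N(i)}\|\le \sqrt{m_v}\,\lambda_1$ and $\|\boldsymbol{R}_{N(i)}^{-1}\|\le (c\lambda_{m+1})^{-1}$, giving $\|\boldsymbol{A}_i\| = \mathcal{O}(\sqrt{m_v}\,\lambda_1/\lambda_{m+1})$. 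Writing $\boldsymbol{B} = \boldsymbol{I} - \boldsymbol{L}$ with $\boldsymbol{L}$ strictly lower triangular and nilpotent ($\boldsymbol{L}^n = 0$), we get $\boldsymbol{B}^{-1} = \sum_{k=0}^{n-1}\boldsymbol{L}^k$, hence $\|\boldsymbol{B}^{-1}\|_2 \le \sum_{k=0}^{n-1}\|\boldsymbol{L}\|_2^k \le n\,(1+\|\boldsymbol{L}\|_2)^{n-1}$, where $\|\boldsymbol{L}\|_2$ is controlled by $\max_i\|\boldsymbol{A}_i\|$ together with the bounded number of nonzeros per row/column. Substituting, $\|\boldsymbol{B}^{-1}\|_2^2\,\lambda_1 = \mathcal{O}\big((\lambda_{m+1} m_v)^{n}\big)$ after absorbing the $\lambda_1$, $\sqrt{m_v}$, and polynomial-in-$n$ factors into the $\mathcal{O}$ notation and using Assumption~\ref{assumpt2} to treat $\lambda_1$ as a fixed constant relative to the scaling in $m_v$ and $\lambda_{m+1}$ — note the bound is stated as a growth bound, so a crude exponential-in-$n$ estimate of the geometric series is exactly what is wanted.

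The main obstacle I expect is making the lower bound on $\lambda_{\min}(\boldsymbol{R}_{N(i)})$ (equivalently on $D_i$ and on $\|\boldsymbol{R}_{N(i)}^{-1}\|$) rigorous and uniform over $i$: one must show that the predictive-process subtraction does not make $\boldsymbol{R}$ (or its principal submatrices) nearly singular beyond an $\mathcal{O}(\lambda_{m+1})$ floor, which requires a careful eigenvalue-interlacing or Cauchy-interlacing argument combined with the explicit Schur-complement form of $\boldsymbol{R}$. A secondary subtlety is tracking how the per-row bound on $\|\boldsymbol{A}_i\|$ propagates into a spectral-norm bound on $\boldsymbol{L}$ and then through the nilpotent Neumann series without losing the claimed form; here I would use the standard fact that $\|\boldsymbol{L}\|_2 \le \sqrt{\|\boldsymbol{L}\|_1\|\boldsymbol{L}\|_\infty}$ and that both $\ell_1$ and $\ell_\infty$ row/column sums of $\boldsymbol{L}$ are $\mathcal{O}(m_v \cdot \max_i\|\boldsymbol{A}_i\|_\infty)$ since each point is a neighbor of boundedly many later points in a reasonable ordering — or, if that structural fact is not available, simply bound $\|\boldsymbol{L}\|_2 \le n\max_{i}\|\boldsymbol{A}_i\|_\infty$ crudely, which still lands inside the stated $\mathcal{O}\big((\lambda_{m+1} m_v)^n\big)$.
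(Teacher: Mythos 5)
There is a genuine gap, and it is fatal to the route you chose. Your plan reduces everything to $\|(\boldsymbol{B}^\mathrm{T}\boldsymbol{D}^{-1}\boldsymbol{B})^{-1}\|_2\le\|\boldsymbol{B}^{-1}\|_2^2\|\boldsymbol{D}\|_2$ and then bounds $\|\boldsymbol{B}^{-1}\|_2$ through the regression coefficients $\boldsymbol{A}_i=\boldsymbol{R}_{i,N(i)}\boldsymbol{R}_{N(i)}^{-1}$ using a claimed floor $\lambda_{\min}(\boldsymbol{R}_{N(i)})\gtrsim\lambda_{m+1}$. That floor is false: Cauchy interlacing only gives $\lambda_{\min}(\boldsymbol{R}_{N(i)})\ge\lambda_{\min}(\boldsymbol{R})$, and $\lambda_{\min}$ of the residual $\boldsymbol{R}=\boldsymbol{\Sigma}-\boldsymbol{\Sigma}_{mn}^\mathrm{T}\boldsymbol{\Sigma}_m^{-1}\boldsymbol{\Sigma}_{mn}$ can be arbitrarily close to (or exactly) zero — e.g., when inducing points coincide with data points the corresponding rows of $\boldsymbol{R}$ vanish. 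Eckart--Young controls $\lambda_{\max}(\boldsymbol{R})$ relative to $\lambda_{m+1}$, not $\lambda_{\min}$ of principal submatrices. Even if you grant the bound $\|\boldsymbol{A}_i\|=\mathcal{O}(\sqrt{m_v}\,\lambda_1/\lambda_{m+1})$, the Neumann-series estimate produces $\|\boldsymbol{B}^{-1}\|_2^2\|\boldsymbol{D}\|_2=\mathcal{O}\big(n^2\lambda_1(1+\sqrt{m_v}\,\lambda_1/\lambda_{m+1})^{2(n-1)}\big)$, which scales like $\lambda_{m+1}^{-2(n-1)}$. The target is $\mathcal{O}\big((\lambda_{m+1}m_v)^{n}\big)$, with $\lambda_{m+1}$ in the numerator; these behave oppositely as $\lambda_{m+1}\to 0$, so the discrepancy cannot be absorbed into the $\mathcal{O}$ notation as you suggest in your final step.

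The paper avoids both problems by never inverting anything that depends on a smallest eigenvalue of $\boldsymbol{R}$. It writes $\|(\boldsymbol{B}^\mathrm{T}\boldsymbol{D}^{-1}\boldsymbol{B})^{-1}\|_2=1/\sigma_{\min}(\boldsymbol{B}^\mathrm{T}\boldsymbol{D}^{-1}\boldsymbol{B})$ and applies the determinant-based lower bound $\sigma_{\min}(\boldsymbol{A})\ge|\det\boldsymbol{A}|\big(\tfrac{n-1}{\|\boldsymbol{A}\|_F^2}\big)^{(n-1)/2}$. Since $\det\boldsymbol{B}=1$, the determinant factor collapses to $|\det\boldsymbol{D}|=\prod_i D_i$, and each $D_i$ is a conditional variance bounded \emph{above} by the unconditional residual variance $\Sigma_{ii}-\boldsymbol{\Sigma}_{mi}^\mathrm{T}\boldsymbol{\Sigma}_m^{-1}\boldsymbol{\Sigma}_{mi}\le\lambda_{\max}(\boldsymbol{R})=\mathcal{O}(\lambda_{m+1})$ — only upper bounds on $D_i$ are needed, which is why the $\lambda_{m+1}$ ends up in the numerator. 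The $m_v^{n}$ factor comes from the $\mathcal{O}(n\,m_v^2)$ sparsity of $\boldsymbol{B}^\mathrm{T}\boldsymbol{D}^{-1}\boldsymbol{B}$ controlling its Frobenius norm. If you want to salvage your decomposition, you would need an entirely different mechanism for $\|\boldsymbol{B}^{-1}\|_2$ that does not pass through $\boldsymbol{R}_{N(i)}^{-1}$; as written, the argument cannot reach the stated inequality.
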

\begin{proof}

Since $\boldsymbol{B}^\mathrm{T}\boldsymbol{D}^{-1}\boldsymbol{B}$ is symmetric positive definite, its singular values coincide with its eigenvalues. Therefore, for the spectral norm,
\begin{align*}
    \left\|(\boldsymbol{B}^\mathrm{T}\boldsymbol{D}^{-1}\boldsymbol{B})^{-1}\right\|_2
    =
    \frac{1}{\lambda_{\min}(\boldsymbol{B}^\mathrm{T}\boldsymbol{D}^{-1}\boldsymbol{B})}
    =
    \frac{1}{\sigma_{\min}(\boldsymbol{B}^\mathrm{T}\boldsymbol{D}^{-1}\boldsymbol{B})},
\end{align*}
where $\lambda_{\min}$ and $\sigma_{\min}$ denote the smallest eigenvalue and smallest singular value, respectively.

%We start by using the equality
%\begin{align*}
%    ||(\boldsymbol{B}^\mathrm{T}\boldsymbol{D}^{-1}\boldsymbol{B})^{-1}||_2 = \frac{1}{\sigma_{\text{min}}(\boldsymbol{B}^\mathrm{T}\boldsymbol{D}^{-1}\boldsymbol{B})},
%\end{align*}
%where $\sigma_{\text{min}}$ is the smallest singular value. 
Furthermore, we apply the lower bound for the smallest singular value of a matrix $\mathbf{A}\in\mathbb{R}^{n\times n}$ as established by \citet{shun2021two, yi1997note}
\begin{align*}
    \sigma_{\text{min}}(\mathbf{A}) \geq |\operatorname{det} \mathbf{A}| \cdot\left(\frac{n-1}{\|\mathbf{A}\|_F^2}\right)^{(n-1) / 2},
\end{align*}
where $\|\cdot\|_F$ is the Frobenius-norm. Applying this bound, we obtain
\begin{align*}
    \frac{1}{\sigma_{\text{min}}(\boldsymbol{B}^\mathrm{T}\boldsymbol{D}^{-1}\boldsymbol{B})} &\leq |\operatorname{det} (\boldsymbol{B}^\mathrm{T}\boldsymbol{D}^{-1}\boldsymbol{B})^{-1}| \cdot\left(\frac{\|\boldsymbol{B}^\mathrm{T}\boldsymbol{D}^{-1}\boldsymbol{B}\|_F^2}{n-1}\right)^{(n-1) / 2} \\
    &= |\operatorname{det} (\boldsymbol{D})| \cdot\left(\frac{\|\boldsymbol{B}^\mathrm{T}\boldsymbol{D}^{-1}\boldsymbol{B}\|_F^2}{n-1}\right)^{(n-1) / 2}.
\end{align*}
By \citet{katzfuss2017general}, we know that, for some constant $\alpha_0>1$, the matrix $\boldsymbol{B}^\mathrm{T}\boldsymbol{D}^{-1}\boldsymbol{B}$ contains at most $\alpha_0\cdot n\cdot m_v^2$ non-zero elements and therefore, we obtain
\begin{align*}
    |\operatorname{det} (\boldsymbol{D})| \cdot\left(\frac{\|\boldsymbol{B}^\mathrm{T}\boldsymbol{D}^{-1}\boldsymbol{B}\|_F^2}{n-1}\right)^{(n-1) / 2} &\leq |\operatorname{det} (\boldsymbol{D})| \cdot\bigg(\frac{\alpha_1\cdot n\cdot m_v^2}{n-1}\bigg)^{(n-1) / 2} \\
    &\leq \alpha_1^n\cdot|\operatorname{det} (\boldsymbol{D})| \cdot m_v^{n-1}\cdot \Big(1 + \frac{1}{n-1}\Big)^{(n-1) / 2}\\
    &\leq \alpha_1^n\cdot|\operatorname{det} (\boldsymbol{D})| \cdot m_v^{n-1}\cdot e^1 = \alpha_2^n\cdot|\operatorname{det} (\boldsymbol{D})| \cdot m_v^{n},
\end{align*}
where $\alpha_1 = \alpha_0\cdot (\operatorname{max}_{i,j = 1,...,n}\big[\boldsymbol{B}^\mathrm{T}\boldsymbol{D}^{-1}\boldsymbol{B}\big]_{i,j})^2$ and $\alpha_2 = \alpha_1\cdot e^{1/n}$.
Furthermore, we have by \citet{sun2015review}
\begin{align*}
    |\operatorname{det} (\boldsymbol{D})| &\leq (\operatorname{max}_{i = 1,...,n}|D_i|)^n \leq (\operatorname{max}_{i = 1,...,n}|\mathbf{\Sigma}_{ii}-\mathbf{\Sigma}_{mi}^{\mathrm{T}}\mathbf{\Sigma}_{m}^{-1}\mathbf{\Sigma}_{mi}|)^n \\ &\leq \big(\lambda_\text{max}(\mathbf{\Sigma}-\mathbf{\Sigma}_{mn}^{\mathrm{T}}\mathbf{\Sigma}_{m}^{-1}\mathbf{\Sigma}_{mn})\big)^n\leq \alpha_3^n\cdot\lambda_\text{max}(\mathbf{\Sigma}-\widehat{\mathbf{\Sigma}}^m)^n\leq \alpha_3^n\cdot\lambda_{m+1}^n,
\end{align*}
where $\alpha_3>1$ is a constant, $\widehat{\mathbf{\Sigma}}^m\in \mathbb{R}^{n \times n}$ is the best rank-$m$ approximation of $\boldsymbol{\Sigma}$ given by the $m$-truncated eigenvalue decomposition (see the Eckart-Young-Mirsky theorem for the spectral norm \citep{eckart1936approximation}), and $\lambda_\text{max}$ the largest eigenvalue. Hence, we obtain
\begin{align*}
    ||(\boldsymbol{B}^\mathrm{T}\boldsymbol{D}^{-1}\boldsymbol{B})^{-1}||_2 \leq \alpha_2^n\cdot|\operatorname{det} (\boldsymbol{D})| \cdot m_v^{n} \leq (\alpha\cdot\lambda_{m+1}\cdot m_v)^{n},
\end{align*}
where $\alpha = \alpha_2\cdot\alpha_3$.
\end{proof}

\begin{lemma}\label{lem1}
    \textbf{Bound for the Condition Number of $\widehat{\boldsymbol{P}}^{-1} ({\boldsymbol{\Sigma}}_\dagger^{-1}+ \boldsymbol{W})$} 
    
    Let ${\boldsymbol{\Sigma}}_\dagger \in \mathbb{R}^{n \times n}$ be a VIF approximation with $m$ inducing points and $m_v$ Vecchia neighbors of a covariance matrix ${\boldsymbol{\Sigma}} \in \mathbb{R}^{n \times n}$, where ${\boldsymbol{\Sigma}}$ has eigenvalues $\lambda_1 \geq ... \geq \lambda_n> 0$. Furthermore, let $\widehat{\boldsymbol{P}}\in \mathbb{R}^{n \times n}$ be the VIFDU preconditioner for ${\boldsymbol{\Sigma}}_\dagger^{-1}+ \boldsymbol{W}$. Under Assumptions~\ref{assumpt1}--\ref{assumpt3}, the condition number $\kappa\big(\widehat{\boldsymbol{P}}^{-1} ({\boldsymbol{\Sigma}}_\dagger^{-1}+ \boldsymbol{W})\big)$ is bounded by
\begin{align*}
\kappa\big(\widehat{\boldsymbol{P}}^{-1} ({\boldsymbol{\Sigma}}_\dagger^{-1}+ \boldsymbol{W})\big) \leq \Big(1+\alpha^n\cdot\big(\lambda_1+(\lambda_{m+1}\cdot m_v)^{n}\big)\cdot\big(\sqrt{n}\cdot m_v+\operatorname{max}( \boldsymbol{W})\big)\Big)^2,
\end{align*}
where $\alpha>1$ is a constant.
\end{lemma}
\begin{proof}
We use the identity
\begin{align*}
\kappa\big(\widehat{\boldsymbol{P}}^{-1} ({\boldsymbol{\Sigma}}_\dagger^{-1}+ \boldsymbol{W})\big) & = ||\widehat{\boldsymbol{P}}^{-1} ({\boldsymbol{\Sigma}}_\dagger^{-1}+ \boldsymbol{W})||_2\cdot|| ({\boldsymbol{\Sigma}}_\dagger^{-1}+ \boldsymbol{W})^{-1}\widehat{\boldsymbol{P}}||_2,
\end{align*}
For $\boldsymbol{E} = \widehat{\boldsymbol{P}} - ({\boldsymbol{\Sigma}}_\dagger^{-1}+ \boldsymbol{W})$, we obtain
\begin{align*}
\kappa\big(\widehat{\boldsymbol{P}}^{-1} ({\boldsymbol{\Sigma}}_\dagger^{-1}+ \boldsymbol{W})\big) &= ||\widehat{\boldsymbol{P}}^{-1} ({\boldsymbol{\Sigma}}_\dagger^{-1}+ \boldsymbol{W})||_2 \cdot || ({\boldsymbol{\Sigma}}_\dagger^{-1}+ \boldsymbol{W})^{-1}\widehat{\boldsymbol{P}}||_2 \\
&= ||\widehat{\boldsymbol{P}}^{-1} (\widehat{\boldsymbol{P}} - \boldsymbol{E})||_2 \cdot || ({\boldsymbol{\Sigma}}_\dagger^{-1}+ \boldsymbol{W})^{-1}\big(({\boldsymbol{\Sigma}}_\dagger^{-1}+ \boldsymbol{W}) + \boldsymbol{E}\big)||_2 \\
&= || \boldsymbol{I}_n - \widehat{\boldsymbol{P}}^{-1} \boldsymbol{E}||_2 \cdot || \boldsymbol{I}_n + ({\boldsymbol{\Sigma}}_\dagger^{-1}+ \boldsymbol{W})^{-1}\boldsymbol{E}||_2.
\end{align*}
Applying Cauchy-Schwarz and the triangle inequality, we have
\begin{align*}
\kappa\big(\widehat{\boldsymbol{P}}^{-1} ({\boldsymbol{\Sigma}}_\dagger^{-1}+ \boldsymbol{W})\big) &\leq \big(1+||\widehat{\boldsymbol{P}}^{-1} ||_2\cdot||\boldsymbol{E}||_2\big)\cdot\big(1+||({\boldsymbol{\Sigma}}_\dagger^{-1}+ \boldsymbol{W})^{-1}||_2\cdot|| \boldsymbol{E}||_2\big). 
\end{align*}
Furthermore, we obtain 
\begin{align*}
    ||({\boldsymbol{\Sigma}}_\dagger^{-1}+ \boldsymbol{W})^{-1}||_2=\frac{1}{\sigma_\text{min}({\boldsymbol{\Sigma}}_\dagger^{-1}+ \boldsymbol{W})}\leq \frac{1}{\sigma_\text{min}({\boldsymbol{\Sigma}}_\dagger^{-1})+\operatorname{min}( \boldsymbol{W})}\leq \frac{1}{\sigma_\text{min}( {\boldsymbol{\Sigma}}_\dagger^{-1})}= ||{\boldsymbol{\Sigma}}_\dagger||_2
\end{align*}
and 
\begin{align*}
    ||\widehat{\boldsymbol{P}}^{-1}||_2&=\frac{1}{\sigma_\text{min}({\boldsymbol{\Sigma}}_\dagger^{-1}+ \boldsymbol{B}^\mathrm{T}\boldsymbol{W}\boldsymbol{B})}\leq \frac{1}{\sigma_\text{min}({\boldsymbol{\Sigma}}_\dagger^{-1})+\sigma_\text{min}(\boldsymbol{B}^\mathrm{T}\boldsymbol{W}\boldsymbol{B})}\leq \frac{1}{\sigma_\text{min}( {\boldsymbol{\Sigma}}_\dagger^{-1})}= ||{\boldsymbol{\Sigma}}_\dagger||_2.
\end{align*}
Therefore, we obtain
\begin{align*}
\kappa\big(\widehat{\boldsymbol{P}}^{-1} ({\boldsymbol{\Sigma}}_\dagger^{-1}+ \boldsymbol{W})\big) &\leq \big(1+||{\boldsymbol{\Sigma}}_\dagger||_2\cdot||\boldsymbol{E}||_2\big)^2 = \big(1+||{\boldsymbol{\Sigma}}_\dagger||_2\cdot|| \boldsymbol{B}^\mathrm{T}\boldsymbol{W}\boldsymbol{B}-\boldsymbol{W}||_2\big)^2\\
&\leq \big(1+||{\boldsymbol{\Sigma}}_\dagger||_2\cdot(|| \boldsymbol{B}^\mathrm{T}\boldsymbol{W}\boldsymbol{B}||_2+\operatorname{max}( \boldsymbol{W}))\big)^2\\
&\leq \big(1+||{\boldsymbol{\Sigma}}_\dagger||_2\cdot(|| \boldsymbol{B}^\mathrm{T}\boldsymbol{W}\boldsymbol{B}||_F+\operatorname{max}( \boldsymbol{W}))\big)^2
\\
&\leq \Big(1+||{\boldsymbol{\Sigma}}_\dagger||_2\cdot(\alpha_1\cdot \sqrt{n}\cdot m_v+\operatorname{max}( \boldsymbol{W}))\Big)^2\\
&= \Big(1+||\mathbf{\Sigma}_{mn}^{\mathrm{T}}\mathbf{\Sigma}_{m}^{-1} \mathbf{\Sigma}_{mn}+(\boldsymbol{B}^\mathrm{T}\boldsymbol{D}^{-1}\boldsymbol{B})^{-1}||_2\cdot(\alpha_1\cdot \sqrt{n}\cdot m_v+\operatorname{max}( \boldsymbol{W}))\Big)^2\\
&\leq \Big(1+(||\mathbf{\Sigma}_{mn}^{\mathrm{T}}\mathbf{\Sigma}_{m}^{-1} \mathbf{\Sigma}_{mn}||_2+||(\boldsymbol{B}^\mathrm{T}\boldsymbol{D}^{-1}\boldsymbol{B})^{-1}||_2)\\&\quad\cdot(\alpha_1\cdot \sqrt{n}\cdot m_v+\operatorname{max}( \boldsymbol{W}))\Big)^2\\
&\leq \bigg(1+\Big(||\mathbf{\Sigma}||_2+(\alpha_2\cdot\lambda_{m+1}\cdot m_v)^{n}\Big)\cdot\Big(\alpha_1\cdot \sqrt{n}\cdot m_v+\operatorname{max}( \boldsymbol{W})\Big)\bigg)^2\\
&= \bigg(1+\Big(\lambda_1+(\alpha_2\cdot\lambda_{m+1}\cdot m_v)^{n}\Big)\cdot\Big(\alpha_1\cdot \sqrt{n}\cdot m_v+\operatorname{max}( \boldsymbol{W})\Big)\bigg)^2\\
&\leq \Big(1+\alpha^n\cdot\big(\lambda_1+(\lambda_{m+1}\cdot m_v)^{n}\big)\cdot\big(\sqrt{n}\cdot m_v+\operatorname{max}( \boldsymbol{W})\big)\Big)^2,
\end{align*}
where, in the fourth inequality, we again invoke the result of \citet{katzfuss2017general}, which guarantees the existence of a constant $\alpha_0>1$ such that the matrix $\boldsymbol{B}^\mathrm{T}\boldsymbol{W}^{-1}\boldsymbol{B}$ contains at most $\alpha_0\cdot n\cdot m_v^2$ non-zero entries. This implies $\alpha_1 = \sqrt{\alpha_0}\,\max_{i,j=1,\dots,n}\big|[\boldsymbol{B}^\mathrm{T}\boldsymbol{W}\boldsymbol{B}]_{i,j}\big|$.
Furthermore, in the sixth inequality we apply Lemma~\ref{lemvecchia} with some constant $\alpha_2>1$. Finally, the constant $\alpha>1$ is chosen such that $\alpha=\max(\alpha_1^{1/n},\alpha_2,\alpha_1^{1/n}\cdot\alpha_2)$.

\end{proof}

\begin{proof}[Proof of Theorem \ref{thm1}]\label{proof1}
%     We begin by stating a well-known CG convergence result \citep{trefethen2022numerical}, which bounds the error in terms of the
% conditioning number
% \begin{align*}
%         \frac{\left\|\mathbf{u}^*-\mathbf{u}_k\right\|_{{\boldsymbol{\Sigma}}_\dagger^{-1}+ \boldsymbol{W}}}{\left\|\mathbf{u}^*-\mathbf{u}_0\right\|_{{\boldsymbol{\Sigma}}_\dagger^{-1}+ \boldsymbol{W}}} &\leq 2\cdot\bigg(\frac{\sqrt{\kappa\big(\widehat{\boldsymbol{P}}^{-1} ({\boldsymbol{\Sigma}}_\dagger^{-1}+ \boldsymbol{W})\big)}-1}{\sqrt{\kappa\big(\widehat{\boldsymbol{P}}^{-1} ({\boldsymbol{\Sigma}}_\dagger^{-1}+ \boldsymbol{W})\big)}+1}\bigg)^k.
% \end{align*}
By using a well-known CG convergence result \citep{trefethen2022numerical}, which bounds the error in terms of the conditioning number, and Lemma \ref{lem1}, we obtain
\begin{align*}
        \frac{\left\|\mathbf{u}^*-\mathbf{u}_k\right\|_{{\boldsymbol{\Sigma}}_\dagger^{-1}+ \boldsymbol{W}}}{\left\|\mathbf{u}^*-\mathbf{u}_0\right\|_{{\boldsymbol{\Sigma}}_\dagger^{-1}+ \boldsymbol{W}}} &\leq 2\cdot\bigg(\frac{\sqrt{\kappa\big(\widehat{\boldsymbol{P}}^{-1} ({\boldsymbol{\Sigma}}_\dagger^{-1}+ \boldsymbol{W})\big)}-1}{\sqrt{\kappa\big(\widehat{\boldsymbol{P}}^{-1} ({\boldsymbol{\Sigma}}_\dagger^{-1}+ \boldsymbol{W})\big)}+1}\bigg)^k\\
        &\leq 2\cdot \left(\frac{1+\alpha^n\cdot\big(\lambda_1+(\lambda_{m+1}\cdot m_v)^{n}\big)\cdot\big(\sqrt{n}\cdot m_v+\operatorname{max}( \boldsymbol{W})\big)-1}{1+\alpha^n\cdot\big(\lambda_1+(\lambda_{m+1}\cdot m_v)^{n}\big)\cdot\big(\sqrt{n}\cdot m_v+\operatorname{max}( \boldsymbol{W})\big)+1}\right)^k\\
        &= 2\cdot \left(\frac{\alpha^n\cdot\big(\lambda_1+(\lambda_{m+1}\cdot m_v)^{n}\big)\cdot\big(\sqrt{n}\cdot m_v+\operatorname{max}( \boldsymbol{W})\big)}{\alpha^n\cdot\big(\lambda_1+(\lambda_{m+1}\cdot m_v)^{n}\big)\cdot\big(\sqrt{n}\cdot m_v+\operatorname{max}( \boldsymbol{W})\big)+2}\right)^k\\
        &\leq 2\cdot\left(\frac{1}{1 + \Big(\alpha^n\cdot\big(\lambda_1+(\lambda_{m+1}\cdot m_v)^{n}\big)\cdot\big(\sqrt{n}\cdot m_v+\operatorname{max}( \boldsymbol{W})\big)\Big)^{-1}}\right)^k.
\end{align*}
\end{proof}

\begin{lemma}\label{lem2}
    \textbf{Bound for the Condition Number of $\widehat{\boldsymbol{P}}^{-1} ({\boldsymbol{\Sigma}}_\dagger+ \boldsymbol{W}^{-1})$} 
    
     Let ${\boldsymbol{\Sigma}}_\dagger \in \mathbb{R}^{n \times n}$ be a VIF approximation with $m$ inducing points and $m_v$ Vecchia neighbors of a covariance matrix ${\boldsymbol{\Sigma}} \in \mathbb{R}^{n \times n}$, where ${\boldsymbol{\Sigma}}$ has eigenvalues $\lambda_1 \geq ... \geq \lambda_n> 0$. Furthermore, let $\widehat{\boldsymbol{P}}\in \mathbb{R}^{n \times n}$ be the FITC preconditioner for ${\boldsymbol{\Sigma}}_\dagger+ \boldsymbol{W}^{-1}$ with the same set of inducing points as those used for ${\boldsymbol{\Sigma}}_\dagger$. Under Assumptions~\ref{assumpt1}--\ref{assumpt3}, the condition number $\kappa\big(\widehat{\boldsymbol{P}}^{-1} ({\boldsymbol{\Sigma}}_\dagger+ \boldsymbol{W}^{-1})\big)$ is bounded by
\begin{align*}
\kappa\big(\widehat{\boldsymbol{P}}^{-1} ({\boldsymbol{\Sigma}}_\dagger+ \boldsymbol{W}^{-1})\big) \leq \Big(1+\operatorname{max}( \boldsymbol{W})\cdot(\alpha\cdot \lambda_{m+1}\cdot m_v)^{n}\Big)^2,
\end{align*}
where $\alpha>1$ is a constant.
\end{lemma}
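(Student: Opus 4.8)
The plan is to mirror the structure of the proof of Lemma \ref{lem1}, but using the FITC preconditioner and the decomposition of $\boldsymbol{\Sigma}_\dagger + \boldsymbol{W}^{-1}$ rather than $\boldsymbol{\Sigma}_\dagger^{-1} + \boldsymbol{W}$. First I would write $\widehat{\boldsymbol{P}} = \boldsymbol{\Sigma}_\dagger + \boldsymbol{W}^{-1} + \boldsymbol{E}$ for the error matrix $\boldsymbol{E} = \widehat{\boldsymbol{P}} - (\boldsymbol{\Sigma}_\dagger + \boldsymbol{W}^{-1})$, and use the same factorization of the condition number, $\kappa(\widehat{\boldsymbol{P}}^{-1}(\boldsymbol{\Sigma}_\dagger + \boldsymbol{W}^{-1})) = \|\boldsymbol{I}_n - \widehat{\boldsymbol{P}}^{-1}\boldsymbol{E}\|_2 \cdot \|\boldsymbol{I}_n + (\boldsymbol{\Sigma}_\dagger + \boldsymbol{W}^{-1})^{-1}\boldsymbol{E}\|_2$, which by Cauchy--Schwarz and the triangle inequality is at most $(1 + \|\widehat{\boldsymbol{P}}^{-1}\|_2\|\boldsymbol{E}\|_2)(1 + \|(\boldsymbol{\Sigma}_\dagger + \boldsymbol{W}^{-1})^{-1}\|_2\|\boldsymbol{E}\|_2)$.

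The key step is to identify $\boldsymbol{E}$ explicitly. Since the FITC preconditioner is constructed with the same inducing points as $\boldsymbol{\Sigma}_\dagger$, we have $\widehat{\boldsymbol{P}} = \boldsymbol{\Sigma}_{mn}^\mathrm{T}\boldsymbol{\Sigma}_m^{-1}\boldsymbol{\Sigma}_{mn} + \text{diag}(\boldsymbol{\Sigma} - \boldsymbol{\Sigma}_{mn}^\mathrm{T}\boldsymbol{\Sigma}_m^{-1}\boldsymbol{\Sigma}_{mn}) + \boldsymbol{W}^{-1}$, while $\boldsymbol{\Sigma}_\dagger + \boldsymbol{W}^{-1} = \boldsymbol{\Sigma}_{mn}^\mathrm{T}\boldsymbol{\Sigma}_m^{-1}\boldsymbol{\Sigma}_{mn} + (\boldsymbol{B}^\mathrm{T}\boldsymbol{D}^{-1}\boldsymbol{B})^{-1} + \boldsymbol{W}^{-1}$; the low-rank parts and the $\boldsymbol{W}^{-1}$ terms cancel, leaving $\boldsymbol{E} = \text{diag}(\boldsymbol{\Sigma} - \boldsymbol{\Sigma}_{mn}^\mathrm{T}\boldsymbol{\Sigma}_m^{-1}\boldsymbol{\Sigma}_{mn}) - (\boldsymbol{B}^\mathrm{T}\boldsymbol{D}^{-1}\boldsymbol{B})^{-1}$. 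Then $\|\boldsymbol{E}\|_2 \leq \|\text{diag}(\boldsymbol{\Sigma} - \boldsymbol{\Sigma}_{mn}^\mathrm{T}\boldsymbol{\Sigma}_m^{-1}\boldsymbol{\Sigma}_{mn})\|_2 + \|(\boldsymbol{B}^\mathrm{T}\boldsymbol{D}^{-1}\boldsymbol{B})^{-1}\|_2$. Both diagonal matrices $\boldsymbol{D}$ (entries $D_i$) and $\text{diag}(\boldsymbol{\Sigma} - \boldsymbol{\Sigma}_{mn}^\mathrm{T}\boldsymbol{\Sigma}_m^{-1}\boldsymbol{\Sigma}_{mn})$ have entries bounded by $\lambda_\text{max}(\boldsymbol{\Sigma} - \boldsymbol{\Sigma}_{mn}^\mathrm{T}\boldsymbol{\Sigma}_m^{-1}\boldsymbol{\Sigma}_{mn}) \leq \mathcal{O}(\lambda_{m+1})$ by the Eckart--Young--Mirsky argument used in Lemma \ref{lemvecchia}; and $\|(\boldsymbol{B}^\mathrm{T}\boldsymbol{D}^{-1}\boldsymbol{B})^{-1}\|_2 \leq \mathcal{O}((\lambda_{m+1}\cdot m_v)^n)$ by Lemma \ref{lemvecchia} directly. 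This gives $\|\boldsymbol{E}\|_2 \leq \mathcal{O}((\lambda_{m+1}\cdot m_v)^n)$, the dominating term.

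For the two operator-norm prefactors I would argue, as in Lemma \ref{lem1}, that $\|(\boldsymbol{\Sigma}_\dagger + \boldsymbol{W}^{-1})^{-1}\|_2 = 1/\sigma_\text{min}(\boldsymbol{\Sigma}_\dagger + \boldsymbol{W}^{-1}) \leq 1/\sigma_\text{min}(\boldsymbol{W}^{-1}) = \max(\boldsymbol{W}) \leq 1$, using Weyl's inequality and Assumption \ref{assumpt3} (so $\boldsymbol{W}_{ii} \leq 1$); similarly $\|\widehat{\boldsymbol{P}}^{-1}\|_2 \leq 1/\sigma_\text{min}(\boldsymbol{W}^{-1} + \text{diag}(\boldsymbol{\Sigma} - \boldsymbol{\Sigma}_{mn}^\mathrm{T}\boldsymbol{\Sigma}_m^{-1}\boldsymbol{\Sigma}_{mn}) + \boldsymbol{\Sigma}_{mn}^\mathrm{T}\boldsymbol{\Sigma}_m^{-1}\boldsymbol{\Sigma}_{mn}) \leq 1/\sigma_\text{min}(\boldsymbol{W}^{-1}) \leq 1$ since the two added matrices are positive semi-definite (the FITC diagonal correction is nonnegative because it equals the diagonal of the Schur complement $\boldsymbol{\Sigma} - \boldsymbol{\Sigma}_{mn}^\mathrm{T}\boldsymbol{\Sigma}_m^{-1}\boldsymbol{\Sigma}_{mn}$, which is PSD). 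Combining, $\kappa(\widehat{\boldsymbol{P}}^{-1}(\boldsymbol{\Sigma}_\dagger + \boldsymbol{W}^{-1})) \leq (1 + \|\boldsymbol{E}\|_2)^2 \leq (1 + \mathcal{O}((\lambda_{m+1}\cdot m_v)^n))^2$, which is the claim. The main obstacle is getting the exact form of $\boldsymbol{E}$ right — in particular confirming that the low-rank term $\boldsymbol{\Sigma}_{mn}^\mathrm{T}\boldsymbol{\Sigma}_m^{-1}\boldsymbol{\Sigma}_{mn}$ in $\boldsymbol{\Sigma}_\dagger$ coincides exactly with the one in $\widehat{\boldsymbol{P}}$ when the inducing-point sets match (so that only the Vecchia-versus-exact-diagonal discrepancy survives), and being careful that no $\lambda_1$-type term sneaks in, since unlike the VIFDU case there is no $\boldsymbol{B}^\mathrm{T}\boldsymbol{W}\boldsymbol{B}$ factor contributing a $\sqrt{n}\,m_v$ growth from the singular values of $\boldsymbol{B}$.
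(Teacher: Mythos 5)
Your proposal is correct and follows essentially the same route as the paper's proof: the same error matrix $\boldsymbol{E} = (\boldsymbol{B}^\mathrm{T}\boldsymbol{D}^{-1}\boldsymbol{B})^{-1} - \operatorname{diag}(\boldsymbol{\Sigma} - \boldsymbol{\Sigma}_{mn}^\mathrm{T}\boldsymbol{\Sigma}_m^{-1}\boldsymbol{\Sigma}_{mn})$ (the low-rank parts cancel exactly as you argue), the same factorization of the condition number with the bounds $\|\widehat{\boldsymbol{P}}^{-1}\|_2,\ \|(\boldsymbol{\Sigma}_\dagger+\boldsymbol{W}^{-1})^{-1}\|_2 \le \max(\boldsymbol{W}) \le 1$, and the same appeal to Lemma \ref{lemvecchia} for $\|(\boldsymbol{B}^\mathrm{T}\boldsymbol{D}^{-1}\boldsymbol{B})^{-1}\|_2$. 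The only cosmetic difference is that you bound $\|\boldsymbol{E}\|_2$ by the triangle inequality and then absorb the $\mathcal{O}(\lambda_{m+1})$ diagonal term, whereas the paper drops it directly; both steps operate at the same level of rigor within the stated $\mathcal{O}$-bounds.
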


\begin{proof}
For $\boldsymbol{E} = ({\boldsymbol{\Sigma}}_\dagger+ \boldsymbol{W}^{-1}) - \widehat{\boldsymbol{P}}$, we obtain
\begin{align*}
\kappa\big(\widehat{\boldsymbol{P}}^{-1} ({\boldsymbol{\Sigma}}_\dagger+ \boldsymbol{W}^{-1})\big) &= ||\widehat{\boldsymbol{P}}^{-1} ({\boldsymbol{\Sigma}}_\dagger+ \boldsymbol{W}^{-1})||_2 \cdot || ({\boldsymbol{\Sigma}}_\dagger+ \boldsymbol{W}^{-1})^{-1}\widehat{\boldsymbol{P}}||_2 \\
&= ||\widehat{\boldsymbol{P}}^{-1} (\widehat{\boldsymbol{P}} + \boldsymbol{E})||_2 \cdot || ({\boldsymbol{\Sigma}}_\dagger+ \boldsymbol{W}^{-1})^{-1}\big(({\boldsymbol{\Sigma}}_\dagger+ \boldsymbol{W}^{-1}) - \boldsymbol{E}\big)||_2 \\
&= || \boldsymbol{I}_n + \widehat{\boldsymbol{P}}^{-1} \boldsymbol{E}||_2 \cdot || \boldsymbol{I}_n - ({\boldsymbol{\Sigma}}_\dagger+ \boldsymbol{W}^{-1})^{-1}\boldsymbol{E}||_2.
\end{align*}
Applying Cauchy-Schwarz and the triangle inequality, we have
\begin{align*}
\kappa\big(\widehat{\boldsymbol{P}}^{-1} ({\boldsymbol{\Sigma}}_\dagger+ \boldsymbol{W}^{-1})\big) &\leq \big(1+||\widehat{\boldsymbol{P}}^{-1} ||_2\cdot||\boldsymbol{E}||_2\big)\cdot\big(1+||({\boldsymbol{\Sigma}}_\dagger+ \boldsymbol{W}^{-1})^{-1}||_2\cdot|| \boldsymbol{E}||_2\big). 
\end{align*}
Furthermore, we obtain 
\begin{align*}
    ||({\boldsymbol{\Sigma}}_\dagger+ \boldsymbol{W}^{-1})^{-1}||_2=\frac{1}{\sigma_\text{min}({\boldsymbol{\Sigma}}_\dagger+ \boldsymbol{W}^{-1})}\leq \frac{1}{\sigma_\text{min}({\boldsymbol{\Sigma}}_\dagger)+\operatorname{min}( \boldsymbol{W}^{-1})}\leq \frac{1}{\operatorname{min}( \boldsymbol{W}^{-1})}= \operatorname{max}( \boldsymbol{W})
\end{align*}
and 
\begin{align*}
    ||\widehat{\boldsymbol{P}}^{-1}||_2&=\frac{1}{\sigma_\text{min}(\boldsymbol{D}_I + \mathbf{\Sigma}_{mn}^{\mathrm{T}}\mathbf{\Sigma}_{m}^{-1} \mathbf{\Sigma}_{mn}+ \boldsymbol{W}^{-1})}\leq \frac{1}{\sigma_\text{min}(\boldsymbol{D}_I + \mathbf{\Sigma}_{mn}^{\mathrm{T}}\mathbf{\Sigma}_{m}^{-1} \mathbf{\Sigma}_{mn})+\operatorname{min}( \boldsymbol{W}^{-1})}\\&\leq \frac{1}{\operatorname{min}( \boldsymbol{W}^{-1})}= \operatorname{max}( \boldsymbol{W}),
\end{align*}
where $\boldsymbol{D}_I = \text{diag}(\mathbf{\Sigma} - \mathbf{\Sigma}_{mn}^{\mathrm{T}}\mathbf{\Sigma}_{m}^{-1} \mathbf{\Sigma}_{mn})$.
Therefore, we obtain
\begin{align*}
\kappa\big(\widehat{\boldsymbol{P}}^{-1} ({\boldsymbol{\Sigma}}_\dagger+ \boldsymbol{W}^{-1})\big) &\leq \big(1+\operatorname{max}( \boldsymbol{W})\cdot||\boldsymbol{E}||_2\big)^2 = \big(1+\operatorname{max}( \boldsymbol{W})\cdot||(\boldsymbol{B}^\mathrm{T}\boldsymbol{D}^{-1}\boldsymbol{B})^{-1}-\boldsymbol{D}_I||_2\big)^2\\
& \leq \big(1+\operatorname{max}( \boldsymbol{W})\cdot||(\boldsymbol{B}^\mathrm{T}\boldsymbol{D}^{-1}\boldsymbol{B})^{-1}||_2\big)^2 \\
&\leq \Big(1+\operatorname{max}( \boldsymbol{W})\cdot(\alpha\cdot \lambda_{m+1}\cdot m_v)^{n}\Big)^2,
\end{align*}
where we used Lemma \ref{lemvecchia} with a constant $\alpha>1$.
\end{proof}

\begin{proof}[Proof of Theorem \ref{th2}]\label{proof2}
By proceeding analogue to the proof of Theorem \ref{thm1} and by using Lemma \ref{lem2}, we obtain
\begin{align*}
        \frac{\left\|\mathbf{u}^*-\mathbf{u}_k\right\|_{{\boldsymbol{\Sigma}}_\dagger+ \boldsymbol{W}^{-1}}}{\left\|\mathbf{u}^*-\mathbf{u}_0\right\|_{{\boldsymbol{\Sigma}}_\dagger+ \boldsymbol{W}^{-1}}} &
        \leq 2\cdot \left(\frac{\sqrt{\kappa\big(\widehat{\boldsymbol{P}}^{-1} ({\boldsymbol{\Sigma}}_\dagger+ \boldsymbol{W}^{-1})\big)}-1}{\sqrt{\kappa\big(\widehat{\boldsymbol{P}}^{-1} ({\boldsymbol{\Sigma}}_\dagger+ \boldsymbol{W}^{-1})\big)}+1}\right)^k\\
        &\leq 2\cdot \left(\frac{1+\operatorname{max}( \boldsymbol{W})\cdot(\alpha\cdot \lambda_{m+1}\cdot m_v)^{n}-1}{1+\operatorname{max}( \boldsymbol{W})\cdot(\alpha\cdot \lambda_{m+1}\cdot m_v)^{n}+1}\right)^k\\
        &= 2\cdot\left(\frac{\operatorname{max}( \boldsymbol{W})\cdot(\alpha\cdot \lambda_{m+1}\cdot m_v)^{n}}{\operatorname{max}( \boldsymbol{W})\cdot(\alpha\cdot \lambda_{m+1}\cdot m_v)^{n}+2}\right)^k\\
        &\leq2\cdot\left(\frac{1}{1 + \operatorname{max}( \boldsymbol{W})^{-1}\cdot(\alpha\cdot \lambda_{m+1}\cdot m_v)^{-n}}\right)^k.
\end{align*}
\end{proof}

}

\ifthenelse{\equal{\jmlr}{1}}{
}{
\clearpage
}
\section{Additional results}\label{App:Data}

\begin{table}[ht!]
\centering
\setlength{\tabcolsep}{1pt}
\begin{tabular}{ |p{0.3cm}|p{1.3cm}||p{1.3cm}|p{1.3cm}|p{1.5cm}|p{0.5cm}|p{0.3cm}|p{1.3cm}||p{1.3cm}|p{1.3cm}|p{1.5cm}|}
\cline{1-5}\cline{7-11}
 \multicolumn{2}{|c||}{{}}& {Vecchia}&{FITC} & {VIF} & &\multicolumn{2}{|c||}{{}}& {Vecchia}&{FITC} & {VIF}\\
 \cline{1-5}\cline{7-11}
\multirow[c]{6}{*}[0in]{\rotatebox{90}{\textbf{Training}}} &
 $d = 2$ & 4  & 13 &  24&  & \multirow[c]{6}{*}[0in]{\rotatebox{90}{\textbf{Prediction}}} &
 $d = 2$ & 0.6  & 0.7 &  0.9   \\
 &$d = 5$ & 8  & 38  &  39& & &$d = 5$ & 0.7  & 0.8  &  1.2   \\
 &$d = 10$ & 16  & 91  &  86& &  &$d = 10$ & 0.7  & 0.9  &  1.4   \\
 &$d = 20$ & 35  & 272 &  244& &  &$d = 20$ & 1.0  & 1.3 &  1.6  \\
 &$d = 50$ & 129  & 1081  & 811& &  &$d = 50$ & 1.3  & 1.8  & 2.0  \\
 &$d = 100$& 396  & 3179   & 2840& &  &$d = 100$& 1.6  & 2.2  & 2.4  \\
 
\cline{1-5}\cline{7-11}
\end{tabular}
\caption{Average runtimes in seconds for the experiments reported in Figure \ref{fig:Dimensions} in Section \ref{subsect:sim_all}}\label{Table:Dim} 
\end{table}

\begin{figure}[ht!]    
\centering
\includegraphics[width=\linewidth]{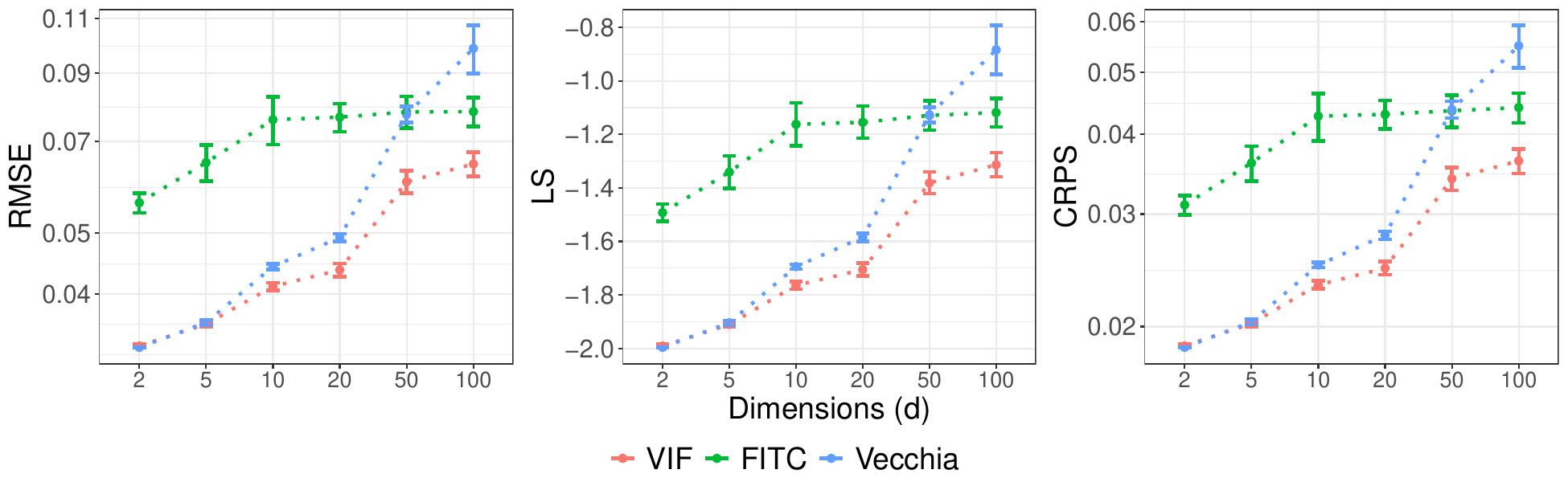}
    \caption{RMSE, log-score (LS), and CRPS (mean $\pm$ $2$ standard errors) for VIF ($m_v = 30$ \& $m = 200$), FITC ($m = 200$), and Vecchia ($m_v = 60$) approximation for varying dimensions $d$ using a 3/2-Matérn kernel.}
    \label{fig:Dimensions_time}
\end{figure}

\begin{figure}[H]
    \centering
    \includegraphics[width=\linewidth]{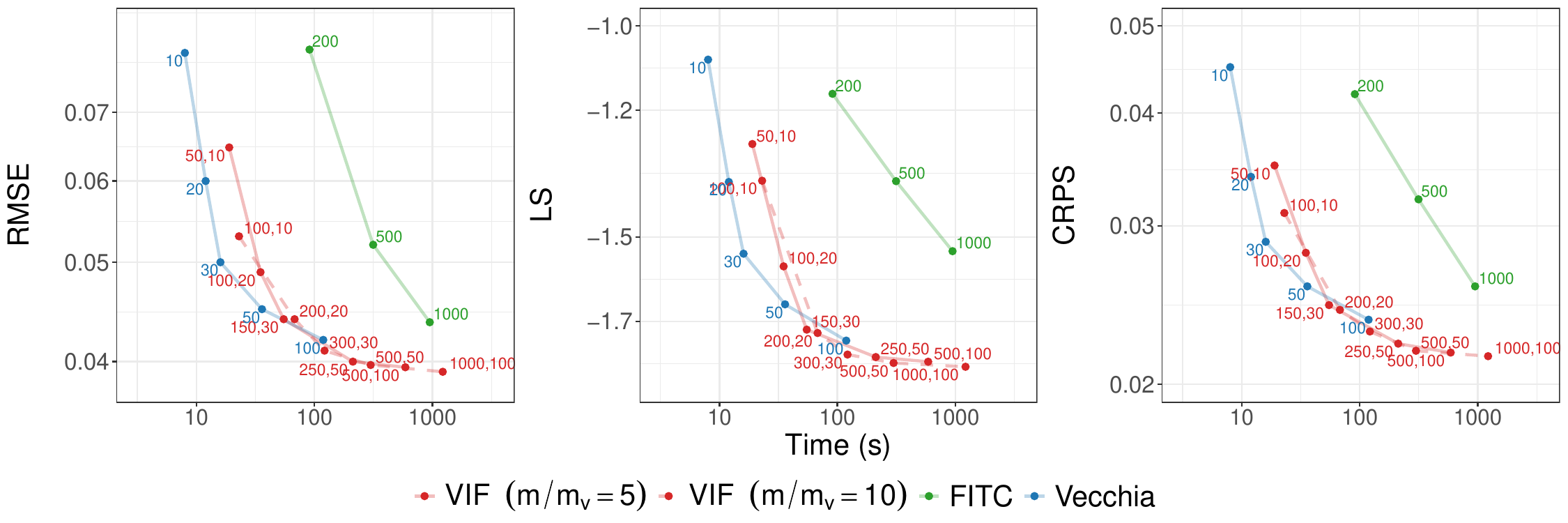}
    \caption{RMSE, log-score (LS), and CRPS (mean) versus runtime (training + prediction) in seconds (s) for VIF, FITC, and Vecchia approximations across varying numbers of inducing points $m \in \{50,100,150,200,250,300,500,1000\}$ and Vecchia neighbors $m_v \in \{10,20,30,50,100\}$. For VIF, two parameter ratios are illustrated: $\frac{m}{m_v}=5$ (solid) and $\frac{m}{m_v}=10$ (dashed). Experiments are conducted with input dimension $d=10$ using a Matérn-$3/2$ kernel, and the corresponding values of $m$ and $m_v$ are annotated in the figure.
}
    \label{fig:ACC_RT_5}
\end{figure}

\begin{figure}[H]
    \centering
    \includegraphics[width=\linewidth]{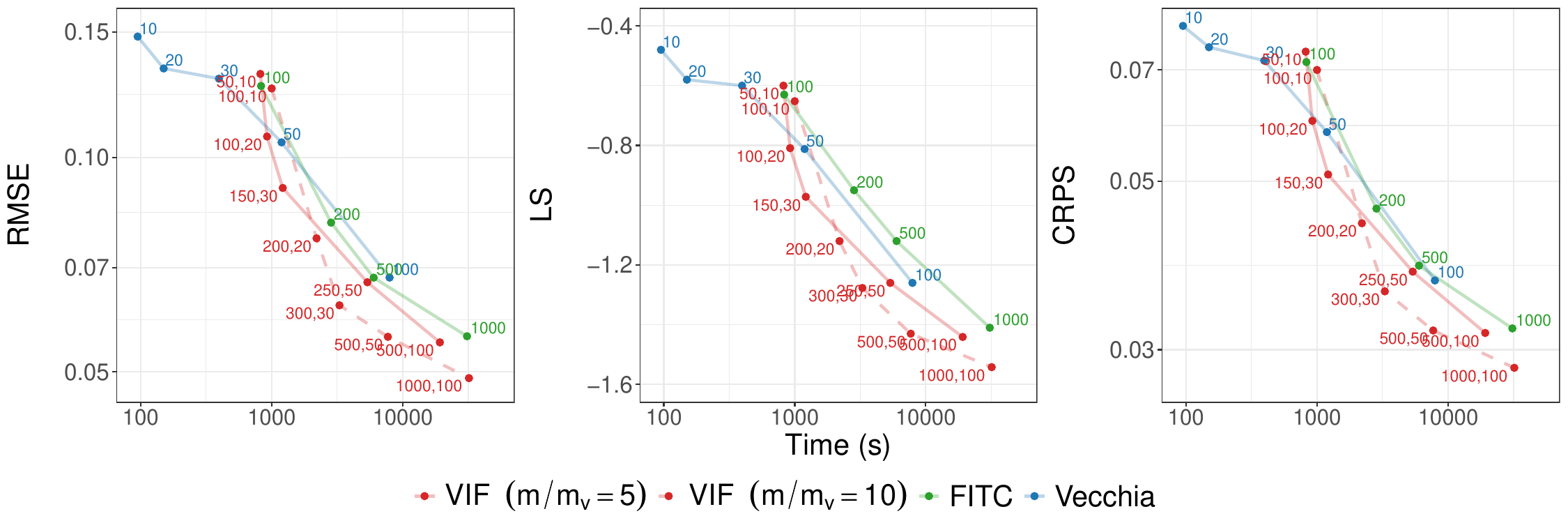}
    \caption{RMSE, log-score (LS), and CRPS (mean) versus runtime (training + prediction) in seconds (s) for VIF, FITC, and Vecchia approximations across varying numbers of inducing points $m \in \{50,100,150,200,250,300,500,1000\}$ and Vecchia neighbors $m_v \in \{10,20,30,50,100\}$. For VIF, two parameter ratios are illustrated: $\frac{m}{m_v}=5$ (solid) and $\frac{m}{m_v}=10$ (dashed). Experiments are conducted with input dimension $d=100$ using a Matérn-$3/2$ kernel, and the corresponding values of $m$ and $m_v$ are annotated in the figure.}
    \label{fig:ACC_RT_100}
\end{figure}

\begin{figure}[ht!]
    \centering
    \includegraphics[width=\linewidth]{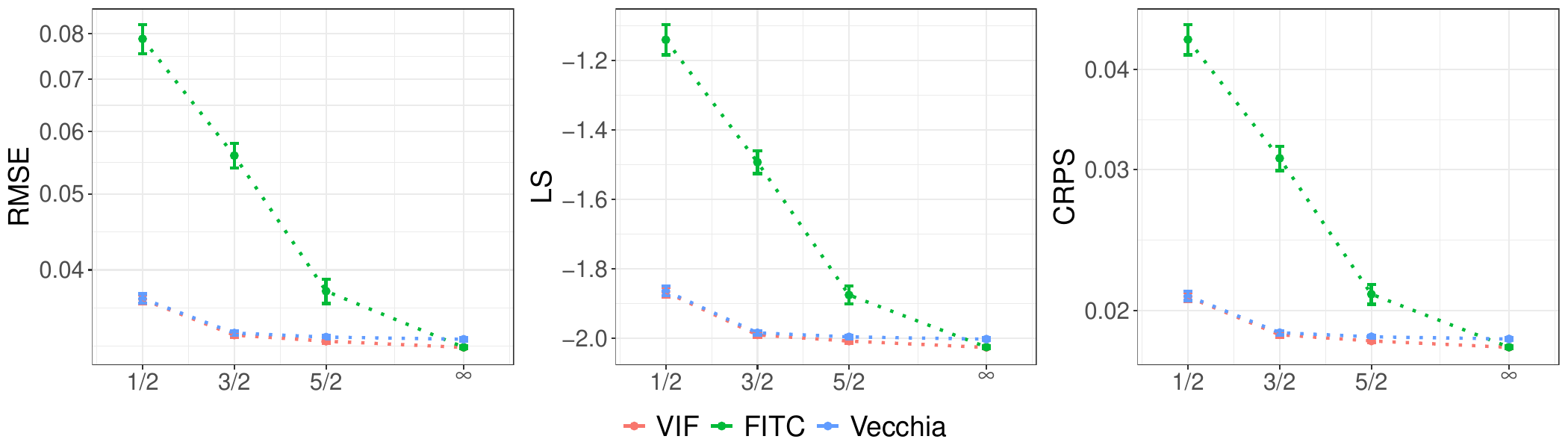}
    \caption{RMSE (log-scale), log-score (LS), and CRPS (log-scale) (mean $\pm$ $2$ standard errors) for VIF ($m_v = 30$ \& $m = 200$), FITC ($m = 200$), and Vecchia ($m_v = 30$) approximations for 1/2-Matérn, 3/2-Matérn, 5/2-Matérn, and Gaussian ($\infty$-Matérn) ARD kernels when $d = 2$.}
    \label{fig:Kernels2}
\end{figure}

\begin{table}[ht!]
\centering
\setlength{\tabcolsep}{1pt}
\begin{tabular}{ |p{1.4cm}||p{2.9cm}|p{2.9cm}|p{2.9cm}|p{2.9cm}||p{3.1cm}|}
\hline
& \multicolumn{4}{c||}{Figures \ref{fig:Dimensions}, \ref{fig:Kernels10}, and \ref{fig:Kernels2}} & Figure \ref{fig:Dimensions_avg}\\
\hline
 Kernel:& 1/2-Matern &3/2-Matern & 5/2-Matern & Gaussian & 3/2-Matern \\
 \hhline{|=|=|=|=|=|=|}
$d = 2$ & \thead{$(0.07, 0.30)^\mathrm{T}$} & \thead{$(0.10, 0.22)^\mathrm{T}$}&  \thead{$(0.12, 0.21)^\mathrm{T}$} & \thead{$(0.13,0.19)^\mathrm{T}$} & \thead{$(0.20, 0.36)^\mathrm{T}$}\\
 $d = 5$ &  & \thead{$(0.13, 0.473,...,1.5)^\mathrm{T}$} &   & 
 %\thead{$(0.15,0.30,...,0.75)^\mathrm{T}$} 
 & \thead{$(0.23, 0.41,...,0.96)^\mathrm{T}$}\\
 $d = 10$ & \thead{$(0.15, 0.389,...,2.3)^\mathrm{T}$} & \thead{$(0.25, 0.467,...,2.2)^\mathrm{T}$} &  \thead{$(0.27, 0.473,...,2.1)^\mathrm{T}$} &\thead{$(0.28,0.471,...,2.0)^\mathrm{T}$} & \thead{$(0.24, 0.43,...,1.96)^\mathrm{T}$}\\
 $d = 20$ &  & \thead{$(0.50, 0.763,...,5.5)^\mathrm{T}$}&  & &   \thead{$(0.25, 0.45,...,4.00)^\mathrm{T}$}\\
 $d = 50$ &  & \thead{$(0.55, 0.661,...,6.0)^\mathrm{T}$} &  & &\thead{$(0.25, 0.45,...,10.16)^\mathrm{T}$}  \\
 $d = 100$&  & \thead{$(0.60, 0.665,...,7.0)^\mathrm{T}$}  &  & & \thead{$(0.25, 0.46,...,20.45)^\mathrm{T}$}\\
 \hline
\end{tabular}
\caption{Length scale parameters $\boldsymbol{\lambda}$ used in Figures \ref{fig:Dimensions}, \ref{fig:Kernels10}, \ref{fig:Kernels2}, and \ref{fig:Dimensions_avg} for various dimensions $d$ and kernels (``..." means linearly interpolated).}\label{Table:len_scales} 
\end{table}

% We also conducted experiments using an error term with a variance of $\sigma^2 = 0.01$ and length scale parameters chosen to yield a similar covariance at the average interpoint distance as a Gaussian kernel with length scales $\boldsymbol{\lambda}=(0.35,0.4,0.45,0.5,0.55)^\mathrm{T}$. The results are reported in Figure \ref{fig:Dimensions_avg}.

\begin{figure}[H]
    \centering
    \includegraphics[width=\linewidth]{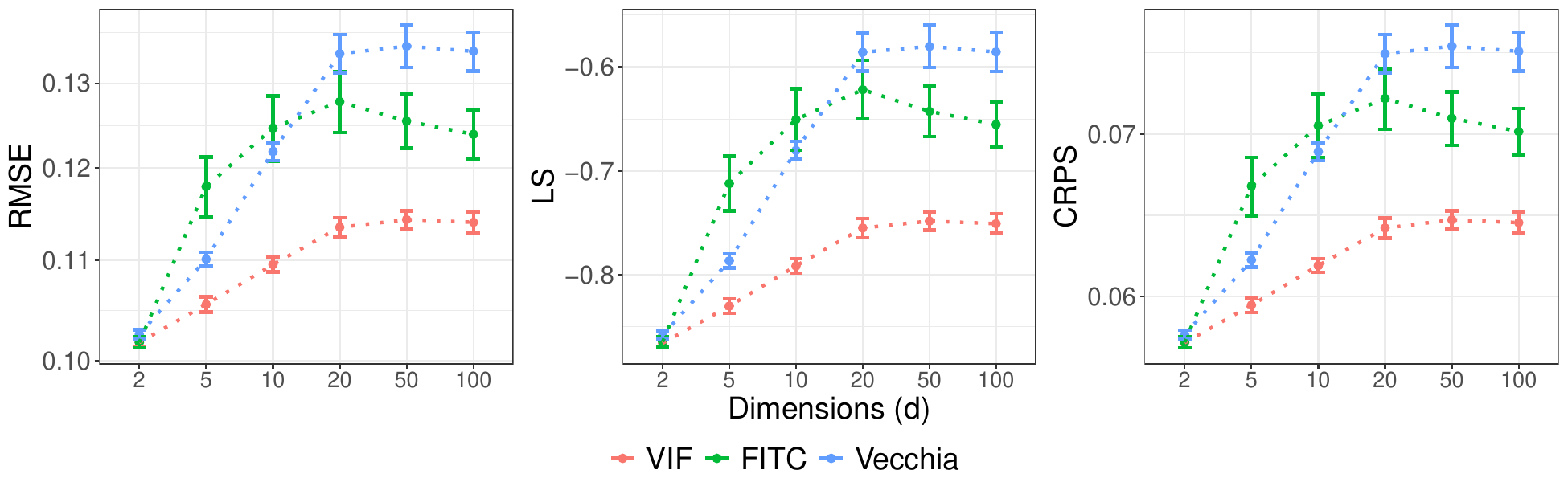}
    \caption{RMSE, log-score (LS), and CRPS (mean $\pm$ $2$ standard errors) for VIF ($m_v = 30$ \& $m = 200$), FITC ($m = 200$), and Vecchia ($m_v = 30$) approximations for various dimensions $d$ using an error variance of $0.01$ and length scale parameters chosen such that the covariance remains approximately equal (to the one of a Gaussian kernel with length scales $\boldsymbol{\lambda}=(0.35,0.4,0.45,0.5,0.55)^\mathrm{T}$) at the average distance among two randomly chosen points.}
    \label{fig:Dimensions_avg}
\end{figure}

\begin{figure}[H]
    \centering
    \includegraphics[width=0.65\linewidth]{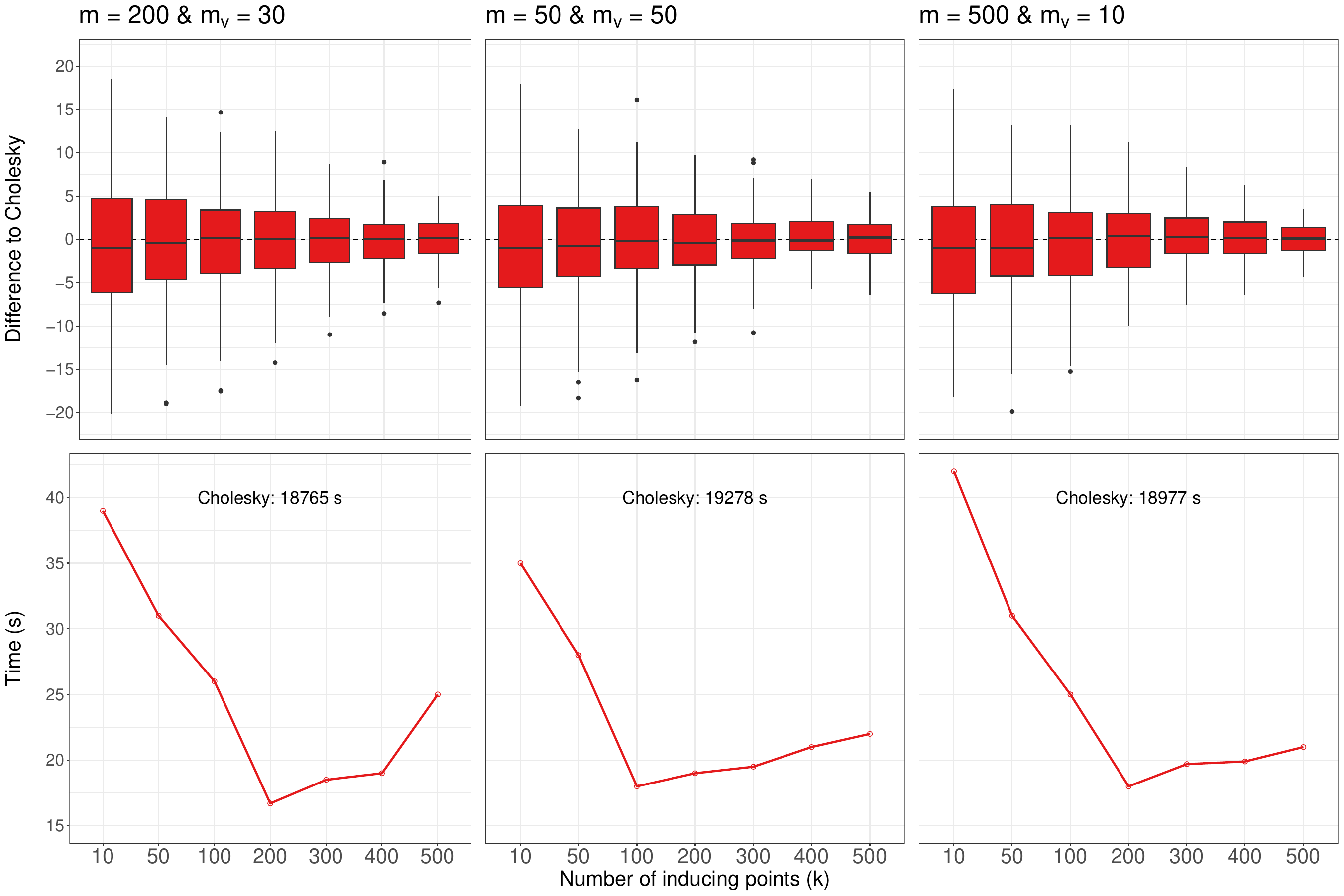}
    \caption{Log-marginal likelihood differences relative to Cholesky-based computations and runtime of the FITC preconditioner for varying numbers of inducing points $k$ for three different VIF approximations using $\ell = 50$ sample vectors.}
    \label{fig:FITC_P_k}
\end{figure}

\begin{table}[H]
\centering
\setlength{\tabcolsep}{1pt}
\begin{tabular}{ |p{2.3cm}|p{2.3cm}|p{2.5cm}|p{2.9cm}|p{2.5cm}|  }
 \hline
 \thead{Preconditioner}& \thead{CG convergence \\ tolerance ($\delta$)}&  \thead{$\ell = 10$} &
 \thead{$\ell = 50$} &
 \thead{$\ell = 100$}\\
 \hhline{|=|=|=|=|=|}
 \thead{\textbf{FITC}} & \thead[l]{$\delta = 1$ \\ $\delta = 0.1$ \\ $\delta = 0.01$ \\ $\delta = 0.001$ \\ $\delta = 0.0001$  } & \thead{8.813 (10 s) \\ 8.792 (16 s) \\ 8.723 (19 s) \\ 8.723 (33 s)\\ 8.723 (40 s)} & \thead{3.456 (16 s) \\ 3.410 (20 s) \\ 3.389 (23 s) \\ 3.389 (38 s)\\ 3.389 (46 s)} & \thead{2.626 (20 s) \\ 2.610 (23 s)\\ 2.602 (26 s) \\ 2.600 (41 s) \\ 2.600 (49 s)}\\
 \Xhline{3\arrayrulewidth}
 \thead{\textbf{VIFDU}}& \thead[l]{$\delta = 1$ \\ $\delta = 0.1$ \\ $\delta = 0.01$ \\ $\delta = 0.001$ \\ $\delta = 0.0001$  } & \thead{12.295 (22 s) \\ 12.269 (24 s) \\ 12.234 (27 s) \\ 12.234 (31 s)\\ 12.233 (35 s)} &  \thead{5.301 (40 s) \\ 5.263 (45 s) \\ 5.218 (48 s) \\ 5.217 (52 s)\\ 5.217 (56 s)} &  \thead{4.887 (72 s) \\ 4.849 (76 s) \\ 4.804 (80 s) \\ 4.803 (85 s)\\ 4.803 (89 s)} \\

 \hline
\end{tabular}
\caption{\label{k_and_delta} Accuracy-runtime comparison of preconditioners: RMSE between log‑marginal likelihoods computed using iterative methods and those computed using a Cholesky decomposition and runtime for the VIFDU and FITC preconditioners and varying numbers of sample vectors $\ell \in \{10,50,100\}$ and CG convergence tolerances $\delta \in \{0.0001,0.001,0.01,0.1,1\}$. We used a binary likelihood, $n = 100,\!000$ samples, and a VIF approximation with $m = 200$ \& $m_v = 30$.
}
\end{table}

\begin{table}[H]
\centering
\scriptsize
\setlength{\tabcolsep}{0.5pt}
\begin{tabular}{ 
|p{1.9cm}|p{1.4cm}|p{2.4cm}|p{2.4cm}|p{0.02cm}|p{1.9cm}|p{1.4cm}|p{2.5cm}|p{2.5cm}|  }
  \cline{1-4}\cline{6-9}
 {\thead{\scriptsize Data}}& {\thead{Accuracy \\ measure}}&  \multicolumn{2}{c|}{\thead{\textbf{VIF} \\
 $m_v = 30$ \\ $m = 200$}} & &{\thead{Data}}& {\thead{Accuracy \\ measure}}&  \multicolumn{2}{c|}{\thead{\textbf{VIF} \\
 $m_v = 30$ \\ $m = 200$}}\\
 \cline{1-4}\cline{6-9}
 \thead{\textbf{3dRoad} \\ $n = 434,\!874$ \\ $d = 3$} & \thead{RMSE \\ CRPS \\ LS  \\ $\nu$\\ Time} & \thead{\textbf{0.145 ± 0.002}\\   \textbf{0.068 ± 0.002}\\  -0.625 ± 0.010\\ 1.5 \\ 376 s} &  \thead{\textbf{0.143 ± 0.002}\\   \textbf{0.067 ± 0.002}\\  \textbf{-0.746 ± 0.006}\\ 0.696 ± 0.004 \\ 865 s}& &\thead{\textbf{Kin40K} \\ $n = 40,\!000$ \\ $d = 8$}& \thead{RMSE \\ CRPS \\ LS  \\ $\nu$\\Time}  & \thead{{0.114 ± 0.002}\\   {0.059 ± 0.002}\\  {-0.808 ± 0.012}\\ 1.5 \\ 333 s} & \thead{\textbf{0.111 ± 0.002}\\    \textbf{0.055 ± 0.002}\\   \textbf{-0.885 ± 0.024}\\    7.426 ± 0.842\\ 1009 s}\\
 \cline{1-4}\cline{6-9}
 \thead{\textbf{KEGGU} \\ $n = 63,\!608$ \\ $d = 26$}& \thead{RMSE \\ CRPS \\ LS  \\$\nu$\\ Time} & \thead{\textbf{0.094 ± 0.008}\\   \textbf{0.030 ± 0.002}\\  \textbf{-1.081 ± 0.072}\\ 1.5 \\ 738 s} & \thead{\textbf{0.097 ± 0.002}\\    \textbf{0.028 ± 0.002}\\   \textbf{-1.113 ± 0.056}\\    2.137 ± 0.950\\ 1214 s} & & \thead{\textbf{Bike} \\ $n = 17,\!379$ \\ $d = 12$ \\ (Poisson)} & \thead{RMSE \\ CRPS \\ LS \\ $\nu$  \\ Time} & \thead{\textbf{38.904 ± 1.332} \\   \textbf{17.162 ± 0.454} \\  \textbf{4.676 ± 0.030}\\ 1.5 \\ 1534 s} & \thead{\textbf{39.065 ± 1.346}\\   \textbf{17.477 ± 0.381} \\    \textbf{4.672 ± 0.030}\\    0.959 ± 0.026\\ 2017 s}  \\
 \cline{1-4}\cline{6-9}
 \thead{\textbf{KEGG} \\ $n = 48,\!827$ \\ $d = 18$}& \thead{RMSE \\ CRPS \\ LS  \\$\nu$\\ Time} & \thead{\textbf{0.100 ± 0.010} \\   \textbf{0.041 ± 0.004}\\ \textbf{-1.082 ± 0.070}\\1.5 \\ 383 s} & \thead{\textbf{0.104 ± 0.012}\\   \textbf{0.040 ± 0.002}\\ \textbf{-1.106 ± 0.034}\\   2.769 ± 0.622\\ 687 s} & & \thead{\textbf{House} \\ $n = 20,\!640$ \\ $d = 8$ \\ (Student-t)}& \thead{RMSE \\ CRPS \\ LS \\ $\nu$ \\ Time} & \thead{{0.214 ± 0.008}\\ {0.103 ± 0.002}\\ 0.092 ± 0.076 \\ 1.5\\ 1932 s}  & \thead{\textbf{0.203 ± 0.006} \\  \textbf{0.099 ± 0.003}\\ \textbf{-0.184 ± 0.009}\\  0.469 ± 0.023\\  3021 s}   \\
 \cline{1-4}\cline{6-9}
 \thead{\textbf{Elevators} \\ $n = 16,\!599$ \\ $d = 17$}& \thead{RMSE \\ CRPS \\ LS  \\$\nu$\\ Time}  & \thead{\textbf{0.355 ± 0.004}\\  \textbf{0.196 ± 0.004}\\   \textbf{0.388 ± 0.012}\\1.5 \\ 711 s} & \thead{\textbf{0.351 ± 0.004}\\   \textbf{0.194 ± 0.002}\\   \textbf{0.389 ± 0.006}\\   0.912 ± 0.218\\ 935 s} & & \thead{\textbf{Power} \\ $n = 52,\!417$ \\ $d = 5$ \\ (Gamma)}& \thead{RMSE \\ CRPS \\ LS \\ $\nu$ \\ Time} & \thead{{0.218 ± 0.002}\\ {0.121 ± 0.001}\\ {-0.084 ± 0.010}\\ 1.5\\ 5035 s} & \thead{\textbf{0.201 ± 0.002} \\  \textbf{0.111 ± 0.001} \\ \textbf{-0.140 ± 0.008} \\  0.240 ± 0.018\\ 4452 s} \\
 \cline{1-4}\cline{6-9}
 \thead{\textbf{Protein} \\ $n = 45,\!730$ \\ $d = 8$}& \thead{RMSE \\ CRPS \\ LS  \\$\nu$\\ Time}  & \thead{{0.516 ± 0.006} \\  {0.259 ± 0.002}\\   {0.667 ± 0.016}\\ 1.5 \\547 s} & \thead{\textbf{0.499 ± 0.006}\\    \textbf{0.250 ± 0.002}\\    \textbf{0.627 ± 0.014}\\   0.567 ± 0.022\\ 835 s} & & \thead{\textbf{Water-} \\\textbf{Vapor} \\ $n = 100,\!000$ \\ $d = 2$ \\ (Gamma)}& \thead{RMSE \\ CRPS \\ LS \\ $\nu$ \\ Time}  & \thead{\textbf{0.102 ± 0.001} \\  \textbf{0.056  ± 0.001}\\\textbf{-0.728 ± 0.022}\\ 1.5 \\  3975 s} & \thead{\textbf{0.101 ± 0.001}\\  \textbf{0.055 ± 0.001}\\ -0.699 ± 0.020\\ 0.383 ± 0.005\\  5084 s} \\
 
  \cline{1-4}\cline{6-9}

  \thead{\textbf{Ailerons} \\ $n = 13,\!750$ \\ $d = 33$}& \thead{RMSE \\ CRPS \\ LS \\ $\nu$ \\ Time}  & \thead{{0.399 ± 0.019}\\   {0.208 ± 0.008}\\  {0.436 ± 0.030}\\ 1.5 \\ 528 s}  & \thead{\textbf{0.377 ± 0.007}\\    \textbf{0.200 ± 0.003}\\    \textbf{0.375 ± 0.014}\\   0.674 ± 0.033\\ 602 s} & \multicolumn{5}{c}{} \\
 
  \cline{1-4}
 
\end{tabular}
\caption{\label{Res:ModelSelection} RMSE, CRPS, log-score (LS), $\nu$ (mean $\pm$ $2$ standard errors) and runtime in seconds (s) for a fixed 3/2-Matérn kernel (left columns) and Matérn kernel with the smoothness parameter estimated (right columns). Bold indicates the best mean; other means are in bold if within two standard errors.}
\end{table}

\begin{table}[ht!]
\centering
\renewcommand{\theadfont}{\scriptsize}
\begin{tabular}{ |p{2.1cm}|p{1.8cm}|p{2.5cm}|p{2.5cm}|p{2.5cm}|  }
 \hline
 \thead{Data}& \thead{Performance \\ Measure}&  \thead{\textbf{VIF} \\
 $m_v = 30$ \\ $m = 200$} &
 \thead{\textbf{Vecchia} \\
 $m_v = 30$} &
 \thead{\textbf{FITC} \\
 $m = 200$}\\
 \hhline{|=|=|=|=|=|}
 \thead{\textbf{3dRoad} \\ $n = 434,\!874$ \\ $d = 3$} & \thead{RMSE \\ CRPS \\ LS  \\ Time} & \thead{\textbf{0.145 ± 0.002}\\   \textbf{0.068 ± 0.002}\\  \textbf{-0.625 ± 0.010}\\ 376 s} & \thead{\textbf{0.145 ± 0.001}\\  \textbf{0.068 ± 0.002}\\ \textbf{-0.625 ± 0.010}\\ 45 s} & \thead{0.554 ± 0.006 \\  0.300 ± 0.003\\  0.831 ± 0.011\\ 144 s}\\
 \Xhline{3\arrayrulewidth}
 \thead{\textbf{KEGGU} \\ $n = 63,\!608$ \\ $d = 26$}& \thead{RMSE \\ CRPS \\ LS  \\ Time} & \thead{\textbf{0.094 ± 0.008}\\   \textbf{0.030 ± 0.002}\\  \textbf{-1.081 ± 0.072}\\ 738 s} &  \thead{\textbf{0.093 ± 0.006}\\   \textbf{0.030 ± 0.001}\\  \textbf{-1.060 ± 0.063}\\ 233 s} &  \thead{0.120 ± 0.023\\   0.046 ± 0.019\\  -1.052 ± 0.123\\ 628 s} \\
 \Xhline{3\arrayrulewidth}
 \thead{\textbf{KEGG} \\ $n = 48,\!827$ \\ $d = 18$}& \thead{RMSE \\ CRPS \\ LS  \\ Time} & \thead{\textbf{0.100 ± 0.010} \\   \textbf{0.041 ± 0.004}\\ \textbf{-1.082 ± 0.070}\\ 383 s} & \thead{\textbf{0.102 ± 0.005}\\  \textbf{0.042 ± 0.001}\\ \textbf{-1.050 ± 0.021}\\ 60 s} & \thead{NA \\ convergence \\ issues} \\
 \Xhline{3\arrayrulewidth}
 \thead{\textbf{Elevators} \\ $n = 16,\!599$ \\ $d = 17$}& \thead{RMSE \\ CRPS \\ LS  \\ Time}  & \thead{\textbf{0.355 ± 0.004}\\  \textbf{0.196 ± 0.004}\\   \textbf{0.388 ± 0.012}\\ 711 s} & \thead{0.382 ± 0.009\\  0.215 ± 0.004\\  0.431 ± 0.017\\ 37 s} & \thead{0.453 ± 0.011\\   0.248 ± 0.012\\   0.542 ± 0.028\\ 347 s} \\
 \Xhline{3\arrayrulewidth}
 \thead{\textbf{Protein} \\ $n = 45,\!730$ \\ $d = 8$}& \thead{RMSE \\ CRPS \\ LS  \\ Time}  & \thead{\textbf{0.516 ± 0.006} \\  \textbf{0.259 ± 0.002}\\   \textbf{0.667 ± 0.016}\\ 547 s} & \thead{\textbf{0.517 ± 0.006}\\  \textbf{0.259 ± 0.003}\\  \textbf{0.670 ± 0.017}\\64 s} & \thead{0.720 ± 0.004\\   0.402 ± 0.002\\   1.085 ± 0.005\\392 s} \\
 \Xhline{3\arrayrulewidth}
 \thead{\textbf{Kin40K} \\ $n = 40,\!000$ \\ $d = 8$}& \thead{RMSE \\ CRPS \\ LS  \\ Time}  & \thead{\textbf{0.114 ± 0.002}\\   \textbf{0.059 ± 0.002}\\  \textbf{-0.808 ± 0.012}\\ 333 s} & \thead{0.140 ± 0.002\\   0.076 ± 0.001\\  -0.552 ± 0.011\\ 153 s} & \thead{ 0.422 ± 0.004\\   0.232 ± 0.002\\   0.523 ± 0.010\\ 852 s} \\
 \Xhline{3\arrayrulewidth}
 \thead{\textbf{Ailerons} \\ $n = 13,\!750$ \\ $d = 33$}& \thead{RMSE \\ CRPS \\ LS  \\ Time}  & \thead{\textbf{0.399 ± 0.019}\\   \textbf{0.208 ± 0.008}\\  \textbf{0.436 ± 0.030}\\ 528 s} & \thead{{0.429 ± 0.016}\\   {0.218 ± 0.004}\\  {0.472 ± 0.026}\\ 108 s} & \thead{ 0.879 ± 0.240\\  0.478 ± 0.131\\   1.228 ± 0.381\\ 259 s} \\
 \Xhline{3\arrayrulewidth}
 \thead{\textbf{Bike} \\ $n = 17,\!379$ \\ $d = 12$ \\ (Poisson)} & \thead{RMSE \\ CRPS \\ LS  \\ Time} & \thead{\textbf{38.904 ± 1.332} \\   \textbf{17.162 ± 0.454} \\  \textbf{4.676 ± 0.030}\\ 1534 s} & \thead{\textbf{38.768 ± 1.096}\\  \textbf{17.073 ± 0.390}\\   \textbf{4.686 ± 0.029}\\ 1296 s} & \thead{NA \\ convergence \\ issues} \\
 \Xhline{3\arrayrulewidth}
 \thead{\textbf{House} \\ $n = 20,\!640$ \\ $d = 8$ \\ (Student-t)}& \thead{RMSE \\ CRPS \\ LS  \\ Time} & \thead{\textbf{0.214 ± 0.008}\\ \textbf{0.103 ± 0.002}\\ 0.092 ± 0.076 \\  1932 s}  & \thead{\textbf{0.213 ± 0.007}\\ \textbf{0.102 ± 0.002}\\ \textbf{0.081 ± 0.074} \\ 2205 s}   & \thead{0.292 ± 0.014\\ 0.159 ± 0.008\\ 0.395 ± 0.124\\ 1052 s}  \\
 \Xhline{3\arrayrulewidth}
 \thead{\textbf{Power} \\ $n = 52,\!417$ \\ $d = 5$ \\ (Gamma)}& \thead{RMSE \\ CRPS \\ LS  \\ Time} & \thead{\textbf{0.218 ± 0.002}\\ \textbf{0.121 ± 0.001}\\ \textbf{-0.084 ± 0.010}\\ 5035 s} & \thead{\textbf{0.218 ± 0.003}\\  \textbf{0.122 ± 0.002} \\  \textbf{-0.086 ± 0.009}\\ 4806 s} & \thead{NA \\ convergence \\ issues} \\
 \Xhline{3\arrayrulewidth}
 \thead{\textbf{WaterVapor} \\ $n = 100,\!000$ \\ $d = 2$ \\ (Gamma)}& \thead{RMSE \\ CRPS \\ LS  \\ Time}  & \thead{\textbf{0.102 ± 0.001} \\  \textbf{0.056  ± 0.001}\\\textbf{-0.728 ± 0.022}\\  3975 s} & \thead{\textbf{0.102 ± 0.001}\\   \textbf{0.056 ± 0.001}\\  \textbf{-0.728 ± 0.021}\\ 2696 s} & \thead{0.636 ± 0.011\\ 0.525 ± 0.009 \\ 0.123 ± 0.007 \\ 716 s} \\
 \hline
\end{tabular}
\caption{\label{Res:Gaussian2} RMSE, CRPS, log-score (LS) (mean $\pm$ $2$ standard errors), and runtime in seconds (s) for the regression data sets. Bold indicates the best mean; other means are in bold if within two standard errors. }
\end{table}

\clearpage

\begin{table}[ht!]
\centering
\setlength{\tabcolsep}{1pt}
\begin{tabular}{ |p{2.1cm}|p{1.8cm}|p{2.5cm}|p{2.5cm}|p{2.5cm}|  }
 \hline
 \thead{Data}& \thead{Performance \\ Measure}&  \thead{\textbf{VIF} \\
 $m_v = 30$ \\ $m = 200$} &
 \thead{\textbf{Vecchia} \\
 $m_v = 30$} &
 \thead{\textbf{FITC} \\
 $m = 200$}\\
 \hhline{|=|=|=|=|=|}
 
 \thead{\textbf{Bank} \\ $n = 45,\!211$ \\ $d = 16$} & \thead{AUC \\ RMSE \\ ACC \\ LS  \\ Time} & \thead{\textbf{0.806 ± 0.008}\\ \textbf{0.284 ± 0.004}\\ \textbf{0.895 ± 0.002}\\ \textbf{0.035 ± 0.016}\\ 1643 s} & \thead{\textbf{0.800 ± 0.007} \\ \textbf{0.282 ± 0.005}\\ \textbf{0.898 ± 0.003}\\ \textbf{0.040 ± 0.011} \\ 798 s} & \thead{{0.724 ± 0.010}\\ {0.340 ± 0.009}\\  {0.801 ± 0.009}\\ {0.067 ± 0.019}\\ 692 s} \\ 
 \Xhline{3\arrayrulewidth}
 \thead{\textbf{Adult} \\ $n = 48,\!842$ \\ $d = 14$}& \thead{AUC \\ RMSE \\ ACC \\ LS  \\ Time} & \thead{\textbf{0.880 ± 0.008} \\ \textbf{0.336 ± 0.006}\\ \textbf{0.838 ± 0.006}\\ \textbf{0.003 ± 0.012}\\ 2854 s} &  \thead{\textbf{0.888 ± 0.009} \\ \textbf{0.329 ± 0.007}\\ \textbf{0.843 ± 0.006}\\ \textbf{-0.007 ± 0.012}\\ 805 s} & \thead{NA \\ convergence \\ issues}  \\
 \Xhline{3\arrayrulewidth}
 \thead{\textbf{Credit} \\ $n = 30,\!000$ \\ $d = 22$}& \thead{AUC \\ RMSE \\ ACC \\ LS  \\ Time} & \thead{\textbf{0.768 ± 0.006}\\    \textbf{0.378 ± 0.004} \\  \textbf{0.808 ± 0.006} \\  \textbf{0.407 ± 0.012}\\ 1161 s} & \thead{\textbf{0.770 ± 0.007}\\    \textbf{0.378 ± 0.005}\\  \textbf{0.807 ± 0.006}\\  \textbf{0.402 ± 0.013}\\ 518 s} & \thead{\textbf{0.772 ± 0.005}\\    \textbf{0.378 ± 0.005} \\  \textbf{0.807 ± 0.007} \\  \textbf{0.400 ± 0.014}\\ 378 s} \\
 \Xhline{3\arrayrulewidth}
 \thead{\textbf{MAGIC} \\ $n = 19,\!020$ \\ $d = 9$ }& \thead{AUC \\ RMSE \\ ACC \\ LS  \\ Time}  & \thead{\textbf{0.920 
 ± 0.004}\\ \textbf{0.316 ± 0.004}\\ \textbf{0.866 ± 0.006}\\ \textbf{0.006  ± 0.022}\\236 s} & \thead{\textbf{0.918 ± 0.005}\\   \textbf{0.313 ± 0.005}\\  \textbf{0.871 ± 0.006}\\  \textbf{0.012 ± 0.025}\\ 170 s} & \thead{0.895 ± 0.008  \\0.329 ± 0.008  \\ {0.830 ± 0.009}  \\ 0.092 ± 0.032\\ 113 s} \\
 \hline
\end{tabular}
\caption{\label{Res:Binary_VVF} AUC, RMSE (Brier score), ACC, LS (mean $\pm$ $2$ standard errors), and runtime in seconds (s) for the binary classification data sets. Bold indicates the best mean; other means are in bold if within two standard errors.}
\end{table}

\begin{table}[H]
\centering
\setlength{\tabcolsep}{1pt}
\begin{tabular}{ |p{2.1cm}|p{1.8cm}|p{2.5cm}|p{2.9cm}|p{2.5cm}|  }
 \hline
 \thead{Data}& \thead{Accuracy \\ measure}&  \thead{\textbf{Pure VIF-GP}} &
 \thead{\textbf{Linear regression} \\ \textbf{+ VIF-GP}} &
 \thead{\textbf{VIF-GPBoost}}\\
 \hhline{|=|=|=|=|=|}
 \thead{\textbf{3dRoad} \\ $n = 434,\!874$ \\ $d = 3$} & \thead{RMSE \\ CRPS \\ LS  } & \thead{\textbf{0.145 ± 0.002}\\   \textbf{0.068 ± 0.002}\\  \textbf{-0.625 ± 0.010}} & \thead{\textbf{0.145 ± 0.002}\\   \textbf{0.068 ± 0.001}\\  \textbf{-0.626 ± 0.010}} & \thead{\textbf{0.145 ± 0.002}\\   \textbf{0.068 ± 0.001}\\  \textbf{-0.629 ± 0.011}}\\
 \Xhline{3\arrayrulewidth}
 \thead{\textbf{KEGGU} \\ $n = 63,\!608$ \\ $d = 26$}& \thead{RMSE \\ CRPS \\ LS  } & \thead{\textbf{0.094 ± 0.008}\\   \textbf{0.030 ± 0.002}\\  \textbf{-1.081 ± 0.072}} &  \thead{{0.131 ± 0.009}\\   {0.041 ± 0.002}\\  {1.357 ± 1.138}} &  \thead{\textbf{0.098 ± 0.009}\\   \textbf{0.032 ± 0.001}\\  \textbf{-1.075 ± 0.068}} \\
 \Xhline{3\arrayrulewidth}
 \thead{\textbf{KEGG} \\ $n = 48,\!827$ \\ $d = 18$}& \thead{RMSE \\ CRPS \\ LS  } & \thead{\textbf{0.100 ± 0.010} \\   \textbf{0.041 ± 0.004}\\ \textbf{-1.082 ± 0.070}} & \thead{{0.118 ± 0.020} \\   \textbf{0.045 ± 0.001}\\ {-0.979 ± 0.037}} & \thead{\textbf{0.099 ± 0.009} \\   \textbf{0.043 ± 0.002}\\ \textbf{-1.079 ± 0.068}} \\
 \Xhline{3\arrayrulewidth}
 \thead{\textbf{Elevators} \\ $n = 16,\!599$ \\ $d = 17$}& \thead{RMSE \\ CRPS \\ LS }  & \thead{{0.355 ± 0.004}\\  \textbf{0.196 ± 0.004}\\   {0.388 ± 0.012}} & \thead{{0.359 ± 0.005}\\  \textbf{0.199 ± 0.005}\\   {0.398 ± 0.014}} & \thead{\textbf{0.347 ± 0.004}\\  \textbf{0.196 ± 0.003}\\   \textbf{0.378 ± 0.008}} \\
 \Xhline{3\arrayrulewidth}
 \thead{\textbf{Protein} \\ $n = 45,\!730$ \\ $d = 8$}& \thead{RMSE \\ CRPS \\ LS  }  & \thead{\textbf{0.516 ± 0.006} \\  \textbf{0.259 ± 0.002}\\   {0.667 ± 0.016}} & \thead{\textbf{0.512 ± 0.007} \\  \textbf{0.256 ± 0.003}\\   \textbf{0.649 ± 0.017}} & \thead{\textbf{0.513 ± 0.006} \\  \textbf{0.256 ± 0.003}\\   \textbf{0.653 ± 0.019}} \\
 \Xhline{3\arrayrulewidth}
 \thead{\textbf{Kin40K} \\ $n = 40,\!000$ \\ $d = 8$}& \thead{RMSE \\ CRPS \\ LS }  & \thead{\textbf{0.114 ± 0.002}\\   {0.059 ± 0.002}\\  {-0.808 ± 0.012}} & \thead{{0.115 ± 0.002}\\   {0.059 ± 0.001}\\  {-0.807 ± 0.013}} & \thead{\textbf{0.112 ± 0.002}\\   \textbf{0.057 ± 0.001}\\  \textbf{-0.877 ± 0.014}} \\
\Xhline{3\arrayrulewidth}
 \thead{\textbf{Ailerons} \\ $n = 13,\!750$ \\ $d = 33$}& \thead{RMSE \\ CRPS \\ LS }  & \thead{\textbf{0.399 ± 0.019}\\   \textbf{0.208 ± 0.008}\\  \textbf{0.436 ± 0.030}}  & \thead{\textbf{0.401 ± 0.017}\\    \textbf{0.209 ± 0.007}\\    \textbf{0.438 ± 0.026}} &  \thead{\textbf{0.385 ± 0.016}\\    \textbf{0.205 ± 0.005}\\    \textbf{0.422 ± 0.022}}\\
 
 \hline
\end{tabular}
\caption{\label{Res:FixedEffects} RMSE, CRPS, log-score (LS) (mean $\pm$ $2$ standard errors) when using non-zero prior mean functions (linear and tree-boosting). Bold indicates the best mean; other means are in bold if within two standard errors.
}
\end{table}
\vspace{-3em}
\begin{figure}[H]
    \centering
    
    \includegraphics[width=0.7\linewidth]{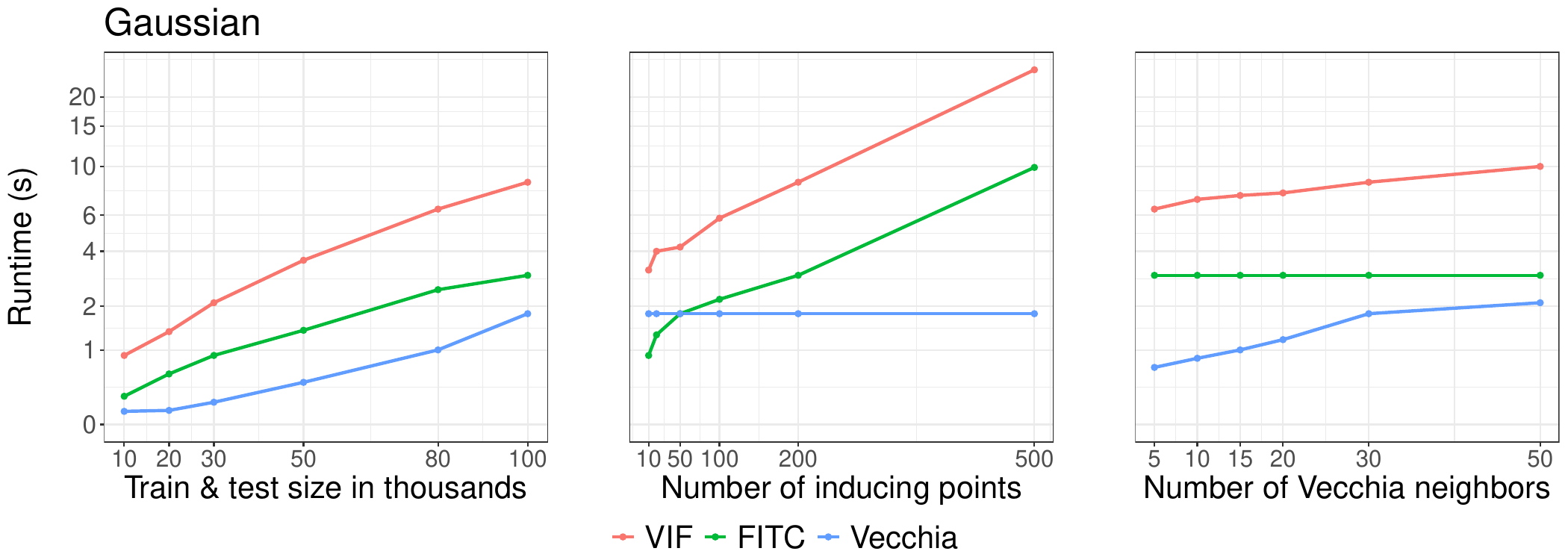}
     \includegraphics[width=0.7\linewidth]{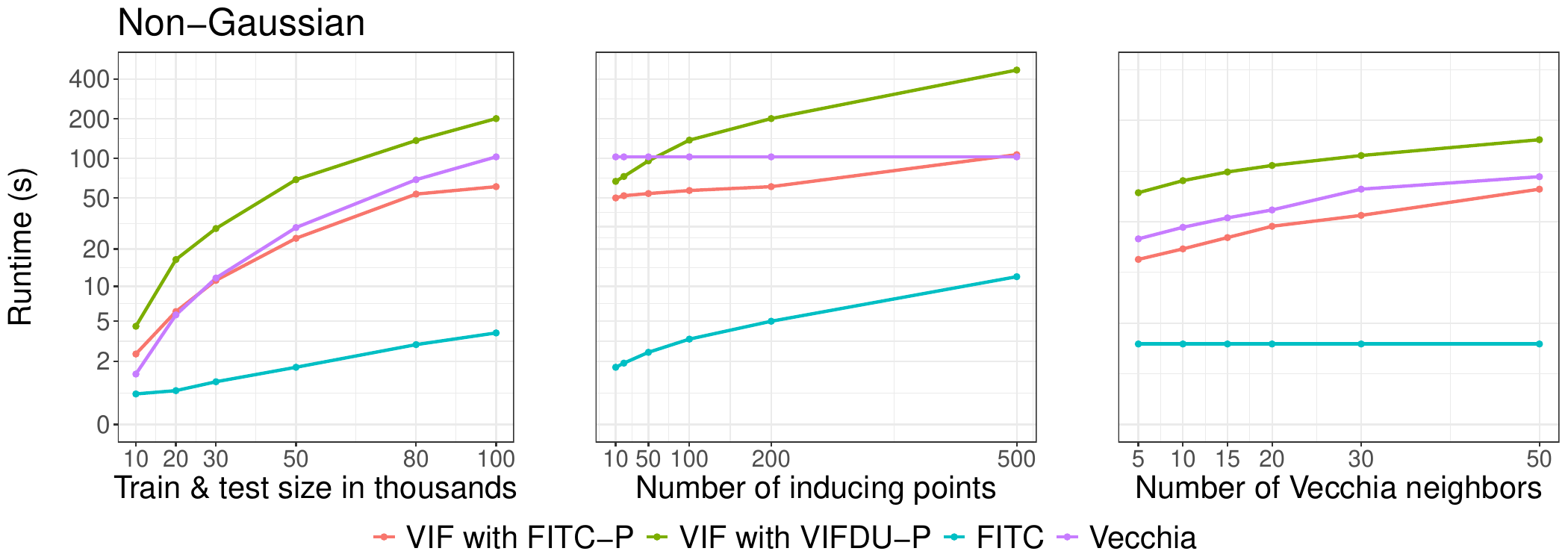}
    
    \caption{Time (s) for computing predictive distributions with VIF, FITC, and Vecchia approximations on simulated data for varying sample sizes $n$, numbers of inducing points $m$, and numbers of Vecchia neighbors $m_v$. }
    \label{fig:Runtime_all_pred}
\end{figure}

% \clearpage

%\bibliographystyle{plainnat}
\bibliography{bib_IterartiveFSA}

% This creates an appendix chapter, comment if not needed.
%\appendix
%\section{First Appendix Chapter Title}

\end{document}